\DeclareMathOperator*{\argmax}{argmax} 
\Crefname{equation}{Equation}{Equations}
\crefname{equation}{equation}{equations}
\Crefname{figure}{Figure}{Figures}
\crefname{figure}{figure}{figures}
\Crefname{figure*}{Figure}{Figures}
\crefname{figure*}{figure}{figures}
\crefname{table}{table}{tables}
\newtheorem*{definition*}{Definition}
\newtheorem{lemma}{Lemma}
\newtheorem*{lemma*}{Lemma}
\newtheorem{remark}{Remark}
\newtheorem*{remark*}{Remark}
\newtheorem*{corollary*}{Corollary}
\newtheorem{theorem}{Theorem}
\newtheorem*{theorem*}{Theorem}
\newtheorem{prop}{Proposition}
\newtheorem*{prop*}{Proposition}
\newtheorem{condition}{Condition}
\newtheorem{claim}{Claim}
\DeclareMathOperator{\cA}{\mathcal{A}}
\DeclareMathOperator{\cS}{\mathcal{S}}
\DeclareMathOperator{\cP}{\mathcal{P}}
\DeclareMathOperator{\cB}{\mathcal{B}}
\DeclareMathOperator{\cO}{\mathcal{O}}
\DeclareMathOperator{\bbP}{\mathbb{P}}
\DeclareMathOperator{\bR}{\mathbb{R}}
\DeclareMathOperator{\bbI}{\mathbb{I}}
\DeclareMathOperator{\sgn}{sign}
\newcommand{\pch}[1]{\textcolor{black}{#1}}
\def\eqref#1{equation~\ref{#1}}
\def\1{\bm{1}}
\def\eps{{\epsilon}}
\DeclareMathAlphabet{\mathsfit}{\encodingdefault}{\sfdefault}{m}{sl}
\SetMathAlphabet{\mathsfit}{bold}{\encodingdefault}{\sfdefault}{bx}{n}
\newcommand{\E}{\mathbb{E}}
\DeclareMathOperator{\sign}{sign}
\begin{document}

\twocolumn[

\aistatstitle{Coordinate Ascent for Off-Policy RL with Global Convergence Guarantees}

\aistatsauthor{ Hsin-En Su\textsuperscript{*1} \And Yen-Ju Chen\textsuperscript{*1} \And  Ping-Chun Hsieh\textsuperscript{1} \And Xi Liu\textsuperscript{2} }

\aistatsaddress{ \texttt{\{mru.11,pinghsieh\}@nycu.edu.tw, xliu1@fb.com}  \\ \textsuperscript{1}Department of Computer Science, National Yang Ming Chiao Tung University, Hsinchu, Taiwan\\
\textsuperscript{2}Applied Machine Learning, Meta AI, Menlo Park, CA, USA\\
\textsuperscript{*}Equal Contribution} ]

\begin{abstract}
We revisit the domain of off-policy policy optimization in RL from the perspective of coordinate ascent. One commonly-used approach is to leverage the off-policy policy gradient to optimize a surrogate objective -- the total discounted in expectation return of the target policy with respect to the state distribution of the behavior policy. However, this approach has been shown to suffer from the distribution mismatch issue, and therefore significant efforts are needed for correcting this mismatch either via state distribution correction or a counterfactual method. 
In this paper, we rethink off-policy learning via Coordinate Ascent Policy Optimization (CAPO), an off-policy actor-critic algorithm that decouples policy improvement from the state distribution of the behavior policy without using the policy gradient.
This design obviates the need for distribution correction or importance sampling in the policy improvement step of off-policy policy gradient.
We establish the global convergence of CAPO with general coordinate selection and then further quantify the convergence rates of several instances of CAPO with popular coordinate selection rules, including the cyclic and the randomized variants of CAPO.
We then extend CAPO to neural policies for a more practical implementation. Through experiments, we demonstrate that CAPO provides a competitive approach to RL in practice.
\end{abstract}

\section{Introduction}
Policy gradient (PG) has served as one fundamental principle of a plethora of benchmark reinforcement learning algorithms \citep{degris2012offpac, lillicrap2016Continuous,gu2017q,mnih2016asynchronous}.
In addition to the empirical success, PG algorithms have recently been shown to enjoy provably global convergence guarantees in the \textit{on-policy} settings, including the true gradient settings \citep{agarwal2020theory,bhandari2019global,mei2020global,Cen2022fast} and the Monte-Carlo stochastic gradient settings \citep{liu2020improved,mei2021understanding}.
However, on-policy PG is known to suffer from data inefficiency and lack of exploration due to the tight coupling between the learned target policy and the sampled trajectories. As a result, in many cases, \textit{off-policy} learning is preferred to achieve better exploration with an aim to either increase sample efficiency or address the committal behavior in the on-policy learning scenarios \citep{mei2021understanding,chung2021beyond}.
To address this, the off-policy PG theorem \citep{degris2012offpac,imani2018off,maei2018convergent} and the corresponding off-policy actor-critic methods, which are established to optimize a \textit{surrogate objective} defined as the total discounted return of the target policy in expectation with respect to the state distribution of the \textit{behavior policy}, has been proposed and widely adopted to decouple policy learning from trajectory sampling \citep{wang2016sample,gu2017interpolated,chung2021beyond,ciosek2018expected,espeholt2018impala}.

Despite the better exploration capability, off-policy PG methods are subject to the following fundamental issues: (i) \textit{Correction for distribution mismatch}: The standard off-policy PG methods resort to a surrogate objective, which ignores the mismatch between on-policy and the off-policy state distributions. Notably, it has been shown that such mismatch could lead to sub-optimal policies as well as poor empirical performance \citep{liu2020off}. As a result, substantial efforts are needed to correct this distribution mismatch \citep{imani2018off,liu2020off,zhang2020provably}.  (ii) \textit{Fixed behavior policy and importance sampling}: The formulation of off-policy PG presumes the use of a static behavior policy throughout training as it is designed to optimize a surrogate objective with respect to the behavior policy. However, in many cases, we do prefer that the behavior policy varies with the target policy (e.g., epsilon-greedy exploration) as it is widely known that importance sampling could lead to significant variance in gradient estimation, especially when the behavior policy substantially deviates from the current policy.
As a result, one fundamental research question that we would like to answer is: ``\textit{How to achieve off-policy policy optimization with global convergence guarantees, but without the above limitations of off-policy PG?}''

To answer this question, in this paper we take a different approach and propose an alternative off-policy policy optimization framework termed Coordinate Ascent Policy Optimization (CAPO), which revisits the policy optimization problem through the lens of {coordinate ascent}.
\pch{Our key insight is that \textit{the distribution mismatch and the fixed behavior policy issues in off-policy PG both result from the tight coupling between the behavior policy and the objective function in policy optimization.} To address this issue, we propose to still adopt the original objective of standard on-policy PG, but from the perspective of \textit{coordinate ascent} with the update coordinates determined by the behavior policy.
Through this design, we can completely decouple the objective function from the behavior policy while still enabling off-policy policy updates.}
Under the canonical tabular softmax parameterization, where each ``coordinate" corresponds to a parameter specific to each state-action pair, CAPO iteratively updates the policy by performing coordinate ascent for those state-action pairs in the mini-batch, without resorting to the full gradient information or any gradient estimation.  
While being a rather simple method in the optimization literature, coordinate ascent and the resulting CAPO enjoy two salient features that appear rather useful in the context of RL:
\vspace{-2mm}
\begin{itemize}[leftmargin=*]
    \vspace{-1mm}
    \item With the simple coordinate update, CAPO is capable of improving the policy by following any policy under a mild condition, directly enabling off-policy policy updates with an adaptive behavior policy. This feature addresses the issue of fixed behavior policy.
    \vspace{-1mm}
    \item Unlike PG, which requires having either full gradient information (the true PG setting) or an unbiased estimate of the gradient (the stochastic PG setting), updating the policy in a coordinate-wise manner allows CAPO to obviate the need for true gradient or unbiasedness while still retaining strict policy improvement in each update. As a result, this feature also obviates the need for distribution correction or importance sampling in the policy update.
\end{itemize}
\vspace{-1mm}

To establish the global convergence of CAPO, we need to tackle the following main challenges: (i) In the coordinate descent literature, one common property is that the coordinates selected for the update are either determined according to a deterministic sequence (e.g., cyclic coordinate descent) or drawn independently from some distribution (e.g., randomized block coordinate descent) \citep{nesterov2012efficiency}.
By contrast, given the highly stochastic and non-i.i.d. nature of RL environments, in the general update scheme of CAPO, we impose no assumption on the data collection process, except for the standard condition of infinite visitation to each state-action pair \citep{singh2000convergence,munos2016safe}.
(ii) The function of total discounted expected return is in general non-concave, and the coordinate ascent methods could only converge to a stationary point under the general non-concave functions. 
Despite the above, we are able to show that the proposed CAPO algorithm attains a globally optimal policy with properly-designed step sizes under the canonical softmax parameterization.
\pch{(iii) In the optimization literature, it is known that the coordinate ascent methods can typically converge slowly compared to the gradient counterpart. Somewhat surprisingly, we show that CAPO achieves comparable convergence rates as the true on-policy PG \citep{mei2020global}. Through our convergence analysis, we found that this can be attributed to the design of the state-action-dependent variable step sizes.}

Built on the above results, we further generalize CAPO to the case of neural policy parameterization for practical implementation.
Specifically, Neural CAPO (NCAPO) proceeds by the following two steps: (i) Given a mini-batch of state-action pairs, we leverage the tabular CAPO as a subroutine to obtain a collection of reference action distributions for those states in the mini-batch. (ii) By constructing a loss function (e.g., Kullback-Leibler divergence), we guide the policy network to update its parameters towards the state-wise reference action distributions.
Such update can also be interpreted as solving a distributional regression problem.

\textbf{Our Contributions.}
In this work, we revisit off-policy policy optimization and propose a novel policy-based learning algorithm from the perspective of coordinate ascent. The main contributions can be summarized as follows:

\vspace{-2mm}
\begin{itemize}[leftmargin=*]
  \vspace{-1mm}
  \item We propose CAPO, a simple yet practical off-policy actor-critic framework with global convergence, and naturally enables direct off-policy policy updates with more flexible use of adaptive behavior policies, without the need for distribution correction or importance sampling correction to the policy gradient.
  \vspace{-1mm}
  \item We show that the proposed CAPO converges to a globally optimal policy under tabular softmax parameterization for general coordinate selection rules and further characterize the convergence rates of CAPO under multiple popular variants of coordinate ascent.
  We then extend the idea of CAPO to learning general neural policies to address practical RL settings.
  \vspace{-1mm}
  \item Through experiments, we demonstrate that NCAPO achieves comparable or better empirical performance than various popular benchmark methods in the MinAtar environment \citep{young19minatar}.
\end{itemize}

\textbf{Notations}. Throughout the paper, we use $[n]$ to denote the set of integers $\{1,\cdots, n\}$. For any $x\in \mathbb{R}\backslash \{0\}$, we use $\sgn(x)$ to denote $\frac{x}{\lvert x\rvert}$ and set $\sgn(0)=0$. We use $\mathbb{I}\{\cdot\}$ to denote the indicator function.

\section{Preliminaries}
\label{section:prelim}
\textbf{Markov Decision Processes.} We consider an infinite-horizon Markov decision process (MDP) characterized by a tuple $(\cS, \cA, \mathcal{P}, r, \gamma, \rho)$, where (i) $\mathcal{S}$ denotes the state space, (ii) $\cA$ denotes a \textit{finite} action space, (iii) $\cP:\cS\times\cA\rightarrow \Delta(\cS)$ is the transition kernel determining the transition probability $\mathcal{P}(s'\rvert s,a)$ from each state-action pair $(s,a)$ to a next state $s'$, where $\Delta(\cS)$ is a probability simplex over $\cS$, (iv) $r: \mathcal{S} \times \mathcal{A} \rightarrow[0,1]$ is the reward function, 
(v) $\gamma \in (0,1)$ is the discount factor, and (vi) \pch{$\rho$ is the initial state distribution.} 
In this paper, we consider learning a stationary parametric stochastic policy denoted as $\pi_{\theta}:\cS \rightarrow \Delta(\cA)$, which specifies through a parameter vector $\theta$ the action distribution from a probability simplex $\Delta(\cA)$ over $\cA$ for each state.
For a policy $\pi_\theta$, the value function $V^{\pi_\theta}: \mathcal{S} \rightarrow \mathbb{R}$ is defined as the sum of discounted expected future rewards obtained by starting from state $s$ and following $\pi_\theta$, i.e.,
\begin{equation}
V^{\pi_\theta}(s):=\mathbb{E}\bigg[\sum_{t=0}^{\infty} \gamma^{t} r(s_{t}, a_{t})\bigg\vert \pi_\theta, s_{0}=s\bigg],
\end{equation}
where $t$ represents the timestep of the trajectory $\{(s_t, a_t)\}^{\infty}_{t=0}$ induced by the policy $\pi_\theta$ with the initial state $s_0=s$. 
The goal of the learner is to search for a policy that maximizes the following objective function as
\begin{equation}
V^{\pi_\theta}(\rho):=\E_{s\sim\rho}[V^{\pi_{\mathbf{\theta}}}(s)].
\end{equation}
For ease of exposition, we use $\pi^*$ to denote an optimal policy and let $V^*(s)$ be a shorthand notation for $V^{\pi^*}(s)$.
Moreover, for any given policy $\pi_\theta$, we define the $Q$-function  $Q^{\pi_\theta}: \mathcal{S} \times \mathcal{A} \rightarrow \mathbb{R}$ as
\begin{equation}
Q^{\pi_\theta}(s, a):=\mathbb{E}\bigg[\sum_{t=0}^{\infty} \gamma^{t} r(s_{t}, a_{t})\bigg\vert \pi, s_{0}=s,a_0=a\bigg].
\end{equation}
We also define the advantage function $A^{\pi_\theta}: \mathcal{S} \times \mathcal{A} \rightarrow \mathbb{R}$ as
\begin{equation}
A^{\pi_{\theta}}(s, a):=Q^{\pi_\theta}(s, a)-V^{\pi_{\theta}}(s),
\end{equation}
which reflects the relative benefit of taking the action $a$ at state $s$ under policy $\pi_{\theta}$. Moreover, throughout this paper, we use $m$ as the index of the training iterations and use $\pi_m$ and $\pi_{\theta_m}$ interchangeably to denote the parameterized policy at iteration $m$.


\textbf{Policy Gradients.} 
The policy gradient is a popular policy optimization method that updates the parameterized policy $\pi_\theta$ by applying gradient ascent with respect to an objective function $V^{\pi_\theta}(\mu)$, where $\mu$ is some starting state distribution.
The standard stochastic policy gradient theorem states that the policy gradient $\nabla_{\theta} V^{\pi_\theta}({\mu})$ takes the form as \citep{sutton1999policy}
\begin{align}
\label{eq:pg}
&\nabla_{\theta} V^{\pi_\theta}({\mu})\nonumber\\
&\hspace{-6pt}= \frac{1}{1-\gamma} \mathbb{E}_{s \sim d_{\mu}^{\pi_{\theta}},a \sim \pi_{\theta}(\cdot \mid s)}\big[\nabla_{\theta} \log \pi_{\theta}(a \rvert s) A^{\pi_{\theta}}(s, a)\big],
\end{align}
where the outer expectation is taken over the \textit{discounted state visitation distribution} under $\mu$ as
\begin{equation}
    d_{\mu}^{\pi_\theta}(s):=\mathbb{E}_{s_{0} \sim \mu}\bigg[(1-\gamma) \sum_{t=0}^{\infty} \gamma^{t} \bbP\big(s_{t}=s \rvert s_{0},{\pi_\theta}\big)\bigg].
\end{equation}
Note that $d_{\mu}^{\pi_\theta}(s)$ reflects how frequently the learner would visit the state $s$ under $\pi_\theta$.

Regarding PG for off-policy learning, the learner's goal is to learn an optimal policy $\pi^*$ by following a behavior policy.
\citet{degris2012offpac} proposed to optimize the following surrogate objective defined as
\begin{equation}
    J^{\pi_\theta}(\beta):=\sum_{s\in\cS}\bar{d}^{\beta}(s)V^{\pi_\theta}(s),
    \label{eq:offpac_fixed}
\end{equation}
where $\beta:\cS\rightarrow \Delta(\cA)$ is a \textit{fixed} behavior policy and $\bar{d}^{\beta}(s)$ is the stationary state distribution under $\beta$ (which is assumed to exist in \citep{degris2012offpac}).
The resulting off-policy PG enjoys a closed-form expression as
\begin{align}
    \nabla_\theta J^{\pi_\theta}(\beta)=\E_{s\sim \bar{d}^{\beta}(s)}\Big[&\sum_{a\in\cA}\Big(\nabla_\theta \pi_\theta(a\rvert s)Q^{\pi_\theta}(a\rvert s)\nonumber\\
    &+\pi_{\theta}(a\rvert s)\nabla_{\theta}Q^{\pi_\theta}(s,a)\Big)\Big].\label{eq:OPPG}
\end{align}
Moreover, \citet{degris2012offpac} showed that one can ignore the term $\pi_{\theta}(a\rvert s)\nabla_{\theta}Q^{\pi_\theta}(s,a)$ in (\ref{eq:OPPG}) under tabular parameterization without introducing any bias and proposed the corresponding Off-Policy Actor-Critic algorithm (Off-PAC)
\begin{equation}
    \theta_{m+1}=\theta_{m}+\eta\cdot \omega_m(s,a)Q^{\pi_{m}}(s,a)\nabla_{\theta}\log \pi_{\theta_m}(s,a),
\end{equation}
where $s$ is drawn from $\bar{d}^{\beta}$, $a$ is sampled from $\beta(\cdot\rvert s)$, and $\omega_m(s,a):=\frac{\pi_m(a\rvert s)}{\beta(a\rvert s)}$ denotes the importance ratio.
Subsequently, the off-policy PG has been generalized by incorporating state-dependent emphatic weightings \citep{imani2018off} and introducing a counterfactual objective \citep{zhang2019general}.



\textbf{Coordinate Ascent.} Coordinate ascent (CA) methods optimize a parameterized objective function $f(\theta):\mathbb{R}^{n}\rightarrow\mathbb{R}$ by iteratively updating the parameters along coordinate directions or coordinate hyperplanes.
Specifically, in the $m$-th iteration, the CA update along the $i_m$-th coordinate is
\begin{equation}
    \theta_{m+1}= \theta_{m} +\eta\cdot [\nabla_\theta f(\theta)]_{i_m} e_{i_m},
\end{equation}
where $e_{i_m}$ denotes the one-hot vector of the $i_m$-th coordinate and $\eta$ denotes the step size. The main difference among the CA methods mainly lies in the selection of coordinates for updates. Popular variants of CA methods include:
\vspace{-2mm}

\begin{itemize}[leftmargin=*]
    \item {\textbf{Cyclic CA}: The choice of coordinate proceeds in a predetermined cyclic order \citep{saha2013nonasymptotic}. For example, one possible configuration is $i_m\leftarrow m \bmod n$.}
    \vspace{-2mm}
    \item {\textbf{Randomized CA}: In each iteration, one coordinate is drawn randomly from some distribution with support $[n]$ \citep{nesterov2012efficiency}.}
\end{itemize}
\vspace{-2mm}

Moreover, the CA updates can be extended to the \textit{blockwise} scheme \citep{tseng2001convergence,beck2013convergence}, where multiple coordinates are selected in each iteration.
{Despite the simplicity, the CA methods have been widely used in variational inference \citep{jordan1999introduction} and large-scale machine learning \citep{nesterov2012efficiency} due to its parallelization capability.
To the best of our knowledge, CA has remained largely unexplored in the context of policy optimization.}

%

\section{Methodology}
\label{section:method}
In this section, we present the proposed CAPO algorithm, which improves the policy through coordinate ascent updates.
Throughout this section, we consider the class of tabular softmax policies.
Specifically, for each state-action pair $(s, a)$, let $\theta(s,a)$ denote the corresponding parameter. The probability of selecting action $a$ given state $s$ is given by $\pi_\theta(a \rvert s) =\frac{\exp({\theta(s,a)})}{\sum_{a' \in \mathcal{A}}\exp({\theta(s,a')})}$.


%
%
%
\subsection{Coordinate Ascent Policy Optimization}
\label{subsection:CAPO}




To begin with, we present the general policy update scheme of CAPO. The discussion about the specific instances of CAPO along with their convergence rates will be provided subsequently in \Cref{subsection:convergence_rate}. 
To motivate the policy improvement scheme of CAPO, we first state the following lemma \citep{agarwal2020theory,mei2020global}.
%
\begin{lemma}
\label{lemma:PG}
Under tabular softmax policies, the standard policy gradient with respect to $\theta$ is given by
\begin{equation}
    \frac{\partial V^{\pi_\theta}(\mu)}{\partial \theta(s,a)}=\frac{1}{1-\gamma}d^{\pi_\theta}_{\mu}(s)\cdot \pi_{\theta}(a\rvert s)\cdot A^{\pi_\theta}(s,a).
\end{equation}
\end{lemma}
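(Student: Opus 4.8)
The plan is to start from the stochastic policy gradient theorem already stated in~\eqref{eq:pg} and extract its component along the single coordinate $\theta(s,a)$, exploiting the special structure of the tabular softmax parameterization. Since~\eqref{eq:pg} is given, the only genuine work is differentiating the softmax log-likelihood, so the argument reduces to a short and essentially routine calculation.

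First I would compute $\frac{\partial \log \pi_\theta(a' \mid s')}{\partial \theta(s,a)}$. Because the parameters $\{\theta(s,\cdot)\}$ associated with state $s$ influence the action distribution only at state $s$, this derivative vanishes whenever $s' \neq s$. For $s' = s$, writing $\log \pi_\theta(a' \mid s) = \theta(s,a') - \log \sum_{a''} \exp(\theta(s,a''))$ and differentiating yields the familiar softmax identity
\[
\frac{\partial \log \pi_\theta(a' \mid s)}{\partial \theta(s,a)} = \mathbb{I}\{a' = a\} - \pi_\theta(a \mid s).
\]

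Next I would substitute this into~\eqref{eq:pg} and read off the $(s,a)$-component. The vanishing of the derivative for $s' \neq s$ collapses the outer expectation over the state-visitation distribution $d^{\pi_\theta}_\mu$ to the single term $s' = s$, producing the factor $\frac{1}{1-\gamma} d^{\pi_\theta}_\mu(s)$, while the inner expectation over actions becomes $\sum_{a'} \pi_\theta(a' \mid s)\big(\mathbb{I}\{a'=a\} - \pi_\theta(a \mid s)\big) A^{\pi_\theta}(s,a')$. The first summand evaluates to $\pi_\theta(a \mid s) A^{\pi_\theta}(s,a)$, and the second is $-\pi_\theta(a \mid s)$ times the action-averaged advantage $\sum_{a'} \pi_\theta(a' \mid s) A^{\pi_\theta}(s,a')$.

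Finally, I would invoke the baseline property of the advantage function: since $\sum_{a'} \pi_\theta(a' \mid s) Q^{\pi_\theta}(s,a') = V^{\pi_\theta}(s)$, the averaged advantage equals zero, so the second summand drops out entirely and I am left with $\frac{1}{1-\gamma} d^{\pi_\theta}_\mu(s)\, \pi_\theta(a \mid s)\, A^{\pi_\theta}(s,a)$, which is the claim. There is no real obstacle here; the only points demanding a moment of care are keeping the fixed coordinate index $a$ distinct from the action summation variable $a'$, and recognizing the zero-mean advantage cancellation directly rather than expanding it term by term.
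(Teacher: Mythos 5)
Your derivation is correct and complete: the softmax log-derivative identity, the collapse of the state expectation to the single term $s'=s$, and the cancellation via $\sum_{a'}\pi_\theta(a'\rvert s)A^{\pi_\theta}(s,a')=0$ are exactly the standard argument. The paper itself does not reprove this lemma but cites it from \citet{agarwal2020theory} and \citet{mei2020global}, whose proof is essentially identical to yours, so there is nothing to add.
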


Based on Lemma $\ref{lemma:PG}$, we see that the update direction of each coordinate is completely determined by the \textit{sign} of the advantage function. Accordingly, the proposed general CAPO update scheme is as follows: In each update iteration $m$, let ${B}_m$ denote the mini-batch of state-action pairs sampled by the behavior policy. The batch ${B}_m$ determines the coordinates of the policy parameter to be updated. Specifically, the policy is updated by
\begin{align}
\label{eq:CAPO_form}
&\theta_{m+1}(s, a)\nonumber\\
&=\theta_m(s, a) +\alpha_m(s, a)\mathbb{I}\{(s,a) \in {B}_m\} \cdot \sign\left(A^{\pi_{\theta_m}}(s, a)\right),
\end{align}
where $\alpha_m: \cS\times \cA\rightarrow \bR_{+}$ is the function that controls the \textit{magnitude} of the update and plays the role of the learning rate, the term $\sgn(A^{\pi_{\theta_m}}(s,a))$ controls the update \textit{direction}, and ${B}_m$ is the sampled batch of state-action pairs in the $m$-th iteration and determines the \textit{coordinate selection}. Under CAPO, only those parameters associated with the sampled state-action pairs will be updated accordingly, as suggested by (\ref{eq:CAPO_form}). 
Based on this, we could reinterpret $B_m$ as produced by a \textit{coordinate generator}, which could be induced by the behavior policies.

\begin{remark}
\normalfont Note that under the general CAPO update, the learning rate $\alpha$ is state-action-dependent. This is one salient difference from the learning rates of conventional coordinate ascent methods in the optimization literature \citep{nesterov2012efficiency,saha2013nonasymptotic}. As will be shown momentarily in \Cref{subsection:analysis}, this design allows CAPO to attain global optimality without statistical assumptions about the samples (i.e., the selected coordinates). 
On the other hand, while it appears that the update rule in (\ref{eq:CAPO_form}) only involves the sign of the advantage function, the magnitude of the advantage $\lvert A(s,a)\rvert$ could also be taken into account if needed through $\alpha(s,a)$, which is also state-action-dependent. As a result, (\ref{eq:CAPO_form}) indeed provides a flexible expression that separates the effect of the sign and magnitude of the advantage. 
Interestingly, as will be shown in the next subsections, we establish that CAPO can achieve global convergence without the knowledge of the magnitude of the advantage.
\end{remark}

\begin{remark}
\normalfont Compared to the off-policy PG methods \citep{degris2012offpac,wang2016sample,imani2018off}, one salient property of CAPO is that it allows \textit{off-policy learning} through coordinate ascent on the original \textit{on-policy} total expected reward $\E_{s\sim\rho}[V^{\pi}(s)]$, instead of the \textit{off-policy} total expected reward over the discounted state visitation distribution induced by the behavior policy.
On the other hand, regarding the learning of a critic, similar to the off-policy PG methods, CAPO can be integrated with any off-policy policy evaluation algorithm, such as Retrace \citep{munos2016safe} or V-trace \citep{espeholt2018impala}.
\end{remark}




\subsection{Asymptotic Global Convergence of CAPO With General Coordinate Selection}
\label{subsection:analysis}

%
%
In this section we discuss the convergence result of CAPO under softmax parameterization. 
In the subsequent analysis, we assume that the following \Cref{condition:sa} is satisfied.

\begin{condition}
\label{condition:sa}
$\lim_{M \rightarrow \infty} \sum^{M}_{m=1} \bbI\{(s,a) \in {B}_m\} \rightarrow \infty$
\end{condition}
Note that \Cref{condition:sa} is rather mild as it could be met by exploratory behavior policies (e.g., $\epsilon$-greedy policies) given the off-policy capability of CAPO.
Moreover, Condition \ref{condition:sa} is similar to the standard condition of infinite visitation required by various RL methods \citep{singh2000convergence,munos2016safe}. 
Notably, \Cref{condition:sa} indicates that under CAPO the coordinates are not required to be selected by following a specific policy, as long as infinite visitation to every state-action pair is satisfied.
This feature naturally enables flexible off-policy learning, justifies the use of a replay buffer, and enables the flexibility to decouple policy improvement from value estimation. 




We first show that CAPO guarantees strict improvement under tabular softmax parameterization.
\begin{lemma}[Strict Policy Improvement]
Under the CAPO update given by (\ref{eq:CAPO_form}), we have $V^{\pi_{m+1}}(s) \geq V^{\pi_{m}}(s)$, for all $s \in S$, for all $m\in \mathbb{N}$.  
\label{lemma:strict_improvement}
\end{lemma}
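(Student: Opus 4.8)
The plan is to reduce the pointwise value comparison to a one-step, per-state statement about the advantage, and then exploit that the CAPO logit update is sign-aligned with the advantage function. The main tool is the performance difference lemma (Kakade and Langford), which for any two policies $\pi$ and $\pi'$ and any state $s$ reads
\begin{equation}
V^{\pi'}(s) - V^{\pi}(s) = \frac{1}{1-\gamma}\sum_{s'} d^{\pi'}_{s}(s') \sum_{a} \pi'(a\rvert s') A^{\pi}(s',a).
\end{equation}
Applying this with $\pi' = \pi_{m+1}$ and $\pi = \pi_m$, and noting $d^{\pi_{m+1}}_s(s')\geq 0$ and $\tfrac{1}{1-\gamma}>0$, it suffices to show that for every state $s'$ we have $\sum_{a} \pi_{m+1}(a\rvert s') A^{\pi_m}(s',a) \geq 0$. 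Using the pointwise form (rather than the $\rho$-averaged form) is what will let us conclude the inequality for \emph{every} $s$, as the statement demands.

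Next I would invoke the elementary identity $\sum_a \pi_m(a\rvert s') A^{\pi_m}(s',a) = 0$, which holds because $\sum_a \pi_m(a\rvert s')Q^{\pi_m}(s',a) = V^{\pi_m}(s')$. Hence the target inequality is equivalent to showing that the CAPO update does not decrease the advantage-weighted action mass at any state. Writing the logit increment at $(s',a)$ as $\delta_a := \alpha_m(s',a)\,\mathbb{I}\{(s',a)\in B_m\}\,\sign\!\left(A^{\pi_m}(s',a)\right)$, the softmax form gives $\pi_{m+1}(a\rvert s') = \pi_m(a\rvert s')\,e^{\delta_a}/Z$ with $Z = \sum_{a'}\pi_m(a'\rvert s')e^{\delta_{a'}} > 0$. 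Since $Z>0$ cancels, it remains to show $\sum_a \pi_m(a\rvert s')\,e^{\delta_a}A^{\pi_m}(s',a)\geq 0$.

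The crux is the sign alignment built into (\ref{eq:CAPO_form}): by construction $\delta_a$ has the same sign as $A^{\pi_m}(s',a)$, and $\delta_a=0$ whenever $A^{\pi_m}(s',a)=0$ or $(s',a)\notin B_m$. A short three-case check then gives the termwise bound $e^{\delta_a}A^{\pi_m}(s',a)\geq A^{\pi_m}(s',a)$: when $A^{\pi_m}(s',a)>0$ we have $\delta_a\geq 0$, so $e^{\delta_a}\geq 1$; when $A^{\pi_m}(s',a)<0$ we have $\delta_a\leq 0$, so $0<e^{\delta_a}\leq 1$, which makes the (negative) product no smaller; and the $A^{\pi_m}(s',a)=0$ case is trivial. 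Summing against the nonnegative weights $\pi_m(a\rvert s')$ and invoking the zero-advantage identity yields
\begin{equation}
\sum_a \pi_m(a\rvert s')\,e^{\delta_a}A^{\pi_m}(s',a)\;\geq\; \sum_a \pi_m(a\rvert s')A^{\pi_m}(s',a)\;=\;0,
\end{equation}
which closes the argument.

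I do not expect a serious obstacle here; the proof is essentially a monotonicity argument. The only points that genuinely require care are keeping the \emph{pointwise} version of the performance difference lemma so that the improvement holds at every $s$ rather than merely in $\rho$-expectation, and correctly tracking the softmax renormalization $Z$, which is harmless because it is strictly positive and cancels. The sign case analysis is where the specific structure of CAPO — updating logits by the sign of the advantage with a nonnegative, possibly state-action-dependent magnitude $\alpha_m(s',a)$ — enters, and it is precisely this alignment that guarantees strict (indeed weak, and strict whenever some selected coordinate has nonzero advantage) improvement without any appeal to a small step size.
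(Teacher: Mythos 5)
Your proposal is correct and follows essentially the same route as the paper's proof: both reduce via the pointwise performance difference lemma to showing $\sum_a \pi_{m+1}(a\rvert s)A^{\pi_m}(s,a)\geq 0$, pull out the positive normalization ratio $Z_m(s)/Z_{m+1}(s)$, and use the sign-alignment of the logit increment to bound each term against its pre-update counterpart before invoking $\sum_a \pi_m(a\rvert s)A^{\pi_m}(s,a)=0$. Your explicit three-case check and the remark that the improvement is only weak when no selected coordinate has nonzero advantage are, if anything, slightly more careful than the paper's strict inequality at the corresponding step.
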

\begin{proof}
The proof can be found in Appendix \ref{app:proof_strict_improvement}.
\end{proof}






We proceed to substantiate the benefit of the state-action-dependent learning rate used in the general CAPO update in (\ref{eq:CAPO_form}) by showing that CAPO can attain a globally optimal policy with a properly designed learning rate $\alpha(\cdot,\cdot)$.


\begin{theorem}
\label{theorem:convergeoptimal}
Consider a tabular softmax parameterized policy $\pi_\theta$. Under (\ref{eq:CAPO_form}) with ${\alpha_m(s, a) \ge \log (\frac{1}{\pi_{\theta_m(a\rvert s)}})}$, if Condition \ref{condition:sa} is satisfied, then we have $V^{\pi_{m}}(s) \rightarrow V^{*}(s)$ as $m\rightarrow \infty$, for all $s\in\cS$.
\end{theorem}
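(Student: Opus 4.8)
The plan is to combine the monotone convergence of the value sequence with the infinite-visitation Condition~\ref{condition:sa} to force the limiting advantages to be non-positive everywhere, and then invoke the Bellman optimality equation. Write $V^{\infty}(s):=\lim_{m\to\infty}V^{\pi_m}(s)$, which exists because \Cref{lemma:strict_improvement} makes $\{V^{\pi_m}(s)\}_m$ nondecreasing while it is bounded above by $1/(1-\gamma)$. Since $Q^{\pi_m}(s,a)=r(s,a)+\gamma\sum_{s'}\cP(s'\mid s,a)V^{\pi_m}(s')$ depends linearly on the value vector, we get $Q^{\pi_m}(s,a)\to Q^{\infty}(s,a)$ and hence $A^{\pi_m}(s,a)\to A^{\infty}(s,a):=Q^{\infty}(s,a)-V^{\infty}(s)$ for every pair. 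My target is to show $A^{\infty}(s,a)\le 0$ for all $(s,a)$: once this holds, $V^{\infty}(s)\ge\max_a Q^{\infty}(s,a)$, which together with the trivial bound $V^{\infty}(s)\le\max_a Q^{\infty}(s,a)$ (the value is an average of the $Q$-values, finite $\cA$) shows that $V^{\infty}$ solves the Bellman optimality equation, so $V^{\infty}=V^{*}$ by uniqueness of its fixed point.

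The engine is the per-state quantity $\Delta_m(s):=\sum_a \pi_{m+1}(a\mid s)A^{\pi_m}(s,a)$. Writing $p_a=\pi_m(a\mid s)$, $q_a=\pi_{m+1}(a\mid s)$, and $c_a=\alpha_m(s,a)\,\bbI\{(s,a)\in B_m\}\,\sgn(A^{\pi_m}(s,a))$, the softmax update gives $q_a=p_a e^{c_a}/Z$ with $Z=\sum_{a'}p_{a'}e^{c_{a'}}$, so using $\sum_a p_a A^{\pi_m}(s,a)=0$,
\[ \Delta_m(s)=\frac{1}{Z}\sum_a p_a\bigl(e^{c_a}-1\bigr)A^{\pi_m}(s,a). \]
Every summand is nonnegative (the sign of $e^{c_a}-1$ matches that of $A^{\pi_m}(s,a)$), which both gives $\Delta_m(s)\ge 0$ and is the mechanism behind \Cref{lemma:strict_improvement}. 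Combining the performance difference lemma with $d^{\pi_{m+1}}_{s}(s)\ge 1-\gamma$ and discarding the nonnegative off-diagonal terms yields $V^{\pi_{m+1}}(s)-V^{\pi_m}(s)\ge \Delta_m(s)$; summing over $m$ and telescoping the bounded value sequence gives $\sum_m \Delta_m(s)<\infty$, hence $\Delta_m(s)\to 0$ for every $s$.

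Next I would isolate the role of the step-size condition. Keeping only the $a$-th term of the display for a coordinate $(s,a)\in B_m$ with $A^{\pi_m}(s,a)>0$ and using $e^{-\alpha_m(s,a)}\le \pi_m(a\mid s)$ (equivalent to $\alpha_m(s,a)\ge \log(1/\pi_m(a\mid s))$) yields the key inequality
\[ \Delta_m(s)\ \ge\ \pi_{m+1}(a\mid s)\,\bigl(1-\pi_m(a\mid s)\bigr)\,A^{\pi_m}(s,a); \]
in the single-coordinate case this in fact forces $\pi_{m+1}(a\mid s)\ge 1/2$. Now suppose, for contradiction, that $A^{\infty}(s,a)>0$ somewhere; choose a state $s^{*}$ with $\max_a A^{\infty}(s^{*},a)=2\delta>0$ and let $a^{\dagger}$ attain this maximum, so $a^{\dagger}\in\argmax_a Q^{\infty}(s^{*},a)$ and $A^{\pi_m}(s^{*},a^{\dagger})>\delta$ for all large $m$. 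By Condition~\ref{condition:sa} the pair $(s^{*},a^{\dagger})$ lies in $B_m$ for infinitely many $m$, and along that subsequence $\Delta_m(s^{*})\to 0$, so the key inequality forces $\pi_{m+1}(a^{\dagger}\mid s^{*})\,(1-\pi_m(a^{\dagger}\mid s^{*}))\to 0$.

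The main obstacle, and the technical heart of the proof, is converting this into the contradiction $A^{\infty}(s^{*},a^{\dagger})=0$. The difficulty is that $\alpha_m$ is only lower bounded, so a competing action could in principle absorb the normalization mass $Z$ and keep $\pi_{m+1}(a^{\dagger}\mid s^{*})$ small, and one cannot naively conclude $\pi_m(a^{\dagger}\mid s^{*})\to 1$. The resolution I would pursue exploits the self-correcting coupling between the monotone values and the advantages: since $A^{\pi_m}(s^{*},\cdot)\to A^{\infty}(s^{*},\cdot)$ and $V^{\pi_m}(s^{*})\nearrow V^{\infty}(s^{*})$, any action retaining non-vanishing probability in the limit must have zero limiting advantage, for otherwise $V^{\pi_m}(s^{*})$ would either overshoot $V^{\infty}(s^{*})$ (violating monotonicity) or leave residual per-step improvement (contradicting $\Delta_m(s^{*})\to 0$); because $a^{\dagger}$ is a $Q^{\infty}$-maximizer with strictly positive limiting advantage, mass cannot stably concentrate on any other (zero-advantage) action. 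Making this diversion-exclusion rigorous, e.g.\ through a potential such as $-\log\pi_m(a^{\dagger}\mid s^{*})$ together with the finiteness of $\cS\times\cA$ and the uniform convergence of the advantages, would then yield $\pi_m(a^{\dagger}\mid s^{*})\to 1$, i.e.\ $A^{\infty}(s^{*},a^{\dagger})=0$, contradicting $A^{\infty}(s^{*},a^{\dagger})=2\delta>0$ and completing the proof that $A^{\infty}\le 0$ everywhere.
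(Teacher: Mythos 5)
Your setup --- monotone convergence to $V^{\infty}$, reduction to showing $A^{\infty}\le 0$ everywhere, and the telescoped quantity $\Delta_m(s)=\sum_a\pi_{m+1}(a\mid s)A^{\pi_m}(s,a)\to 0$ together with the key inequality $\Delta_m(s)\ge \pi_{m+1}(a\mid s)\bigl(1-\pi_m(a\mid s)\bigr)A^{\pi_m}(s,a)$ --- is correct and is a genuinely cleaner quantitative handle than the paper's purely asymptotic argument (it is closer in spirit to the rate proofs in Appendix~\ref{app:convergence_rate}). But the proof is not complete: the final step, passing from $\pi_{m+1}(a^{\dagger}\mid s^{*})\bigl(1-\pi_m(a^{\dagger}\mid s^{*})\bigr)\to 0$ along the sampling subsequence to $\pi_m(a^{\dagger}\mid s^{*})\to 1$, is exactly the hard part, and you only gesture at it. The obstruction you name is real and is not dispatched by the ``potential / diversion-exclusion'' sketch: since $\alpha_m$ is only lower-bounded, a batch may contain other coordinates at $s^{*}$ whose advantage is positive at every finite $m$ but tends to $0$ (actions in $I^{s^{*}}_{0}$ in the paper's notation), and arbitrarily large step sizes on those coordinates can repeatedly absorb the softmax normalization so that neither factor of your product is individually controlled; moreover, concentrating the limiting mass on such a zero-advantage action is perfectly consistent with the identity $\sum_a\pi_m(a\mid s)A^{\pi_m}(s,a)=0$, so no contradiction arises from the statics alone.

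The paper closes precisely this gap with a separate induction over the actions ordered by their limiting $Q$-values (\Cref{lemma:contradiction_of_a+}): it first shows that the probability of the worst action vanishes, then uses \Cref{prop:ratio} to control the ratio of mass on better versus worse actions and \Cref{prop:adv_all_neg} to convert ``eventually non-positive advantage'' into ``eventually strictly negative advantage,'' eliminating the suboptimal actions one at a time until all mass sits on $\tilde a^{+}=\argmax_a Q^{\infty}(s^{*},a)$. Some argument of this kind --- one that exploits the full ordering of the $Q^{\infty}$-values rather than only the single pair $(s^{*},a^{\dagger})$ --- appears to be unavoidable, and your proposal would need to supply it to be complete.
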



\begin{proof}[Proof Sketch]
The detailed proof can be found in Appendix \ref{app:proof_convergeoptimal}. 
To highlight the main ideas of the analysis, we provide a sketch of the proof as follows: (i) Since the expected total reward is bounded above, with the strict policy improvement property of CAPO update (cf. Lemma \ref{lemma:strict_improvement}), the sequence of value functions is guaranteed to converge, i.e., the limit of $V^{\pi_m}(s)$ exists. (ii) The proof proceeds by contradiction. We suppose that CAPO converges to a sub-optimal policy, which implies that there exists at least one state-action pair $(s',a')$ such that $A^{(\infty)}(s',a')>0$ and $\pi_{\infty}(a'' \rvert s'') = 0$ for all state-action pair $(s'',a'')$ satisfying $A^{(\infty)}(s'',a'')>0$. 
As a result, this implies that for any $\epsilon>0$, there must exist a time $M^{\eps}$ such that $\pi_m(a'' \rvert s'') < \eps$, $\forall m > M^{\eps}_{}$. (iii) However, under CAPO update, we show that the policy weight of the state-action pair which has the greatest advantage value shall approach $1$, and this leads to a contradiction.
\end{proof}

\begin{remark}
\normalfont \pch{The proof of Theorem \ref{theorem:convergeoptimal} is inspired by \citep{agarwal2020theory}. Nevertheless, the analysis of CAPO presents its own salient challenge: Under true PG, the policy updates in all the iterations can be fully determined once the initial policy and the step size are specified. By contrast, under CAPO, the policy obtained in each iteration depends on the selected coordinates, which can be almost arbitrary under Condition \ref{condition:sa}. This makes it challenging to establish a contradiction under CAPO, compared to the argument of directly deriving the policy parameters in the limit in true PG \citep{agarwal2020theory}. Despite this, we address the challenge by using a novel induction argument based on the action ordering w.r.t. the $Q$ values in the limit.}
\end{remark}

\begin{remark}
\normalfont Notably, the condition of the learning rate $\alpha$ in Theorem \ref{theorem:convergeoptimal} does not depend on the advantage, but only on the action probability $\pi_{\theta}(a\rvert s)$. As a result, the CAPO update only requires the sign of the advantage function, without the knowledge of the magnitude of the advantage. Therefore, CAPO can still converge even under a low-fidelity critic that merely learns the sign of the advantage function.
\end{remark}



\subsection{Convergence Rates of CAPO With Specific Coordinate Selection Rules}
\label{subsection:convergence_rate}

In this section, we proceed to characterize the convergence rates of CAPO under softmax parameterization and the three specific coordinate generators, namely, Cyclic, Batch, and Randomized CAPO.

\begin{itemize}[leftmargin=*]
    \item \textbf{Cyclic CAPO}: Under Cyclic CAPO, every state action pair $(s,a) \in \mathcal{S} \times \mathcal{A}$ will be chosen for policy update by the coordinate generator cyclically. Specifically, Cyclic CAPO sets $\lvert B_m\rvert=1$ and $\bigcup_{i=1}^{|\mathcal{S}||\mathcal{A}|} B_{m \cdot |\mathcal{S}||\mathcal{A}| + i} =  \mathcal{S} \times \mathcal{A}$.
    \item \textbf{Randomized CAPO}: {Under Randomized CAPO, in each iteration, one state-action pair $(s,a) \in \mathcal{S} \times \mathcal{A}$ is chosen randomly from some coordinate generator distribution $d_{\text{gen}}$ with support $\cS\times \cA$ for policy update, where $d_{\text{gen}}(s,a)>0$ for all $(s,a)$. For ease of exposition, we focus on the case of a fixed $d_{\text{gen}}$. Our convergence analysis can be readily extended to the case of time-varying $d_{\text{gen}}$.}
    \item \textbf{Batch CAPO}: Under Batch CAPO, we let each batch contain all of the state-action pairs, i.e., $B_m = \left \{ (s,a) : (s,a) \in \mathcal{S} \times \mathcal{A} \right \}$, in each iteration. Despite that Batch CAPO may not be a very practical choice, we use this variant to further highlight the difference in convergence rate between CAPO and the true PG.
\end{itemize}

We proceed to state the convergence rates of the above three instances of CAPO as follows.
\begin{theorem}[Cyclic CAPO]
\label{theorem:cyclic_convergence_rate}
Consider a tabular softmax policy $\pi_\theta$. Under Cyclic CAPO with ${\alpha_m(s, a) \ge \log (\frac{1}{\pi_{\theta_m(a\rvert s)}})}$ and $|B_m| = 1$, $\bigcup_{i=1}^{|\mathcal{S}||\mathcal{A}|} B_{m \cdot |\mathcal{S}||\mathcal{A}| + i} =  \mathcal{S} \times \mathcal{A}$, we have:
\begin{align}
V^{*}(\rho) - V^{\pi_m}(\rho) \le \frac{|\mathcal{S}||\mathcal{A}|}{c} \cdot \frac{1}{m} , \quad\text{for all $m \ge 1$}
\end{align}
where $c = \frac{(1-\gamma)^4}{2} \cdot \left \| \frac{1}{\mu} \right \|_{\infty}^{-1} \cdot {\min} \left \{ \frac{\min_s{\mu(s)} }{2} , \frac{ (1-\gamma)}{|\mathcal{S}||\mathcal{A}|} \right \} > 0$.
\end{theorem}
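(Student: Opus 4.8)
The plan is to establish a per-cycle quadratic progress bound of the form $\delta_{k+1}\le\delta_k-c\,\delta_k^2$ on the optimality gap and then telescope it, converting the cycle count back to the iteration count $m$ through the cycle length $|\mathcal{S}||\mathcal{A}|$.

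First I would isolate the single arithmetic consequence of the step-size rule. If at iteration $m$ the chosen coordinate $(s,a)$ has $A^{\pi_m}(s,a)>0$ and $\alpha_m(s,a)\ge\log(1/\pi_m(a\mid s))$, then a direct softmax computation (only $\theta(s,a)$ moves) shows $\pi_{m+1}(a\mid s)=\frac{\pi_m(a\mid s)e^{\alpha_m}}{1+\pi_m(a\mid s)(e^{\alpha_m}-1)}\ge\tfrac12$, and the advantage-weighted probability shift at state $s$ collapses to a single clean term $c_m(s,a)A^{\pi_m}(s,a)$ with $c_m(s,a)\ge\tfrac12(1-\pi_m(a\mid s))$. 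Combining this with the performance-difference lemma — and using that only the updated coordinate's state $s$ changes its action distribution, so every other state contributes zero — yields the exact one-step improvement $V^{\pi_{m+1}}(\mu)-V^{\pi_m}(\mu)=\frac{1}{1-\gamma}d_\mu^{\pi_{m+1}}(s)\,c_m(s,a)\,A^{\pi_m}(s,a)$. Bounding $d_\mu^{\pi_{m+1}}(s)\ge(1-\gamma)\mu(s)$ turns this into a usable lower bound on the progress from updating any positive-advantage coordinate.

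Next I would set up the gradient-domination side. From the performance-difference lemma applied to $\pi^*$ and $\pi_m$, $V^*(\rho)-V^{\pi_m}(\rho)\le\frac{1}{1-\gamma}\sum_s d_\rho^{\pi^*}(s)\max_a A^{\pi_m}(s,a)$, and since $\max_a A^{\pi_m}(s,\cdot)\ge0$ I can pull out the distribution-mismatch factor and bound it by $\|1/\mu\|_\infty$, giving $\sum_s\mu(s)\max_a A^{\pi_m}(s,a)\ge(1-\gamma)\|1/\mu\|_\infty^{-1}\,\delta_m$, where $\delta_m:=V^*(\rho)-V^{\pi_m}(\rho)$. This is the analogue of the non-uniform {\L}ojasiewicz inequality and is exactly what the state-action-dependent step size is designed to exploit. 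The core of the argument is then the per-cycle aggregation: over one cycle every coordinate, hence each state's current greedy action, is updated exactly once, and using Lemma~\ref{lemma:strict_improvement} so that no intermediate update erases progress, I would lower-bound the total cycle improvement by the sum over states of the one-step bound applied to the greedy action. A short case split on whether the greedy action already has probability above $\tfrac12$ — in which case $\max_a A^{\pi_m}(s,\cdot)\le\frac{1}{1-\gamma}(1-\pi_m(a\mid s))$ forces the advantage to be small — together with the crude bound $\max_a A^{\pi_m}(s,\cdot)\le\frac{1}{1-\gamma}$ converts every per-state contribution into the quadratic form $\mu(s)(\max_a A^{\pi_m}(s,\cdot))^2$. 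Jensen's inequality $\mathbb{E}_\mu[X^2]\ge\mathbb{E}_\mu[X]^2$ passes from $\sum_s\mu(s)(\max_a A^{\pi_m}(s,\cdot))^2$ to $(\sum_s\mu(s)\max_a A^{\pi_m}(s,\cdot))^2$, which the gradient-domination bound lower-bounds by a constant multiple of $\delta_m^2$. Inverting the resulting recursion $\delta_{k+1}\le\delta_k-c\,\delta_k^2$ gives $1/\delta_{k+1}\ge 1/\delta_k+c$, hence $\delta_k\le 1/(ck)$, and substituting $k=m/(|\mathcal{S}||\mathcal{A}|)$ produces the stated $\frac{|\mathcal{S}||\mathcal{A}|}{c}\cdot\frac1m$ bound, with the base case for small $m$ absorbed by the constant (which is why $c$ carries the $\min\{\cdot,\cdot\}$).

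I expect the per-cycle aggregation to be the main obstacle. Unlike true PG, CAPO moves one coordinate at a time, so within a single cycle the advantage function and the identity of the greedy action drift as earlier coordinates are updated; the delicate point is relating the aggregate cycle progress to a single optimality gap measured at the cycle's start without the drift breaking the bound. Monotone improvement (Lemma~\ref{lemma:strict_improvement}) is the tool that tames this, but tracking the exact constant — in particular the correct powers of $(1-\gamma)$ and the $\|1/\mu\|_\infty$ factor, and reconciling progress measured under $\mu$ with the gap measured under $\rho$ — is where the careful bookkeeping lives.
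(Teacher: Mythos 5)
Your overall architecture — one-step improvement via the performance-difference lemma, a gradient-domination bound, a per-cycle recursion $\delta_{k+1}\le\delta_k-c\,\delta_k^2$, and inversion plus the $\mu$-to-$\rho$ conversion — matches the paper's skeleton, and your one-step computation (the factor $\tfrac{W^+}{1-\pi_m(a\mid s)}\ge\tfrac12(1-\pi_m(a\mid s))$ and the bound $d^{\pi}_\mu(s)\ge(1-\gamma)\mu(s)$) is correct. The gap is in the per-cycle aggregation, which is exactly the step you flag as the main obstacle: you propose to lower-bound the total cycle improvement by $\sum_s\mu(s)\bigl(\max_a A^{\pi_m}(s,a)\bigr)^2$, with every state's contribution measured by its advantage \emph{at the start of the cycle}, and you claim monotone improvement (Lemma~\ref{lemma:strict_improvement}) tames the intra-cycle drift. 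It does not. Monotonicity of the value function does not preserve the advantage of a fixed pair $(s,a)$ across intermediate coordinate updates: $A^{\pi_{m+t}}(s,a)=Q^{\pi_{m+t}}(s,a)-V^{\pi_{m+t}}(s)$, and while $Q$ is nondecreasing, $V(s)$ also increases, so $A^{\pi_{m+t}}(s,a)$ can shrink to zero or flip sign before that coordinate's turn arrives. The one-step improvement you earn when $(s,a)$ is finally updated is proportional to $A^{\pi_{m+t}}(s,a)$ at that later time, not to $A^{\pi_m}(s,a)$, so the sum over states of start-of-cycle squared advantages is not a valid lower bound on the cycle progress. This is precisely the failure mode the paper warns about (``an update along one coordinate can already significantly change the advantage value (and its sign as well) of other state-action pairs'').

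The paper's resolution is structurally different at this point and you would need something like it. Rather than aggregating over all states, it tracks only the single coordinate $(\tilde s_m,\tilde a_m)=\argmax_{(s,a)}A^{\pi_m}(s,a)$, whose squared advantage already dominates $(V^*-V^{\pi_m})^2$ up to $(1-\gamma)^{-2}$, and runs a two-case argument on the time $m+T$ at which that coordinate is updated: either $V^{\pi_{m+T}}-V^{\pi_m}\ge A^{\pi_m}(\tilde s_m,\tilde a_m)$, in which case the improvement already accrued within the cycle is quadratically large (via Cauchy--Schwarz over the intermediate one-step improvements), or it is smaller, in which case monotonicity of $Q$ (not of $A$) gives $A^{\pi_{m+T}}(\tilde s_m,\tilde a_m)\ge A^{\pi_m}(\tilde s_m,\tilde a_m)-(V^{\pi_{m+T}}-V^{\pi_m})$, so the advantage is still large at the moment of its update and that single step delivers the required quadratic progress. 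Your plan as written skips this case analysis, and without it the recursion $\delta_{k+1}\le\delta_k-c\,\delta_k^2$ is not established.
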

\begin{proof}[Proof Sketch]
The detailed proof and the upper bound of the partial sum can be found in Appendix \ref{app:convergence_rate}. 
To highlight the main ideas of the analysis, we provide a sketch of the proof as follows: (i) We first write the one-step improvement of the performance $V^{\pi_{m+1}}(s) - V^{\pi_m}(s)$ in state visitation distribution, policy weight, and advantage value, and also construct the lower bound of it. (ii) We then construct the upper bound of the performance difference $ V^{*}(s) - V^{\pi_m}(s)$. (iii) Since the bound in (i) and (ii) both include advantage value, we can connect them and construct the upper bound of the performance difference using one-step improvement of the performance. (iv) Finally, we can get the desired convergence rate by induction.
\end{proof}

Notably, it is somewhat surprising that Theorem \ref{theorem:cyclic_convergence_rate} holds under Cyclic CAPO \textit{without} any further requirement on the specific cyclic ordering. This indicates that Cyclic CAPO is rather flexible in the sense that it provably attains $\cO(\frac{1}{m})$ convergence rate under any cyclic ordering or even cyclic orderings that vary across cycles. 
{On the flip side, such a flexible coordinate selection rule also imposes significant challenges on the analysis: (i) While Lemma \ref{lemma:strict_improvement} ensures strict improvement in each iteration, it remains unclear how much improvement each Cyclic CAPO update can actually achieve, especially under an \textit{arbitrary cyclic ordering}. This is one salient difference compared to the analysis of the true PG \citep{mei2020global}. (ii) Moreover, an update along one coordinate can already significantly change the advantage value (and its sign as well) of other state-action pairs. Therefore, it appears possible that there might exist a well-crafted cyclic ordering that leads to only minimal improvement in each coordinate update within a cycle.}

{Despite the above, we tackle the challenges by arguing that in each cycle, under a properly-designed variable step size $\alpha$, there must exist at least one state-action pair such that the one-step improvement is sufficiently large, regardless of the cyclic ordering. Moreover, by the same proof technique, Theorem \ref{theorem:cyclic_convergence_rate} can be readily extended to CAPO with almost-cyclic coordinate selection, where the cycle length is greater than $\lvert S\rvert\lvert A\rvert$ and each coordinate appears at least once.}

We extend the proof technique of Theorem \ref{theorem:cyclic_convergence_rate} to establish the convergence rates of the Batch and Randomized CAPO.


\begin{theorem}[Batch CAPO]
\label{theorem:batch_convergence_rate}
Consider a tabular softmax policy $\pi_\theta$. Under Batch CAPO with ${\alpha_m(s,a) = \log (\frac{1}{\pi_{\theta_m(a\rvert s)}})}$ and $B_m = \left \{ (s,a) : (s,a) \in \mathcal{S} \times \mathcal{A} \right \}$, we have :
\begin{equation}
V^{*}(\rho) - V^{\pi_m}(\rho) \le \frac{1}{c} \cdot \frac{1}{m}, \quad \text{for all $m \ge 1$}
\end{equation}
where $c = \frac{(1-\gamma )^4}{|\mathcal{A}|} \cdot \left \| \frac{1}{\mu} \right \|_{\infty}^{-1} \cdot \underset{s}{\min} \left \{ \mu(s) \right \} > 0$.
\end{theorem}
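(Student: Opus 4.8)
The plan is to reuse the four-step scheme behind Theorem~\ref{theorem:cyclic_convergence_rate}, but to exploit that Batch CAPO updates \emph{all} coordinates at once with the \emph{exact} step size $\alpha_m(s,a)=\log(1/\pi_{\theta_m}(a\rvert s))$; this removes the ``find one good coordinate per cycle'' step of the cyclic argument and explains why the $\lvert\cS\rvert\lvert\cA\rvert$ factor disappears. Writing $\delta_m:=V^{*}(\rho)-V^{\pi_m}(\rho)$, the goal is the quadratic recursion $\delta_{m+1}\le\delta_m-c\,\delta_m^{2}$; telescoping the reciprocals then gives $1/\delta_m\ge 1/\delta_1+c(m-1)\ge cm$ and hence $\delta_m\le 1/(cm)$, where the base case uses $\delta_1\le\tfrac{1}{1-\gamma}\le\tfrac{1}{c}$.

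First I would put the update in closed form. Let $Z_m(s):=\sum_{a'}\exp(\theta_m(s,a'))$. With the exact step size, every positive-advantage pair satisfies $\theta_{m+1}(s,a)=\theta_m(s,a)-\log\pi_{\theta_m}(a\rvert s)=\log Z_m(s)$, so all such actions share the \emph{same} updated weight $\pi_{m+1}(a\rvert s)=Z_m(s)/Z_{m+1}(s)$. Bounding the ratio $D:=Z_{m+1}(s)/Z_m(s)=\lvert\{a:A^{\pi_m}(s,a)>0\}\rvert+\sum_{a:A^{\pi_m}(s,a)<0}\pi_{\theta_m}(a\rvert s)^{2}+\sum_{a:A^{\pi_m}(s,a)=0}\pi_{\theta_m}(a\rvert s)\le\lvert\cA\rvert$ yields the uniform floor $\pi_{m+1}(a\rvert s)\ge 1/\lvert\cA\rvert$ for every positive-advantage $a$. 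This floor is the crux that makes the rate explicit: it plays the role of the factor $\min_s\pi(a^{*}(s)\rvert s)$ that appears in the non-uniform \L ojasiewicz constant of true PG but is left uncontrolled there \citep{mei2020global}.

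Next I would bound the per-step progress via the performance difference lemma. Writing $\Delta_m(s):=\max_a A^{\pi_m}(s,a)\ge 0$ and $\kappa_m(s):=\sum_a\pi_{m+1}(a\rvert s)A^{\pi_m}(s,a)$, the lower-bound side reads
\[
V^{\pi_{m+1}}(\rho)-V^{\pi_m}(\rho)=\tfrac{1}{1-\gamma}\sum_s d^{\pi_{m+1}}_{\rho}(s)\,\kappa_m(s)\ge\sum_s\rho(s)\,\kappa_m(s),
\]
using $d^{\pi_{m+1}}_{\rho}(s)\ge(1-\gamma)\rho(s)$, while the upper-bound side, with $\pi^{*}$ as comparator, reads $\delta_m\le\tfrac{1}{1-\gamma}\sum_s d^{\pi^{*}}_{\rho}(s)\,\Delta_m(s)$. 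The key local lemma I would establish is that the exact step size forces $\kappa_m(s)\ge\Delta_m(s)/\lvert\cA\rvert$ at every $s$; substituting the closed form gives $\kappa_m(s)=\tfrac{1}{D}\big[\sum_{a:A>0}A^{\pi_m}(s,a)+\sum_{a:A<0}\pi_{\theta_m}(a\rvert s)^{2}A^{\pi_m}(s,a)\big]$, so the claim reduces to controlling the (negative) second sum against the leading term $\Delta_m(s)$.

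Finally I would connect the two sides. Cauchy--Schwarz on the upper bound gives $\delta_m^{2}\le\tfrac{1}{(1-\gamma)^{2}}\sum_s d^{\pi^{*}}_{\rho}(s)\Delta_m(s)^{2}$; the crude bound $\Delta_m(s)\le\tfrac{1}{1-\gamma}$ converts $\Delta_m(s)^{2}$ into $\Delta_m(s)$, and the mismatch estimate $\lVert d^{\pi^{*}}_{\rho}/\mu\rVert_\infty\le\lVert 1/\mu\rVert_\infty$ (with $\mu$ the initial distribution $\rho$) turns this into $\delta_m^{2}\lesssim\tfrac{\lVert 1/\mu\rVert_\infty}{(1-\gamma)^{3}}\sum_s\rho(s)\Delta_m(s)$. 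Combining with the local lemma $\sum_s\rho(s)\kappa_m(s)\ge\tfrac{1}{\lvert\cA\rvert}\sum_s\rho(s)\Delta_m(s)$ yields $V^{\pi_{m+1}}(\rho)-V^{\pi_m}(\rho)\ge c\,\delta_m^{2}$ with $c$ of the stated form (up to the exact bookkeeping of the $(1-\gamma)$ and $\min_s\mu(s)$ factors), which closes the recursion. I expect the main obstacle to be exactly this local lemma: when $\pi_{\theta_m}$ already concentrates on a sub-optimal action, the individual terms $\pi_{\theta_m}(a\rvert s)^{2}\lvert A^{\pi_m}(s,a)\rvert$ need not be small, so $\kappa_m(s)\ge\Delta_m(s)/\lvert\cA\rvert$ must be argued globally through the zero-sum identity $\sum_a\pi_{\theta_m}(a\rvert s)A^{\pi_m}(s,a)=0$ together with the floor $\pi_{m+1}(a\rvert s)\ge1/\lvert\cA\rvert$, rather than term by term.
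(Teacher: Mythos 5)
Your overall architecture matches the paper's: the closed-form Batch update under the exact step size (every positive-advantage action is lifted to weight $1/D$ with $D\le\lvert\cA\rvert$), a per-state lower bound on $\kappa_m(s)=\sum_a\pi_{m+1}(a\rvert s)A^{\pi_m}(s,a)$ obtained from the zero-sum identity and $\pi^2\le\pi$, the performance difference lemma on both sides, and the quadratic recursion $\delta_{m+1}\le\delta_m-c\,\delta_m^2$ closed by induction. The one place you genuinely diverge is the middle step. The paper settles for the \emph{quadratic} per-state bound $\kappa_m(s)\ge\frac{1-\gamma}{\lvert\cA\rvert}\sum_{a:A>0}A^{\pi_m}(s,a)^2$, which follows in one line from $(1-\pi)A\ge(1-\gamma)A^2$ (\Cref{lemma:upper_bound_of_advantage}), and then extracts the single state--action pair with the largest advantage, paying an extra factor $(1-\gamma)\min_s\mu(s)$ through $d^{\pi_{m+1}}_\mu(\tilde s)\ge(1-\gamma)\mu(\tilde s)$. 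You instead claim the \emph{linear} bound $\kappa_m(s)\ge\Delta_m(s)/\lvert\cA\rvert$ and keep the full state average, which is why your constant has no $\min_s\mu(s)$ and one fewer power of $1-\gamma$; since your $c$ dominates the paper's, the stated bound would follow a fortiori.

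The only real gap is that this linear local lemma is asserted rather than proven, and you correctly flag it as the crux. It cannot be argued term by term, but it is essentially true: writing $P=\sum_{a:A>0}A^{\pi_m}(s,a)\ge\Delta_m(s)$, the zero-sum identity gives $\sum_{a:A<0}\pi(a)\lvert A\rvert=\sum_{a:A>0}\pi(a)A\le p^{+}P$ with $p^{+}=\max_{a:A>0}\pi(a)$, hence $\sum_{a:A<0}\pi(a)^2\lvert A\rvert\le p^{-}p^{+}P\le P/4$ because the disjointness of the two action sets forces $p^{+}+p^{-}\le1$; therefore $\kappa_m(s)=\frac{1}{D}\bigl(P-\sum_{a:A<0}\pi^2\lvert A\rvert\bigr)\ge\frac{3P}{4\lvert\cA\rvert}\ge\frac{3\Delta_m(s)}{4\lvert\cA\rvert}$. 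The constant $3/4$ in place of $1$ is harmless for the theorem. Two further remarks. First, once you have this linear bound you do not need to square at all: pairing it directly with $\delta_m\le\frac{1}{1-\gamma}\sum_sd^{\pi^*}_\rho(s)\Delta_m(s)\le\frac{\lVert1/\rho\rVert_\infty}{1-\gamma}\sum_s\rho(s)\Delta_m(s)$ yields a geometric contraction $\delta_{m+1}\le(1-c'')\delta_m$, i.e.\ linear convergence, which is strictly stronger than the stated $O(1/m)$ and consistent with the NPG analogy drawn in \Cref{section:method:onCAPO}; your detour through $\Delta_m(s)^2\le\frac{1}{1-\gamma}\Delta_m(s)$ deliberately discards this. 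Second, you work with $\rho$ throughout, whereas the paper proves the recursion for $\mu$ and converts via \Cref{lemma:performance_difference_in_rho} at a cost of $\frac{1}{1-\gamma}\lVert1/\mu\rVert_\infty$; identifying $\mu$ with $\rho$ is fine provided the initial distribution has full support, which the theorem's constant already presumes.
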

\begin{proof}
The proof and the upper bound of the partial sum can be found in Appendix \ref{app:batch_convergence_rate}.
\end{proof}



\begin{theorem}[Randomized CAPO]
\label{theorem:randomized_convergence_rate}
Consider a tabular softmax policy $\pi_\theta$. Under Randomized CAPO with ${\alpha_m(s, a) \ge \log (\frac{1}{\pi_{\theta_m(a\rvert s)}})}$, we have :
\begin{equation}
\underset{({s_m},{a_m}) \sim d_{\text{gen}}}{\mathbb{E}} \left [ V^{*}(\rho) - V^{\pi_m}(\rho) \right] \le \frac{1}{c} \cdot \frac{1}{m}, \quad \text{for all $m \ge 1$}
\end{equation}
where $c = \frac{(1-\gamma )^4}{2} \cdot \left \| \frac{1}{\mu} \right \|_{\infty}^{-1} \cdot \underset{(s,a)}{\min} \left \{ d_{\text{gen}}(s,a)\cdot\mu(s) \right \} > 0$ and $d_{\text{gen}}:\mathcal{S} \times \mathcal{A} \rightarrow (0,1)$, $d_{\text{gen}}(s,a) = \mathbb{P}((s,a) \in B_m)$.
\end{theorem}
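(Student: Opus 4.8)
The plan is to mirror the structure used for Cyclic CAPO in \Cref{theorem:cyclic_convergence_rate}: derive a one-step recursion for the expected suboptimality gap and then induct to obtain the $\mathcal{O}(1/m)$ rate. Write $\delta_m := V^{*}(\rho) - V^{\pi_m}(\rho) \ge 0$; under Randomized CAPO, $\delta_m$ is a random variable depending on the history of selected coordinates $(s_1,a_1),\dots,(s_{m-1},a_{m-1})$, and the claim concerns $\E[\delta_m]$. I would aim to establish the recursion $\E[\delta_{m+1}] \le \E[\delta_m] - c\,(\E[\delta_m])^2$ and then invoke the elementary fact that any nonnegative sequence with $x_{m+1}\le x_m - c\,x_m^2$ and $x_1 \le 1/c$ obeys $x_m \le 1/(cm)$ (the base case holds since $\E[\delta_1]\le V^{*}(\rho)\le \tfrac{1}{1-\gamma}\le 1/c$, as $c\le(1-\gamma)^4/2$). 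Two features distinguish this from the cyclic case: all quantities must be handled in conditional expectation given the filtration $\mathcal{F}_m$ generated by the first $m-1$ draws, and passing from a recursion in $\E[\delta_m^2]$ to one in $(\E[\delta_m])^2$ will use Jensen's inequality.

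First I would compute the exact single-coordinate improvement. Since $\theta(s,a)$ affects only the action distribution at state $s$, the performance difference lemma together with $\sum_{a'}\pi_{\theta_m}(a'\mid s)A^{\pi_{\theta_m}}(s,a')=0$ shows that when the drawn pair $(s,a)$ has $A^{\pi_{\theta_m}}(s,a)>0$, the softmax update with $\alpha_m(s,a)=\log(1/\pi_{\theta_m}(a\mid s))$ gives
\[
V^{\pi_{m+1}}(\rho)-V^{\pi_m}(\rho)=\frac{1}{1-\gamma}\,d^{\pi_{m+1}}_{\rho}(s)\cdot\frac{1-\pi_{\theta_m}(a\mid s)}{2-\pi_{\theta_m}(a\mid s)}\cdot A^{\pi_{\theta_m}}(s,a),
\]
and that any $\alpha_m(s,a)\ge\log(1/\pi_{\theta_m}(a\mid s))$ only increases it, since $\sum_{a'}\pi_{m+1}(a'\mid s)A^{\pi_{\theta_m}}(s,a')$ is monotone in the updated probability of $a$. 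Taking conditional expectation over $(s_m,a_m)\sim d_{\text{gen}}$, discarding the contributions of pairs with $A\le 0$ (nonnegative by \Cref{lemma:strict_improvement}), retaining only the best-action pairs $(s,a^{*}_m(s))$ with $a^{*}_m(s):=\argmax_a A^{\pi_{\theta_m}}(s,a)$, and using $d^{\pi_{m+1}}_{\rho}(s)\ge(1-\gamma)\mu(s)$ and $\tfrac{1-\pi}{2-\pi}\ge\tfrac{1-\pi}{2}$ yields, with $\Delta_m(s):=\max_a A^{\pi_{\theta_m}}(s,a)$,
\[
\E[\delta_m-\delta_{m+1}\mid\mathcal{F}_m]\ \ge\ \tfrac{1}{2}\,\min_{(s,a)}d_{\text{gen}}(s,a)\sum_s\mu(s)\,\big(1-\pi_{\theta_m}(a^{*}_m(s)\mid s)\big)\,\Delta_m(s).
\]

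The crux is manufacturing the quadratic dependence on $\delta_m$ even though this improvement is only linear in the advantage. This rests on two bounds. The first couples the missing probability mass to the gap: since $a^{*}_m(s)$ maximizes $Q^{\pi_{\theta_m}}(s,\cdot)$ and rewards lie in $[0,1]$, one has $\Delta_m(s)=\sum_{a\ne a^{*}_m(s)}\pi_{\theta_m}(a\mid s)\big(Q^{\pi_{\theta_m}}(s,a^{*}_m(s))-Q^{\pi_{\theta_m}}(s,a)\big)\le \tfrac{1}{1-\gamma}\big(1-\pi_{\theta_m}(a^{*}_m(s)\mid s)\big)$, i.e. $1-\pi_{\theta_m}(a^{*}_m(s)\mid s)\ge(1-\gamma)\Delta_m(s)$, giving $\E[\delta_m-\delta_{m+1}\mid\mathcal{F}_m]\ge\tfrac{1-\gamma}{2}\min_{(s,a)}d_{\text{gen}}(s,a)\sum_s\mu(s)\Delta_m(s)^2$. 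The second controls $\delta_m$ from above: by the performance difference lemma $\delta_m\le\tfrac{1}{1-\gamma}\sum_s d^{\pi^{*}}_{\rho}(s)\Delta_m(s)$, and a Cauchy--Schwarz split against $\mu$ with $\sum_s d^{\pi^{*}}_{\rho}(s)^2/\mu(s)\le\lVert 1/\mu\rVert_\infty$ gives $\delta_m^2\le\tfrac{1}{(1-\gamma)^2}\lVert 1/\mu\rVert_\infty\sum_s\mu(s)\Delta_m(s)^2$. Combining eliminates $\sum_s\mu(s)\Delta_m(s)^2$ and yields $\E[\delta_m-\delta_{m+1}\mid\mathcal{F}_m]\ge c\,\delta_m^2$ with $c$ as stated after tracking the $(1-\gamma)$ powers and $\mu$ factors.

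Finally, taking total expectation gives $\E[\delta_{m+1}]\le\E[\delta_m]-c\,\E[\delta_m^2]$, and Jensen's inequality $\E[\delta_m^2]\ge(\E[\delta_m])^2$ closes it as $\E[\delta_{m+1}]\le\E[\delta_m]-c\,(\E[\delta_m])^2$, after which the elementary induction above delivers $\E[\delta_m]\le 1/(cm)$. I expect the main obstacle to be producing the quadratic in $\delta_m$: because the single-coordinate improvement is only linear in the advantage, the entire rate hinges on the coupling $1-\pi_{\theta_m}(a^{*}_m(s)\mid s)\ge(1-\gamma)\Delta_m(s)$ and the Cauchy--Schwarz inversion of the performance-difference upper bound, which together supply the needed second factor of $\delta_m$. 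Relative to the cyclic case, the randomized rule adds the secondary task of carrying the argument through the filtration and applying Jensen to convert $\E[\delta_m^2]$ into $(\E[\delta_m])^2$.
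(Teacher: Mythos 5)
Your proposal is correct and follows essentially the same route as the paper's proof: the quadratic in the gap is manufactured from the exact single-coordinate improvement of the softmax update with step size $\log(1/\pi_m(a\mid s))$ together with the coupling $\lvert A^{m}(s,a)\rvert \le \frac{1}{1-\gamma}(1-\pi_m(a\mid s))$, the expectation over $d_{\text{gen}}$ is lower-bounded by retaining only favourable coordinates, $d^{\pi}_{\mu}(s)\ge(1-\gamma)\mu(s)$ supplies the $\mu$ factor, and the recursion $\delta_{m+1}\le\delta_m-c\,\delta_m^2$ is closed by the same elementary induction. The only deviations are minor: where you retain all per-state best actions and aggregate via Cauchy--Schwarz against $\mu$, the paper collapses immediately to the single globally best coordinate $(\tilde{s}_m,\tilde{a}_m)$ and uses $\left(V^{*}(s)-V^{\pi_m}(s)\right)^2\le\left(\frac{1}{1-\gamma}\max_{(s,a)}A^{m}(s,a)\right)^2$, and your explicit filtration-plus-Jensen handling of the expectation is in fact a more careful rendering of the paper's terse remark that the performance difference at episode $m$ is independent of the draw $(s_m,a_m)$.
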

\begin{proof}
The proof and the upper bound of the partial sum can be found in Appendix \ref{app:randomized_convergence_rate}.
\end{proof}

\begin{remark}
\normalfont The above three specific instances of CAPO all converge to a globally optimal policy at a rate $\cO(\frac{1}{m})$ and attains a better pre-constant than the standard policy gradient \citep{mei2020global} under tabular softmax parameterization. Moreover, as the CAPO update can be combined with a variety of coordinate selection rules, one interesting future direction is to design  coordinate generators that improve over the convergence rates of the above three instances.
\end{remark}

\begin{table*}[!ht]
\caption{A summary of convergence rates under tabular softmax parameterization under different algorithms.} \label{tab:constant_conv_rate}
\begin{center}
\begin{tabular}{ll}
\textbf{Algorithm}  &\textbf{Convergence Rate} \\
\hline \\
Policy Gradient \citep{mei2020global}         &$V^{*}(\rho) - V^{\pi_m}(\rho) \le \frac{16 \cdot |\mathcal{S}|}{\inf_{m \ge 1} \pi_m(a^*|s)^2\cdot (1-\gamma)^6} \cdot \left \| \frac{d^{\pi^*}_{\mu}}{\mu} \right \|_\infty^2  \cdot\left \| \frac{1}{\mu}  \right \|_{\infty } \cdot \frac{1}{m} $ \\
Cyclic CAPO (Theorem \ref{theorem:cyclic_convergence_rate})             &$ V^{*}(\rho) - V^{\pi_m}(\rho) \le \frac{2 \cdot |\mathcal{S}||\mathcal{A}|}{(1-\gamma)^4} \cdot \left \| \frac{1}{\mu} \right \|_{\infty} \cdot {\max} \left \{ \frac{2}{\min_s{\mu(s)} } , \frac{|\mathcal{S}||\mathcal{A}|}{(1-\gamma)} \right \} \cdot \frac{1}{m}$ \\
Batch CAPO (Theorem \ref{theorem:batch_convergence_rate})             &$ V^{*}(\rho) - V^{\pi_m}(\rho) \le \frac{|\mathcal{A}|}{(1-\gamma )^4} \cdot \left \| \frac{1}{\mu} \right \|_{\infty} \cdot \frac{1}{\underset{s}{\min} \left \{ \mu(s) \right \}} \cdot \frac{1}{m}$ \\
Randomized CAPO (Theorem \ref{theorem:randomized_convergence_rate})            &$ \underset{({s_m},{a_m}) \sim d_{\text{gen}}}{\mathbb{E}} \left [ V^{*}(\rho) - V^{\pi_m}(\rho) \right] \le \frac{2}{(1-\gamma )^4} \cdot \left \| \frac{1}{\mu} \right \|_{\infty} \cdot \frac{1}{\underset{(s,a)}{\min} \left \{ d_{\text{gen}}(s,a)\cdot\mu(s) \right \}} \cdot \frac{1}{m}$\\
\end{tabular}
\end{center}
\end{table*}

\section{Discussions}
\label{section:method:onCAPO}
In this section, we describe the connection between CAPO and the existing policy optimization methods and present additional useful features of CAPO.

\textbf{Batch CAPO and True PG.}
We use Batch CAPO to highlight the fundamental difference in convergence rate between CAPO and the true PG as they both take all the state-action pairs into account in one policy update. Compared to the rate of true PG \citep{mei2020global}, Batch CAPO removes the dependency on the size of state space $\lvert \cS\rvert$ and $\inf_{m \ge 1} \pi_m(a^*|s)^2$.
In true PG, these two terms arise in the construction of the Łojasiewicz inequality, which quantifies the amount of policy improvement with the help of an optimal policy $\pi^*$, which explains why $\inf_{m \ge 1} \pi_m(a^*|s)^2$ appears in the convergence rate. By contrast, in Batch CAPO, we quantify the amount of policy improvement based on the coordinate with the largest advantage, and this proof technique contributes to the improved rate of Batch CAPO compared to true PG. Moreover, we emphasize that this technique is feasible in Batch CAPO but not in true PG mainly due to the properly-designed learning rate of CAPO. 

\textbf{Connecting CAPO With Natural Policy Gradient.}
The natural policy gradient (NPG) \citep{kakade2001natural} exploits the landscape of the parameter space and updates the policy by:
\begin{equation}
\label{eq:natural_pg_update}
\theta_{m+1} = \theta_{m}+\eta\left(F_{\rho}^{\theta_{m}}\right)^{\dagger} \nabla_{\theta} J^{\pi_{\theta}}(\rho),
\end{equation}
where $\eta$ is the step size and $\left(F_{\rho}^{\theta_{m}}\right)^{\dagger}$ is the Moore-Penrose pseudo inverse of the Fisher information matrix
$F_{\rho}^{\theta_{m}}:={\mathbb{E}}_{s \sim d_{\rho}^{\pi_{\theta_{m}}}, a \sim \pi_{\theta_{m}}(\cdot \mid s)}[(\nabla_{\theta} \log \pi_{\theta_{m}}(a \rvert s))(\nabla_{\theta} \log \pi_{\theta_{m}}(a \rvert s))^{\top}]$.
Moreover, under softmax parameterization, the true NPG update takes the following form \citep{agarwal2020theory}:
\begin{equation}
\label{eq:npg_softmax_update}
\theta_{m+1}=\theta_{m}+\frac{\eta}{1-\gamma} A^{\pi_{\theta_m}},
\end{equation}
where $A^{\pi_{\theta_m}}$ denotes the $\lvert S\rvert \lvert A\rvert$-dimensional vector of all the advantage values of ${\pi_{\theta_m}}$.
It has been shown that the true NPG can attain linear convergence \citep{mei2021understanding,khodadadian2021linear}.
Given the expression in (\ref{eq:npg_softmax_update}), CAPO can be interpreted as adapting NPG to the mini-batch or stochastic settings. That said, compared to true NPG, CAPO only requires the sign of the advantage function, not the magnitude of the advantage.
On the other hand, it has recently been shown that some variants of on-policy stochastic NPG could exhibit committal behavior and thereby suffer from convergence to sub-optimal policies \citep{mei2021understanding}.
The analysis of CAPO could also provide useful insights into the design of stochastic NPG methods.
Interestingly, in the context of variational inference, a theoretical connection between coordinate ascent and the natural gradient has also been recently discovered \citep{ji2021marginalized}.

\textbf{CAPO for Low-Fidelity RL Tasks.}
One salient feature of CAPO is that it requires only the sign of the advantage function, instead of the exact advantage value. It has been shown that accurate estimation of the advantage value could be rather challenging under benchmark RL algorithms \citep{ilyas2019closer}. As a result, CAPO could serve as a promising candidate solution for RL tasks with low-fidelity or multi-fidelity value estimation \citep{cutler2014reinforcement,kandasamy2016multi,khairy2022multifidelity}. 

\textbf{CAPO for On-Policy Learning.}
The original motivation of CAPO is to achieve off-policy policy updates without the issues of distribution mismatch and fixed behavior policy.
Despite this, the CAPO scheme in (\ref{eq:CAPO_form}) can also be used in an \textit{on-policy} manner.
Notably, the design of on-policy CAPO is subject to a similar challenge of committal behavior in on-policy stochastic PG and stochastic NPG \citep{chung2021beyond,mei2021understanding}.
Specifically: (i) We show that on-policy CAPO with a fixed step size could converge to sub-optimal policies through a multi-armed bandit example similar to that in \citep{chung2021beyond}. (ii) We design a proper step size for on-policy CAPO and establish asymptotic global convergence. Through a simple bandit experiment, we show that this variant of on-policy CAPO can avoid the committal behavior. Due to space limitation, all the above results are provided in Appendix \ref{app:OnCAPO}.


\section{Practical Implementation of CAPO}
To address the large state and action spaces of the practical RL problems, we proceed to parameterize the policy for CAPO by a neural network and make use of its powerful representation ability. 
As presented in \Cref{section:method:onCAPO}, the coordinate update and variable learning rate are two salient features of CAPO. 
These features are difficult to preserve if the policy is trained in a completely end-to-end manner. 
Instead, we take a two-step approach by first leveraging the tabular CAPO to derive target action distributions and then design a loss function that moves the output of the neural network towards the target distribution. 
Specifically, we designed a neural version of CAPO, called Neural Coordinate Ascent Policy Optimization (NCAPO): Let $f_\theta(s,a)$ denote the output of the policy network parameterized by $\theta$, for each $(s,a)$. In NCAPO, we use neural softmax policies, i.e., ${\pi}_{\theta}(a\rvert s) = 
\frac{\exp({f_{\theta}\left(s, a\right)})}{\sum_{a^{\prime} \in \mathcal{A}} \exp({f_{\theta}\left(s, a^{\prime}\right)})}$.
\begin{itemize}[leftmargin=*]
    \item Inspired by the tabular CAPO, we compute a target softmax policy $\pi_{\hat{\theta}}(s, a)$ by following the CAPO update (\ref{eq:CAPO_form})
\begin{equation}
\label{eq:NCAPO_update}
\tilde{\theta}(s, a) = f_{\theta}(s,a) +\alpha(s, a) \mathbb{I}\{(s,a) \in {B}\}\cdot \sign\left(A^{\pi_{\theta}}\left(s, a\right)\right).
\end{equation}
The target action distribution is then computed w.r.t. $\tilde{\theta}$ as
$\tilde{\pi}(a\rvert s) = 
\frac{\exp({\tilde{\theta}\left(s, a\right)})}{\sum_{a^{\prime} \in \mathcal{A}} \exp({\tilde{\theta}\left(s, a^{\prime}\right)})}$.
\item Finally, we learn $f_\theta$ by minimizing the NCAPO loss, which is the KL-divergence loss between the current policy and the target policy:
\begin{equation}
\label{eq:NCAPO_loss}
\mathcal{L}(\theta)=\sum_{s \in B} D_{\text{KL}}\left(\pi_{\theta}(\cdot \rvert s) \| \tilde{\pi}(\cdot \rvert s)\right).
\end{equation}
\end{itemize}

\section{Experimental Results}
\label{section:exp}
In this section, we empirically evaluate the performance of CAPO on several benchmark RL tasks.
We evaluate NCAPO in MinAtar \citep{young19minatar}, a simplified Arcade Learning Environment (ALE), and consider a variety of environments, including \textit{Seaquest, Breakout, Asterix, and Space Invaders}. Each environment is associated with $10 \times 10 \times n$ binary state representation, which corresponds to the $10 \times 10$ grid and $n$ channels (the value $n$ depends on the game).

\textbf{Benchmark Methods.}
We select several benchmark methods for comparison, including Rainbow \citep{hessel2018rainbow,obando2020revisiting}, PPO \citep{schulman2017ppo}, Off-PAC \citep{degris2012offpac}, and Advantage Actor-Critic (A2C) \citep{mnih2016asynchronous}, to demonstrate the effectiveness of NCAPO. 
For Rainbow, we use the code provided by \citep{obando2020revisiting} without any change. 
For the other methods, we use the open-source implementation provided by Stable Baselines3 \citep{stable-baselines3}.

\begin{figure*}[ht]
\centering
  \includegraphics[width=0.25\textwidth]{./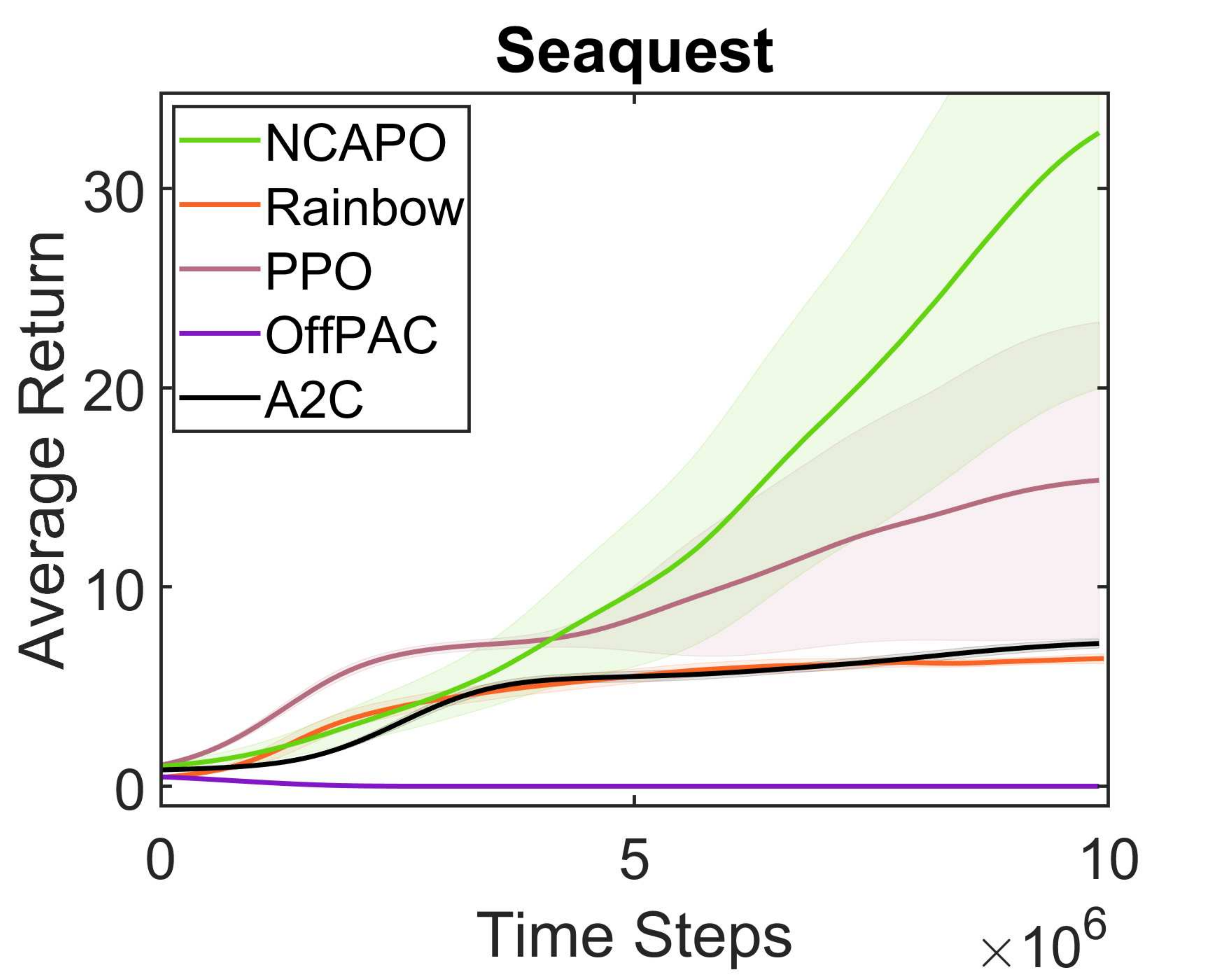}
  \hspace{-5mm}
  \includegraphics[width=0.245\textwidth]{./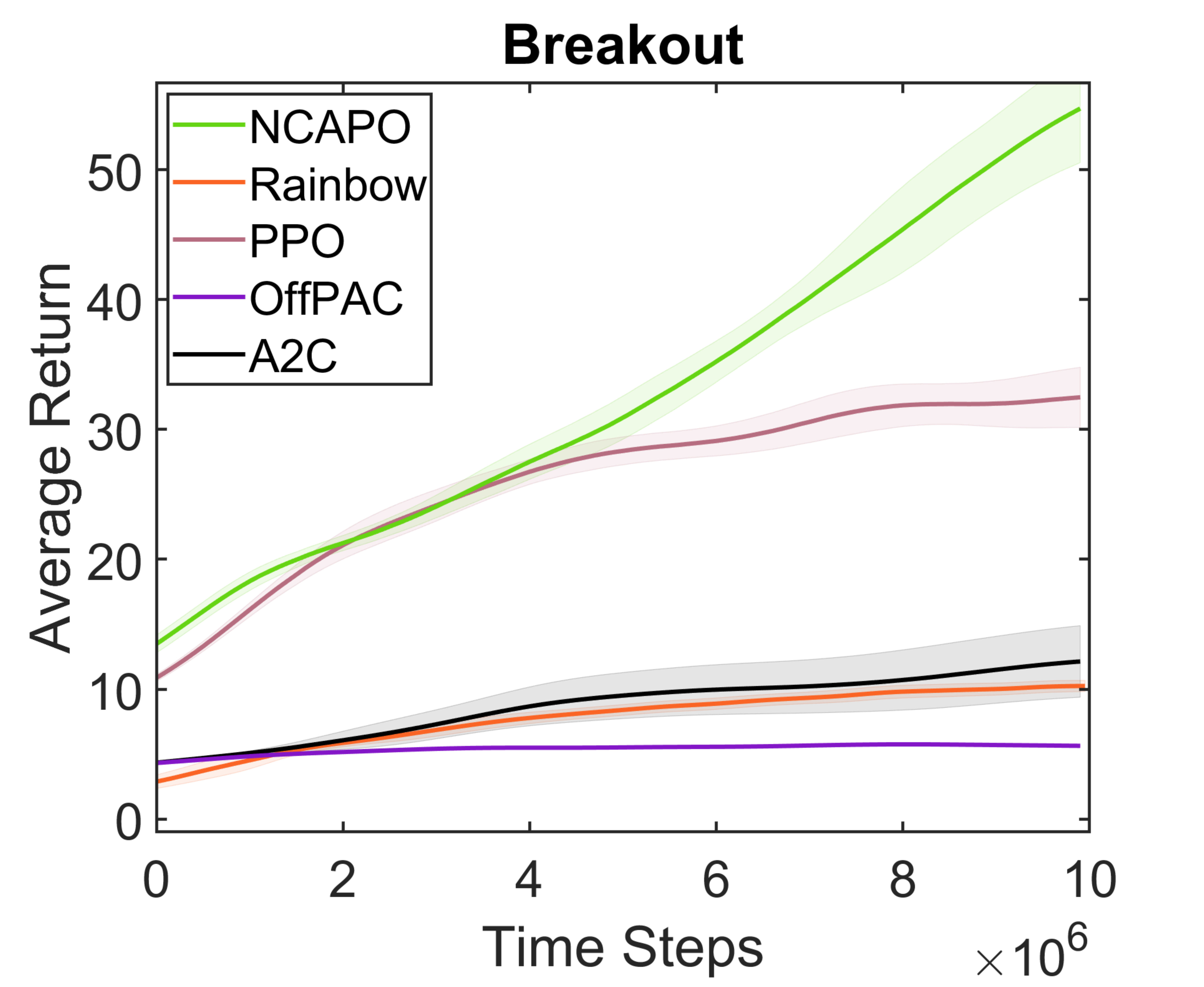}
  \hspace{-5mm}
  \includegraphics[width=0.25\textwidth]{./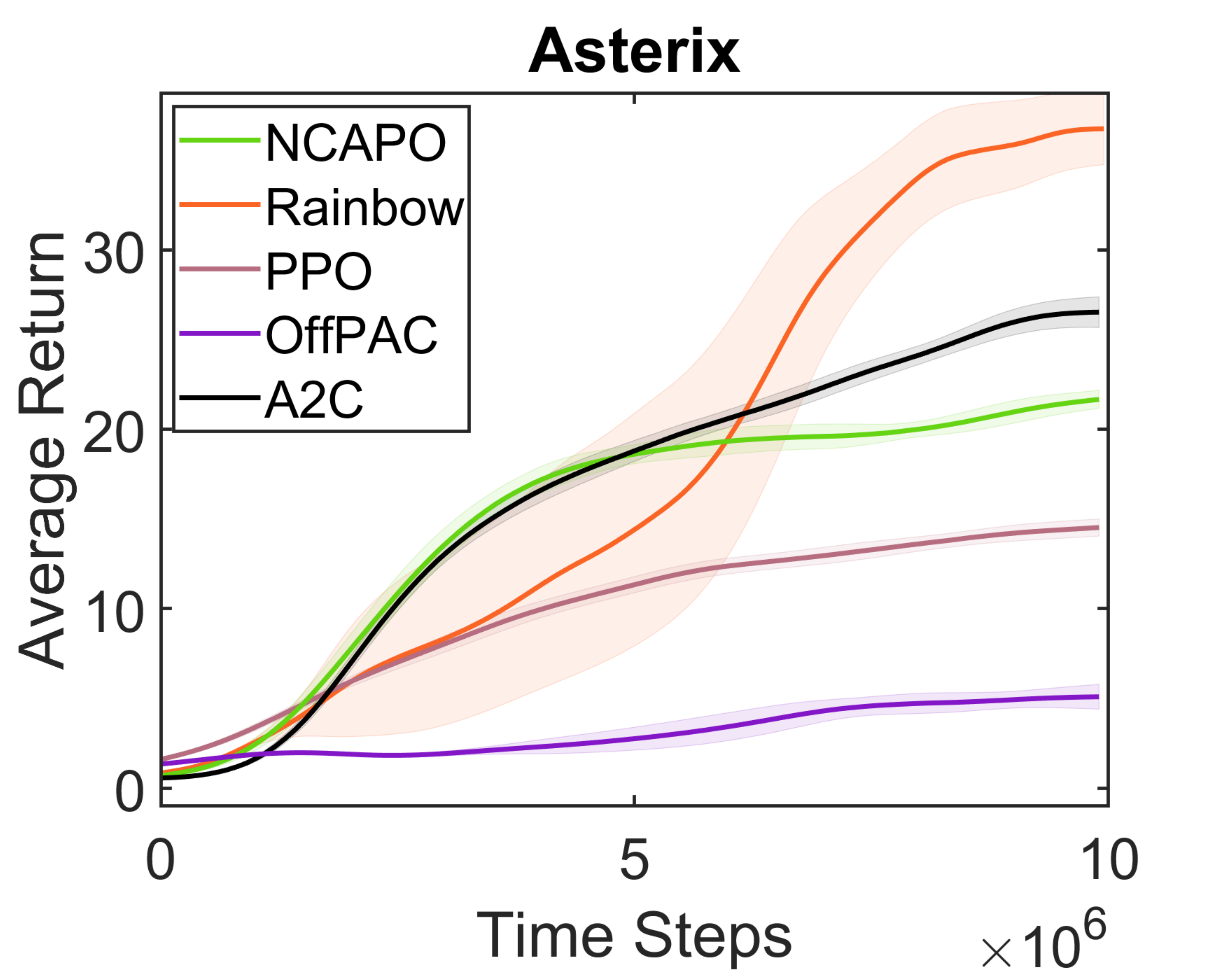}
  \hspace{-5mm}
  \includegraphics[width=0.25\textwidth]{./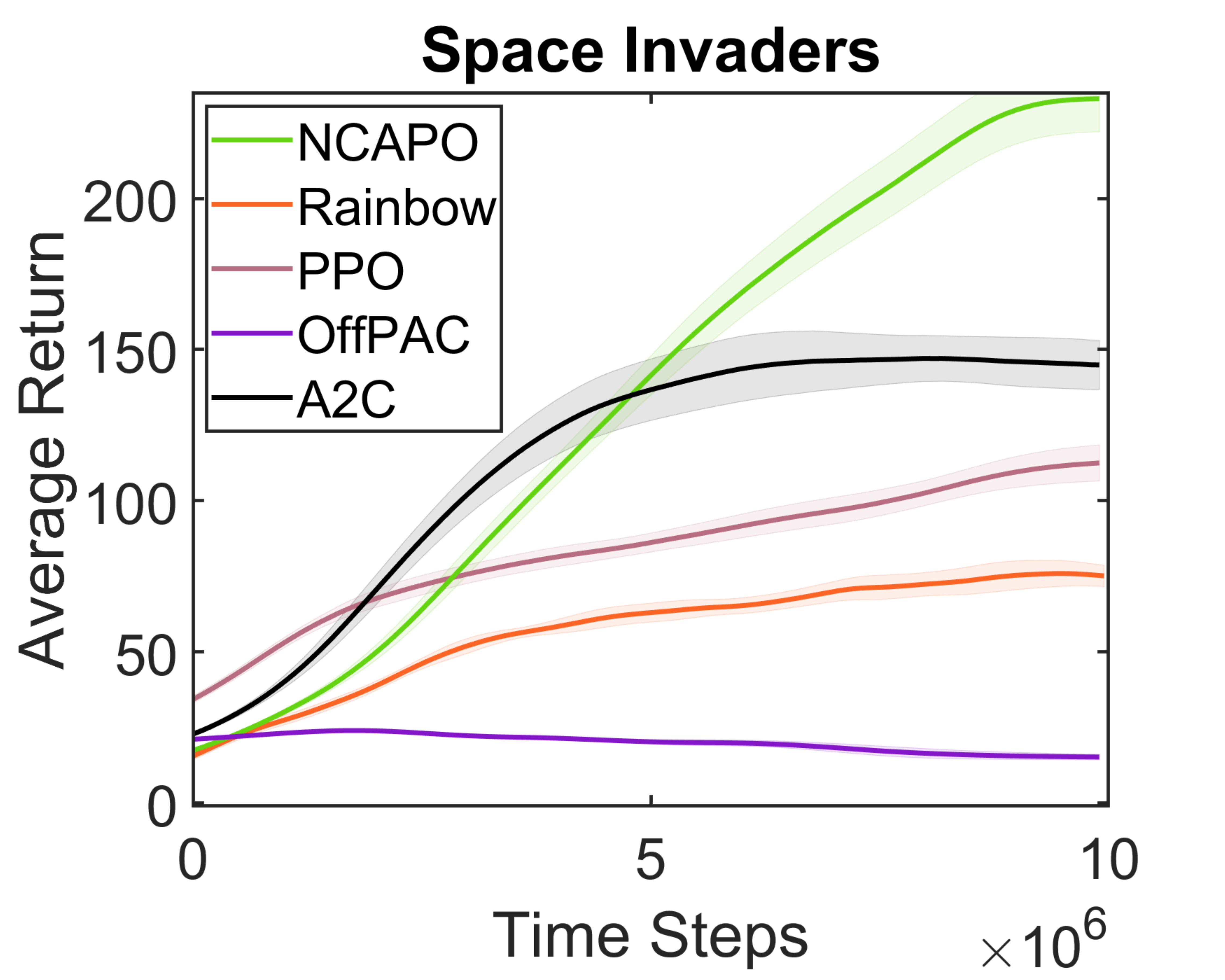}
  \caption{A comparison between the performance of NCAPO and other benchmark methods algorithms in MinAtar. All the results are averaged over 10 random seeds (with the shaded area showing the range of $\text{mean} \pm 0.5\cdot\text{std}$).}
  \label{fig:minatar}
\end{figure*}

\textbf{Empirical Evaluation.}
The detailed implementation of NCAPO is provided in Appendix \ref{app:exp}.
From \Cref{fig:minatar}, we can observe that NCAPO has the best performance in \textit{Seaquest, Breakout, Space Invaders}. 
We also see that NCAPO is more robust across tasks than PPO and Rainbow. For example, Rainbow performs especially well in \textit{Asterix}, while relatively poorly in \textit{Space Invaders}. 
PPO performs relatively strong in \textit{Breakout} and \textit{Seaquest}, but converges rather slowly in \textit{Asterix}.
Off-PAC with a uniform behavior policy has very little improvement throughout training in all the tasks due to the issue of fixed behavior policy, which could hardly find sufficiently long trajectories with high scores.
By contrast, NCAPO outperforms all the benchmark methods in three out of four environments, while being on par with other methods in the remaining environment.

\section{Related Work}
\label{section:related}

\textbf{Off-Policy Policy Gradients}.
Off-policy learning via PG has been an on-going research topic. 
Built on the off-policy PG theorem \citep{degris2012offpac, silver2014deterministic,zhang2019general, imani2018off}, various off-policy actor-critic algorithms have been developed with an aim to achieve more sample-efficient RL \citep{wang2016sample,gu2017q,chung2021beyond,ciosek2018expected,espeholt2018impala,schmitt2020off}.
In the standard off-policy PG formulation, the main idea lies in the use of a {surrogate objective}, which is the expected total return with expectation taken over the stationary distribution induced by the behavior policy. 
While this design avoids the issue of an exponentially-growing importance sampling ratio, it has been shown that this surrogate objective can suffer from convergence to sub-optimal policies due to distribution mismatch, and distribution correction is therefore needed, either via a learned density correction ratio \citep{liu2020off} or emphatic weighting \citep{maei2018convergent,zhang2019general,zhang2020provably}.
On the other hand, off-policy actor-critic based on NPG has been recently shown to achieve provable sample complexity guarantees in both tabular \citep{khodadadian2021finite} and linear function approximation setting
\citep{chen2022sample,chen2022finite}.
Another line of research is on characterizing the convergence of off-policy actor-critic methods in the \textit{offline} setting, where the learner is given only a fixed dataset of samples \citep{xu2021doubly,huang2022convergence}.
Some recent attempts propose to enable off-policy learning beyond the use of policy gradient.
For example, \citep{romain2021JnH} extends the on-policy PG to an off-policy policy update by generalizing the role of the discounted state visitation distribution. 
\citep{laroche2022beyond} proposes to use the gradient of the cross-entropy loss with respect to the action with maximum Q.
Both approaches are shown to attain similar convergence rates as the on-policy true PG.
Different from all the above, CAPO serves as the first attempt to address off-policy policy optimization through the lens of coordinate ascent, without using the policy gradient.

\textbf{Exploiting the Sign of Advantage Function.}
As pointed out in Section \ref{section:prelim}, the sign of the advantage function (or temporal difference (TD) residual as a surrogate) can serve as an indicator of policy improvement.
For example, \citep{van2007reinforcement} proposed Actor Critic Learning Automaton (ACLA), which is designed to reinforce only those state-action pairs with positive TD residual and ignore those pairs with non-positive TD residual.  
The idea of ACLA is later extended by \citep{zimmer2016neural} to Neural Fitted Actor Critic (NFAC) , which learns neural policies for continuous control, and penalized version of NFAC for improved empirical performance \citep{zimmer2019exploiting}.
On the other hand, \citep{tessler2019distributional} proposes generative actor critic (GAC), a distributional policy optimization approach that leverages the actions with positive advantage to construct a target distribution.
By contrast, CAPO takes the first step towards understanding the use of coordinate ascent with convergence guarantees for off-policy RL.

\section{Conclusion}
\label{section:conclusion}
We propose CAPO, which takes the first step towards addressing off-policy policy optimization by exploring the use of coordinate ascent in RL.
Through CAPO, we enable off-policy learning without the need for importance sampling or distribution correction.
We show that the general CAPO can attain asymptotic global convergence and establish the convergence rates of CAPO with several popular coordinate selection rules.
Moreover, through experiments, we show that the neural implementation of CAPO can serve as a competitive solution compared to the benchmark RL methods and thereby demonstrates the future potential of CAPO. 

{
\bibliographystyle{unsrtnat}
\bibliography{neurips2022_reference}
}

\newpage
\appendix
\onecolumn
\section*{Appendix}
\section{Proofs of the Theoretical Results in Section \ref{subsection:analysis}}
\subsection{Proof of Lemma \ref{lemma:strict_improvement}}
\begin{lemma}[Performance Difference Lemma in \citep{Kakade2002approx}]
\label{lemma:perf_diff} 
For each state $s_0$, the difference in the value of $s_0$ between two policies $\pi$ and $\pi^{\prime}$ can be characterized as:
\begin{equation}
V^{\pi}\left(s_{0}\right)-V^{\pi^{\prime}}\left(s_{0}\right)=\frac{1}{1-\gamma} \mathbb{E}_{s \sim d_{s_{0}}^{\pi}} \mathbb{E}_{a \sim \pi(\cdot \rvert s)}\left[A^{\pi^{\prime}}(s, a)\right]
\end{equation}
\end{lemma}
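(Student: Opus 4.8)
The plan is to prove the identity by a telescoping argument applied to the discounted sum of $\pi'$-advantages evaluated along trajectories generated by $\pi$. First I would consider the quantity $\mathbb{E}_{\tau}\big[\sum_{t=0}^\infty \gamma^t A^{\pi'}(s_t,a_t)\big]$, where $\tau = (s_0,a_0,s_1,a_1,\dots)$ is a trajectory sampled by running $\pi$ from the fixed initial state $s_0$. Using the Bellman decomposition $A^{\pi'}(s,a) = Q^{\pi'}(s,a) - V^{\pi'}(s)$ together with $Q^{\pi'}(s,a) = r(s,a) + \gamma\,\mathbb{E}_{s'\sim \mathcal{P}(\cdot\mid s,a)}[V^{\pi'}(s')]$, I would rewrite each summand, after taking expectations over the trajectory dynamics under $\pi$, as $\gamma^t\big(r(s_t,a_t) + \gamma V^{\pi'}(s_{t+1}) - V^{\pi'}(s_t)\big)$.

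Next I would split this into two pieces. The reward terms $\sum_{t}\gamma^t r(s_t,a_t)$ reassemble, in expectation, into $V^\pi(s_0)$ by the definition of the value function. The value terms form a telescoping series $\sum_{t=0}^\infty \big(\gamma^{t+1} V^{\pi'}(s_{t+1}) - \gamma^t V^{\pi'}(s_t)\big)$, which collapses to $-V^{\pi'}(s_0)$ plus a boundary term $\lim_{T\to\infty}\gamma^{T} V^{\pi'}(s_{T})$. I would argue this boundary term vanishes: since $r \in [0,1]$, the value function is uniformly bounded by $1/(1-\gamma)$, so $\gamma^T V^{\pi'}(s_T) \to 0$. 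Combining the two pieces yields $\mathbb{E}_\tau\big[\sum_t \gamma^t A^{\pi'}(s_t,a_t)\big] = V^\pi(s_0) - V^{\pi'}(s_0)$.

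Finally I would convert the trajectory-based left-hand side into the stated form. Writing the expectation over $\tau$ as $\sum_{t}\gamma^t \sum_s \mathbb{P}(s_t = s \mid s_0,\pi)\,\mathbb{E}_{a\sim\pi(\cdot\mid s)}[A^{\pi'}(s,a)]$, I would interchange the order of summation to pull out $\sum_{t}\gamma^t \mathbb{P}(s_t=s\mid s_0,\pi) = \frac{1}{1-\gamma}d^{\pi}_{s_0}(s)$, which is precisely the discounted state-visitation distribution of the excerpt. This produces both the $\frac{1}{1-\gamma}$ factor and the expectation over $d^\pi_{s_0}$, completing the identity.

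The main obstacle I anticipate is the careful bookkeeping of the infinite sums: ensuring the telescoping boundary term genuinely vanishes and that the interchange of the infinite sum over $t$ with the expectation and sum over states is legitimate. Both hinge on the uniform boundedness of $V^{\pi'}$ and $A^{\pi'}$ implied by bounded rewards and $\gamma<1$, so I would state that bound explicitly before performing the manipulations, which makes every rearrangement absolutely convergent and hence valid.
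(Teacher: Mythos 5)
Your proposal is correct: the paper itself does not prove this lemma but imports it by citation from \citet{Kakade2002approx}, and your telescoping argument---expanding $A^{\pi'}(s_t,a_t)$ via the Bellman equation along $\pi$-trajectories, collapsing the value terms, and resumming the time index into the discounted visitation distribution $d^{\pi}_{s_0}$---is precisely the standard proof from that reference. Your attention to the vanishing boundary term $\gamma^T V^{\pi'}(s_T)$ and to absolute convergence (both guaranteed here by $r \in [0,1]$ and $\gamma < 1$) covers the only points where care is needed, so nothing is missing.
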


Now we are ready to prove Lemma \ref{lemma:strict_improvement}. For ease of exposition, we restate Lemma \ref{lemma:strict_improvement} as follows.
\label{app:method}
\begin{lemma*}
Under the CAPO update given by (\ref{eq:CAPO_form}), we have $V^{\pi_{m+1}}(s) \geq V^{\pi_{m}}(s)$, for all $s \in S$, for all $m\in \mathbb{N}$.  
\label{app:proof_strict_improvement}
\end{lemma*}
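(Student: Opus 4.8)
The plan is to reduce the statement to a per-state inequality via the Performance Difference Lemma (Lemma~\ref{lemma:perf_diff}) and then exploit the sign-alignment built into the CAPO update. Applying the lemma with $\pi = \pi_{m+1}$ and $\pi' = \pi_m$ gives
\begin{equation}
V^{\pi_{m+1}}(s_0) - V^{\pi_m}(s_0) = \frac{1}{1-\gamma}\sum_{s} d_{s_0}^{\pi_{m+1}}(s) \sum_{a} \pi_{m+1}(a\rvert s)\, A^{\pi_m}(s,a).
\end{equation}
Since $d_{s_0}^{\pi_{m+1}}(s)\ge 0$ and $\tfrac{1}{1-\gamma}>0$, it suffices to show that for every state $s$ the inner sum $\sum_{a} \pi_{m+1}(a\rvert s) A^{\pi_m}(s,a)$ is nonnegative.

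Next I would rewrite the post-update softmax probabilities. Writing $\delta(s,a):=\alpha_m(s,a)\,\mathbb{I}\{(s,a)\in B_m\}\,\sign(A^{\pi_m}(s,a))$ for the coordinate increment, the softmax form yields $\pi_{m+1}(a\rvert s) = \frac{Z_m(s)}{Z_{m+1}(s)} e^{\delta(s,a)}\,\pi_m(a\rvert s)$, where $Z_m(s),Z_{m+1}(s)>0$ are the normalizing partition functions. Because the common factor $Z_m(s)/Z_{m+1}(s)$ is strictly positive, the target inequality reduces to showing $\sum_a \pi_m(a\rvert s)\, e^{\delta(s,a)} A^{\pi_m}(s,a)\ge 0$. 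The crucial observation is the sign alignment: since $\alpha_m\ge 0$, we have $\delta(s,a)\ge 0$ whenever $A^{\pi_m}(s,a)\ge 0$ and $\delta(s,a)\le 0$ whenever $A^{\pi_m}(s,a)\le 0$, so $e^{\delta(s,a)}\ge 1$ exactly on the positive-advantage actions and $e^{\delta(s,a)}\le 1$ on the negative-advantage ones.

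From here the argument is a pointwise comparison. For each action one has $e^{\delta(s,a)}A^{\pi_m}(s,a)\ge A^{\pi_m}(s,a)$: when $A\ge 0$ we multiply a nonnegative number by something $\ge 1$, and when $A<0$ we multiply a negative number by something in $[0,1]$, which only makes it larger (less negative). Multiplying by $\pi_m(a\rvert s)\ge 0$ and summing over $a$ gives
\begin{equation}
\sum_a \pi_m(a\rvert s)\, e^{\delta(s,a)} A^{\pi_m}(s,a) \ge \sum_a \pi_m(a\rvert s)\, A^{\pi_m}(s,a) = 0,
\end{equation}
where the final equality is the standard identity $\sum_a \pi(a\rvert s)A^{\pi}(s,a)=V^{\pi}(s)-V^{\pi}(s)=0$. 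Chaining this back through the positive factor $Z_m(s)/Z_{m+1}(s)$ and then the Performance Difference Lemma establishes $V^{\pi_{m+1}}(s_0)\ge V^{\pi_m}(s_0)$ for every $s_0$.

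I expect the only real subtlety to be the per-action sign case analysis and keeping track of the strictly positive $Z_m(s)/Z_{m+1}(s)$ factor so that it can be divided out without affecting the inequality direction; everything else is routine once the Performance Difference Lemma is invoked. A minor point worth flagging is that the indicator $\mathbb{I}\{(s,a)\in B_m\}$ and the possibility $\sign(A^{\pi_m}(s,a))=0$ simply force $\delta(s,a)=0$ on the untouched or zero-advantage coordinates, which is harmless since the pointwise inequality then holds with equality there.
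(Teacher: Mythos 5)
Your proposal is correct and follows essentially the same route as the paper: both reduce via the Performance Difference Lemma to showing $\sum_a \pi_{m+1}(a\rvert s)A^{\pi_m}(s,a)\ge 0$, factor out the positive ratio $Z_m(s)/Z_{m+1}(s)$, and use the sign-alignment of the coordinate increment to compare termwise against $\sum_a \pi_m(a\rvert s)A^{\pi_m}(s,a)=0$. Your version is, if anything, slightly more careful than the paper's in using weak inequalities and explicitly handling the untouched and zero-advantage coordinates (the paper asserts a strict inequality that can fail in degenerate cases, though only the weak one is needed for the lemma).
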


\begin{proof}[Proof of \Cref{lemma:strict_improvement}]
Note that by the definition of $A(s, a)$, we have
\begin{equation}
\label{eq:sumAdvZero}
\sum_{a \in \mathcal{A}}\pi_m(a|s)A^{m}(s,a) = 0, \quad\forall s \in \mathcal{S}
\end{equation}

To simplify notation, let $Z_{m}(s) := \sum_{a \in \mathcal{A}}\exp({\theta_{m}(s,a)})$. Then, $\pi_{m} (a|s)$ and $\pi_{m+1} (a|s)$ can be simplified as:
\begin{equation}
\pi_{m} (a|s) = \frac{\exp({\theta_m(s,a)})}{Z_{m}(s)},\quad 
\pi_{m+1} (a|s) = \frac{\exp({\theta_{m+1}(s,a)})}{Z_{m+1}(s)}.\label{eq:sum pi and A}
\end{equation}
By Lemma \ref{lemma:perf_diff}, in order to show that 
$V^{\pi_{m+1}}(s) \geq V^{\pi_{m}}(s), \forall s \in S$,
it is sufficient to show that
\begin{equation}
\label{eq:posA}
\sum_{a\in \cA}\pi_{m+1}(a|s)A^{m}(s,a) > 0, \quad\forall s \in \mathcal{S}.
\end{equation}
For ease of notation, we define ${B}_m(s) := \{a\rvert (s,a)\in {B}_m \}$.
To establish (\ref{eq:sum pi and A}), we have that for all $s \in \mathcal{S}$,
\begin{align}
\sum_{a \in \mathcal{A}}\pi_{m+1}(a|s)A^{m}(s,a)
&= \sum_{a \in \mathcal{A}}\frac{\exp({\theta_{m+1}(s,a)})}{Z_{m+1}(s)}A^m(s,a) &&\\
&= \frac{Z_{m}(s)}{Z_{m+1}(s)} \sum_{a \in \mathcal{A} }\frac{\exp({\theta_{m+1}(s,a)})}{Z_m(s)}A^m(s,a) &&\\
&= \frac{Z_{m}(s)}{Z_{m+1}(s)} \left [ \sum_{a \in {B}_m(s)}
\frac{\exp({\theta_{m+1}(s,a)})}{Z_m(s)}A^m(s, a)+\sum_{a \not\in {B}_m(s)}\frac{\exp({\theta_{m}(s,a)})}{Z_m(s)}A^m(s, a) \right ]  &&\\
&>\frac{Z_{m}(s)}{Z_{m+1}(s)} \left[ \sum_{a \in {B}_m(s)}\frac{\exp({\theta_{m}(s,a)})}{Z_m(s)}A^m(s, a)+ \sum_{a \not\in {B}_m(s)}\frac{\exp({\theta_{m}(s,a)})}{Z_m(s)} A^m(s, a)
\right] \label{eq:lemma2 ineq}&&\\
&=\frac{Z_{m}(s)}{Z_{m+1}(s)} \sum_{a \in \mathcal{A}}\pi_m(a|s)A^{m}(s,a)&& \\
&= 0, &&
\end{align}
where (\ref{eq:lemma2 ineq}) holds by the CAPO update given by (\ref{eq:CAPO_form}).
\end{proof}

\subsection{Proof of Theorem \ref{theorem:convergeoptimal}}
Since $\{V^m\}$ is bounded above and enjoys strict improvement by \Cref{lemma:strict_improvement}. By the monotone convergence theorem, the limit of $\{V^m\}$ is guaranteed to exist. 
Similarly, we know that the limit of $\{Q^m\}$ also exists.
We use $V^{(\infty)}(s)$ and $Q^{(\infty)}(s,a)$ to denote the limits of $\{V^{(m)}(s)\}$ and $\{Q^{(m)}(s,a)\}$, respectively.
We also define $A^{(\infty)}(s, a):=Q^{(\infty)}(s, a)-V^{(\infty)}(s)$.
Our concern is whether the corresponding policy $\pi_{\infty}$ is optimal.
Inspired by \citep{agarwal2020theory}, we first define the following three sets as
\begin{align}
I_{0}^{s} &:=\left\{a \rvert Q^{(\infty)}(s, a)=V^{(\infty)}(s)\right\}, \\
I_{+}^{s} &:=\left\{a \rvert Q^{(\infty)}(s, a)>V^{(\infty)}(s)\right\}, \\
I_{-}^{s} &:=\left\{a \rvert Q^{(\infty)}(s, a)<V^{(\infty)}(s)\right\}.
\end{align}
By definition, $V^{(\infty)}$ is optimal if and only if $I_{+}^{s}$ is empty, for all $s$. 
We prove by contradiction that $V^{(\infty)}$ is optimal by showing $I_{+}^{s} = \emptyset$.

\noindent \textbf{Main steps of the proof}. The proof procedure can be summarized as follows:
\begin{itemize}[leftmargin=*]
  \item Step 1: We first assume $V^{(\infty)}$ is not optimal so that by definition $\exists s \in \mathcal{S}, I_{+}^{s} \neq \emptyset$.
  \item Step 2: We then show in \Cref{lemma:sumIzero}, $\forall s \in \mathcal{S}$, actions $a_{-} \in I_{-}^{s}$ have zero weights in policy (i.e.\ $\pi_{\infty}(a_{-} | s) = 0$, $\forall a_{-} \in I_{-}^{s}$).
  \item Step 3: Since the actions in $I_{-}^{s}$ have zero probability, by (\ref{eq:sumAdvZero}), this directly implies \Cref{lemma:sumI+zero}: $\forall I_{+}^{s} \neq \emptyset$, $a \in I_{+}^{s}$ must also have zero probability (i.e.\ $\pi_{\infty}(a_{+} | s) = 0$, $\forall a_{+} \in I_{+}^{s}$).
  \item Step 4: Moreover, under CAPO, in the sequel we can show \Cref{lemma:contradiction_of_a+}, which states that as long as \Cref{condition:sa} is satisfied, there must exist one action $a_+ \in I_{+}^{s}$ such that $\lim_{m \rightarrow \infty} \pi_m(a_+|s) = 1$. This contradicts the assumption that $\exists s \in \mathcal{S}, I_{+}^{s} \neq \emptyset$, proving that $I_{+}^{s} = \emptyset, \forall s \in \mathcal{S}$.
\end{itemize}

\begin{lemma} Under CAPO, there exists $M_1$ such that for all $m > M_1, s \in \mathcal{S}, a \in \mathcal{A}$, we have :
\label{lemma:A_strict}
\begin{align}
A^{(m)}(s, a)<-\frac{\Delta}{4}, &\quad\text{ for } a \in I_{-}^{s},\\
A^{(m)}(s, a)>\frac{\Delta}{4}, &\quad\text{ for } a \in I_{+}^{s},
\end{align}
where $\Delta:=\min_{\left\{s, a \rvert A^{(\infty)}(s, a) \neq 0\right\}}\left|A^{(\infty)}(s, a)\right|$.

\end{lemma}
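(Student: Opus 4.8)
The plan is to derive the claim directly from the convergence of the value and $Q$ sequences that was just established. First I would observe that, since $V^{(m)}(s)\to V^{(\infty)}(s)$ and $Q^{(m)}(s,a)\to Q^{(\infty)}(s,a)$ for every fixed pair $(s,a)$, the advantage $A^{(m)}(s,a)=Q^{(m)}(s,a)-V^{(m)}(s)$ converges pointwise to $A^{(\infty)}(s,a)=Q^{(\infty)}(s,a)-V^{(\infty)}(s)$. Because $\cS\times\cA$ is a finite index set, this pointwise convergence is automatically uniform over all $(s,a)$, so a single threshold $M_1$ can be chosen that controls the approximation error simultaneously for every state-action pair.

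Next I would record two facts about $\Delta$. The set $\{(s,a):A^{(\infty)}(s,a)\neq 0\}$ is finite, and each of its elements contributes a strictly positive quantity $|A^{(\infty)}(s,a)|$; hence either this set is empty---in which case $I_{+}^{s}$ and $I_{-}^{s}$ are empty for every $s$ and the lemma holds vacuously---or the minimum defining $\Delta$ is attained and $\Delta>0$. The crucial consequence of this definition is that every $a\in I_{+}^{s}$ satisfies $A^{(\infty)}(s,a)\geq \Delta$ and every $a\in I_{-}^{s}$ satisfies $A^{(\infty)}(s,a)\leq -\Delta$, since in both cases $A^{(\infty)}(s,a)$ is a nonzero advantage and so its magnitude is at least the global minimum $\Delta$.

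With these in hand, I would invoke the uniform convergence to pick $M_1$ so that $\lvert A^{(m)}(s,a)-A^{(\infty)}(s,a)\rvert<\tfrac{3\Delta}{4}$ for all $m>M_1$ and all $(s,a)$. For $a\in I_{+}^{s}$ the triangle inequality then gives $A^{(m)}(s,a)>A^{(\infty)}(s,a)-\tfrac{3\Delta}{4}\geq \Delta-\tfrac{3\Delta}{4}=\tfrac{\Delta}{4}$, and symmetrically for $a\in I_{-}^{s}$ we obtain $A^{(m)}(s,a)<A^{(\infty)}(s,a)+\tfrac{3\Delta}{4}\leq -\Delta+\tfrac{3\Delta}{4}=-\tfrac{\Delta}{4}$, which is precisely the two bounds claimed.

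This argument is essentially a continuity / uniform-convergence statement and involves no hard estimate; the only points requiring genuine care are (i) justifying that $\Delta>0$ and that the defining minimum is taken over a nonempty set, which is handled by the finiteness of $\cS\times\cA$ together with the vacuous case, and (ii) the implication $a\in I_{+}^{s}\Rightarrow A^{(\infty)}(s,a)\geq\Delta$ (and its analogue for $I_{-}^{s}$), which links the per-pair limiting gap to the global separation constant and is exactly what lets the fixed fraction $\tfrac{\Delta}{4}$ work uniformly. I expect step (ii) to be the conceptual crux, as it is where the particular choice of $\Delta$ as the minimal nonzero advantage pays off; the remaining manipulations are routine.
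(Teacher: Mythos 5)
Your proposal is correct, and it is essentially the argument the paper relies on: the paper's one-line proof simply invokes the strict improvement property (hence convergence of $V^{(m)}$ and $Q^{(m)}$ by monotone convergence) and then cites Lemma C.4 of \citet{agarwal2020theory}, whose content is exactly the continuity/uniform-convergence argument you spell out. Your writeup just makes that cited step self-contained, including the two points that genuinely need checking (that $\Delta>0$ or the claim is vacuous, and that membership in $I_{+}^{s}$ or $I_{-}^{s}$ forces $|A^{(\infty)}(s,a)|\geq\Delta$), so there is no gap.
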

\begin{proof}[Proof of \Cref{lemma:A_strict}]
Given the strict policy improvement property of CAPO in Lemma \ref{lemma:strict_improvement}, this can be shown by applying Lemma C.4 in \citep{agarwal2020theory}. 
\end{proof}




%
%

\begin{lemma}
\label{lemma:sumIzero} 
Under CAPO, $\pi_{\infty}(a_{-}|s) = 0, \forall s \in \mathcal{S}, a_{-} \in I^{s}_{-}$.
%
\begin{proof}[Proof of \Cref{lemma:sumIzero}]
Lemma \ref{lemma:A_strict} shows that for all $m > M_1$, the sign of $A^{(m)}(s, a)$ is fixed. 
Moreover, we know that under CAPO update, $\theta_m(s, a_{-})$ is non-increasing, $\forall a_{-} \in I^{s}_{-}, \forall m> M_1$.  
Similarly, $\forall a_{+} \in I^{s}_{+}, m > M_1$, $\theta_m(s, a_{+})$ is non-decreasing.
By \Cref{condition:sa}, all the state-action pairs with negative advantage are guaranteed to be sampled for infinitely many times as $m \rightarrow \infty$.
Under the CAPO update in (\ref{eq:CAPO_form}), we have
\begin{equation}
\theta_{m+1}(s, a_{-}) - \theta_{m}(a_{-} \rvert s) \le -\log( \frac{1}{\pi_m(a_{-} \rvert s)} ) < 0.
\end{equation}
Given the infinite visitation, we know that $\lim_{m \rightarrow \infty}\theta_m(s, a_{-}) = -\infty$.
\end{proof}
\end{lemma}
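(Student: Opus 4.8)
The plan is to fix a state $s$ and a negative-advantage action $a_- \in I_-^s$, and to show that the corresponding coordinate $\theta_m(s,a_-)$ is eventually driven to $-\infty$, so that the softmax weight $\pi_m(a_-\rvert s)$ vanishes in the limit. The starting point is \Cref{lemma:A_strict}: there is a time $M_1$ after which $A^{(m)}(s,a_-) < -\Delta/4 < 0$ for every $m > M_1$, so the update direction $\sgn(A^{(m)}(s,a_-)) = -1$ is frozen. Reading off the CAPO update (\ref{eq:CAPO_form}), the coordinate $\theta_m(s,a_-)$ changes only when $(s,a_-)$ is selected into the batch, and each such update strictly decreases it by $\alpha_m(s,a_-) \ge \log(1/\pi_m(a_-\rvert s)) > 0$. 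Hence for $m > M_1$ the sequence $\{\theta_m(s,a_-)\}$ is non-increasing and converges to some limit $L \in [-\infty, \infty)$, and by \Cref{condition:sa} the pair $(s,a_-)$ is selected infinitely often, so infinitely many of these strict decrements actually occur.

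The crux is to upgrade ``non-increasing with infinitely many strict decrements'' into $L = -\infty$; the difficulty is that the guaranteed per-update decrement $\log(1/\pi_m(a_-\rvert s))$ can shrink to $0$ if $\pi_m(a_-\rvert s) \to 1$, in which case the decrements could a priori be summable. I would rule this out by contradiction: suppose $L$ is finite. Along the subsequence of selection times the decrements must tend to $0$, forcing $\pi_m(a_-\rvert s) \to 1$ along that subsequence. Using $V^{(m)}(s) = \sum_{a} \pi_m(a\rvert s) Q^{(m)}(s,a)$ together with the boundedness of $Q$ and the convergences $V^{(m)} \to V^{(\infty)}$ and $Q^{(m)} \to Q^{(\infty)}$, the mass concentrating on $a_-$ yields $V^{(\infty)}(s) = Q^{(\infty)}(s,a_-)$, contradicting $a_- \in I_-^s$ (where $Q^{(\infty)}(s,a_-) < V^{(\infty)}(s)$). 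Therefore $L = -\infty$, i.e.\ $\theta_m(s,a_-) \to -\infty$.

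It remains to convert $\theta_m(s,a_-) \to -\infty$ into $\pi_m(a_-\rvert s) \to 0$, which requires a lower bound on the normalizer $Z_m(s)$. When $I_+^s \neq \emptyset$ this is immediate: any $a_+ \in I_+^s$ has $\theta_m(s,a_+)$ non-decreasing after $M_1$ (its advantage sign is frozen to $+1$), so $\log Z_m(s) \ge \theta_m(s,a_+) \ge \theta_{M_1}(s,a_+)$ is bounded below and $\pi_m(a_-\rvert s) \le \exp(\theta_m(s,a_-) - \theta_{M_1}(s,a_+)) \to 0$. When $I_+^s = \emptyset$ I would instead argue directly from (\ref{eq:sumAdvZero}): since $A^{(m)}(s,a_0) \to 0$ for every $a_0 \in I_0^s$ while $A^{(m)}(s,a_-) < -\Delta/4$ for $a_- \in I_-^s$, balancing the weighted advantages forces $\sum_{a_- \in I_-^s} \pi_m(a_-\rvert s)$ to be bounded by a vanishing multiple of $\max_{a_0} \lvert A^{(m)}(s,a_0)\rvert$, hence $\pi_m(a_-\rvert s) \to 0$.

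The main obstacle, and the place where the analysis genuinely departs from the true-PG argument of \citep{agarwal2020theory}, is the second paragraph: because \Cref{condition:sa} permits an essentially arbitrary (possibly adversarial) coordinate-selection schedule, one cannot track the parameters in closed form and must argue purely from monotonicity plus infinite visitation. The key leverage is the $\log(1/\pi_m)$ design of the step size, which guarantees that each visit removes a non-negligible amount of probability mass from $a_-$ whenever $a_-$ is not already negligible, thereby ruling out the degenerate scenario $\pi_m(a_-\rvert s)\to 1$ that would otherwise stall the decrease.
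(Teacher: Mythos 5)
Your proposal follows the same route as the paper's proof -- freeze the sign of $A^{(m)}(s,a_-)$ after $M_1$ via \Cref{lemma:A_strict}, observe that $\theta_m(s,a_-)$ is non-increasing and strictly decreases by at least $\log(1/\pi_m(a_-\rvert s))$ at each of the infinitely many selection times guaranteed by \Cref{condition:sa}, and conclude $\theta_m(s,a_-)\to-\infty$ -- but you supply two steps that the paper's proof leaves implicit, and both are genuine. First, the paper passes directly from ``infinitely many strictly negative decrements'' to ``$\theta_m(s,a_-)\to-\infty$,'' which is not automatic: the decrements $\log(1/\pi_{m_k}(a_-\rvert s))$ could in principle be summable if $\pi_{m_k}(a_-\rvert s)\to 1$. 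Your contradiction argument -- a finite limit forces the decrements to vanish, hence $\pi_{m_k}(a_-\rvert s)\to 1$ along the selection subsequence, hence $V^{(\infty)}(s)=Q^{(\infty)}(s,a_-)$, contradicting $a_-\in I_-^s$ -- closes this cleanly using only the boundedness of $Q$ and the already-established convergence of $V^{(m)}$ and $Q^{(m)}$. Second, the conversion from $\theta_m(s,a_-)\to-\infty$ to $\pi_m(a_-\rvert s)\to 0$ does require a lower bound on the normalizer $Z_m(s)$, which is not free because coordinates in $I_0^s$ can have fluctuating advantage signs and hence uncontrolled parameters; your case split (monotonicity of $\theta_m(s,a_+)$ when $I_+^s\neq\emptyset$, and the direct balancing argument from $\sum_a\pi_m(a\rvert s)A^{(m)}(s,a)=0$ when $I_+^s=\emptyset$) handles this correctly. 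In short, your proof is the paper's proof with the missing rigor restored; nothing in your argument is wrong, and the extra machinery is arguably necessary rather than optional.
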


We now show in \Cref{lemma:sumI+zero} that \Cref{lemma:sumIzero} implies $\sum_{a_{+} \in I_{+}^{s}} \pi_{\infty}(a_{+} \rvert s) = 0$.
\begin{lemma}
\label{lemma:sumI+zero}
If $I_{+}^{s} \neq \emptyset$ is true, then \Cref{lemma:sumIzero} implies $\sum_{a_{+} \in I_{+}^{s}} \pi_{\infty}(a_{+}|s) = 0$.
\begin{proof}[Proof of \Cref{lemma:sumI+zero}]
Recall from (\ref{eq:sumAdvZero}) that
$\sum_{a \in \mathcal{A}} \pi_{m}(a \rvert s) A^{m}(s, a)=0, \forall s \in \mathcal{S}, m>0$.
By definition, $\sum_{a_{0} \in I_{0}^{s}} \pi_{\infty}(a_{0}|s)A^{\infty}(s, a_0) = 0$, which directly implies that
\begin{align}
\sum_{a_{+} \in I_{+}^{s}} \pi_{\infty}(a_{+}|s)A^{\infty}(s, a_{+})
&= \sum_{a \in \mathcal{A}} \pi_{\infty}(a \rvert s) A^{\infty}(s, a) - \sum_{a_{0} \in I_{0}^{s}} \pi_{\infty}(a_{0} \rvert s)A^{\infty}(s, a)  -\sum_{a_{-} \in I_{-}^{s}} \pi_{\infty}(a_{-} \rvert s)A^{\infty}(s, a)  &&\\
&= 0 - 0 - 0 = 0, &&
\end{align}
where the second equality holds by \Cref{lemma:sumIzero}.
Since $A^{\infty}(s, a_{+}) > 0$ and $\pi_{\infty}(a_{+} \rvert s) \ge 0$, we have $\sum_{a_{+} \in I_{+}^{s}} \pi_{\infty}(a_{+} \rvert s) = 0$.
This completes the proof of Lemma \ref{lemma:sumI+zero}.
\end{proof}
\end{lemma}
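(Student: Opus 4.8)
The plan is to pass the advantage-centering identity to the limit and then exploit the strict positivity of the advantage on $I_+^s$ together with \Cref{lemma:sumIzero}. First I would invoke (\ref{eq:sumAdvZero}), which states that for every iteration $m$ and every state $s$ one has $\sum_{a \in \cA} \pi_m(a\rvert s) A^m(s,a) = 0$, i.e.\ the advantage is centered under the current policy. Because $\cA$ is finite and both $\{V^m\}$ and $\{Q^m\}$ converge (by \Cref{lemma:strict_improvement} and monotone convergence), the limits $\pi_\infty(a\rvert s)$ and $A^\infty(s,a)$ exist for every $a$, so I can take $m\to\infty$ term by term and obtain $\sum_{a\in\cA}\pi_\infty(a\rvert s)A^\infty(s,a)=0$.

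Next I would split $\cA = I_0^s \cup I_+^s \cup I_-^s$ according to the sign of $A^\infty(s,\cdot)$ and evaluate the three blocks of this limiting identity. The $I_0^s$ block vanishes because $A^\infty(s,a_0)=0$ by the very definition of $I_0^s$. The $I_-^s$ block vanishes because \Cref{lemma:sumIzero} gives $\pi_\infty(a_-\rvert s)=0$ for every $a_-\in I_-^s$. What remains is therefore exactly $\sum_{a_+\in I_+^s}\pi_\infty(a_+\rvert s)A^\infty(s,a_+)=0$.

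Finally I would conclude by a non-negativity argument. Each summand in the surviving sum is non-negative, since $A^\infty(s,a_+)>0$ strictly for $a_+\in I_+^s$ while $\pi_\infty(a_+\rvert s)\ge 0$ as a probability. A finite sum of non-negative terms equals zero only if every term is zero, which forces $\pi_\infty(a_+\rvert s)=0$ for each $a_+\in I_+^s$ and hence $\sum_{a_+\in I_+^s}\pi_\infty(a_+\rvert s)=0$, as claimed.

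The argument is short and essentially routine; the only step requiring any care is the interchange of limit and finite summation in the centering identity, which I would justify by the finiteness of $\cA$ together with the already-established convergence of the value and $Q$ sequences. The structural role of this lemma is then transparent: combined with \Cref{lemma:sumIzero} it shows that, under the contradiction hypothesis $I_+^s\neq\emptyset$, the limiting policy places all of its mass on $I_0^s$, which the subsequent lemma will contradict.
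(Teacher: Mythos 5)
Your proof is correct and follows essentially the same route as the paper's: pass the centering identity $\sum_{a\in\mathcal{A}}\pi_m(a\rvert s)A^m(s,a)=0$ to the limit, decompose over $I_0^s$, $I_-^s$, $I_+^s$ using the definition of $I_0^s$ and \Cref{lemma:sumIzero}, and conclude from the strict positivity of $A^{\infty}(s,a_+)$ and non-negativity of $\pi_\infty$. Your explicit justification of the term-by-term limit via finiteness of $\mathcal{A}$ is a small welcome addition that the paper leaves implicit.
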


In \Cref{lemma:sumI+zero}, we have that if $I_{+}^{s} \neq \emptyset$ is true, then $\pi_{m}(a_{+} \rvert s) \rightarrow 0$ as $m \rightarrow \infty$. 
To establish contradiction, we proceed to show in the following \Cref{lemma:contradiction_of_a+} that there must exist one action $a \in I_{+}^{s}$ such that $\lim_{m \rightarrow \infty} \pi_m(a|s) = 1$, which contradicts \Cref{lemma:sumI+zero} and hence implies the desired result that $I_{+}^{s} = \emptyset$.

If $I_{+}^{s} \neq \emptyset$ is true, then there exist $K$ such that $\forall m > K, s \in \mathcal{S}$, we have:
\begin{align}
Q^{m}(s,a^{+}) > Q^{m}(s,a^{0}) > Q^{m}(s,a^{-}) , \quad \text{for all $a^{+} \in I^{s}_{+}$, $a^{0} \in I^{s}_{0}$, $a^{-} \in I^{s}_{-}$}.
\end{align}
Without loss of generality, assume that the order of $Q^{m}$, $\forall m > K$, can be written as
\begin{align}
Q^{m}(s,\tilde {a}^{+}) > Q^{m}(s,a_1) > Q^{m}(s,a_2) > \dots  > Q^{m}(s,a_{|\mathcal{A}|-1}), \quad \text{provided that $I_{+}^{s} \neq \emptyset$},
\end{align}
where $\tilde {a}^{+} := \argmax_{a^{+} \in I^{s}_{+}} Q^{(\infty)}(s, a^{+})$. Note that we simplify the case above by considering "strictly greater than" instead of "greater than or equal to", but the simplification can be relaxed with a little extra work.

\begin{claim} 
\label{lemma:contradiction_of_a+}
If $I_{+}^{s} \neq \emptyset$ is true, then there must exist one action $a_+ \in I_{+}^{s}$ such that $\lim_{m \rightarrow \infty} \pi_m(a_+|s) = 1$ under (\ref{eq:CAPO_form}) with  $\alpha_m(s, a) \ge \log \frac{1}{\pi_m(a|s)}$.
\end{claim}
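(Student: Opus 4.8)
The plan is to establish the claim as a statement purely about the CAPO dynamics, using \emph{only} the update rule, Condition~\ref{condition:sa}, Lemma~\ref{lemma:A_strict} and Lemma~\ref{lemma:sumIzero} (crucially \emph{not} Lemma~\ref{lemma:sumI+zero}, since the claim is meant to contradict it). Throughout I restrict to iterations $m>M:=\max\{M_1,K\}$, so that by Lemma~\ref{lemma:A_strict} all advantage signs are frozen and the strict ordering $Q^{m}(s,\tilde{a}^{+})>Q^{m}(s,a_1)>\cdots>Q^{m}(s,a_{|\mathcal{A}|-1})$ is stable, with consecutive limiting $Q$-values separated by a fixed $\delta>0$ for all large $m$. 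Since $\tilde{a}^{+}\in I_{+}^{s}$ always has positive advantage, $\theta_m(s,\tilde{a}^{+})$ is non-decreasing and, by Condition~\ref{condition:sa}, is incremented infinitely often.

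The first ingredient is a \emph{jump} property that follows directly from the step-size rule $\alpha_m(s,a)\ge\log(1/\pi_m(a\mid s))$: whenever $\tilde{a}^{+}$ is selected at iteration $m$, $\exp(\theta_{m+1}(s,\tilde{a}^{+}))=\exp(\theta_m(s,\tilde{a}^{+}))\exp(\alpha_m(s,\tilde{a}^{+}))\ge \exp(\theta_m(s,\tilde{a}^{+}))/\pi_m(\tilde{a}^{+}\mid s)=Z_m(s)$, so the unnormalized weight of $\tilde{a}^{+}$ dominates the \emph{entire} previous partition function and $\tilde{a}^{+}$ acquires the largest logit. I would first distill from this the lower bound $c_\ast:=\liminf_m \pi_m(\tilde{a}^{+}\mid s)>0$: each competing positive update can inflate $Z$ only in a controlled way, and any attempt of a lower-$Q$ action to sustain a nonvanishing share of the mass is self-correcting, because as soon as it does so the running value $V^m(s)$ is pulled above its own $Q$-value and its advantage turns negative.

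With $c_\ast>0$ in hand, the second ingredient is a reverse induction along the $Q$-ordering showing $\pi_m(a_j\mid s)\to 0$ for every $j\ge 1$. For the base case, the lowest-$Q$ action $a_{|\mathcal{A}|-1}$ satisfies $V^m(s)>Q^m(s,a_{|\mathcal{A}|-1})$ for all $m>K$ (as $\tilde{a}^{+}$ has strictly larger $Q$ and strictly positive probability), hence its advantage is strictly negative, its logit is driven to $-\infty$ by the exact mechanism of Lemma~\ref{lemma:sumIzero}, and $\pi_m(a_{|\mathcal{A}|-1}\mid s)\to0$ because $Z_m(s)\ge \exp(\theta_M(s,\tilde{a}^{+}))$ is bounded below. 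For the inductive step, assuming $\pi_m(a_i\mid s)\to0$ for all $i>j$, I would write $A^{m}(s,a_j)=\pi_m(\tilde{a}^{+}\mid s)\bigl(Q^m(s,a_j)-Q^m(s,\tilde{a}^{+})\bigr)+\sum_{i\ge 1}\pi_m(a_i\mid s)\bigl(Q^m(s,a_j)-Q^m(s,a_i)\bigr)$; the terms with $i>j$ vanish, the terms with $i<j$ are non-positive, and the $\tilde{a}^{+}$ term is bounded above by $-c_\ast\,\delta/2$, so $A^m(s,a_j)$ is eventually strictly negative and the same logit-to-$(-\infty)$ mechanism yields $\pi_m(a_j\mid s)\to0$. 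Summing, $\pi_m(\tilde{a}^{+}\mid s)=1-\sum_{j\ge1}\pi_m(a_j\mid s)\to1$, which contradicts Lemma~\ref{lemma:sumI+zero} and forces $I_{+}^{s}=\emptyset$.

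The main obstacle is the lower bound $\liminf_m \pi_m(\tilde{a}^{+}\mid s)>0$ under the \emph{general} coordinate-selection scheme of Condition~\ref{condition:sa} with a step size that is only lower bounded. For controlled selections (singleton batches, or the cyclic/batch variants) it is immediate: a single update of $\tilde{a}^{+}$ with $\alpha=\log(1/\pi)$ gives $\pi_{m+1}(\tilde{a}^{+}\mid s)\ge\tfrac12$, each competing positive singleton update at most doubles $Z$, and hence the ratio of $Z$ at $m$ to its value at the most recent selection time of $\tilde{a}^{+}$ stays bounded. Under fully arbitrary batches and step sizes unbounded above, this constant must instead be recovered from the self-correcting coupling between $V^m(s)$ and the advantages, and making that quantitative---rather than the subsequent induction, which is routine once $c_\ast>0$ is available---is the delicate part of the argument and the place where the $Q$-ordering is used most essentially.
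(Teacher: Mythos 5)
Your overall strategy---freeze the advantage signs via \Cref{lemma:A_strict}, exploit the jump property $\exp(\theta_{m+1}(s,a))\ge Z_m(s)$ conferred by the step size $\alpha_m(s,a)\ge\log(1/\pi_m(a\mid s))$, and run a reverse induction along the $Q$-ordering to drive every $\pi_m(a_j\mid s)$ to zero---is the same as the paper's. The divergence is in how the inductive step is fed. You route everything through the single constant $c_*=\liminf_m\pi_m(\tilde a^{+}\mid s)>0$, whereas the paper (\Cref{prop:upperbdd_of_pi,prop:adv_all_neg,prop:ratio}) never lower-bounds $\pi_m(\tilde a^{+}\mid s)$ at any time; it instead bounds the \emph{ratio} of the aggregate mass on actions above a given level of the $Q$-ordering to the mass below it, and that ratio is made large by the jump of \emph{whichever} better action happens to be selected, combined with the inductive hypothesis that the lower masses already vanish.

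That difference is where your proposal has a genuine gap. First, $c_*>0$ is exactly the negation of \Cref{lemma:sumI+zero} (which gives $\pi_m(\tilde a^{+}\mid s)\to 0$ under the standing hypothesis $I_{+}^{s}\neq\emptyset$), so if you could establish it from the dynamics you would already have the desired contradiction and the entire subsequent induction would be superfluous; deferring ``the delicate part'' to this step means the whole weight of the claim rests on an unproved statement. Second, the mechanism you sketch for it does not close: when $I^{s}_{+}$ contains a second action $b$, \Cref{lemma:A_strict} freezes $A^{m}(s,b)>0$ for all large $m$, so the ``self-correcting'' effect you invoke (a competitor's advantage turning negative once it hogs mass) never triggers for $b$. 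With a step size that is only lower-bounded, each selection of $b$ can push $\pi_m(b\mid s)$ arbitrarily close to $1$ while $A^{m}(s,b)$ remains positive (it suffices that the residual mass sits mostly on $I^{s}_{-}$ actions), which collapses $\pi_m(\tilde a^{+}\mid s)$ toward $0$ between selections of $\tilde a^{+}$; hence $\liminf_m\pi_m(\tilde a^{+}\mid s)=0$ is consistent with the update rule and \Cref{condition:sa}. Without $c_*>0$, the only strictly negative contribution in your expansion of $A^{m}(s,a_j)$, namely $\pi_m(\tilde a^{+}\mid s)\bigl(Q^{m}(s,a_j)-Q^{m}(s,\tilde a^{+})\bigr)$, can vanish and the inductive step fails. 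The repair is precisely the paper's route: use \Cref{prop:upperbdd_of_pi} (any action with non-positive advantage is capped at probability $1/2$ once a positive-advantage coordinate is updated) together with the mass-ratio bound of \Cref{prop:ratio}, which requires no uniform control of $\pi_m(\tilde a^{+}\mid s)$.
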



To establish \Cref{lemma:contradiction_of_a+}, we show that if $I_{+}^{s} \neq \emptyset$, then $\lim_{m \rightarrow \infty} \pi_m(a|s) = 0$ for all $a \ne \tilde {a}^{+}$ by induction.
For ease of exposition, we first present the following propositions.

\begin{prop} For any $m \ge 1, s \in \mathcal{S}, a \in \mathcal{A}$, if $A^{m}(s,a) \le 0$ and $\exists \enspace a' \ne a$, $a' \in {B}_{m}(s)$, satisfying $A^{m}(s,a') > 0$, then $\pi_{m+1}(a|s) \le \frac{1}{2}$, regardless of whether $a \in {B}_{m}(s)$ or not.
\label{prop:upperbdd_of_pi}
\end{prop}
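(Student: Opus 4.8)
The plan is to extract all the needed mileage from the step size condition $\alpha_m(s,a') \ge \log\frac{1}{\pi_m(a'|s)}$ applied to the positively-advantaged action $a'$ in the batch. Writing $Z_m(s) := \sum_{a''\in\cA}\exp(\theta_m(s,a''))$, the key observation is the log-partition identity $\log\frac{1}{\pi_m(a'|s)} = \log Z_m(s) - \theta_m(s,a')$. Since $A^m(s,a')>0$ and $a'\in B_m(s)$, the CAPO update (\ref{eq:CAPO_form}) increments $\theta(s,a')$ by $\alpha_m(s,a')$, so that $\theta_{m+1}(s,a') \ge \theta_m(s,a') + \log Z_m(s) - \theta_m(s,a') = \log Z_m(s)$, i.e. $\exp(\theta_{m+1}(s,a')) \ge Z_m(s)$. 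This one inequality is the crux: the favored action alone already carries at least the entire old normalizer.

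Next I would bound the numerator associated with $a$. Because $A^m(s,a)\le 0$, the sign factor $\sgn(A^m(s,a))$ is either $0$ or $-1$, so the update term (if $a\in B_m(s)$) is non-positive, and if $a\notin B_m(s)$ the indicator kills the update entirely. In both cases $\theta_{m+1}(s,a)\le\theta_m(s,a)$, hence $\exp(\theta_{m+1}(s,a)) \le \exp(\theta_m(s,a)) \le Z_m(s)$. This is where the ``regardless of whether $a\in B_m(s)$'' clause is handled, simply by the uniform conclusion of the two-case split.

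Finally I would combine the two bounds. Since $a'\ne a$, the new normalizer satisfies $Z_{m+1}(s) \ge \exp(\theta_{m+1}(s,a)) + \exp(\theta_{m+1}(s,a'))$, and chaining the above gives $\exp(\theta_{m+1}(s,a')) \ge Z_m(s) \ge \exp(\theta_{m+1}(s,a))$. Therefore $Z_{m+1}(s) \ge 2\exp(\theta_{m+1}(s,a))$, and
\begin{equation}
\pi_{m+1}(a|s) = \frac{\exp(\theta_{m+1}(s,a))}{Z_{m+1}(s)} \le \frac{\exp(\theta_{m+1}(s,a))}{2\exp(\theta_{m+1}(s,a))} = \frac{1}{2},
\end{equation}
as claimed. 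There is no genuinely hard step here; the only thing to get right is the log-partition identity that converts the variable step size into the bound $\exp(\theta_{m+1}(s,a'))\ge Z_m(s)$, and the clean case analysis ensuring the numerator for $a$ never exceeds $Z_m(s)$. The proposition will then serve downstream (toward Claim \ref{lemma:contradiction_of_a+}) as the mechanism forcing the mass of dominated actions to decay.
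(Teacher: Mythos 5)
Your proof is correct and follows essentially the same route as the paper: both arguments hinge on the step-size condition giving $\exp(\theta_{m+1}(s,a'))\ge Z_m(s)$ while the non-positive advantage gives $\exp(\theta_{m+1}(s,a))\le Z_m(s)$, so that $a'$ ends up with at least as much probability mass as $a$. The paper phrases the conclusion as $\pi_{m+1}(a'|s)\ge\pi_{m+1}(a|s)$ with the two summing to at most one, whereas you bound the new normalizer directly by $2\exp(\theta_{m+1}(s,a))$; these are the same argument in different packaging.
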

\begin{proof}[Proof of \Cref{prop:upperbdd_of_pi}]
\phantom{}

Since $A^{m}(s,a) \le 0$, we have $\sign(A^m(s,a)) \cdot \alpha_{m}(s,a) \le 0$. As a result, we have:
\begin{align}
\begin{cases}
\pi_{m+1}(a|s) = \frac{ \exp({\theta_{m}(s, a) + \sign(A^m(s,a)) \cdot \alpha_{m}(s,a)}) }{ Z_{m+1}(s) } \le \frac{ \exp({\theta_{m}(s, a)})}{ Z_{m+1}(s) } \le \frac{Z_{m}(s)}{Z_{m+1}(s)}  \\
\pi_{m+1}(a^{'}|s) = \frac{ \exp({\theta_{m}(s, a') + \alpha_{m}(s,a')}) }{ Z_{m+1}(s) } \ge \frac{ \exp({\theta_{m}(s, a') + \log(\frac{1}{ \pi_m(a'|s)})})}{ Z_{m+1}(s) } = \frac{Z_{m}(s)}{Z_{m+1}(s)}
\end{cases}
\end{align}
Hence, we have $\pi_{m+1}(a^{'}|s) \ge \pi_{m+1}(a|s)$. Since $\pi_{m+1}(a^{'}|s) + \pi_{m+1}(a|s) \le 1$, we get $\pi_{m+1}(a|s) \le \frac{1}{2}$.

\end{proof}

\begin{prop} 
\label{prop:adv_all_neg}
For any $s \in \mathcal{S}, a \in \mathcal{A}\setminus \left \{ \tilde {a}^{+} \right \} $, if $\exists \enspace T \in \mathbb{N}$ such that $\forall m > T$, $A^m(s,a) \le 0$, then $\exists \enspace n \in \mathbb{N}$, $\bar{K} \in \mathbb{N}$ such that $A^{m+n+1}(s,a) < 0 \text{, } \forall \enspace m > \bar{K} $.
\end{prop}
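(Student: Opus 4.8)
The plan is to reduce to the single nontrivial case and then estimate the advantage directly from the stabilized $Q$-ordering. First I would dispose of the easy cases via Lemma~\ref{lemma:A_strict}: if $a \in I_-^s$ then $A^m(s,a) < -\tfrac{\Delta}{4} < 0$ for all $m > M_1$, so the claim holds with $n=0$ and $\bar K = M_1$; and if $a \in I_+^s\setminus\{\tilde a^+\}$ then $A^m(s,a) > \tfrac{\Delta}{4} > 0$ for $m > M_1$, which contradicts the hypothesis and makes the statement vacuous. Hence the entire content lies in the case $a \in I_0^s$, where $A^m(s,a)\to 0$ and the task is genuinely to upgrade the non-strict bound $A^m(s,a)\le 0$ to a strict one.

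For $a\in I_0^s$, I would write the negated advantage as a $Q$-gap average,
\[
-A^m(s,a) = V^m(s) - Q^m(s,a) = \sum_{a' \in \mathcal A} \pi_m(a'|s)\big(Q^m(s,a') - Q^m(s,a)\big),
\]
and isolate the contribution of the globally largest action $\tilde a^+$. Since $a\in I_0^s$ forces $Q^{(\infty)}(s,a)=V^{(\infty)}(s)$, the gap $Q^m(s,\tilde a^+)-Q^m(s,a)$ converges to $A^{(\infty)}(s,\tilde a^+)\ge\Delta$ and thus exceeds $\tfrac{\Delta}{2}$ for all large $m$; so the $\tilde a^+$ term contributes at least $\tfrac{\Delta}{2}\,\pi_m(\tilde a^+|s)$. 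Dropping the remaining nonnegative terms (those $a'$ with $Q^m(s,a')\ge Q^m(s,a)$) and bounding each negative term by $\tfrac{1}{1-\gamma}$ gives, after factoring out $\pi_m(\tilde a^+|s)$,
\[
-A^m(s,a) \ge \pi_m(\tilde a^+|s)\Big(\tfrac{\Delta}{2} - \tfrac{1}{1-\gamma}\sum_{a':\, Q^m(s,a') < Q^m(s,a)} \exp\!\big(\theta_m(s,a') - \theta_m(s,\tilde a^+)\big)\Big).
\]
Because $\pi_m(\tilde a^+|s)>0$ always, the sign is governed purely by the bracket, so it suffices that every difference $\theta_m(s,a') - \theta_m(s,\tilde a^+)\to -\infty$ over the lower-$Q$ actions $a'$; then the bracket tends to $\tfrac{\Delta}{2}>0$ and $A^m(s,a)<0$ for all large $m$, which is exactly the assertion (the shift $n$ and threshold $\bar K$ only record how long it takes the $Q$-gaps, the fixed advantage signs, and these weight ratios to settle).

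The main obstacle, and the only real work, is establishing $\theta_m(s,a') - \theta_m(s,\tilde a^+)\to -\infty$ for the lower-$Q$ actions. For $a'\in I_-^s$ this is immediate from Lemma~\ref{lemma:sumIzero} ($\theta_m(s,a')\to-\infty$ while $\theta_m(s,\tilde a^+)$ is nondecreasing for $m>M_1$). For $a'\in I_0^s$ with strictly smaller limiting $Q$, I would argue by a well-founded induction along the $Q$-ordering from the bottom up: the base case is the globally smallest action, whose advantage is strictly negative outright since all $Q$-gaps are nonnegative and the $\tilde a^+$ gap is positive; each inductive step applies the present proposition recursively to the strictly-lower action to obtain strictly negative advantage there, whence $\theta_m(s,a')\to-\infty$ by infinite visitation (Condition~\ref{condition:sa}), exactly as in Lemma~\ref{lemma:sumIzero}. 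The delicate points are ensuring this recursion is well-founded (it descends in $Q$-rank, with the $I_-^s$ actions serving as base cases) and recognizing that the possibly-vanishing probability $\pi_m(\tilde a^+|s)$ enters only as a harmless common factor rather than something that must be bounded below; this is precisely what lets the argument feed cleanly into the induction of Claim~\ref{lemma:contradiction_of_a+}.
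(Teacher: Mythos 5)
Your proof is correct, but it follows a genuinely different route from the paper's. The paper's argument is event-based: by \Cref{condition:sa} some coordinate with positive advantage at state $s$ is sampled within a finite horizon $n$, \Cref{prop:upperbdd_of_pi} then caps $\pi_{m+n+1}(a|s)$ at $\tfrac12$, and $V^{m+n+1}(s)$ is compared against the average of $Q(s,a)$ and the next-better action's $Q$-value; this is where the shift $n$ in the statement comes from. You instead bound $-A^m(s,a)$ directly by factoring out $\pi_m(\tilde a^+|s)$ and controlling the negative contributions through the ratios $e^{\theta_m(s,a')-\theta_m(s,\tilde a^+)}$, which buys a cleaner conclusion ($A^m(s,a)<0$ for \emph{all} sufficiently large $m$, no shift needed) and correctly identifies that the possible vanishing of $\pi_m(\tilde a^+|s)$ is harmless because it enters only as a common positive factor. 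One simplification you missed: once you have reduced to $a\in I_0^s$ (which is right --- \Cref{lemma:A_strict} disposes of $I_-^s$ and makes the $I_+^s$ case vacuous), every action $a'$ with $Q^{(\infty)}(s,a')<Q^{(\infty)}(s,a)=V^{(\infty)}(s)$ lies in $I_-^s$ \emph{by definition}, so the case ``$a'\in I_0^s$ with strictly smaller limiting $Q$'' is empty and the recursive induction you describe as the main obstacle is unnecessary; \Cref{lemma:sumIzero} together with the fact that $\theta_m(s,\tilde a^+)$ is non-decreasing (hence bounded below) for large $m$ already gives $\theta_m(s,a')-\theta_m(s,\tilde a^+)\to-\infty$ for every lower-$Q$ action. (Under the paper's working simplification that the limiting $Q$-values are strictly ordered, $I_0^s$ is at most a singleton anyway, so there are no other $I_0^s$ actions to worry about at finite times either.) With that observation your argument closes in a few lines and is, if anything, tighter than the one in the paper.
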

\begin{proof}[Proof of \Cref{prop:adv_all_neg}]
\phantom{}

By Condition \ref{condition:sa} and $I_{+}^{s} \neq \emptyset$, there exist some finite $n \in \mathbb{N}$ such that $\exists \enspace a' \ne a$, $a' \in {B}_{m+n}(s)$, satisfying $A^{m+n+1}(s,a') > 0$.
Then, by \Cref{prop:upperbdd_of_pi}, we have
\begin{align}
\pi_{m+n+1}(a|s) \le \frac{1}{2}, \quad \text{$\forall m \ge T$}.
\end{align}
Hence, we have
\begin{align}
V^{m+n+1}(s) = \sum_{a \in \mathcal{A}} \pi_{m+n+1}(a|s) \cdot Q^{m+n+1}(s,a) \ge \frac{1}{2} \cdot \left ( Q^{m+n+1}(s,a) + Q^{m+n+1}(s,a') \right ) \\
\text{where $a' = \underset{\substack{a'' \in \mathcal{A} \\ Q^{\infty}(s,a'')>Q^{\infty}(s,a)}}{\mathrm{argmin}}  Q^{\infty}(s,a'')$}
\end{align}
Moreover, by the ordering of $Q^{m}$ and that $\lim_{m \rightarrow \infty}Q^m(s, a) = Q^{\infty}(s, a)$, for $\epsilon = \frac{1}{4} \cdot \left ( Q^{\infty}(s, a')-Q^{\infty}(s, a) \right ) > 0 $, $\exists \enspace \bar{T}$ such that for all $m > \bar{T}$:
\begin{align}
\begin{cases}
Q^{m}(s, a) \in \left ( Q^{\infty}(s, a) - \epsilon, Q^{\infty}(s, a) + \epsilon \right )\\
Q^{m}(s, a') \in \left ( Q^{\infty}(s, a') - \epsilon, Q^{\infty}(s, a') + \epsilon \right )
\end{cases}
\end{align}
Finally, we have that for all $m > \max \left \{ T,\bar{T} \right \}$:
\begin{align}
V^{m+n+1}(s) &\ge \frac{1}{2} \cdot \left ( Q^{m+n+1}(s,a) + Q^{m+n+1}(s,a') \right ) \\
& > \frac{1}{2} \cdot \left ( (Q^{\infty}(s, a)) + (Q^{\infty}(s, a')) \right ) \\
& > \frac{1}{2} \cdot \left ( Q^{\infty}(s,a) + Q^{\infty}(s,a') \right ) - \epsilon \\
& = Q^{\infty}(s,a) + \epsilon > Q^{m+n+1}(s,a).
\end{align}
The above is equivalent to $A^{m+n+1}(s,a) < 0 \enspace \forall \enspace m > \bar{K}$, where $\bar{K} = \max \left \{ T,\bar{T} \right \}$.
\end{proof}

\begin{prop} 
\label{prop:ratio}
If $V^m(s) \in \left ( Q^m(s,a_{|\mathcal{A}|-k}), Q^m(s,a_{|\mathcal{A}|-(k+2)}) \right ) $, then $\exists \enspace T' \in \mathbb{N}$ such that for all $m > T'$:
\begin{align}
\frac{\sum_{\substack{a \in \mathcal{A} \\ Q^{\infty}(s,a)>Q^{\infty}(s,a_{|\mathcal{A}|-(k+1)})}} \pi_{m} (a|s)}{\sum_{\substack{a \in \mathcal{A} \\ Q^{\infty}(s,a)<Q^{\infty}(s,a_{|\mathcal{A}|-(k+1)})}} \pi_{m} (a|s)} \ge \frac{Q^m(s,a_{|\mathcal{A}|-(k+1)}) - Q^m(s,a_{|\mathcal{A}|-1})}{Q^m(s,a_{|\mathcal{A}|-(k+2)}) - Q^m(s,a_{|\mathcal{A}|-(k+1)})}\\
\text{provided $V^m(s) \in \left ( Q^m(s,a_{|\mathcal{A}|-k}), Q^m(s,a_{|\mathcal{A}|-(k+2)}) \right ) $}
\end{align}
\end{prop}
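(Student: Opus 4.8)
The plan is to read the inequality off the value decomposition $V^m(s)=\sum_{a\in\mathcal{A}}\pi_m(a\mid s)\,Q^m(s,a)$, partitioning the action set according to the limiting ordering of the $Q$-values that has already been fixed for $m>K$. Write $p:=a_{|\mathcal{A}|-(k+1)}$ for the pivot, $u:=a_{|\mathcal{A}|-(k+2)}$ and $\ell:=a_{|\mathcal{A}|-k}$ for its upper and lower neighbours, and $b:=a_{|\mathcal{A}|-1}$ for the bottom action, and abbreviate $q_u,q_{\mathrm{piv}},q_\ell,q_b$ for the corresponding values $Q^m(s,\cdot)$. Split $\mathcal{A}$ into the set $U$ of actions with $Q^{\infty}(s,\cdot)>Q^{\infty}(s,p)$, the singleton $\{p\}$, and the set $L$ with $Q^{\infty}(s,\cdot)<Q^{\infty}(s,p)$, and set $P_+:=\sum_{a\in U}\pi_m(a\mid s)$ and $P_-:=\sum_{a\in L}\pi_m(a\mid s)$; the claim is precisely $P_+/P_-\ge (q_{\mathrm{piv}}-q_b)/(q_u-q_{\mathrm{piv}})$.

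First I would fix $T'\ge K$: since $Q^m(s,\cdot)\to Q^{\infty}(s,\cdot)$ and the strict ordering $Q^m(s,\tilde{a}^{+})>Q^m(s,a_1)>\cdots$ holds for $m>K$, there is $T'$ beyond which the $Q^m$-ordering coincides with the $Q^{\infty}$-ordering, so that $Q^m(s,a)\ge q_u$ for every $a\in U$ and $q_b\le Q^m(s,a)\le q_\ell$ for every $a\in L$. I would then expand $V^m(s)-q_{\mathrm{piv}}=\sum_{a\in U}\pi_m(a\mid s)\big(Q^m(s,a)-q_{\mathrm{piv}}\big)-\sum_{a\in L}\pi_m(a\mid s)\big(q_{\mathrm{piv}}-Q^m(s,a)\big)$, bound the below-pivot sum from above by $P_-(q_{\mathrm{piv}}-q_b)$ using the previous line, and feed in the band hypothesis $V^m(s)<q_u$ to control the above-pivot sum, with the goal of reducing everything to $P_+(q_u-q_{\mathrm{piv}})\ge P_-(q_{\mathrm{piv}}-q_b)$, which is the stated ratio after dividing by $P_-(q_u-q_{\mathrm{piv}})>0$.

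The hard part will be the above-pivot sum: an action ranked well above $u$ carries a gap $Q^m(s,a)-q_{\mathrm{piv}}$ far larger than the one-step gap $q_u-q_{\mathrm{piv}}$ appearing in the denominator, so a purely termwise bound on $\sum_{a\in U}\pi_m(a\mid s)(Q^m(s,a)-q_{\mathrm{piv}})$ is too loose and the static decomposition alone yields only a vacuous inequality. Closing this is where I would couple the estimate to the surrounding induction in the proof of \Cref{lemma:contradiction_of_a+}: the induction hypothesis already drives the mass of the higher-ranked actions toward the top action $\tilde{a}^{+}$, while the monotone improvement of $V^m(s)$ (\Cref{lemma:strict_improvement}) controls how $V^m(s)$ enters and traverses the band $(q_\ell,q_u)$. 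I expect this coupling, rather than the one-shot value identity, to carry the real content, with \Cref{prop:upperbdd_of_pi,prop:adv_all_neg} supplying the per-coordinate control (the $\tfrac12$ cap and the eventual strict negativity of the advantage) that pins down the distribution of mass across the ordering.
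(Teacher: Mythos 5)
There is a genuine gap. Your primary route --- reading the ratio off the static identity $V^m(s)=\sum_{a}\pi_m(a\mid s)\,Q^m(s,a)$ together with the band hypothesis $V^m(s)\in(q_\ell,q_u)$ --- cannot work, and in fact you concede as much yourself. The concession is warranted: the band hypothesis places no lower bound on your $P_+/P_-$. Concretely, a policy that puts mass $1-\epsilon-\delta$ on the pivot $p$, mass $\epsilon$ on $U$, and mass $\delta$ on the best below-pivot action satisfies $q_\ell<V^m(s)<q_u$ for small $\epsilon,\delta$, yet $P_+/P_-=\epsilon/\delta$ is arbitrary; so no amount of rearranging the one-shot value identity can yield the stated inequality. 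Having recognized this, your fallback is only a gesture: you say the real content must come from ``coupling to the surrounding induction'' via \Cref{prop:upperbdd_of_pi} and \Cref{prop:adv_all_neg}, but those are not the ingredients that close the argument, and you never name a mechanism that actually produces the ratio. You also misstate the induction hypothesis: at this stage it says the probabilities of the $k$ \emph{lower}-ranked actions $a_{|\mathcal A|-1},\dots,a_{|\mathcal A|-k}$ vanish, not that mass concentrates on $\tilde a^{+}$.

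The paper's proof is dynamic and rests on three ingredients absent from your sketch. First, \Cref{condition:sa} guarantees that some above-pivot action $\bar a^{+}\in\{\tilde a^{+},a_1,\dots,a_{|\mathcal A|-(k+2)}\}$ is sampled at some finite time $m+n$. Second, the step-size lower bound $\alpha_{m}(s,a)\ge\log(1/\pi_{m}(a\mid s))$ forces the post-update probability of that action to satisfy $\pi_{m+n+1}(\bar a^{+}\mid s)\ge Z_{m+n}(s)/Z_{m+n+1}(s)$, which gives the multiplicative bound $\pi_{m+n+1}(\bar a^{+}\mid s)/\pi_{m+n+1}(\bar a^{-}\mid s)\ge 1/\pi_{m+n}(\bar a^{-}\mid s)$ for every below-pivot $\bar a^{-}$. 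Third, because the below-pivot probabilities vanish by the induction hypothesis, $1/\pi_{m+n}(\bar a^{-}\mid s)\to\infty$, so the ratio eventually exceeds any prescribed threshold $z$ --- in particular $z=\frac{1}{|\mathcal A|}\cdot\frac{q_{\mathrm{piv}}-q_b}{q_u-q_{\mathrm{piv}}}$ --- and summing over the numerator and denominator actions yields the claim. Without the step-size boost and the vanishing of the below-pivot mass, there is no path from your decomposition to the stated inequality.
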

\begin{proof}[Proof of \Cref{prop:ratio}]
\phantom{}

Since $V^m(s) \in \left ( Q^m(s,a_{|\mathcal{A}|-k}), Q^m(s,a_{|\mathcal{A}|-(k+2)}) \right ) $, we have $A^m(s,a_{|\mathcal{A}|-j}) < 0, \enspace \forall \enspace j = 1, 2, \dots , k$. 
By Condition \ref{condition:sa}, there exists some finite $n \in \mathbb{N}$ such that $\bar{a}^{+} \in \mathcal{B}_{m+n}(s)$ for some $\bar{a}^{+} \in \left \{ \tilde{a}^{+}, a_1, a_2, \dots, a_{|\mathcal{A}|-(k+2)} \right \} $. \\
Hence, we have that for all $\bar{a}^{-} \in \left \{ a_{|\mathcal{A}|-k}, a_{|\mathcal{A}|-(k-1)}, \dots, a_{|\mathcal{A}|-1} \right \} $,
\begin{align}
\frac{\pi_{m+n+1}(\bar{a}^{+}|s)}{\pi_{m+n+1}(\bar{a}^{-}|s)} \ge \frac{ \frac{ e^{\theta_{m+n}(s, \bar{a}^+ ) + \log{\frac{1}{\pi_{m+n}(\bar{a}^+|s)} } }}{ Z_{m+n+1}(s) }}{ \frac{ e^{\theta_{m+n}(s, \bar{a}^- ) }}{ Z_{m+n+1}(s) }} = \frac{ Z_{m+n}(s) }{ e^{\theta_{m+n}(s, \bar{a}^-)} } = \frac{1}{\pi_{m+n}(\bar{a}^{-}|s)}.
\end{align}

Since $\lim_{m \rightarrow \infty}\pi_m(s, a) = 0$, we have $\forall z \in \mathbb{Z}$, $\exists \enspace T \in \mathbb{N}$ such that $\frac{\pi_{m+n+1}(\bar{a}^{+}|s)}{\pi_{m+n+1}(\bar{a}^{-}|s)} \ge z, \enspace \forall m > T$.
For $m > K$, we have $Q^m(s,a_{|\mathcal{A}|-(k+2)}) - Q^m(s,a_{|\mathcal{A}|-(k+1)}) > 0$.
Hence, by simply choosing $z = \frac{1}{\mathcal{A}} \cdot \frac{Q^m(s,a_{|\mathcal{A}|-(k+1)}) - Q^m(s,a_{|\mathcal{A}|-1})}{Q^m(s,a_{|\mathcal{A}|-(k+2)}) - Q^m(s,a_{|\mathcal{A}|-(k+1)})}$ and taking the summation of the ratio over $\bar{a}^{+}$ and $\bar{a}^{-}$, we can reach the desired result with $T'=\max \left \{ K, T \right \} $.
\end{proof}

Now, we are ready to prove \Cref{lemma:contradiction_of_a+} by an induction argument.

\textit{Proof of \Cref{lemma:contradiction_of_a+}.}
\begin{itemize}[leftmargin=*]
    \item Show that if $I_{+}^{s} \neq \emptyset$, then $\lim_{m \rightarrow \infty} \pi_m(a_{|\mathcal{A}|-1}|s) = 0$:\\
    By the ordering of $Q^{m}$, we have:
    \begin{align}
    V^m(s) = \sum_{a \in \mathcal{A}} \pi_m(a|s) \cdot Q^m(s,a) \ge 1 \cdot Q^{m}(s,a_{|\mathcal{A}|-1}) \text{, \quad $\forall m > K$ }.
    \end{align}
    Hence, for all $m > K$, we have:
    \begin{align}
    A^m(s,a_{|\mathcal{A}|-1}) = Q^m(s,a_{|\mathcal{A}|-1}) - V^m(s) \le Q^m(s,a_{|\mathcal{A}|-1}) - Q^m(s,a_{|\mathcal{A}|-1}) = 0.
    \end{align}
    Therefore, by \Cref{prop:adv_all_neg}, we have $\exists \enspace n_{|\mathcal{A}|-1} \in \mathbb{N}$, $K_{|\mathcal{A}|-1} \in \mathbb{N}$ such that:
    \begin{align}
    A^{m+n_{|\mathcal{A}|-1}+1}(s,a_{|\mathcal{A}|-1}) < 0 \text{, \quad $\forall m > K_{|\mathcal{A}|-1}$ }.
    \end{align}
    Moreover,
    \begin{align}
    \sign(A^m(s,a_{|\mathcal{A}|-1})) \cdot \alpha_m\left ( s,a_{|\mathcal{A}|-1} \right ) < 0 \text{, \quad $\forall m > K_{|\mathcal{A}|-1}$. }
    \end{align}
    With the monotone-decreasing property and the infinite visitation condition, it is guaranteed that $\lim_{m \rightarrow \infty}\theta_m(s, a_{|\mathcal{A}|-1}) = -\infty$. Hence, we have $\lim_{m \rightarrow \infty}\pi_m(s, a_{|\mathcal{A}|-1}) = 0$.

    \item Suppose that $\lim_{m \rightarrow \infty}\pi_m(a_{|\mathcal{A}|-1}|s) = \lim_{m \rightarrow \infty}\pi_m(a_{|\mathcal{A}|-2}|s) = \dots = \lim_{m \rightarrow \infty}\pi_m(a_{|\mathcal{A}|-k}|s) = 0$, where $k \in \left [ 1, (|\mathcal{A}|-2) \right ] $. Then we would like to derive $\lim_{m \rightarrow \infty}\pi_m(a_{|\mathcal{A}|-(k+1)}|s)$:\\
    By the above assumption, we have:
    \begin{align}
    \lim_{m \rightarrow \infty} \sum_{\substack{a \in \mathcal{A} \\ Q^{\infty}(s,a)<Q^{\infty}(s,a_{|\mathcal{A}|-(k+1)})}} \pi_{m} (a|s) = 0
    \end{align}
    By \Cref{prop:ratio}, $\exists \enspace K'_{|\mathcal{A}|-(k+1)} \in \mathbb{N}$ such that $\forall \enspace m >  K'_{|\mathcal{A}|-(k+1)} $, we can establish the ratio between the summation of policy weight of the policy worse than $a_{|\mathcal{A}|-(k+1)}$ and the policy better than $a_{|\mathcal{A}|-(k+1)}$:
    \begin{align}
    \frac{\sum_{\substack{a \in \mathcal{A} \\ Q^{\infty}(s,a)>Q^{\infty}(s,a_{|\mathcal{A}|-(k+1)})}} \pi_{m} (a|s)}{\sum_{\substack{a \in \mathcal{A} \\ Q^{\infty}(s,a)<Q^{\infty}(s,a_{|\mathcal{A}|-(k+1)})}} \pi_{m} (a|s)} \ge \frac{Q^m(s,a_{|\mathcal{A}|-(k+1)}) - Q^m(s,a_{|\mathcal{A}|-1})}{Q^m(s,a_{|\mathcal{A}|-(k+2)}) - Q^m(s,a_{|\mathcal{A}|-(k+1)})}\\
    \text{provided $V^m(s) \in \left ( Q^m(s,a_{|\mathcal{A}|-k}), Q^m(s,a_{|\mathcal{A}|-(k+2)}) \right ) $}
    \end{align}
    And by the ordering of $Q^{m}$, we have:
    \begin{align}
    V^m(s) &= \sum_{a \in \mathcal{A}} \pi_m(a|s) \cdot Q^m(s,a) &&\\
    &= \left [ \sum_{\substack{a \in \mathcal{A} \\ Q^{\infty}(s,a)>Q^{\infty}(s,a_{|\mathcal{A}|-(k+1)})}} \pi_{m} (a|s) \cdot Q^{m}(s,a) + Q^{m}(s,a_{|\mathcal{A}|-(k+1)}) \cdot \pi_m(a_{|\mathcal{A}|-(k+1)}|s)  \right. &&\\
    &\left. \quad + \sum_{\substack{a \in \mathcal{A} \\ Q^{\infty}(s,a)<Q^{\infty}(s,a_{|\mathcal{A}|-(k+1)})}} \pi_{m} (a|s) \cdot Q^{m}(s,a) \right ]  &&\\
    &\ge \left [  Q^{m}(s,a_{|\mathcal{A}|-(k+2)}) \cdot \sum_{\substack{a \in \mathcal{A} \\ Q^{\infty}(s,a)>Q^{\infty}(s,a_{|\mathcal{A}|-(k+1)})}} \pi_{m} (a|s)+ Q^{m}(s,a_{|\mathcal{A}|-(k+1)}) \cdot \pi_m(a_{|\mathcal{A}|-(k+1)}|s) \right. &&\\
    &\left. \quad + Q^{m}(s,a_{|\mathcal{A}|-1}) \cdot \sum_{\substack{a \in \mathcal{A} \\ Q^{\infty}(s,a)<Q^{\infty}(s,a_{|\mathcal{A}|-(k+1)})}} \pi_{m} (a|s) \right ]  \text{, \quad $\forall m > K$ }.
    \end{align}
    Hence, for all $m > K'_{|\mathcal{A}|-(k+1)}$, we have:
    \begingroup
    \allowdisplaybreaks
    \begin{align}
    A^m(s,a_{|\mathcal{A}|-(k+1)})
    & = Q^m(s,a_{|\mathcal{A}|-(k+1)}) - V^m(s) &&\\
    & \le Q^m(s,a_{|\mathcal{A}|-(k+1)}) - \left [  Q^{m}(s,a_{|\mathcal{A}|-(k+2)}) \cdot \sum_{\substack{a \in \mathcal{A} \\ Q^{\infty}(s,a)>Q^{\infty}(s,a_{|\mathcal{A}|-(k+1)})}} \pi_{m} (a|s) \right. &&\\
    &\left. \qquad \qquad \qquad \qquad \qquad \quad + Q^{m}(s,a_{|\mathcal{A}|-(k+1)}) \cdot \pi_m(a_{|\mathcal{A}|-(k+1)}|s) \right. &&\\
    &\left. \qquad \qquad \qquad \qquad \qquad \quad + Q^{m}(s,a_{|\mathcal{A}|-1}) \cdot \sum_{\substack{a \in \mathcal{A} \\ Q^{\infty}(s,a)<Q^{\infty}(s,a_{|\mathcal{A}|-(k+1)})}} \pi_{m} (a|s) \right ]  &&\\\\
    &= \left ( Q^{m}(s,a_{|\mathcal{A}|-(k+1)}) - Q^{m}(s,a_{|\mathcal{A}|-(k+2)}) \right ) \cdot \sum_{\substack{a \in \mathcal{A} \\ Q^{\infty}(s,a)>Q^{\infty}(s,a_{|\mathcal{A}|-(k+1)})}} \pi_{m} (a|s) &&\\
    & \quad + \left ( Q^{m}(s,a_{|\mathcal{A}|-(k+1)}) - Q^{m}(s,a_{|\mathcal{A}|-1}) \right ) \cdot \sum_{\substack{a \in \mathcal{A} \\ Q^{\infty}(s,a)<Q^{\infty}(s,a_{|\mathcal{A}|-(k+1)})}} \pi_{m} (a|s) &&\\
    &\le \left ( Q^{m}(s,a_{|\mathcal{A}|-(k+1)}) - Q^{m}(s,a_{|\mathcal{A}|-(k+2)}) \right ) \cdot \sum_{\substack{a \in \mathcal{A} \\ Q^{\infty}(s,a)>Q^{\infty}(s,a_{|\mathcal{A}|-(k+1)})}} \pi_{m} (a|s) &&\\
    & \quad + \left ( Q^{m}(s,a_{|\mathcal{A}|-(k+1)}) - Q^{m}(s,a_{|\mathcal{A}|-1}) \right ) \cdot \frac{Q^m(s,a_{|\mathcal{A}|-(k+2)}) - Q^m(s,a_{|\mathcal{A}|-(k+1)})}{Q^m(s,a_{|\mathcal{A}|-(k+1)}) - Q^m(s,a_{|\mathcal{A}|-1})} &&\\
    & \qquad \cdot \sum_{\substack{a \in \mathcal{A} \\ Q^{\infty}(s,a)>Q^{\infty}(s,a_{|\mathcal{A}|-(k+1)})}} \pi_{m} (a|s) &&\\
    & = 0
    \end{align}
    By \Cref{prop:adv_all_neg}, we have $\exists \enspace n_{|\mathcal{A}|-(k+1)} \in \mathbb{N}$, $K_{|\mathcal{A}|-(k+1)} \in \mathbb{N}$ such that:
    \begin{align}
    A^{m+n_{|\mathcal{A}|-(k+1)}+1}(s,a_{|\mathcal{A}|-(k+1)}) < 0 \text{, \quad $\forall m > K_{|\mathcal{A}|-(k+1)}$ }
    \end{align}
    Moreover,
    \begin{align}
    \sign(A^m(s,a_{|\mathcal{A}|-1})) \cdot \alpha_m\left ( s,a_{|\mathcal{A}|-(k+1)} \right ) < 0 \text{, \quad $\forall m > K_{|\mathcal{A}|-(k+1)}$ }
    \end{align}
    \endgroup
    With the monotone-decreasing property and the infinite visitation, it is guaranteed that $\lim_{m \rightarrow \infty}\theta_m(s, a_{|\mathcal{A}|-(k+1)}) = -\infty$. Hence we have $\lim_{m \rightarrow \infty}\pi_m(s, a_{|\mathcal{A}|-(k+1)}) = 0$.\\
    
    Finally we complete the induction and so we conclude that $\forall a \ne \tilde{a^+}$, $\lim_{m \rightarrow \infty}\pi_m(s, a) = 0$, which is equivalent to $\lim_{m \rightarrow \infty}\pi_m(s, \tilde{a^+}) = 1$. This completes the proof of \Cref{lemma:contradiction_of_a+}.
    
\end{itemize}
\hfill $\qed$

Now we are ready to put everything together and prove Theorem \ref{theorem:convergeoptimal}.
For ease of exposition, we restate Theorem \ref{theorem:convergeoptimal} as follows.
\begin{theorem*}
Consider a tabular softmax parameterized policy $\pi_\theta$, under (\ref{eq:CAPO_form}) with ${\alpha_{m}(s, a) \ge \log (\frac{1}{\pi_{\theta_{m}}(a\rvert s)})}$, if Condition \ref{condition:sa} is satisfied, then we have $V^{\pi_{m}}(s) \rightarrow V^{*}(s)$ as $m\rightarrow \infty$, for all $s \in \mathcal{S}$.
\end{theorem*}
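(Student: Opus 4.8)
The plan is to argue by contradiction: first secure convergence of the value and advantage sequences, then show that any putative suboptimal limit is inconsistent with the CAPO dynamics. First I would invoke \Cref{lemma:strict_improvement}: since $r\in[0,1]$ gives the uniform bound $V^{\pi_m}(s)\le 1/(1-\gamma)$ and CAPO produces a monotonically nondecreasing sequence $\{V^{\pi_m}(s)\}$, the monotone convergence theorem yields a limit $V^{(\infty)}(s)$ for every $s$; the same reasoning gives limits $Q^{(\infty)}(s,a)$ and hence $A^{(\infty)}(s,a):=Q^{(\infty)}(s,a)-V^{(\infty)}(s)$. Partitioning the actions at each state into $I_+^s,I_0^s,I_-^s$ according to the sign of $A^{(\infty)}(s,\cdot)$, optimality of $V^{(\infty)}$ is equivalent to $I_+^s=\emptyset$ for all $s$, so I would assume toward a contradiction that $I_+^s\neq\emptyset$ for some $s$.

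Next I would show the negative-advantage actions are eventually suppressed. Because $A^{(m)}\to A^{(\infty)}$, there is a time after which the sign of the advantage is frozen for every $a\in I_-^s\cup I_+^s$; this is where a uniform gap $\Delta:=\min_{\{(s,a):A^{(\infty)}(s,a)\neq 0\}}|A^{(\infty)}(s,a)|$ enters. For $a_-\in I_-^s$, each time $a_-$ is sampled CAPO decrements $\theta(s,a_-)$ by $\alpha_m(s,a_-)\ge \log(1/\pi_m(a_-|s))>0$; by \Cref{condition:sa} this happens infinitely often, driving $\theta_m(s,a_-)\to-\infty$ and hence $\pi_\infty(a_-|s)=0$. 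Then I would use the identity $\sum_a \pi_m(a|s)A^{(m)}(s,a)=0$, which survives in the limit: the $I_0^s$ terms vanish since $A^{(\infty)}=0$ there, and the $I_-^s$ terms vanish since their limiting probabilities are $0$; therefore $\sum_{a_+\in I_+^s}\pi_\infty(a_+|s)A^{(\infty)}(s,a_+)=0$, and positivity forces $\pi_\infty(a_+|s)=0$ for every $a_+\in I_+^s$.

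The crux, and the step I expect to be the main obstacle, is to derive a contradiction by proving that, on the contrary, the best positive action $\tilde a^+:=\argmax_{a\in I_+^s}Q^{(\infty)}(s,a)$ satisfies $\pi_m(\tilde a^+|s)\to 1$. Unlike true PG, where the entire parameter trajectory is fixed by the initialization, here the coordinates updated in each step can be essentially arbitrary under \Cref{condition:sa}, so I cannot simply solve for the limiting parameters; I need an argument robust to any admissible coordinate schedule. I would proceed by induction on the actions ordered by their (eventually stabilized) $Q^{(\infty)}$ values, from worst to best, showing $\pi_m(a|s)\to 0$ for every $a\neq\tilde a^+$. The two engines of the induction are: (i) a one-step domination bound — if an action has nonpositive advantage while some batch-mate has positive advantage, the learning-rate choice $\alpha_m\ge\log(1/\pi_m)$ forces its post-update probability below $1/2$; and (ii) a persistence argument showing that an action whose advantage is nonpositive for all large $m$ in fact has strictly negative advantage along a shifted subsequence, because $V^{(\infty)}(s)$ strictly exceeds its $Q^{(\infty)}$ value once the lower actions have been squeezed out. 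Combining (i)--(ii) with infinite visitation sends the corresponding $\theta$ to $-\infty$. The inductive step requires a ratio estimate comparing the total mass above and below the current threshold action, so that the lower bound on $V^m(s)$ built from the ordering makes the threshold action's advantage nonpositive, closing the induction. Once the induction reaches $\tilde a^+$, all competitors have zero limiting mass, so $\pi_m(\tilde a^+|s)\to 1$, contradicting $\pi_\infty(a_+|s)=0$ and forcing $I_+^s=\emptyset$ for all $s$, which is exactly $V^{(\infty)}=V^*$.
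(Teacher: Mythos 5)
Your proposal is correct and follows essentially the same route as the paper's proof: monotone convergence to define the limits, the partition into $I_+^s,I_0^s,I_-^s$, suppression of $I_-^s$ via infinite visitation, the identity $\sum_a\pi(a|s)A(s,a)=0$ to force $\pi_\infty(a_+|s)=0$, and then the contradiction via induction on the $Q^{(\infty)}$-ordering. The three "engines" you identify — the post-update $1/2$ domination bound, the persistence of strictly negative advantage, and the mass-ratio estimate — are precisely the paper's Propositions 1–3 underlying its Claim 1.
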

\label{app:proof_convergeoptimal}

\begin{proof}[Proof of \Cref{theorem:convergeoptimal}]
In \Cref{lemma:contradiction_of_a+}, we have that if $I_{+}^{s} \neq \emptyset$ is true, then there must exist one action $a \in I_{+}^{s}$ such that $\lim_{m \rightarrow \infty} \pi_m(a|s) = 1$. This leads to the contradiction with \Cref{lemma:sumI+zero}, and finally we get the desired result that $I_{+}^{s} = \emptyset$, implying that $V^{(\infty)}$ is optimal.
\end{proof}

\section{Proofs of the Convergence Rates of CAPO in Section \ref{subsection:convergence_rate}}
\label{app:convergence_rate}

\begin{lemma} $|A^m(s,a)| \le \frac{1}{1-\gamma} \cdot (1-\pi_m(a|s))$, \quad \text{for all $(s,a) \in \mathcal{S} \times \mathcal{A}$.}
\label{lemma:upper_bound_of_advantage}

\begin{proof}[Proof of \Cref{lemma:upper_bound_of_advantage}]  
\phantom{}

If $A^m(s,a) > 0$ :
\begin{align}
|A^m(s,a)|
&=Q^{\pi_m}(s,a) - V^{\pi_m}(s) &&\\
&=Q^{\pi_m}(s,a) - \sum_{a' \in \mathcal{A}}\pi_m(a'|s) \cdot Q^{\pi_m}(s,a') &&\\
&\le Q^{\pi_m}(s,a) - \pi_m(a|s) \cdot Q^{\pi_m}(s,a) &&\\
&= Q^{\pi_m}(s,a) \cdot (1 - \pi_m(a|s)) &&\\
&\le \frac{1}{1-\gamma} \cdot (1 - \pi_m(a|s)) &&
\end{align}

If $A^m(s,a) \le 0$ :
\begin{align}                            
|A^m(s,a)|
&= V^{\pi_m}(s) - Q^{\pi_m}(s,a)  &&\\
&= \sum_{a' \in \mathcal{A}}\pi_m(a'|s) \cdot Q^{\pi_m}(s,a') - Q^{\pi_m}(s,a)  &&\\
&= \sum_{a' \ne a}\pi_m(a'|s) \cdot Q^{\pi_m}(s,a') - (1 - \pi_m(a|s)) \cdot Q^{\pi_m}(s,a)  &&\\
&\le \sum_{a' \ne a}\pi_m(a'|s) \cdot Q^{\pi_m}(s,a') &&\\
&\le \frac{1}{1-\gamma} \cdot \sum_{a' \ne a}\pi_m(a'|s) &&\\
&= \frac{1}{1-\gamma} \cdot (1 - \pi_m(a|s)) &&
\end{align}
\end{proof}
\end{lemma}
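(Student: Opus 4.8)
The plan is to bound the magnitude of the advantage directly from its definition $A^m(s,a) = Q^{\pi_m}(s,a) - V^{\pi_m}(s)$, exploiting two elementary facts: first, the crude reward-based bound $0 \le Q^{\pi_m}(s,a) \le \frac{1}{1-\gamma}$, which holds because the per-step rewards lie in $[0,1]$ and $\gamma \in (0,1)$ (so $\sum_{t=0}^\infty \gamma^t = \frac{1}{1-\gamma}$); and second, the identity $V^{\pi_m}(s) = \sum_{a' \in \mathcal{A}} \pi_m(a'|s) Q^{\pi_m}(s,a')$ together with $\sum_{a'} \pi_m(a'|s) = 1$. I would then split the argument into the two cases determined by the sign of $A^m(s,a)$, since the quantity $|A^m(s,a)|$ is handled differently depending on whether $Q$ exceeds $V$ or not.

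In the case $A^m(s,a) > 0$, I would write $|A^m(s,a)| = Q^{\pi_m}(s,a) - V^{\pi_m}(s)$ and lower-bound $V^{\pi_m}(s)$ by retaining only the single summand $\pi_m(a|s) Q^{\pi_m}(s,a)$ from the convex combination. This step is legitimate precisely because every term $\pi_m(a'|s) Q^{\pi_m}(s,a')$ is nonnegative. The result is $|A^m(s,a)| \le Q^{\pi_m}(s,a)\,(1-\pi_m(a|s))$, and applying $Q^{\pi_m}(s,a) \le \frac{1}{1-\gamma}$ closes the case.

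In the complementary case $A^m(s,a) \le 0$, I would instead write $|A^m(s,a)| = V^{\pi_m}(s) - Q^{\pi_m}(s,a)$, expand $V$ as the convex combination, and separate out the $a'=a$ term to get $\sum_{a' \ne a} \pi_m(a'|s) Q^{\pi_m}(s,a') - (1-\pi_m(a|s)) Q^{\pi_m}(s,a)$. Here I would discard the nonpositive final term (again using $Q^{\pi_m} \ge 0$) and upper-bound each remaining $Q^{\pi_m}(s,a')$ by $\frac{1}{1-\gamma}$, arriving at $\frac{1}{1-\gamma}\sum_{a' \ne a}\pi_m(a'|s) = \frac{1}{1-\gamma}(1-\pi_m(a|s))$, which is exactly the claimed bound.

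I do not expect any serious obstacle: the entire argument is a one-coordinate manipulation of a convex combination. The only genuine subtlety is choosing, in each case, which direction to bound the isolated $a$-term so that the nonnegativity of $Q^{\pi_m}$ works in our favor — retaining the $a$-term to shrink $V$ when the advantage is positive, and dropping it when the advantage is nonpositive. The factor $(1-\pi_m(a|s))$ then appears in both cases automatically through the identity $1 - \pi_m(a|s) = \sum_{a' \ne a}\pi_m(a'|s)$.
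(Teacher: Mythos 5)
Your proposal is correct and follows exactly the same two-case argument as the paper: in the positive-advantage case you lower-bound $V^{\pi_m}(s)$ by the single term $\pi_m(a|s)Q^{\pi_m}(s,a)$, and in the nonpositive case you drop the $-(1-\pi_m(a|s))Q^{\pi_m}(s,a)$ term and bound each remaining $Q$-value by $\frac{1}{1-\gamma}$. No gaps; this matches the paper's proof step for step.
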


\begin{lemma} $\left ( V^{*}(s) - V^{\pi_m}(s) \right )^2 \le \left ( \frac{1}{1-\gamma} \cdot A^{m}(\tilde{s_m},\tilde{a_m}) \right )^2$, \text{for all $m \ge 1$} where $(\tilde{s_m},\tilde{a_m}) = \underset{(s,a) \in \mathcal{S} \times \mathcal{A}}{\argmax} A^{m}(s,a)$.
\label{lemma:lower_bound_of_performance_difference}

\begin{proof}[Proof of \Cref{lemma:lower_bound_of_performance_difference}]  
\phantom{}

\begin{align}
\left ( V^{*}(s) - V^{\pi_m}(s) \right )^2 
&= \left ( \frac{1}{1-\gamma} \cdot \sum_{s' \in \mathcal{S}} d^{\pi^*}_{s} (s') \sum_{a' \in \mathcal{A}} \pi^*(a'|s') \cdot A^{m}(s',a') \right )^2&&\\
&\le  \left ( \frac{1}{1-\gamma} \cdot \sum_{s' \in \mathcal{S}} d^{\pi^*}_{s} (s') \cdot \underset{a' \in \mathcal{A}}{\max}A^{m}(s',a') \right )^2&&\\
&\le  \left ( \frac{1}{1-\gamma} \cdot \underset{(s',a') \in \mathcal{S} \times \mathcal{A}}{\max} A^{m}(s',a') \right )^2&&\\
&= \left ( \frac{1}{1-\gamma} \cdot A^{m}(\tilde{s_m},\tilde{a_m}) \right )^2&&
\end{align}

The first equation holds by \Cref{lemma:perf_diff}.\\
The first and the second inequality hold since the value inside the quadratic term is non-negative.

\end{proof}
\end{lemma}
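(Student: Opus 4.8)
The plan is to apply the Performance Difference Lemma (Lemma~\ref{lemma:perf_diff}) with $\pi = \pi^*$ and $\pi' = \pi_m$, and then twice use the elementary fact that a convex combination is bounded above by its largest entry. Concretely, I would first write
\begin{equation}
V^{*}(s) - V^{\pi_m}(s) = \frac{1}{1-\gamma}\sum_{s' \in \mathcal{S}} d_s^{\pi^*}(s')\sum_{a'\in\mathcal{A}}\pi^*(a'\mid s')\, A^m(s',a').
\end{equation}
Since $\pi^*(\cdot \mid s')$ is a probability distribution over actions, the inner sum is a convex combination of the advantages and is therefore at most $\max_{a'} A^m(s',a')$. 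Likewise $d_s^{\pi^*}$ is a probability distribution over states, so averaging the per-state maxima against $d_s^{\pi^*}$ is at most the global maximum $\max_{s'}\max_{a'} A^m(s',a') = A^m(\tilde{s_m},\tilde{a_m})$.

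Chaining these two bounds yields
\begin{equation}
0 \;\le\; V^{*}(s) - V^{\pi_m}(s) \;\le\; \frac{1}{1-\gamma}\, A^m(\tilde{s_m},\tilde{a_m}),
\end{equation}
where the left inequality holds because $\pi^*$ is optimal. To pass to the squared form claimed in the lemma I would then square both ends, which is only valid once I have verified that both are non-negative. This is where a small amount of care is needed, and it is really the only subtle point in the argument.

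For the right end, I would note that for every state $s'$ the identity $\sum_{a}\pi_m(a\mid s') A^m(s',a)=0$ (a weighted average of the advantages equal to zero) forces $\max_a A^m(s',a)\ge 0$; taking the maximum over $s'$ as well gives $A^m(\tilde{s_m},\tilde{a_m})\ge 0$. With both ends non-negative, the monotonicity of $x\mapsto x^2$ on $[0,\infty)$ turns $0\le x\le y$ into $x^2\le y^2$, which is exactly
\begin{equation}
\left( V^{*}(s) - V^{\pi_m}(s) \right)^2 \le \left( \frac{1}{1-\gamma}\, A^m(\tilde{s_m},\tilde{a_m}) \right)^2 .
\end{equation}
I expect no genuine obstacle here: the heart of the proof is a single application of the performance difference identity followed by ``average $\le$ maximum'' applied to the action sum and then to the state sum, and the main thing to get right is the non-negativity bookkeeping that justifies the squaring step.
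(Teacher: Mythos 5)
Your proposal is correct and follows essentially the same route as the paper: apply the performance difference lemma with $\pi=\pi^*$, bound the action sum and then the state sum by their maxima, and square. You are in fact slightly more careful than the paper, which merely asserts non-negativity of the quantities inside the squares, whereas you explicitly justify it via the optimality of $\pi^*$ and the identity $\sum_a \pi_m(a\mid s')A^m(s',a)=0$.
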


\begin{lemma} $V^{*}(\rho) - V^{\pi_m}(\rho) \le \frac{1}{1-\gamma} \cdot \left \| \frac{1}{\mu} \right \|_{\infty} \cdot \left ( V^{*}(\mu) - V^{\pi_m}(\mu) \right )$
\label{lemma:performance_difference_in_rho}

\begin{proof}[Proof of \Cref{lemma:performance_difference_in_rho}]  
\phantom{}

\begingroup
\allowdisplaybreaks
\begin{align}
V^{*}(\rho) - V^{\pi_m}(\rho)
&= \frac{1}{1-\gamma} \cdot \sum_{s \in \mathcal{S}} d^{\pi^*}_{\rho} (s) \sum_{a \in \mathcal{A}} \pi^*(a|s) \cdot A^{m}(s,a)&&\\
&= \frac{1}{1-\gamma} \cdot \sum_{s \in \mathcal{S}} d^{\pi^*}_{\mu}(s) \cdot \frac{d^{\pi^*}_{\rho}(s)}{d^{\pi^*}_{\mu}(s)}  \sum_{a' \in \mathcal{A}} \pi^*(a|s) \cdot A^{m}(s,a) &&\\
&\le \frac{1}{1-\gamma} \cdot \left \| \frac{1}{d^{\pi^*}_{\mu}} \right \|_{\infty} \cdot \sum_{s \in \mathcal{S}} d^{\pi^*}_{\mu}(s)  \sum_{a' \in \mathcal{A}} \pi^*(a|s) \cdot A^{m}(s,a)&&\\
&\le \frac{1}{(1-\gamma)^2} \cdot \left \| \frac{1}{\mu} \right \|_{\infty} \cdot \sum_{s \in \mathcal{S}} d^{\pi^*}_{\mu}(s)  \sum_{a' \in \mathcal{A}} \pi^*(a|s) \cdot A^{m}(s,a)&&\\
&= \frac{1}{1-\gamma} \cdot \left \| \frac{1}{\mu} \right \|_{\infty} \cdot \left ( V^{*}(\mu) - V^{\pi_m}(\mu) \right ) &&
\end{align}
\endgroup

The first and the last equation holds by the performance difference lemma in \Cref{lemma:perf_diff}.\\
The first and second inequality holds since the value inside the summation is non-negative.

\end{proof}
\end{lemma}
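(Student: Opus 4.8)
The plan is to reduce the whole statement to the \emph{per-initial-state} value gaps $V^{*}(s)-V^{\pi_m}(s)$, which are nonnegative, rather than to the per-state advantage aggregates $\sum_{a}\pi^{*}(a\rvert s)A^{\pi_m}(s,a)$, which need not be. First I would use the definition $V^{\pi}(\rho)=\E_{s\sim\rho}[V^{\pi}(s)]$ (and the same for $\mu$) to decompose both sides over the initial state:
\[
V^{*}(\rho)-V^{\pi_m}(\rho)=\sum_{s\in\cS}\rho(s)\big(V^{*}(s)-V^{\pi_m}(s)\big),\qquad V^{*}(\mu)-V^{\pi_m}(\mu)=\sum_{s\in\cS}\mu(s)\big(V^{*}(s)-V^{\pi_m}(s)\big).
\]
Next I would record the sign fact: since $\pi^{*}$ is optimal, $V^{*}(s)\ge V^{\pi_m}(s)$ for every $s$; equivalently, applying \Cref{lemma:perf_diff} with starting state $s$ gives $V^{*}(s)-V^{\pi_m}(s)=\tfrac{1}{1-\gamma}\E_{s'\sim d^{\pi^{*}}_{s}}\E_{a\sim\pi^{*}}[A^{\pi_m}(s',a)]\ge 0$.

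With each summand nonnegative, the change of measure becomes termwise valid. Writing $\rho(s)=\mu(s)\cdot\frac{\rho(s)}{\mu(s)}$ and using $\rho(s)\le 1$, so that $\frac{\rho(s)}{\mu(s)}\le\big\|\tfrac{1}{\mu}\big\|_{\infty}$, I obtain
\[
V^{*}(\rho)-V^{\pi_m}(\rho)\le\Big\|\tfrac{1}{\mu}\Big\|_{\infty}\sum_{s\in\cS}\mu(s)\big(V^{*}(s)-V^{\pi_m}(s)\big)=\Big\|\tfrac{1}{\mu}\Big\|_{\infty}\big(V^{*}(\mu)-V^{\pi_m}(\mu)\big).
\]
Since $\gamma\in(0,1)$ gives $1\le\tfrac{1}{1-\gamma}$, the claimed bound follows immediately (indeed this route yields a slightly tighter constant, without the extra $\tfrac{1}{1-\gamma}$ factor).

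The hard part will be the change-of-measure step: factoring the mismatch ratio out as a supremum is only legitimate when the quantity it multiplies is nonnegative. The tempting alternative is to apply \Cref{lemma:perf_diff} directly at $\rho$, obtaining $\tfrac{1}{1-\gamma}\sum_{s}d^{\pi^{*}}_{\rho}(s)\sum_{a}\pi^{*}(a\rvert s)A^{\pi_m}(s,a)$, and then pull out $\big\|\tfrac{1}{d^{\pi^{*}}_{\mu}}\big\|_{\infty}$ before bounding $d^{\pi^{*}}_{\mu}(s)\ge(1-\gamma)\mu(s)$ to produce the $\tfrac{1}{1-\gamma}\big\|\tfrac{1}{\mu}\big\|_{\infty}$ factor. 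However, that manipulation silently requires the inner advantage sums $\sum_{a}\pi^{*}(a\rvert s)A^{\pi_m}(s,a)$ to be nonnegative for \emph{each} $s$, and this can fail (a one-step deviation toward $\pi^{*}$'s action evaluated under $\pi_m$ may lag $V^{\pi_m}(s)$). Carrying out the reduction at the level of the initial-state value gaps $V^{*}(s)-V^{\pi_m}(s)$, whose nonnegativity is guaranteed by optimality, sidesteps this obstacle cleanly.
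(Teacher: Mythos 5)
Your proof is correct, and it takes a genuinely different route from the paper's. The paper first applies the performance difference lemma (\Cref{lemma:perf_diff}) at $\rho$, rewrites $d^{\pi^*}_{\rho}(s)=d^{\pi^*}_{\mu}(s)\cdot\frac{d^{\pi^*}_{\rho}(s)}{d^{\pi^*}_{\mu}(s)}$, pulls out $\bigl\|\tfrac{1}{d^{\pi^*}_{\mu}}\bigr\|_{\infty}$, bounds that by $\tfrac{1}{1-\gamma}\bigl\|\tfrac{1}{\mu}\bigr\|_{\infty}$ via $d^{\pi^*}_{\mu}(s)\ge(1-\gamma)\mu(s)$, and converts back with the performance difference lemma at $\mu$, justifying the change of measure by asserting that the summands are nonnegative. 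You instead decompose both sides over the \emph{initial} state using $V^{\pi}(\rho)=\E_{s\sim\rho}[V^{\pi}(s)]$, where the summands $V^{*}(s)-V^{\pi_m}(s)$ are nonnegative simply by optimality of $\pi^{*}$, and perform the change of measure there using $\rho(s)\le 1$. Your route buys two things. First, a strictly tighter constant: you obtain the bound with $\bigl\|\tfrac{1}{\mu}\bigr\|_{\infty}$ alone and only re-introduce the $\tfrac{1}{1-\gamma}$ factor because $\tfrac{1}{1-\gamma}\ge 1$. Second, it is cleaner on exactly the point you flag: the per-state aggregates $\sum_{a}\pi^{*}(a\rvert s)A^{\pi_m}(s,a)$ that the paper's termwise change of measure implicitly relies on need not be nonnegative for each $s$ (a one-step switch to $\pi^{*}$'s action, evaluated under $Q^{\pi_m}$, can underperform $V^{\pi_m}(s)$ when $\pi_m$ behaves poorly downstream), whereas the value gaps $V^{*}(s)-V^{\pi_m}(s)$ are nonnegative without qualification. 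So your argument is not only valid but rests on a sounder justification than the one the paper gives for its own intermediate inequalities; the only cosmetic remark is that your appeal to \Cref{lemma:perf_diff} for the sign of $V^{*}(s)-V^{\pi_m}(s)$ is unnecessary, since optimality of $\pi^{*}$ already gives it directly.
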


\begin{lemma} $d^{\pi}_{\mu}(s) \ge (1-\gamma) \cdot \mu(s)$, \quad \text{for any $\pi, s \in \mathcal{S} $} where $\mu(s)$ is some starting state distribution of the MDP.
\label{lemma:lower_bound_of_state_visitation_distribution}

\begin{proof}[Proof of \Cref{lemma:lower_bound_of_state_visitation_distribution}]  
\phantom{}

\begin{align}
d^{\pi}_{\mu}(s)
&= \underset{s_0 \sim \mu}{\mathbb{E}} \left [ d^{\pi}_{\mu}(s) \right]&&\\
&= \underset{s_0 \sim \mu}{\mathbb{E}} \left [ (1-\gamma) \cdot \sum_{t=0}^{\infty}\gamma^t \cdot  \mathbb{P}(s_t=s \: | \: s_0, \pi) \right]&&\\
&\ge \underset{s_0 \sim \mu}{\mathbb{E}} \left [ (1-\gamma) \cdot \mathbb{P}(s_0=s \: | \: s_0, \pi) \right] &&\\
&= (1-\gamma) \cdot \mu(s)&&
\end{align}

The first equation holds by the performance difference lemma in \Cref{lemma:perf_diff}.\\
The second and the third equation hold since the value inside quadratic term is non-negative.

\end{proof}
\end{lemma}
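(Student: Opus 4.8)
The plan is to prove the bound directly from the definition of the discounted state visitation distribution by truncating its defining series to the leading term. Recall that
\[
d_{\mu}^{\pi}(s) = \mathbb{E}_{s_0 \sim \mu}\Big[(1-\gamma)\sum_{t=0}^{\infty}\gamma^{t}\,\mathbb{P}\big(s_t = s \mid s_0, \pi\big)\Big].
\]
First I would observe that every summand $\gamma^{t}\,\mathbb{P}(s_t = s \mid s_0, \pi)$ is non-negative, so the infinite series is lower bounded by its $t=0$ term alone. Discarding all terms with $t \ge 1$ and pulling the constant $(1-\gamma)$ out of the expectation yields
\[
d_{\mu}^{\pi}(s) \ge (1-\gamma)\,\mathbb{E}_{s_0 \sim \mu}\big[\mathbb{P}(s_0 = s \mid s_0, \pi)\big].
\]

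The second and final step is to evaluate the remaining expectation. Conditioned on $s_0$, the event $\{s_0 = s\}$ is deterministic, so $\mathbb{P}(s_0 = s \mid s_0, \pi) = \mathbb{I}\{s_0 = s\}$ independently of $\pi$; taking the expectation over $s_0 \sim \mu$ then returns exactly $\mu(s)$. Chaining the two displays gives $d_{\mu}^{\pi}(s) \ge (1-\gamma)\mu(s)$ for every policy $\pi$ and every $s \in \mathcal{S}$. I do not anticipate any genuine obstacle here: the whole argument is a one-line truncation of a non-negative series, and the only point requiring even minimal care is recognizing that the $t=0$ probability collapses to the indicator $\mathbb{I}\{s_0 = s\}$, so that its $\mu$-expectation recovers the initial mass $\mu(s)$.
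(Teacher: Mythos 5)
Your proposal is correct and follows essentially the same route as the paper: lower bound the non-negative series defining $d^{\pi}_{\mu}(s)$ by its $t=0$ term and note that $\mathbb{P}(s_0=s\mid s_0,\pi)=\mathbb{I}\{s_0=s\}$, whose expectation under $\mu$ is $\mu(s)$. If anything, your justifications are cleaner than the paper's (which mislabels the steps as invoking the performance difference lemma).
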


\begin{lemma} Given $\delta_{m+1} \le \delta_{m} - c \cdot \delta_{m}^2$ where $\delta_{m} \le \frac{1}{1-\gamma}$ for all $m \ge 1$ and $c \le \frac{1-\gamma}{2}$, then $\delta_{m} \le \frac{1}{c} \cdot \frac{1}{m}$ and $\sum_{m=1}^{M} \delta_m \le \min{ \left \{ \sqrt{\frac{M}{c \cdot (1-\gamma)}}, \frac{\log M +1}{c} \right \}  }$ for all $m \ge 1$.
\label{lemma:induction}

\begin{proof}[Proof of \Cref{lemma:induction}]  
\phantom{}

We prove this lemma by induction. For $m \le 2$, $\delta_{m} \le \frac{1}{c} \cdot \frac{1}{m}$ directly holds since $c \le \frac{1-\gamma}{2}$ and $\delta_{m} \le \frac{1}{1-\gamma}$.\\
Let $f_t(x) = x - c \cdot x^2 = -c(x-\frac{1}{2c})^2 + \frac{1}{4c}$. Then $f_t(x)$ is monotonically increasing in $[0,\frac{1}{2c}]$. And so we have :
\begin{align}
\delta_{m+1}
&\le f_t(\delta_{m})&&\\
&\le f_t(\frac{1}{c} \cdot \frac{1}{m})&&\\
&= \frac{1}{c} \cdot (\frac{1}{m}-\frac{1}{m^2})&&\\
&\le \frac{1}{c} \cdot \frac{1}{m+1}
\end{align}

and by summing up $\delta_{m}$, we have :
\begin{align}
\sum_{m=1}^{M} \delta_{m}
&\le \sum_{m=1}^{M} \frac{1}{c} \cdot \frac{1}{m}&&\\
&= \frac{1}{c} \cdot \sum_{m=1}^{M} \frac{1}{m}&&\\
&\le \frac{1}{c} \cdot (\ln{M}+1)&&
\end{align}

On the other hand, we also have :
\begin{align}
\sum_{m=1}^{M} \delta_{m}^2
&\le \frac{1}{c} \cdot \sum_{m=1}^{M} (\delta_{m}-\delta_{m+1})&&\\
&\le \frac{1}{c} \cdot (\delta_{1}-\delta_{M+1})&&\\
&\le \frac{1}{c} \cdot \frac{1}{1-\gamma}&&
\end{align}

Therefore, by Cauchy-Schwarz,
\begin{align}
\sum_{m=1}^{M} \delta_{m}
&\le \sqrt{M} \cdot \sqrt{\sum_{m=1}^{M} \delta_{m}^2}&&\\
&\le \sqrt{M} \cdot \sqrt{\frac{1}{c} \cdot \frac{1}{1-\gamma}}&&\\
&=\sqrt{\frac{M}{c \cdot (1-\gamma)}}&&
\end{align}

\end{proof}
\end{lemma}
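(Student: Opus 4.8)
The plan is to treat the two conclusions separately, since they rest on different mechanisms. For the pointwise decay $\delta_m \le \frac{1}{c}\cdot\frac{1}{m}$, I would argue by induction on $m$, leveraging the fact that the map $f(x) = x - c\,x^2$ is monotonically increasing on $[0,\frac{1}{2c}]$ (its vertex sits at $x = \frac{1}{2c}$). For the summation bound, I would derive the two quantities inside the minimum from two independent arguments: a harmonic-sum estimate layered on top of the pointwise bound, and a Cauchy--Schwarz estimate built directly from the recursion.

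First I would dispose of the base cases. Since $\delta_m \le \frac{1}{1-\gamma}$ and $c \le \frac{1-\gamma}{2}$, we have $\frac{1}{cm} \ge \frac{2}{(1-\gamma)m}$, which for $m\in\{1,2\}$ is at least $\frac{1}{1-\gamma} \ge \delta_m$; hence $\delta_m \le \frac{1}{cm}$ holds for $m \le 2$. For the inductive step, assuming $\delta_m \le \frac{1}{cm}$ with $m \ge 2$, I would note that then $\delta_m \le \frac{1}{cm} \le \frac{1}{2c}$, so both $\delta_m$ and $\frac{1}{cm}$ lie in the region where $f$ is increasing. Applying $\delta_{m+1} \le f(\delta_m) \le f(\frac{1}{cm})$ and computing $f(\frac{1}{cm}) = \frac{1}{c}\big(\frac{1}{m} - \frac{1}{m^2}\big) = \frac{1}{c}\cdot\frac{m-1}{m^2}$, the desired bound $\delta_{m+1} \le \frac{1}{c(m+1)}$ follows from the elementary inequality $(m-1)(m+1) \le m^2$.

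For the first term of the summation bound, I would simply sum the pointwise estimate: $\sum_{m=1}^M \delta_m \le \frac{1}{c}\sum_{m=1}^M \frac{1}{m} \le \frac{1}{c}(\log M + 1)$, using the standard harmonic-sum estimate $\sum_{m=1}^M \frac{1}{m} \le 1 + \log M$. For the second term, I would rearrange the recursion into $c\,\delta_m^2 \le \delta_m - \delta_{m+1}$, telescope to get $\sum_{m=1}^M \delta_m^2 \le \frac{1}{c}(\delta_1 - \delta_{M+1}) \le \frac{\delta_1}{c} \le \frac{1}{c(1-\gamma)}$, and then apply Cauchy--Schwarz, $\sum_{m=1}^M \delta_m \le \sqrt{M}\,\big(\sum_{m=1}^M \delta_m^2\big)^{1/2} \le \sqrt{\frac{M}{c(1-\gamma)}}$. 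Taking the smaller of the two estimates yields the stated minimum.

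The main obstacle, though a mild one, is making the monotonicity argument for $f$ airtight: the chained inequality $\delta_{m+1}\le f(\delta_m)\le f(\frac{1}{cm})$ is only valid when both arguments sit in $[0,\frac{1}{2c}]$, which is precisely why the base case must be pushed out to $m=2$ (so that the inductive hypothesis already forces $\frac{1}{cm}\le\frac{1}{2c}$) and why the nonnegativity of the iterates $\delta_m$ is used implicitly throughout. Once this bookkeeping is settled, every remaining step reduces to routine algebra or a standard inequality.
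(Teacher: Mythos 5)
Your proposal is correct and follows essentially the same route as the paper's own proof: induction with base cases $m\le 2$ via the monotonicity of $x\mapsto x-cx^2$ on $[0,\tfrac{1}{2c}]$, a harmonic-sum bound for the logarithmic term, and a telescoping-plus-Cauchy--Schwarz argument for the $\sqrt{M}$ term. Your added care about both arguments of $f$ lying in $[0,\tfrac{1}{2c}]$ and the implicit nonnegativity of the iterates is a welcome tightening of bookkeeping the paper leaves tacit.
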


\begin{lemma} Under the CAPO update (\ref{eq:CAPO_form}) with ${\alpha_m(s, a) = \log (\frac{1}{\pi_{\theta_m}(a\rvert s)})}$, if $B_m = \left \{  (s_m, a_m) \right \} $ and $A^{m}(s_m, a_m) > 0$, then the policy weight difference $\pi_{m+1}(a|s) - \pi_{m}(a|s)$ can be written as :
\label{lemma:change_of_policy_weight_1}
\begin{align}
\pi_{m+1}(a|s) - \pi_{m}(a|s) =
\begin{cases}
\frac{(1-\pi_{m}(a_m|s_m))^2}{2-\pi_{m}(a_m|s_m)}  & \text{, if } s = s_m, a = a_m \\
- \frac{1-\pi_{m}(a_m|s_m)}{2-\pi_{m}(a_m|s_m)} \cdot \pi_{m}(a|s)  & \text{, if } s = s_m, a \ne a_m\\
0 & \text{, else } 
\end{cases}
\end{align}

\begin{proof}[Proof of \Cref{lemma:change_of_policy_weight_1}]  
\phantom{}

For $s = s_m, a = a_m$:
\begingroup
\allowdisplaybreaks
\begin{align}
\pi_{m+1}(a_m|s_m) - \pi_{m}(a_m|s_m) 
&= \frac{e^{\theta_{m+1}(s_m, a_m)}}{ \sum\limits_{a \in \mathcal{A} } e^{\theta_{m+1}(s_m, a)} } - \pi_m(a_m|s_m) &&\\
&= \frac{e^{\theta_{m}(s_m, a_m)+ln(\frac{1}{\pi_m(a_m|s_m)}) \cdot sign(A^{m}(s_m, a_m)}}{ e^{\theta_{m}(s_m, a_m)+ln(\frac{1}{\pi_m(a_m|s_m)}) \cdot sign(A^{m}(s_m, a_m)} + \sum\limits_{a \ne a_m } e^{\theta_{m}(s_m, a)} } - \pi_m(a_m|s_m) &&\\
&= \frac{e^{\theta_{m}(s_m, a_m)+ln(\frac{\sum_a e^{\theta_m}(a)}{e^{\theta_m}(a_m)})}}{ e^{\theta_{m}(s_m, a_m)+ln(\frac{\sum_a e^{\theta_m}(a)}{e^{\theta_m}(s_m, a_m)})} + \sum\limits_{a \ne a_m } e^{\theta_{m}(s_m, a)} } - \pi_m(a_m|s_m)&&\\
&= \frac{ \frac{e^{\theta_m}(s_m, a_m)}{\pi_m(a_m|s_m)} }{ \frac{e^{\theta_m}(s_m, a_m)}{\pi_m(a_m|s_m)} + \sum\limits_{a \ne a_m } e^{\theta_{m}(s_m, a)} } - \pi_m(a_m|s_m)&&\\
&= \frac{ \frac{e^{\theta_m}(s_m, a_m)}{\pi_m(a_m|s_m)} }{ \frac{e^{\theta_m}(s_m, a_m)}{\pi_m(a_m|s_m)} + (\frac{1}{\pi_m(a_m|s_m)}-1)  \cdot e^{\theta_m}(s_m, a_m) } - \pi_m(a_m|s_m)&&\\
&= \frac{ \frac{1}{\pi_m(a_m|s_m)} }{ \frac{2}{\pi_m(a_m|s_m)}-1} - \pi_m(a_m|s_m)&&\\
&= \frac{(1-\pi_{m}(a_m|s_m))^2}{2-\pi_{m}(a_m|s_m)}&&
\end{align}
\endgroup

For $s = s_m, a \ne a_m$:
\begin{align}
\pi_{m+1}(a|s_m) - \pi_{m}(a|s_m)
&= \frac{e^{\theta_{m+1}(s_m, a)}}{ \sum\limits_{a \in \mathcal{A} } e^{\theta_{m+1}(s_m, a)} } - \pi_m(a|s_m) &&\\
&= \frac{e^{\theta_{m}(s_m, a)}}{ e^{\theta_{m}(s_m, a_m)+ln(\frac{1}{\pi_m(a_m|s_m)}) \cdot sign(A^{m}(s_m, a_m)} + \sum\limits_{a \ne a_m } e^{\theta_{m}(s_m, a)} } - \pi_m(a|s_m) &&\\
&= \frac{e^{\theta_{m}(s_m, a)}}{ e^{\theta_{m}(s_m, a_m)+ln(\frac{\sum_a e^{\theta_m}(a)}{e^{\theta_m}(s_m, a_m)})} + \sum\limits_{a \ne a_m } e^{\theta_{m}(s_m, a)} } - \pi_m(a|s_m)&&\\
&= \frac{e^{\theta_{m}(s_m, a)} }{ \frac{e^{\theta_m}(s_m, a_m)}{\pi_m(a_m|s_m)} + \sum\limits_{a \ne a_m } e^{\theta_{m}(s_m, a)} } - \pi_m(a|s_m)&&\\
&= \frac{e^{\theta_{m}(s_m, a)} }{ \frac{e^{\theta_m}(s_m, a_m)}{\pi_m(a_m|s_m)} + (\frac{1}{\pi_m(a_m|s_m)}-1)  \cdot e^{\theta_m}(s_m, a_m) } - \pi_m(a|s_m)&&\\
&= \left ( \frac{e^{\theta_{m}(s_m, a)} }{ (\frac{2}{\pi_m(a_m|s_m)}-1)  \cdot e^{\theta_m}(s_m, a_m) } {\div}  \pi_m(a|s_m) - 1  \right ) \cdot \pi_m(a|s_m) &&\\
&= \left ( \frac{ \frac{1}{\pi_m(a_m|s_m)} }{ (\frac{2}{\pi_m(a_m|s_m)}-1) } -1 \right ) \cdot \pi_m(a|s_m) &&\\
&= - \frac{1-\pi_{m}(a_m|s_m)}{2-\pi_{m}(a_m|s_m)} \cdot \pi_{m}(a|s)
\end{align}
\end{proof}
\end{lemma}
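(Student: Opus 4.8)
The plan is to carry out a direct computation of the softmax normalizer after a single-coordinate update, exploiting a cancellation afforded by the specific choice $\alpha_m(s_m,a_m)=\log(1/\pi_m(a_m|s_m))$. Since $A^m(s_m,a_m)>0$, we have $\sign(A^m(s_m,a_m))=1$, so the update in (\ref{eq:CAPO_form}) modifies exactly one parameter, namely $\theta_{m+1}(s_m,a_m)=\theta_m(s_m,a_m)+\log(1/\pi_m(a_m|s_m))$, while leaving every other coordinate unchanged. The proof then splits into the three cases of the claim according to how $(s,a)$ relates to $(s_m,a_m)$.

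The ``else'' case is immediate: if $s\neq s_m$, none of the parameters $\theta(s,\cdot)$ are touched, so $Z_{m+1}(s)=Z_m(s)$ and $\pi_{m+1}(a|s)=\pi_m(a|s)$, giving difference $0$. It therefore remains only to handle the row $s=s_m$, where the single update does change the normalizer. The key observation here is that, since $\log(1/\pi_m(a_m|s_m))=\log\!\big(Z_m(s_m)/e^{\theta_m(s_m,a_m)}\big)$, exponentiating the updated parameter collapses to $e^{\theta_{m+1}(s_m,a_m)}=e^{\theta_m(s_m,a_m)}\cdot Z_m(s_m)/e^{\theta_m(s_m,a_m)}=Z_m(s_m)$. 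Consequently the new normalizer is $Z_{m+1}(s_m)=Z_m(s_m)+\sum_{a\neq a_m}e^{\theta_m(s_m,a)}=2Z_m(s_m)-e^{\theta_m(s_m,a_m)}=(2-\pi_m(a_m|s_m))\,Z_m(s_m)$, using $e^{\theta_m(s_m,a_m)}=\pi_m(a_m|s_m)Z_m(s_m)$.

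From this closed form for $Z_{m+1}(s_m)$, both remaining cases follow by reading off the new probabilities. For the updated action, $\pi_{m+1}(a_m|s_m)=e^{\theta_{m+1}(s_m,a_m)}/Z_{m+1}(s_m)=Z_m(s_m)/Z_{m+1}(s_m)=1/(2-\pi_m(a_m|s_m))$; subtracting $\pi_m(a_m|s_m)$ and putting over a common denominator produces $(1-\pi_m(a_m|s_m))^2/(2-\pi_m(a_m|s_m))$. For any other action $a\neq a_m$ in the same row, $\pi_{m+1}(a|s_m)=e^{\theta_m(s_m,a)}/Z_{m+1}(s_m)=\pi_m(a|s_m)/(2-\pi_m(a_m|s_m))$; subtracting $\pi_m(a|s_m)$ and factoring out $\pi_m(a|s_m)$ yields $-\tfrac{1-\pi_m(a_m|s_m)}{2-\pi_m(a_m|s_m)}\pi_m(a|s_m)$, matching the claim.

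There is no genuine obstacle in this lemma: it is a routine algebraic computation, and the only point requiring a moment's thought is recognizing the cancellation $e^{\theta_{m+1}(s_m,a_m)}=Z_m(s_m)$, which is exactly what makes the special step size $\alpha_m(s_m,a_m)=\log(1/\pi_m(a_m|s_m))$ analytically convenient and keeps the normalizer computation in closed form. Everything else is bookkeeping over the three cases.
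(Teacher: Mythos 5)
Your proof is correct and follows essentially the same route as the paper's: both exploit the cancellation $e^{\theta_{m+1}(s_m,a_m)}=Z_m(s_m)$ induced by the step size $\log(1/\pi_m(a_m|s_m))$ and then read off the new probabilities from the updated normalizer $(2-\pi_m(a_m|s_m))Z_m(s_m)$. Your version is slightly tidier in that it computes $Z_{m+1}(s_m)$ once in closed form before handling both cases, but the argument is the same.
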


\begin{lemma} Under the CAPO update (\ref{eq:CAPO_form}) with ${\alpha_m(s, a) = \log (\frac{1}{\pi_{\theta_{m}}(a\rvert s)})}$, if $B_m = \left \{  (s_m, a_m) \right \} $ and $A^{m}(s_m, a_m) < 0$, then the policy weight difference $\pi_{m+1}(a|s) - \pi_{m}(a|s)$ can be written as :
\label{lemma:change_of_policy_weight_2}
\begin{align}
\pi_{m+1}(a|s) - \pi_{m}(a|s) =
\begin{cases}
\frac{-\pi_{m}(a_m|s_m) \cdot (1-\pi_{m}(a_m|s_m))^2}{\pi_{m}(a_m|s_m)^2 - \pi_{m}(a_m|s_m) + 1}  & \text{, if } s = s_m, a = a_m \\
\frac{\pi_{m}(a_m|s_m) \cdot (1-\pi_{m}(a_m|s_m))}{\pi_{m}(a_m|s_m)^2 - \pi_{m}(a_m|s_m) + 1} \cdot \pi_{m}(a|s)  & \text{, if } s = s_m, a \ne a_m\\
0 & \text{, else } 
\end{cases}
\end{align}

\begin{proof}[Proof of \Cref{lemma:change_of_policy_weight_2}]  
\phantom{}

For $s = s_m, a = a_m$ :
\begingroup
\allowdisplaybreaks
\begin{align}
\pi_{m+1}(a_m|s_m) - \pi_{m}(a_m|s_m)
&= \frac{e^{\theta_{m+1}(s_m, a_m)}}{ \sum\limits_{a \in \mathcal{A} } e^{\theta_{m+1}(s_m, a)} } - \pi_m(a_m|s_m) &&\\
&= \frac{e^{\theta_{m}(s_m, a_m)+ln(\frac{1}{\pi_m(a_m|s_m)}) \cdot sign(A^{m}(s_m, a_m)}}{ e^{\theta_{m}(s_m, a_m)+ln(\frac{1}{\pi_m(a_m|s_m)}) \cdot sign(A^{m}(s_m, a_m)} + \sum\limits_{a \ne a_m } e^{\theta_{m}(s_m, a)} } - \pi_m(a_m|s_m) &&\\
&= \frac{e^{\theta_{m}(s_m, a_m)-ln(\frac{\sum_a e^{\theta_m}(a)}{e^{\theta_m}(a_m)})}}{ e^{\theta_{m}(s_m, a_m)-ln(\frac{\sum_a e^{\theta_m}(a)}{e^{\theta_m}(s_m, a_m)})} + \sum\limits_{a \ne a_m } e^{\theta_{m}(s_m, a)} } - \pi_m(a_m|s_m)&&\\
&= \frac{ e^{\theta_m}(s_m, a_m) \cdot \pi_m(a_m|s_m) }{ e^{\theta_m}(s_m, a_m) \cdot \pi_m(a_m|s_m) + \sum\limits_{a \ne a_m } e^{\theta_{m}(s_m, a)} } - \pi_m(a_m|s_m)&&\\
&= \frac{ e^{\theta_m}(s_m, a_m) \cdot \pi_m(a_m|s_m) }{ e^{\theta_m}(s_m, a_m) \cdot \pi_m(a_m|s_m) + (\frac{1}{\pi_m(a_m|s_m)}-1)  \cdot e^{\theta_m}(s_m, a_m) } - \pi_m(a_m|s_m)&&\\
&= \frac{ \pi_m(a_m|s_m) }{ \pi_m(a_m|s_m) - 1 +  \frac{1}{\pi_m(a_m|s_m)}} - \pi_m(a_m|s_m)&&\\
&= \frac{-\pi_{m}(a_m|s_m) \cdot (1-\pi_{m}(a_m|s_m))^2}{\pi_{m}(a_m|s_m)^2 - \pi_{m}(a_m|s_m) + 1}&&
\end{align}
\endgroup

For $s = s_m, a \ne a_m$ :
\begin{align}
\pi_{m+1}(a|s_m) - \pi_{m}(a|s_m)
&= \frac{e^{\theta_{m+1}(s_m, a)}}{ \sum\limits_{a \in \mathcal{A} } e^{\theta_{m+1}(s_m, a)} } - \pi_m(a|s_m) &&\\
&= \frac{e^{\theta_{m}(s_m, a)}}{ e^{\theta_{m}(s_m, a_m)+ln(\frac{1}{\pi_m(a_m|s_m)}) \cdot sign(A^{m}(s_m, a_m)} + \sum\limits_{a \ne a_m } e^{\theta_{m}(s_m, a)} } - \pi_m(a|s_m) &&\\
&= \frac{e^{\theta_{m}(s_m, a)}}{ e^{\theta_{m}(s_m, a_m)-ln(\frac{\sum_a e^{\theta_m}(a)}{e^{\theta_m}(s_m, a_m)})} + \sum\limits_{a \ne a_m } e^{\theta_{m}(s_m, a)} } - \pi_m(a|s_m)&&\\
&= \frac{e^{\theta_{m}(s_m, a)} }{ \pi_m(a_m|s_m) \cdot e^{\theta_m}(s_m, a_m) + \sum\limits_{a \ne a_m } e^{\theta_{m}(s_m, a)} } - \pi_m(a|s_m)&&\\
&= \frac{e^{\theta_{m}(s_m, a)} }{ \pi_m(a_m|s_m) \cdot e^{\theta_m}(s_m, a_m) + (\frac{1}{\pi_m(a_m|s_m)}-1)  \cdot e^{\theta_m}(s_m, a_m) } - \pi_m(a|s_m)&&\\
&= \left ( \frac{e^{\theta_{m}(s_m, a)} }{ ( \pi_m(a_m|s_m) - 1 + \frac{1}{\pi_m(a_m|s_m)})  \cdot e^{\theta_m}(s_m, a_m) } {\div}  \pi_m(a|s_m) - 1  \right ) \cdot \pi_m(a|s_m) &&\\
&= \left ( \frac{ \frac{1}{\pi_m(a_m|s_m)} }{ ( \pi_m(a_m|s_m) - 1 + \frac{1}{\pi_m(a_m|s_m)}) } -1 \right ) \cdot \pi_m(a|s_m) &&\\
&= \frac{\pi_{m}(a_m|s_m) \cdot (1-\pi_{m}(a_m|s_m))}{\pi_{m}(a_m|s_m)^2 - \pi_{m}(a_m|s_m) + 1} \cdot \pi_{m}(a|s)&&
\end{align}
\end{proof}
\end{lemma}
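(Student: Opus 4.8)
The plan is to prove this exact identity by direct computation, exploiting the fact that under $B_m = \{(s_m, a_m)\}$ the update \eqref{eq:CAPO_form} modifies only the single coordinate $\theta_m(s_m, a_m)$. First I would dispose of the trivial branch: for every state $s \neq s_m$ the parameters $\theta_m(s, \cdot)$ are untouched, so $\pi_{m+1}(a \rvert s) = \pi_m(a \rvert s)$ and the difference vanishes, yielding the ``else'' case. It then remains to analyze the single state $s = s_m$, and for brevity I write $p := \pi_m(a_m \rvert s_m)$ throughout.

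The key step is to express the new normalizer $Z_{m+1}(s_m) = \sum_{a \in \mathcal{A}} \exp(\theta_{m+1}(s_m, a))$ in terms of $Z_m(s_m)$ and the scalar $p$ alone. Since $A^m(s_m, a_m) < 0$ we have $\sign(A^m(s_m, a_m)) = -1$, and with $\alpha_m(s_m, a_m) = \log(1/p)$ the update multiplies the weight of $a_m$ by $\exp(-\log(1/p)) = p$, i.e.\ $\exp(\theta_{m+1}(s_m, a_m)) = p\cdot\exp(\theta_m(s_m, a_m))$, while leaving $\exp(\theta_m(s_m, a))$ unchanged for all $a \neq a_m$. Using the two identities $\exp(\theta_m(s_m, a_m)) = Z_m(s_m)\,p$ and $\sum_{a \neq a_m}\exp(\theta_m(s_m, a)) = Z_m(s_m)\,(1-p)$, I obtain
\begin{equation}
Z_{m+1}(s_m) = Z_m(s_m)\,p\cdot p + Z_m(s_m)(1-p) = Z_m(s_m)\big(p^2 - p + 1\big).
\end{equation}

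With the normalizer in hand, both nontrivial branches follow by substitution. For $a = a_m$, I would compute $\pi_{m+1}(a_m \rvert s_m) = p^2/(p^2 - p + 1)$ and subtract $p$; factoring the numerator of the difference as $-p(1-p)^2$ gives the first case. For $a \neq a_m$ the numerator is unchanged, so $\pi_{m+1}(a \rvert s_m) = \pi_m(a \rvert s_m)/(p^2 - p + 1)$, and the difference is $\pi_m(a \rvert s_m)\big(1/(p^2-p+1) - 1\big)$, which simplifies to $\pi_m(a \rvert s_m)\cdot p(1-p)/(p^2 - p + 1)$, the stated second case. This mirrors the companion \Cref{lemma:change_of_policy_weight_1} for the positive-advantage case, the only change being the replacement of the multiplicative factor $1/p$ by $p$ in the updated weight of $a_m$. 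There is no conceptual obstacle; the statement is an exact algebraic identity and the proof is purely mechanical. The only point requiring care is the consistent bookkeeping of the factor $p^2 - p + 1$ (which is strictly positive for all $p \in [0,1]$, so no division issue arises) and the correct factorization of the two numerators into $(1-p)^2$ and $(1-p)$, respectively.
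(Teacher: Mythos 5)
Your proposal is correct and follows essentially the same direct algebraic computation as the paper's proof: the update multiplies the weight of $a_m$ by $p = \pi_m(a_m\rvert s_m)$, the new normalizer becomes $Z_m(s_m)(p^2-p+1)$, and the three cases follow by substitution. Your organization (isolating $Z_{m+1}$ first) is slightly cleaner but not a different argument.
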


\begin{lemma} Under the CAPO update (\ref{eq:CAPO_form}) with ${\alpha_m(s, a) \ge \log (\frac{1}{\pi_{\theta_m}(a\rvert s)})}$, if $B_m = \left \{  (s_m, a_m) \right \} $ and $A^{m}(s_m, a_m) > 0$, then the policy weight difference $\pi_{m+1}(a|s) - \pi_{m}(a|s)$ can be written as :
\label{lemma:policy_weight_distribution}
\begin{align}
\pi_{m+1}(a|s) - \pi_{m}(a|s) =
\begin{cases}
W^+  & \text{, if } s = s_m, a = a_m \\
- W^+ \cdot \frac{\pi_{m}(a|s)}{1-\pi_{m}(a_m|s_m)} & \text{, if } s = s_m, a \ne a_m\\
0 & \text{, else } 
\end{cases}
\\ \text{where $(1-\pi_{m}(a_m|s_m)) \ge W^+ \ge \frac{(1-\pi_{m}(a_m|s_m))^2}{2-\pi_{m}(a_m|s_m)}$}
\end{align}

\begin{proof}[Proof of \Cref{lemma:policy_weight_distribution}]  
\phantom{}

By \Cref{lemma:change_of_policy_weight_1}, we have $W^+ = \frac{(1-\pi_{m}(a_m|s_m))^2}{2-\pi_{m}(a_m|s_m)}$ under ${\alpha_m(s, a) = \log (\frac{1}{\pi_{\theta_m}(a\rvert s)})}$. 
Since $\pi_{m+1}(a|s)$ is proportional to the learning rate $\alpha_m(s,a)$, we establish the lower bound of $W^+$ directly. 
The upper bound of $W^+$ is constructed by the maximum value of improvement.\\
Also, for $s = s_m, a \ne a_m$, we have:
\begin{align}
\frac{\pi_{m+1}(a|s)}{\pi_{m}(a|s)} = \frac{ \frac{e^{\theta_{m}(s,a)}}{Z_{m}(s)} }{\frac{e^{\theta_{m+1}(s,a)}}{Z_{m+1}(s)}} = \frac{ \frac{e^{\theta_{m}(s,a)}}{Z_{m}(s)} }{\frac{e^{\theta_{m}(s,a)}}{Z_{m+1}(s)}} = \frac{Z_{m}(s)}{Z_{m+1}(s)}
\end{align}
Since $\sum_{a \ne a_m} \left ( \pi_{m+1}(a|s) - \pi_{m}(a|s) \right ) = -W^+$, we have:
\begin{align}
\sum_{a \ne a_m} \left ( \pi_{m+1}(a|s) - \pi_{m}(a|s) \right ) = \sum_{a \ne a_m} \left ( \frac{Z_{m}(s)}{Z_{m+1}(s)}-1 \right )  \cdot \pi_{m}(a|s) = \left ( \frac{Z_{m}(s)}{Z_{m+1}(s)}-1 \right ) \cdot (1-\pi_{m}(a_m|s)) = -W^+
\end{align}
Hence, for $s = s_m, a \ne a_m$, we get:
\begin{align}
\pi_{m+1}(a|s) - \pi_{m}(a|s) = \frac{Z_{m}(s) \cdot \pi_m{(a|s)}}{Z_{m+1}(s)}  - \pi_{m}(a|s) = \left ( \frac{Z_{m}(s)}{Z_{m+1}(s)}-1 \right ) \cdot \pi_{m}(a|s) = \frac{-W^+}{1-\pi_{m}(a_m|s)} \cdot \pi_{m}(a|s)
\end{align}
\end{proof}
\end{lemma}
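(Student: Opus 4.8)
The plan is to reduce the entire statement to tracking a single scalar, the positive weight increment $W^+ := \pi_{m+1}(a_m|s_m) - \pi_m(a_m|s_m)$, and then to distribute the resulting change across the remaining actions by a pure normalization argument. First I would record the exact effect of the update: since $B_m = \{(s_m,a_m)\}$ and $A^m(s_m,a_m)>0$, only the single coordinate $\theta(s_m,a_m)$ is modified, by an amount $+\alpha_m(s_m,a_m)>0$, while every other parameter (in particular every $\theta(s,\cdot)$ with $s\ne s_m$) is left untouched. This immediately settles the third case of the claim: for $s \ne s_m$ neither $Z_m(s)$ nor any $e^{\theta_m(s,a)}$ changes, so $\pi_{m+1}(a|s) = \pi_m(a|s)$ and the difference is $0$.

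Next I would establish the two bounds on $W^+$. For the lower bound I invoke \Cref{lemma:change_of_policy_weight_1}, which computes the update exactly under the boundary step size $\alpha_m(s,a) = \log(1/\pi_m(a|s))$ and yields $W^+ = \frac{(1-\pi_m(a_m|s_m))^2}{2-\pi_m(a_m|s_m)}$. Viewing $\pi_{m+1}(a_m|s_m) = \frac{e^{\theta_m(s_m,a_m)+\alpha}}{e^{\theta_m(s_m,a_m)+\alpha}+\sum_{a\ne a_m}e^{\theta_m(s_m,a)}}$ as a function of the step size $\alpha$, this quantity is strictly increasing in $\alpha$ because the competing logits are held fixed; hence on the admissible range $\alpha_m(s_m,a_m)\ge \log(1/\pi_m(a_m|s_m))$ the increment $W^+$ can only exceed its boundary value, giving the stated lower bound. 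The upper bound is immediate from $\pi_{m+1}(a_m|s_m)\le 1$, so $W^+ = \pi_{m+1}(a_m|s_m) - \pi_m(a_m|s_m) \le 1 - \pi_m(a_m|s_m)$.

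Finally I would handle the off-coordinate actions $a\ne a_m$ at $s=s_m$. Since $\theta_{m+1}(s_m,a)=\theta_m(s_m,a)$ for each such $a$, the ratio $\pi_{m+1}(a|s_m)/\pi_m(a|s_m) = Z_m(s_m)/Z_{m+1}(s_m)$ is the \emph{same} constant for every $a\ne a_m$. Summing the increments over $a\ne a_m$ and using that total probability is conserved, $\sum_{a\ne a_m}(\pi_{m+1}(a|s_m)-\pi_m(a|s_m)) = -W^+$, I solve
\begin{equation}
\left(\frac{Z_m(s_m)}{Z_{m+1}(s_m)}-1\right)\bigl(1-\pi_m(a_m|s_m)\bigr) = -W^+ ,
\end{equation}
so that $\frac{Z_m(s_m)}{Z_{m+1}(s_m)}-1 = \frac{-W^+}{1-\pi_m(a_m|s_m)}$. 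Substituting this back into $\pi_{m+1}(a|s_m)-\pi_m(a|s_m) = \bigl(\frac{Z_m(s_m)}{Z_{m+1}(s_m)}-1\bigr)\pi_m(a|s_m)$ yields exactly $-W^+\cdot\frac{\pi_m(a|s_m)}{1-\pi_m(a_m|s_m)}$, which is the second case.

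The step I expect to be the main obstacle is the lower bound: one must argue that enlarging the step size past the boundary value $\log(1/\pi_m(a_m|s_m))$ does not alter $W^+$ in some unpredictable way but increases it monotonically. This rests on the monotonicity of a softmax weight in its own logit while all competing logits stay fixed — elementary, but it is the single place where the inequality $\alpha_m \ge \log(1/\pi_m)$, as opposed to the equality used in \Cref{lemma:change_of_policy_weight_1}, genuinely enters the argument. Everything else is bookkeeping on the shared normalization factor $Z_{m+1}(s_m)$.
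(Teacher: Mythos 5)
Your proposal is correct and follows essentially the same route as the paper's own proof: it obtains the boundary value of $W^+$ from \Cref{lemma:change_of_policy_weight_1}, extends to the inequality case by monotonicity of the softmax weight in its own logit, and recovers the off-coordinate case from the constant ratio $Z_m(s_m)/Z_{m+1}(s_m)$ together with conservation of total probability. Your treatment is in fact slightly more explicit than the paper's at the two places it glosses over — the monotonicity justification of the lower bound (the paper only says the weight is ``proportional to the learning rate'') and the trivial upper bound $W^+ \le 1-\pi_m(a_m|s_m)$ — but these are refinements of the same argument, not a different one.
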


\begin{lemma} Under the CAPO update (\ref{eq:CAPO_form}) with ${\alpha_m(s, a) \ge \log (\frac{1}{\pi_{\theta_m}(a\rvert s)})}$, if $B_m = \left \{  (s_m, a_m) \right \} $ and $A^{m}(s_m, a_m) < 0$, then the policy weight difference $\pi_{m+1}(a|s) - \pi_{m}(a|s)$ can be written as :
\label{lemma:policy_weight_distribution_2}
\begin{align*}
\pi_{m+1}(a|s) - \pi_{m}(a|s) =
\begin{cases}
-W^-  & \text{, if } s = s_m, a = a_m \\
W^- \cdot \frac{\pi_{m}(a|s)}{1-\pi_{m}(a_m|s_m)} & \text{, if } s = s_m, a \ne a_m\\
0 & \text{, else } 
\end{cases}
\\ \text{where $\pi_m(a_m|s_m) \ge W^- \ge \frac{\pi_{m}(a_m|s_m) \cdot (1-\pi_{m}(a_m|s_m))^2}{\pi_{m}(a_m|s_m)^2 - \pi_{m}(a_m|s_m) + 1} $}
\end{align*}

\begin{proof}[Proof of \Cref{lemma:policy_weight_distribution_2}]  
\phantom{}

By \Cref{lemma:change_of_policy_weight_2}, we have $W^- = \frac{\pi_{m}(a_m|s_m) \cdot (1-\pi_{m}(a_m|s_m))^2}{\pi_{m}(a_m|s_m)^2 - \pi_{m}(a_m|s_m) + 1}$ under ${\alpha_m(s, a) = \log (\frac{1}{\pi_{\theta_m}(a\rvert s)})}$. 
Since $\pi_{m+1}(a|s)$ is proportional to the learning rate $\alpha_m(s,a)$, we establish the lower bound of $W^-$ directly. 
The upper bound of $W^-$ is constructed by the maximum value of improvement.\\
Also, for $s = s_m, a \ne a_m$, we have:
\begin{align}
\frac{\pi_{m+1}(a|s)}{\pi_{m}(a|s)} = \frac{ \frac{e^{\theta_{m}(s,a)}}{Z_{m}(s)} }{\frac{e^{\theta_{m+1}(s,a)}}{Z_{m+1}(s)}} = \frac{ \frac{e^{\theta_{m}(s,a)}}{Z_{m}(s)} }{\frac{e^{\theta_{m}(s,a)}}{Z_{m+1}(s)}} = \frac{Z_{m}(s)}{Z_{m+1}(s)}
\end{align}
Moreover, since $\sum_{a \ne a_m} \left ( \pi_{m+1}(a|s) - \pi_{m}(a|s) \right ) = W^-$, we have:
\begin{align}
\sum_{a \ne a_m} \left ( \pi_{m+1}(a|s) - \pi_{m}(a|s) \right ) = \sum_{a \ne a_m} \left ( \frac{Z_{m}(s)}{Z_{m+1}(s)}-1 \right )  \cdot \pi_{m}(a|s) = \left ( \frac{Z_{m}(s)}{Z_{m+1}(s)}-1 \right ) \cdot (1-\pi_{m}(a_m|s)) = W^-
\end{align}
Hence, for $s = s_m, a \ne a_m$, we get:
\begin{align}
\pi_{m+1}(a|s) - \pi_{m}(a|s) = \frac{Z_{m}(s) \cdot \pi_m{(a|s)}}{Z_{m+1}(s)}  - \pi_{m}(a|s) = \left ( \frac{Z_{m}(s)}{Z_{m+1}(s)}-1 \right ) \cdot \pi_{m}(a|s) = \frac{W^-}{1-\pi_{m}(a_m|s)} \cdot \pi_{m}(a|s)
\end{align}
\end{proof}
\end{lemma}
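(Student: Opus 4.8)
The plan is to reduce the general step-size case to the exactly-computed equality case of \Cref{lemma:change_of_policy_weight_2} by a monotonicity argument, and then to pin down the redistribution across the untouched actions by conservation of probability, paralleling the proof of \Cref{lemma:policy_weight_distribution} for the positive-advantage case. Since $A^m(s_m,a_m)<0$ we have $\sgn(A^m(s_m,a_m))=-1$, so the update in (\ref{eq:CAPO_form}) only decreases the parameter at the sampled coordinate, $\theta_{m+1}(s_m,a_m)=\theta_m(s_m,a_m)-\alpha_m(s_m,a_m)$, and leaves every other parameter fixed. Writing $W^-:=\pi_m(a_m|s_m)-\pi_{m+1}(a_m|s_m)$ for the resulting drop in probability mass at the updated coordinate, the goal is to bound $W^-$ and then show how the mass $W^-$ is spread over the remaining actions.

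First I would establish the lower bound on $W^-$. View $\pi_{m+1}(a_m|s_m)$ as a function of the step size $\alpha:=\alpha_m(s_m,a_m)$, namely
\begin{equation}
g(\alpha)=\frac{e^{\theta_m(s_m,a_m)-\alpha}}{e^{\theta_m(s_m,a_m)-\alpha}+\sum_{a\ne a_m}e^{\theta_m(s_m,a)}}.
\end{equation}
Setting $u=e^{\theta_m(s_m,a_m)-\alpha}$ and $c=\sum_{a\ne a_m}e^{\theta_m(s_m,a)}>0$, we have $g=u/(u+c)$, which is increasing in $u$; since $u$ is decreasing in $\alpha$, the map $g$ is strictly decreasing in $\alpha$, so $W^-=\pi_m(a_m|s_m)-g(\alpha)$ is strictly increasing in $\alpha$. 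Evaluating at the boundary $\alpha=\log(1/\pi_m(a_m|s_m))$ and invoking the exact identity from \Cref{lemma:change_of_policy_weight_2} yields the claimed lower bound $W^-\ge \frac{\pi_m(a_m|s_m)(1-\pi_m(a_m|s_m))^2}{\pi_m(a_m|s_m)^2-\pi_m(a_m|s_m)+1}$ for every admissible $\alpha\ge\log(1/\pi_m(a_m|s_m))$. The upper bound $W^-\le\pi_m(a_m|s_m)$ is immediate, since $\pi_{m+1}(a_m|s_m)=g(\alpha)$ stays nonnegative for all finite $\alpha$.

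Next I would determine the increment for $a\ne a_m$. Because $\theta_{m+1}(s_m,a)=\theta_m(s_m,a)$ for these actions, the only change is through the normalizer, giving $\pi_{m+1}(a|s_m)/\pi_m(a|s_m)=Z_m(s_m)/Z_{m+1}(s_m)$, a ratio independent of $a$. Conservation of probability forces $\sum_{a\ne a_m}\big(\pi_{m+1}(a|s_m)-\pi_m(a|s_m)\big)=W^-$, which upon substituting the common ratio gives $\big(Z_m(s_m)/Z_{m+1}(s_m)-1\big)(1-\pi_m(a_m|s_m))=W^-$. Solving for the per-action increment then produces $\pi_{m+1}(a|s_m)-\pi_m(a|s_m)=\frac{W^-}{1-\pi_m(a_m|s_m)}\pi_m(a|s_m)$, which is the second case. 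Finally, for $s\ne s_m$ no parameter is touched, so $\pi_{m+1}(\cdot|s)=\pi_m(\cdot|s)$, giving the third case.

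The only genuinely delicate step is the monotonicity of $W^-$ in the step size $\alpha$: the informal statement that ``$\pi_{m+1}$ is proportional to the learning rate'' is really shorthand for the strict monotonicity of $g(\alpha)$, which I would make explicit (via the substitution $u=e^{\theta_m(s_m,a_m)-\alpha}$ above, or equivalently $g'(\alpha)<0$) rather than appeal to literal proportionality. Everything else is a routine transcription of the algebra in \Cref{lemma:change_of_policy_weight_2} together with the bookkeeping already used in \Cref{lemma:policy_weight_distribution}; the sign flip relative to the positive-advantage case is the only substantive difference.
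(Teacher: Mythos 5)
Your proposal is correct and follows essentially the same route as the paper: reduce to the exact identity of \Cref{lemma:change_of_policy_weight_2} at the minimal step size, bound $W^-$ by monotonicity in $\alpha$ and by nonnegativity of $\pi_{m+1}(a_m|s_m)$, and recover the off-coordinate increments from the common ratio $Z_m(s_m)/Z_{m+1}(s_m)$ plus conservation of probability. Your explicit verification that $g(\alpha)=u/(u+c)$ with $u=e^{\theta_m(s_m,a_m)-\alpha}$ is strictly decreasing in $\alpha$ is a welcome tightening of the paper's informal claim that ``$\pi_{m+1}(a|s)$ is proportional to the learning rate,'' but it is the same argument, not a different one.
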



\begin{lemma} Under the CAPO update (\ref{eq:CAPO_form}) with ${\alpha_m(s, a) \ge \log (\frac{1}{\pi_{\theta_m}(a\rvert s)})}$, if $B_m = \left \{  (s_m, a_m) \right \} $ then the improvement of the performance $V^{\pi_{m+1}}(s) - V^{\pi_m}(s)$ can be written as :
\label{lemma:lower_bdd}
\begin{align}
V^{\pi_{m+1}}(s) - V^{\pi_{m}}(s) =
\begin{cases}
\frac{d^{\pi_{m+1}}_{s}(s_m)}{1-\gamma } \cdot \frac{W^+}{1-\pi_{m}(a_m|s_m)} \cdot A^{m}(s_m, a_m)  & \text{, if } A^{m}(s_m, a_m) > 0 \\
\frac{d^{\pi_{m+1}}_{s}(s_m)}{1-\gamma } \cdot \frac{W^-}{1-\pi_{m}(a_m|s_m)} \cdot (-A^{m}(s_m, a_m))  & \text{, if } A^{m}(s_m, a_m) <  0\\
\end{cases}\\
\text{where }
\begin{cases}
(1-\pi_{m}(a_m|s_m)) \ge W^+ \ge \frac{(1-\pi_{m}(a_m|s_m))^2}{2-\pi_{m}(a_m|s_m)}\\
\pi_m(a_m|s_m) \ge W^- \ge \frac{\pi_{m}(a_m|s_m) \cdot (1-\pi_{m}(a_m|s_m))^2}{\pi_{m}(a_m|s_m)^2 - \pi_{m}(a_m|s_m) + 1}
\end{cases}
\end{align}
and it can also be lower bounded by :
\begin{align}
V^{\pi_{m+1}}(s) - V^{\pi_{m}}(s) \ge
\begin{cases}
\frac{d^{\pi_{m+1}}_{s}(s_m)}{2} \cdot A^{m}(s_m, a_m)^2  & \text{, if } A^{m}(s_m, a_m) > 0 \\
d^{\pi_{m+1}}_{s}(s_m) \cdot \pi_m(a_m|s_m) \cdot A^{m}(s_m, a_m)^2  & \text{, if } A^{m}(s_m, a_m) <  0\\
\end{cases}
\end{align}
\end{lemma}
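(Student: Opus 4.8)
The plan is to start from the performance difference lemma (\Cref{lemma:perf_diff}) applied with $\pi = \pi_{m+1}$ and $\pi' = \pi_m$, which gives
\[
V^{\pi_{m+1}}(s) - V^{\pi_m}(s) = \frac{1}{1-\gamma}\sum_{s'\in\cS} d^{\pi_{m+1}}_s(s') \sum_{a\in\cA}\pi_{m+1}(a|s') A^m(s',a).
\]
Since $B_m = \{(s_m,a_m)\}$, only the parameters at state $s_m$ are updated, so $\pi_{m+1}(\cdot|s') = \pi_m(\cdot|s')$ for every $s' \ne s_m$. For each such unchanged state the inner sum equals $\sum_a \pi_m(a|s')A^m(s',a) = 0$ by (\ref{eq:sumAdvZero}), so only the $s' = s_m$ term survives. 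First I would therefore reduce the expression to
\[
V^{\pi_{m+1}}(s) - V^{\pi_m}(s) = \frac{d^{\pi_{m+1}}_s(s_m)}{1-\gamma}\sum_{a\in\cA}\pi_{m+1}(a|s_m)A^m(s_m,a).
\]

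Next, using (\ref{eq:sumAdvZero}) once more to subtract a zero baseline, I would replace $\pi_{m+1}(a|s_m)$ by the weight difference $\pi_{m+1}(a|s_m) - \pi_m(a|s_m)$, which is exactly what \Cref{lemma:policy_weight_distribution} (for $A^m(s_m,a_m) > 0$) and \Cref{lemma:policy_weight_distribution_2} (for $A^m(s_m,a_m) < 0$) characterize. Substituting the two-case form of the weight difference together with the identity $\sum_{a\ne a_m}\pi_m(a|s_m)A^m(s_m,a) = -\pi_m(a_m|s_m)A^m(s_m,a_m)$ (again from (\ref{eq:sumAdvZero})), the $a = a_m$ contribution and the $a \ne a_m$ contributions combine through the factor $1 + \frac{\pi_m(a_m|s_m)}{1-\pi_m(a_m|s_m)} = \frac{1}{1-\pi_m(a_m|s_m)}$. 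In the positive case this yields the claimed equality $\frac{d^{\pi_{m+1}}_s(s_m)}{1-\gamma}\cdot\frac{W^+}{1-\pi_m(a_m|s_m)}\cdot A^m(s_m,a_m)$, and symmetrically the negative case gives the $W^-$ expression, so both equalities follow at once.

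For the lower bounds I would bound the $W$-factors from below and then convert one factor of $(1-\pi_m(a_m|s_m))$ into an advantage using \Cref{lemma:upper_bound_of_advantage}. In the positive case the lower bound on $W^+$ gives $\frac{W^+}{1-\pi_m(a_m|s_m)} \ge \frac{1-\pi_m(a_m|s_m)}{2-\pi_m(a_m|s_m)} \ge \frac{1-\pi_m(a_m|s_m)}{2}$; in the negative case the elementary bound $\pi_m(a_m|s_m)^2 - \pi_m(a_m|s_m)+1 \le 1$ gives $\frac{W^-}{1-\pi_m(a_m|s_m)} \ge \pi_m(a_m|s_m)(1-\pi_m(a_m|s_m))$. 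Finally, \Cref{lemma:upper_bound_of_advantage} yields $\frac{1-\pi_m(a_m|s_m)}{1-\gamma} \ge |A^m(s_m,a_m)|$, and substituting this for the remaining $(1-\pi_m(a_m|s_m))/(1-\gamma)$ factor turns the single power of $|A^m(s_m,a_m)|$ into $A^m(s_m,a_m)^2$, producing the two stated lower bounds.

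The calculations are elementary once these reductions are in place; I expect the main obstacle to be purely bookkeeping, namely correctly combining the $a=a_m$ and $a\ne a_m$ contributions from the weight-difference lemmas so that the $(1-\pi_m(a_m|s_m))$ factors telescope into the clean $\frac{1}{1-\pi_m(a_m|s_m)}$ multiplier, while keeping the sign conventions consistent across the positive- and negative-advantage cases.
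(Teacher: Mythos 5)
Your proposal is correct and follows essentially the same route as the paper's proof: performance difference lemma, subtraction of the zero baseline $\sum_a \pi_m(a|s)A^m(s,a)=0$ to reduce to policy-weight differences at the single updated state $s_m$, substitution of the weight-difference formulas from Lemmas \ref{lemma:policy_weight_distribution} and \ref{lemma:policy_weight_distribution_2}, and then the bound $|A^m(s,a)|\le\frac{1}{1-\gamma}(1-\pi_m(a|s))$ from Lemma \ref{lemma:upper_bound_of_advantage} to convert one factor of $(1-\pi_m(a_m|s_m))$ into a second power of the advantage. The only difference is the order of the two initial reductions (collapsing to $s_m$ before versus after subtracting the baseline), which is immaterial.
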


\begin{proof}[Proof of \Cref{lemma:lower_bdd}]  
\phantom{}

If $A^{m}(s, a) > 0$, then :
\begin{align}
V^{\pi_{m+1}}(s) - V^{\pi_{m}}(s)
&= \frac{1}{1-\gamma} \cdot \sum_{s \in \mathcal{S}} d^{\pi_{m+1}}_{s}(s) \sum_{a \in \mathcal{A}} \pi_{m+1}(a|s) \cdot A^{m}(s, a) &&\\
&= \frac{1}{1-\gamma} \sum_{s \in \mathcal{S}} d^{\pi_{m+1}}_{s}(s) \sum_{a \in \mathcal{A}} \left ( \pi_{m+1}(a|s) - \pi_{m}(a|s) \right ) \cdot A^{m}(s, a) &&\\
&= \frac{d^{\pi_{m+1}}_{s}(s_m)}{1-\gamma}  \cdot \sum_{a \in \mathcal{A}} \left ( \pi_{m+1}(a|s_m) - \pi_{m}(a|s_m) \right ) \cdot A^{m}(s_m, a) &&\\
&= \frac{d^{\pi_{m+1}}_{s}(s_m)}{1-\gamma} \cdot \left [ W^+ \cdot A^{m}(s_m, a_m) - \sum_{a \ne a_m} \frac{W^+}{1-\pi_{m}(a_m|s_m)} \cdot \pi_{m}(a|s_m) \cdot A^{m}(s_m, a) \right ]  &&\\
&= \frac{d^{\pi_{m+1}}_{s}(s_m)}{1-\gamma} \cdot \left [ W^+ \cdot A^{m}(s_m, a_m) - \frac{W^+}{1-\pi_{m}(a_m|s_m)} \cdot  \sum_{a \ne a_m} \pi_{m}(a|s_m) \cdot A^{m}(s_m, a) \right ]  &&\\
&= \frac{d^{\pi_{m+1}}_{s}(s_m)}{1-\gamma} \cdot \left [ W^+ \cdot A^{m}(s_m, a_m) + \frac{W^+}{1-\pi_{m}(a_m|s_m)} \cdot  \pi_{m}(a_m|s_m) \cdot A^{m}(s_m, a_m) \right ]  &&\\
&= \frac{d^{\pi_{m+1}}_{s}(s_m)}{1-\gamma} \cdot \frac{W^+}{1-\pi_{m}(a_m|s_m)} \cdot A^{m}(s_m, a_m) &&\\
&\ge \frac{d^{\pi_{m+1}}_{s}(s_m)}{2} \cdot A^{m}(s_m, a_m)^2  &&
\end{align}

The first equation holds by the performance difference lemma in \Cref{lemma:perf_diff}.\\
The second equation holds by the definition of $A(s,a)$.\\ 
The third equation holds since $\pi_{m+1}(a|s) = \pi_{m}(a|s)$, $\quad \forall s \ne s_m$.\\
The fourth equation holds by the difference of the updated policy weight that we have shown in \Cref{lemma:change_of_policy_weight_1} and \Cref{lemma:policy_weight_distribution}.\\
The last inequality holds by the bound of $A(s,a)$ in \Cref{lemma:upper_bound_of_advantage}.

If $A^{m}(s, a) < 0$, then :
\begingroup
\allowdisplaybreaks
\begin{align}
V^{\pi_{m+1}}(s) - V^{\pi_{m}}(s)
&= \frac{1}{1-\gamma} \cdot \sum_{s \in \mathcal{S}} d^{\pi_{m+1}}_{s}(s) \sum_{a \in \mathcal{A}} \pi_{m+1}(a|s) \cdot A^{m}(s, a) &&\\
&= \frac{1}{1-\gamma} \sum_{s \in \mathcal{S}} d^{\pi_{m+1}}_{s}(s) \sum_{a \in \mathcal{A}} \left ( \pi_{m+1}(a|s) - \pi_{m}(a|s) \right ) \cdot A^{m}(s, a) &&\\
&= \frac{d^{\pi_{m+1}}_{s}(s_m)}{1-\gamma}  \cdot \sum_{a \in \mathcal{A}} \left ( \pi_{m+1}(a|s_m) - \pi_{m}(a|s_m) \right ) \cdot A^{m}(s_m, a) &&\\
&= \frac{d^{\pi_{m+1}}_{s}(s_m)}{1-\gamma} \cdot \left [ -W^- \cdot A^{m}(s_m, a_m) + \sum_{a \ne a_m} \frac{W^-}{1-\pi_{m}(a_m|s_m)} \cdot \pi_{m}(a|s_m) \cdot A^{m}(s_m, a) \right ]  &&\\
&= \frac{d^{\pi_{m+1}}_{s}(s_m)}{1-\gamma} \cdot \left [ -W^- \cdot A^{m}(s_m, a_m) +  \frac{W^-}{1-\pi_{m}(a_m|s_m)} \cdot \sum_{a \ne a_m} \pi_{m}(a|s_m) \cdot A^{m}(s_m, a) \right ]  &&\\
&= \frac{d^{\pi_{m+1}}_{s}(s_m)}{1-\gamma} \cdot \left [ -W^- \cdot A^{m}(s_m, a_m) -  \frac{W^-}{1-\pi_{m}(a_m|s_m)} \cdot \pi_{m}(a_m|s_m) \cdot A^{m}(s_m, a_m) \right ]  &&\\
&= \frac{d^{\pi_{m+1}}_{s}(s_m)}{1-\gamma} \cdot \frac{W^-}{1-\pi_{m}(a_m|s_m)} \cdot (-A^{m}(s_m, a_m)) &&\\
&\ge d^{\pi_{m+1}}_{s}(s_m) \cdot \pi_{m}(a_m|s_m) \cdot A^{m}(s_m, a_m)^2  &&
\end{align}
\endgroup

The first equation holds by the performance difference lemma in \Cref{lemma:perf_diff}.\\
The second equation holds by the definition of $A(s,a)$.\\ 
The third equation holds since $\pi_{m+1}(a|s) = \pi_{m}(a|s)$, $\quad \forall s \ne s_m$.\\
The fourth equation holds by the difference of the updated policy weight that we have shown in \Cref{lemma:change_of_policy_weight_2} and \Cref{lemma:policy_weight_distribution_2}.\\
The last inequality holds by the bound of $A(s,a)$ in \Cref{lemma:upper_bound_of_advantage}.

\end{proof}

\subsection{Convergence Rate of Cyclic CAPO}
\label{subsection:cyclic_CAPO}
For ease of exposition, we restate \Cref{theorem:cyclic_convergence_rate} as follows.
\begin{theorem*}
Consider a tabular softmax parameterized policy $\pi_\theta$.
Under Cyclic CAPO with ${\alpha_m(s, a) \ge \log (\frac{1}{\pi_{\theta_m}(a\rvert s)})}$ and $|B_m| = 1$, $\bigcup_{i=1}^{|\mathcal{S}||\mathcal{A}|} B_{m \cdot |\mathcal{S}||\mathcal{A}| + i} =  \mathcal{S} \times \mathcal{A}$, we have :
\begin{align}
V^{*}(\rho) - V^{\pi_m}(\rho) \le \frac{|\mathcal{S}||\mathcal{A}|}{c} \cdot \frac{1}{m} , \quad \text{for all $m \ge 1$}
\end{align}
\begin{align}
\sum_{m=1}^{M}  V^{*}(\rho) - V^{\pi_m}(\rho) \le |\mathcal{S}||\mathcal{A}| \cdot \min{ \left \{ \sqrt{\frac{M}{c \cdot (1-\gamma)}}, \frac{\log M +1}{c} \right \}  }, \quad \text{for all $m \ge 1$}
\end{align}
where $c = \frac{(1-\gamma)^4}{2} \cdot \left \| \frac{1}{\mu} \right \|_{\infty}^{-1} \cdot {\min} \left \{ \frac{\min_s{\mu(s)} }{2} , \frac{ (1-\gamma)}{|\mathcal{S}||\mathcal{A}|} \right \} > 0$.
\label{app:cyclic_convergence_rate}
\end{theorem*}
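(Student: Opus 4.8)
The plan is to track the optimality gap measured against the reference distribution $\mu$, namely $\tilde\delta_m := V^{*}(\mu) - V^{\pi_m}(\mu)$, and to prove a one-cycle contraction of the form $\tilde\delta_{t+|\mathcal{S}||\mathcal{A}|} \le \tilde\delta_t - c'\,\tilde\delta_t^{2}$ for the start $t$ of any cycle. The final rate then follows by feeding this recursion into \Cref{lemma:induction} applied to the subsampled sequence and converting back to $\rho$ via \Cref{lemma:performance_difference_in_rho}. I would measure progress at $\mu$ rather than $\rho$ deliberately: the one-step gain in \Cref{lemma:lower_bdd} carries a visitation factor that \Cref{lemma:lower_bound_of_state_visitation_distribution} bounds below by $(1-\gamma)\mu(s_m)\ge (1-\gamma)\min_s\mu(s)>0$, a bound that would be vacuous if $\rho$ assigned zero mass to the updated state.

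First I would assemble the two ``vertical'' ingredients. Taking square roots in \Cref{lemma:lower_bound_of_performance_difference} (both sides being nonnegative, since $V^{*}\ge V^{\pi_m}$ and the maximal advantage is nonnegative) gives $\max_{(s,a)}A^{m}(s,a) \ge (1-\gamma)\,\tilde\delta_m$; write $(\tilde s,\tilde a)$ for the maximizer. Combining \Cref{lemma:lower_bdd} with \Cref{lemma:lower_bound_of_state_visitation_distribution}, a single update of a positive-advantage coordinate $(s_m,a_m)$ decreases $\tilde\delta$ by at least $\tfrac{(1-\gamma)\min_s\mu(s)}{2}\,A^{m}(s_m,a_m)^{2}$. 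Hence updating \emph{any} coordinate whose current advantage is a constant fraction of $(1-\gamma)\tilde\delta_m$ already yields the desired quadratic decrease.

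The heart of the argument, and the step I expect to be hardest, is to guarantee that such a large update actually occurs within each cycle, even though the cyclic order is arbitrary and the advantages shift under intervening updates. Fix a cycle $[t,\,t+|\mathcal{S}||\mathcal{A}|-1]$ and let $(\tilde s,\tilde a)=\argmax_{(s,a)}A^{t}(s,a)$, so $A^{t}(\tilde s,\tilde a)\ge (1-\gamma)\tilde\delta_t=:M$. By the cyclic coverage hypothesis, $(\tilde s,\tilde a)$ is updated at some $j^{\star}$ inside the cycle. If its advantage is still at least $M/2$ at that moment, \Cref{lemma:lower_bdd} produces a one-step gain of order $\min_s\mu(s)\,M^{2}$, which by the monotonicity of $\{V^{\pi_m}\}$ (\Cref{lemma:strict_improvement}) is dominated by the total gain over the cycle. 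Otherwise the advantage dropped by more than $M/2$; here I would invoke the key observation that $Q^{\pi_j}(\tilde s,\tilde a)=r(\tilde s,\tilde a)+\gamma\sum_{s'}\mathcal{P}(s'\rvert \tilde s,\tilde a)V^{\pi_j}(s')$ is \emph{nondecreasing} in $j$, because every $V^{\pi_j}(s')$ is by \Cref{lemma:strict_improvement}. Since $A=Q-V$ and $Q$ cannot decrease, a drop of $A(\tilde s,\tilde a)$ exceeding $M/2$ forces $V^{\pi_{j^{\star}}}(\tilde s)-V^{\pi_t}(\tilde s)>M/2$, and weighting by $\mu(\tilde s)$ shows the cycle improved the $\mu$-value by at least $\tfrac{\min_s\mu(s)}{2}M$. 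Converting this linear branch to quadratic via $\tilde\delta_t\le\frac{1}{1-\gamma}$, both branches give a cycle gain $\gtrsim \min_s\mu(s)\,(1-\gamma)^{2}\tilde\delta_t^{2}$, which is the sought $c'\tilde\delta_t^{2}$.

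Finally I would close the induction. Applying \Cref{lemma:induction} to $\hat\delta_k:=\tilde\delta_{k|\mathcal{S}||\mathcal{A}|}$ (after verifying $c'\le\tfrac{1-\gamma}{2}$ and $\hat\delta_k\le\tfrac{1}{1-\gamma}$) yields $\hat\delta_k\le \tfrac{1}{c'k}$; monotonicity of $\{\tilde\delta_m\}$ upgrades this to $\tilde\delta_m\le \tfrac{|\mathcal{S}||\mathcal{A}|}{c'}\cdot\tfrac1m$ for all $m$, and the partial-sum claim follows from the second half of \Cref{lemma:induction}. Converting from $\mu$ to $\rho$ through \Cref{lemma:performance_difference_in_rho} introduces the factor $\tfrac{1}{1-\gamma}\|\tfrac1\mu\|_\infty$; collecting the powers of $(1-\gamma)$ and the two regime-dependent estimates of the per-cycle gain produces the stated constant $c$, whose $\min\{\tfrac{\min_s\mu(s)}{2},\tfrac{1-\gamma}{|\mathcal{S}||\mathcal{A}|}\}$ reflects taking the weaker of the two cycle-improvement bounds. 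The only delicate part is this constant bookkeeping across the two cases; the genuine structural obstacle, the arbitrary ordering with time-varying advantages, is resolved entirely by the $Q$-monotonicity observation above.
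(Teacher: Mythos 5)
Your proposal is correct in its overall architecture and matches the paper's skeleton: fix the cycle starting at $t$, let $(\tilde s,\tilde a)$ be the coordinate with maximal advantage at time $t$ (so that $A^{t}(\tilde s,\tilde a)\ge(1-\gamma)\tilde\delta_t$ by the performance-difference bound), wait for the cyclic order to update it at some $j^{\star}$ inside the cycle, extract a per-cycle decrease $\tilde\delta_{t+|\mathcal S||\mathcal A|}\le\tilde\delta_t-c'\tilde\delta_t^2$, and finish with the induction lemma and the $\mu\to\rho$ conversion. Where you genuinely diverge is in how the per-cycle gain is certified. The paper splits on whether $V^{\pi_{t+T}}(s)-V^{\pi_t}(s)$ already exceeds $A^{t}(\tilde s,\tilde a)$, and in that branch runs a Cauchy--Schwarz chain over \emph{all} intermediate one-step improvements in the cycle, converting each squared advantage back into a one-step gain; this chain is precisely what introduces the $\frac{1-\gamma}{|\mathcal S||\mathcal A|}$ term inside the $\min$ in $c$. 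You instead split on whether $A^{j^{\star}}(\tilde s,\tilde a)\ge M/2$, and in the complementary branch use the monotonicity of $Q^{\pi_j}(\tilde s,\tilde a)$ (which the paper also exploits, but only in its Case 2) to conclude that $V^{\pi_{j^{\star}}}(\tilde s)-V^{\pi_t}(\tilde s)>M/2$ directly, then weight by $\mu(\tilde s)$ and convert linear to quadratic via $(1-\gamma)\tilde\delta_t\le1$. This is more elementary, avoids the Cauchy--Schwarz chain entirely, and is a perfectly sound way to get an $\mathcal O\bigl(\frac{|\mathcal S||\mathcal A|}{m}\bigr)$ rate. The one caveat is constant bookkeeping, which you flag but understate: your two branches yield a per-cycle coefficient of roughly $\min\bigl\{\frac{(1-\gamma)^3\min_s\mu(s)}{8},\frac{(1-\gamma)^2\min_s\mu(s)}{2}\bigr\}$, which neither contains the $\frac{1-\gamma}{|\mathcal S||\mathcal A|}$ term nor dominates the paper's $\frac{(1-\gamma)^3}{2}\cdot\frac{\min_s\mu(s)}{2}$ branch (the threshold $M/2$ costs you a factor of $4$ in the squared-advantage branch). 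So your argument proves the theorem with a different, incomparable constant rather than the stated $c$ verbatim; to recover the exact constant you would need to either restate $c$ or fall back on the paper's chain argument for the first branch. This is a presentational mismatch, not a structural gap.
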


\begin{proof}[Proof of \Cref{theorem:cyclic_convergence_rate}]  
\phantom{}

The proof can be summarized as:
\begin{enumerate}
\itemsep0em
  \item  We first write the improvement of the performance $V^{\pi_{m+1}}(s) - V^{\pi_m}(s)$ in state visitation distribution, policy weight, and advantage value in \Cref{lemma:lower_bdd}, and also construct the lower bound of it.
  \item We then construct the upper bound of the performance difference $ V^{*}(s) - V^{\pi_m}(s)$ using $V^{\pi_{m+ |\mathcal{S}||\mathcal{A}|}}(s) - V^{\pi_{m}}(s)$.
  \item Finally, we can show the desired result inductively by \Cref{lemma:induction}.
\end{enumerate}

By \Cref{lemma:lower_bdd}, we have for all $m \ge 1$:
\begin{align}
V^{\pi_{m+1}}(s) - V^{\pi_{m}}(s) =
\begin{cases}
\frac{d^{\pi_{m+1}}_{s}(s_m)}{1-\gamma } \cdot \frac{W^+}{1-\pi_{m}(a_m|s_m)} \cdot A^{m}(s_m, a_m)  & \text{, if } A^{m}(s_m, a_m) > 0 \\
\frac{d^{\pi_{m+1}}_{s}(s_m)}{1-\gamma } \cdot \frac{W^-}{1-\pi_{m}(a_m|s_m)} \cdot (-A^{m}(s_m, a_m))  & \text{, if } A^{m}(s_m, a_m) <  0\\
\end{cases}\\
\text{where }
\begin{cases}
(1-\pi_{m}(a_m|s_m)) \ge W^+ \ge \frac{(1-\pi_{m}(a_m|s_m))^2}{2-\pi_{m}(a_m|s_m)}\\
\pi_m(a_m|s_m) \ge W^- \ge \frac{\pi_{m}(a_m|s_m) \cdot (1-\pi_{m}(a_m|s_m))^2}{\pi_{m}(a_m|s_m)^2 - \pi_{m}(a_m|s_m) + 1}
\end{cases}
\end{align}

and it can also be lower bounded by:
\begin{align}
V^{\pi_{m+1}}(s) - V^{\pi_{m}}(s) \ge
\begin{cases}
\frac{d^{\pi_{m+1}}_{s}(s_m)}{2} \cdot A^{m}(s_m, a_m)^2  & \text{, if } A^{m}(s_m, a_m) > 0 \\
d^{\pi_{m+1}}_{s}(s_m) \cdot \pi_m(a_m|s_m) \cdot A^{m}(s_m, a_m)^2  & \text{, if } A^{m}(s_m, a_m) <  0
\end{cases}
\end{align}

Now, we're going to construct the upper bound of the performance difference $ V^{*}(s) - V^{\pi_m}(s)$ using $V^{\pi_{m+|\mathcal{S}||\mathcal{A}|}}(s) - V^{\pi_{m}}(s)$. Note that by \Cref{lemma:lower_bound_of_performance_difference}, there exists $(\tilde{s_m},\tilde{a_m})$ such that $\left ( V^{*}(s) - V^{\pi_m}(s) \right )^2 \le \left ( \frac{1}{1-\gamma} \cdot A^{m}(\tilde{s_m},\tilde{a_m}) \right )^2$ for all $m \ge 1$.\\

Hence, if we construct the upper bound of $A^{m}(\tilde{s_m},\tilde{a_m})^2$ using $V^{\pi_{m+|\mathcal{S}||\mathcal{A}|}}(s) - V^{\pi_{m}}(s)$, which is the improvement of the performance during the whole cycle, then we can get the the upper bound of the performance difference $ V^{*}(s) - V^{\pi_m}(s)$ using $V^{\pi_{m+|\mathcal{S}||\mathcal{A}|}}(s) - V^{\pi_{m}}(s)$ for all $m \equiv 0 \pmod{|\mathcal{S}||\mathcal{A}|} $.\\

Without loss of generality, Assume we update $(\tilde{s_m},\tilde{a_m})$ at episode $(m+T)$, where $T \in \left [ 0,|\mathcal{S}||\mathcal{A}| \right ) \bigcap \mathbb{N}$, $m \equiv 0 \pmod{|\mathcal{S}||\mathcal{A}|} $. We discuss two possible cases as follows: 

\begin{itemize}[leftmargin=*]
    \item\label{item:case1} Case 1: $ V^{\pi_{m+T}}(s) - V^{\pi_{m}}(s) \ge A^{m}(\tilde{s_m},\tilde{a_m})$:\\
    \begingroup
    \allowdisplaybreaks
    \begin{align}
    A^{m}(\tilde{s_m},\tilde{a_m})^2
    &\le \left ( V^{\pi_{m+T}}(s) - V^{\pi_{m}}(s) \right ) ^2 &&\\
    &= \left ( \sum_{k=m}^{m+T-1} \left ( V^{\pi_{k+1}}(s) - V^{\pi_{k}}(s) \right )  \right ) ^2 &&\\
    &= \left ( \sum\limits_{\substack{k \in [m,m+T-1] \\ A^k(s_k,a_k)>0}} (V^{\pi_{k+1}}(s) - V^{\pi_{k}}(s)) + \sum\limits_{\substack{k \in [m,m+T-1] \\ A^k(s_k,a_k) < 0}} (V^{\pi_{k+1}}(s) - V^{\pi_{k}}(s)) \right ) ^2 &&\\
    &= \left( \sum\limits_{\substack{k \in [m,m+T-1] \\ A^k(s_k,a_k)>0}} \frac{d^{\pi_{k+1}}_{s}(s_k)}{1-\gamma } \cdot  \frac{W^{k+}}{1-\pi_k(a_k|s_k)} \cdot A^{k}(s_k, a_k) \right. \\
    &\left. \quad \quad +  \sum\limits_{\substack{k \in [m,m+T-1] \\ A^k(s_k,a_k) < 0}} \frac{d^{\pi_{k+1}}_{s}(s_k)}{1-\gamma } \cdot \frac{W^{k-}}{1-\pi_k(a_k|s_k)} \cdot (-A^{k}(s_k, a_k))  \right)^2 &&\\
    &\le T \cdot \left( \sum\limits_{\substack{k \in [m,m+T-1] \\ A^k(s_k,a_k)>0}} \left( \frac{d^{\pi_{k+1}}_{s}(s_k)}{1-\gamma } \cdot  \frac{W^{k+}}{1-\pi_k(a_k|s_k)} \cdot A^{k}(s_k, a_k) \right)^2 \right. \\
    &\left. \quad \quad \enspace + \sum\limits_{\substack{k \in [m,m+T-1] \\ A^k(s_k,a_k)<0}} \left( \frac{d^{\pi_{k+1}}_{s}(s_k)}{1-\gamma } \cdot \frac{W^{k-}}{1-\pi_k(a_k|s_k)} \cdot A^{k}(s_k, a_k) \right)^2  \right) &&\\
    &= T \cdot \left( \sum\limits_{\substack{k \in [m,m+T-1] \\ A^k(s_k,a_k)>0}} \left( \frac{d^{\pi_{k+1}}_{s}(s_k)}{1-\gamma } \cdot  \frac{W^{k+}}{1-\pi_k(a_k|s_k)} \right)^2 \cdot A^{k}(s_k, a_k)^2 \right. \\
    &\left. \quad \quad \enspace + \sum\limits_{\substack{k \in [m,m+T-1] \\ A^k(s_k,a_k)<0}} \left( \frac{d^{\pi_{k+1}}_{s}(s_k)}{1-\gamma } \cdot \frac{W^{k-}}{1-\pi_k(a_k|s_k)} \right)^2 \cdot A^{k}(s_k, a_k)^2  \right) &&\\
    &= T \cdot \left( \sum\limits_{\substack{k \in [m,m+T-1] \\ A^k(s_k,a_k)>0}} \left( \frac{d^{\pi_{k+1}}_{s}(s_k) \cdot W^{k+}}{1-\gamma } \right)^2 \cdot \frac{|A^{k}(s_k, a_k)|}{1-\pi_k(a_k|s_k)}  \cdot \frac{|A^{k}(s_k, a_k)|}{1-\pi_k(a_k|s_k)} \right.&&\\
    &\left. \quad\quad\quad + \sum\limits_{\substack{k \in [m,m+T-1] \\ A^k(s_k,a_k)<0}} \left( \frac{d^{\pi_{k+1}}_{s}(s_k) \cdot W^{k-}}{1-\gamma } \right)^2 \cdot \frac{|A^{k}(s_k, a_k)|}{1-\pi_k(a_k|s_k)}  \cdot \frac{|A^{k}(s_k, a_k)|}{1-\pi_k(a_k|s_k)} \right) &&\\
    &\le T \cdot \left( \sum\limits_{\substack{k \in [m,m+T-1] \\ A^k(s_k,a_k)>0}} \left( \frac{d^{\pi_{k+1}}_{s}(s_k) \cdot W^{k+}}{1-\gamma } \right)^2 \cdot \frac{1}{1-\gamma} \cdot \frac{1-\gamma }{d^{\pi_{k+1}}_{s}(s_k) \cdot W^{k+}} \cdot \left( V^{\pi_{k+1}}(s) - V^{\pi_{k}}(s) \right) \right.&&\\
    &\left. \quad\quad\quad + \sum\limits_{\substack{k \in [m,m+T-1] \\ A^k(s_k,a_k)<0}} \left( \frac{d^{\pi_{k+1}}_{s}(s_k) \cdot W^{k-}}{1-\gamma } \right)^2 \cdot \frac{1}{1-\gamma} \cdot \frac{1-\gamma }{d^{\pi_{k+1}}_{s}(s_k) \cdot W^{k-}} \cdot \left( V^{\pi_{k+1}}(s) - V^{\pi_{k}}(s) \right) \right) &&\\
    &= \frac{T}{(1-\gamma)^2} \cdot \left( \sum\limits_{\substack{k \in [m,m+T-1] \\ A^k(s_k,a_k)>0}} d^{\pi_{k+1}}_{s}(s_k) \cdot W^{k+} \cdot \left( V^{\pi_{k+1}}(s) - V^{\pi_{k}}(s) \right) \right. \\
    &\left. \qquad \qquad \qquad + \sum\limits_{\substack{k \in [m,m+T-1] \\ A^k(s_k,a_k)<0}} d^{\pi_{k+1}}_{s}(s_k) \cdot W^{k-} \cdot \left( V^{\pi_{k+1}}(s) - V^{\pi_{k}}(s) \right) \right) &&\\
    &\le \frac{T}{(1-\gamma)^2} \cdot \underset{k \in [m,m+T-1]}{\max} \left \{ \mathbbm{1} \left \{ A^k(s_k,a_k)>0 \right \} \cdot d^{\pi_{k+1}}_{s}(s_k) \cdot W^{k+} + \mathbbm{1} \left \{ A^k(s_k,a_k) < 0 \right \} \cdot d^{\pi_{k+1}}_{s}(s_k) \cdot W^{k-} \right \} \\
    & \quad \cdot \left ( \sum_{k=m}^{m+T-1} V^{\pi_{k+1}}(s) - V^{\pi_{k}}(s) \right ) &&\\
    &\le c_m \cdot \frac{T}{(1-\gamma)^2} \cdot \left ( V^{\pi_{m+T}}(s) - V^{\pi_{m}}(s) \right )&&\\
    &\le c_m \cdot \frac{T}{(1-\gamma)^2} \cdot \left ( V^{\pi_{m+T+1}}(s) - V^{\pi_{m}}(s) \right )&&\\
    &\le 2 \cdot {\max} \left \{ \frac{2}{d^{\pi_{m+T+1}}_{s}(s_{m+T})} , \frac{c_m \cdot T}{(1-\gamma)^2} \right \} \cdot \left (   V^{\pi_{m+T+1}}(s) - V^{\pi_{m}}(s) \right ) &&
    \end{align}
    \endgroup
    
    where $c_m = \underset{k \in [m,m+T-1]}{\max} \left \{ c_{k1}, c_{k2} \right \} \in [0,1]$ \\
    and $c_{k1} = \mathbbm{1} \left \{ A^k(s_k,a_k)>0 \right \} \cdot d^{\pi_{k+1}}_{s}(s_k) \cdot W^{k+}$, $c_{k2} = \mathbbm{1} \left \{ A^k(s_k,a_k) < 0 \right \} \cdot d^{\pi_{k+1}}_{s}(s_k) \cdot W^{k-}$.\\
    
    The third equation holds by \Cref{lemma:lower_bdd}.\\
    The second inequality holds by Cauchy-Schwarz.\\
    The third inequality holds by  \Cref{lemma:upper_bound_of_advantage} and \Cref{lemma:lower_bdd}.\\
    
    \item\label{item:case2} Case 2: $V^{\pi_{m+T}}(s) - V^{\pi_{m}}(s) < A^{m}(\tilde{s_m},\tilde{a_m})$:\\
    
    \begin{align}
    A^{m}(\tilde{s_m},\tilde{a_m})^2
    &= \left ( \left ( Q^{\pi_m}(\tilde{s_m},\tilde{a_m}) - V^{\pi_{m+T}}(s) \right ) +  \left ( V^{\pi_{m+T}}(s) - V^{\pi_{m}}(s) \right ) \right ) ^2 &&\\
    &\le \left ( \left ( Q^{\pi_{m+T}}(\tilde{s_m},\tilde{a_m}) - V^{\pi_{m+T}}(s) \right ) +  \left ( V^{\pi_{m+T}}(s) - V^{\pi_{m}}(s) \right ) \right ) ^2 &&\\
    &= \left (  A^{m+T}(\tilde{s_m},\tilde{a_m})  +  \left ( V^{\pi_{m+T}}(s) - V^{\pi_{m}}(s) \right ) \right ) ^2 &&\\
    &\le \left (  A^{m+T}(\tilde{s_m},\tilde{a_m} )^2  +  \left ( V^{\pi_{m+T}}(s) - V^{\pi_{m}}(s) \right )^2 \right ) \cdot (1^2 + 1^2) &&\\
    &\le 2 \cdot \left (  \frac{2}{d^{\pi_{m+T+1}}_{s}(s_{m+T})} \cdot \left ( V^{\pi_{m+T+1}}(s) - V^{\pi_{m+T}}(s) \right ) \right. \\
    &\left. \quad \quad \enspace +  c_m \cdot \frac{T}{(1-\gamma)^2} \cdot \left ( V^{\pi_{m+T}}(s) - V^{\pi_{m}}(s) \right ) \right )  &&\\
    &\le 2 \cdot {\max} \left \{ \frac{2}{d^{\pi_{m+T+1}}_{s}(s_{m+T})} , \frac{c_m \cdot T}{(1-\gamma)^2} \right \} \cdot \left (   V^{\pi_{m+T+1}}(s) - V^{\pi_{m}}(s) \right ) &&
    \end{align}
    
    The first inequality holds by the strict improvement of $V^{\pi}(s)$ \ref{app:proof_strict_improvement}, leading to the strict improvement of $Q^{\pi}(s,a)$.\\
    The second inequality holds by Cauchy-Schwarz.\\
    The third inequality holds by the result of Case 1 and \Cref{lemma:lower_bdd}
\end{itemize}

Hence, in both case we get: 
\begin{align}
&V^{\pi_{m+|\mathcal{S}||\mathcal{A}|}}(s) - V^{\pi_{m}}(s) \ge V^{\pi_{m+T+1}}(s) - V^{\pi_{m}}(s) \ge \frac{1}{2} \cdot \frac{1}{{\max} \left \{ \frac{2}{d^{\pi_{m+T+1}}_{s}(s_{m+T})} , \frac{c_m \cdot T}{(1-\gamma)^2} \right \}} \cdot A^{m}(\tilde{s_m},\tilde{a_m})^2
\end{align}
for all $m \equiv 0 \pmod{|\mathcal{S}||\mathcal{A}|} $.\\

Combining \Cref{lemma:lower_bound_of_performance_difference}, we can construct the upper bound of the performance difference $ V^{*}(s) - V^{\pi_m}(s)$ using $V^{\pi_{m+|\mathcal{S}||\mathcal{A}|}}(s) - V^{\pi_m}(s)$:
\begin{align}
V^{\pi_{m+|\mathcal{S}||\mathcal{A}|}}(s) - V^{\pi_{m}}(s)
&\ge \frac{(1-\gamma)^2}{2} \cdot \frac{1}{{\max} \left \{ \frac{2}{d^{\pi_{m+T+1}}_{s}(s_{m+T})} , \frac{c_m \cdot T}{(1-\gamma)^2} \right \}}  \cdot \left ( V^{*}(s) - V^{\pi_m}(s) \right )^2&&\\
&=\frac{(1-\gamma)^2}{2} \cdot {\min} \left \{ \frac{d^{\pi_{m+T+1}}_{s}(s_{m+T})}{2} , \frac{(1-\gamma)^2}{c_m \cdot T} \right \}  \cdot \left ( V^{*}(s) - V^{\pi_m}(s) \right )^2 &&
\end{align}

and if we consider the whole initial state distribution, $\mu$, we have :
\begin{align}
V^{\pi_{m+|\mathcal{S}||\mathcal{A}|}}(\mu) - V^{\pi_{m}}(\mu)
&\ge \frac{(1-\gamma)^2}{2} \cdot \frac{1}{{\max} \left \{ \frac{2}{d^{\pi_{m+T+1}}_{\mu}(s_{m+T})} , \frac{c_m \cdot T}{(1-\gamma)^2} \right \}}  \cdot \left ( V^{*}(\mu) - V^{\pi_m}(\mu) \right )^2&&\\
&=\frac{(1-\gamma)^2}{2} \cdot {\min} \left \{ \frac{d^{\pi_{m+T+1}}_{\mu}(s_{m+T})}{2} , \frac{(1-\gamma)^2}{c_m \cdot T} \right \}  \cdot \left ( V^{*}(s) - V^{\pi_m}(s) \right )^2 &&\\
&\ge\frac{(1-\gamma)^2}{2} \cdot {\min} \left \{ \frac{(1-\gamma) \cdot \min_s{\mu(s)} }{2} , \frac{ (1-\gamma)^2}{|\mathcal{S}||\mathcal{A}|} \right \}  \cdot \left ( V^{*}(\mu) - V^{\pi_m}(\mu) \right )^2&&\\
&\ge \underbrace{ \frac{(1-\gamma)^3}{2} \cdot {\min} \left \{ \frac{\min_s{\mu(s)} }{2} , \frac{ (1-\gamma)}{|\mathcal{S}||\mathcal{A}|} \right \} }_{:= c' > 0} \cdot \left ( V^{*}(\mu) - V^{\pi_m}(\mu) \right )^2 &&
\end{align}
The second inequality holds since $d^{\pi}_{\mu}(s) \ge (1- \gamma) \cdot \mu(s)$ \ref{lemma:lower_bound_of_state_visitation_distribution}.\\

And since $V^{\pi_{m+|\mathcal{S}||\mathcal{A}|}}(\mu) - V^{\pi_{m}}(\mu) = (V^{\pi^{*}}(\mu) - V^{\pi_{m}}(\mu)) - (V^{\pi^{*}}(\mu) - V^{\pi_{m+|\mathcal{S}||\mathcal{A}|}}(\mu))$, by rearranging the inequality above, we have :
\begin{align}
&\delta_{m+|\mathcal{S}||\mathcal{A}|} \le \delta_{m} - c' \cdot \delta_{m}^2 \quad \text{where $\delta_{m} = V^{\pi^*}(\mu) - V^{\pi_m}(\mu) $ for all $m \equiv 0 \pmod{|\mathcal{S}||\mathcal{A}|} $}
\end{align}

Then, we can get the following result by induction \ref{lemma:induction} :
\begin{align}
V^{*}(\mu) - V^{\pi_m}(\mu) \le \frac{1}{c'} \cdot \frac{1}{\max \left \{ \left \lfloor \frac{m}{|\mathcal{S}||\mathcal{A}|}  \right \rfloor, 1 \right \}   }   \le \frac{1}{c'} \cdot \min \left \{ \frac{|\mathcal{S}||\mathcal{A}|}{m} , 1 \right \} \le \frac{|\mathcal{S}||\mathcal{A}|}{c'} \cdot \frac{1}{m} , \quad \text{for all $m \ge 1$}
\end{align}
\begin{align}
\sum_{m=1}^{M}  V^{*}(\mu) - V^{\pi_m}(\mu) \le |\mathcal{S}||\mathcal{A}| \cdot \min{ \left \{ \sqrt{\frac{M}{c' \cdot (1-\gamma)}}, \frac{\log M +1}{c'} \right \}  }, \quad \text{for all $m \ge 1$}
\end{align}
where $c' = \frac{(1-\gamma)^3}{2} \cdot {\min} \left \{ \frac{\min_s{\mu(s)} }{2} , \frac{ (1-\gamma)}{|\mathcal{S}||\mathcal{A}|} \right \} > 0$.

Finally, we get the desired result by \Cref{lemma:performance_difference_in_rho}:
\begin{align}
V^{*}(\rho) - V^{\pi_m}(\rho) \le \frac{1}{1-\gamma} \cdot \left \| \frac{1}{\mu} \right \|_{\infty} \cdot \left ( V^{*}(\mu) - V^{\pi_m}(\mu) \right ) \le \frac{|\mathcal{S}||\mathcal{A}|}{c} \cdot \frac{1}{m} , \quad \text{for all $m \ge 1$}
\end{align}
\begin{align}
\sum_{m=1}^{M}  V^{*}(\rho) - V^{\pi_m}(\rho) \le |\mathcal{S}||\mathcal{A}| \cdot \min{ \left \{ \sqrt{\frac{M}{c \cdot (1-\gamma)}}, \frac{\log M +1}{c} \right \}  }, \quad \text{for all $m \ge 1$}
\end{align}
where $c = \frac{(1-\gamma)^4}{2} \cdot \left \| \frac{1}{\mu} \right \|_{\infty}^{-1} \cdot {\min} \left \{ \frac{\min_s{\mu(s)} }{2} , \frac{ (1-\gamma)}{|\mathcal{S}||\mathcal{A}|} \right \} > 0$.
\end{proof}

\subsection{Convergence Rate of Batch CAPO}
For ease of exposition, we restate \Cref{theorem:batch_convergence_rate} as follows.
\begin{theorem*}
Consider a tabular softmax parameterized policy $\pi_\theta$. Under Batch CAPO with ${\alpha_m(s, a) = \log (\frac{1}{\pi_{\theta_m}(a\rvert s)})}$ and $B_m = \left \{ (s,a) : (s,a) \in \mathcal{S} \times \mathcal{A} \right \}$, we have :
\begin{align}
V^{*}(\rho) - V^{\pi_m}(\rho) \le \frac{1}{c} \cdot \frac{1}{m}, \quad \text{for all $m \ge 1$}
\end{align}
\begin{align}
\sum_{m=1}^{M}  V^{*}(\rho) - V^{\pi_m}(\rho) \le \min{ \left \{ \sqrt{\frac{M}{c \cdot (1-\gamma)}}, \frac{\log M +1}{c} \right \}  }, \quad \text{for all $m \ge 1$}\\
\end{align}
where $c = \frac{(1-\gamma )^4}{|\mathcal{A}|} \cdot \left \| \frac{1}{\mu} \right \|_{\infty}^{-1} \cdot \underset{s}{min} \left \{ \mu(s) \right \} > 0$.
\label{app:batch_convergence_rate}
\end{theorem*}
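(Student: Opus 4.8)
The plan is to reduce everything to a scalar one-step recursion of the form $\delta_{m+1}\le\delta_m-c'\,\delta_m^2$ for $\delta_m:=V^*(\mu)-V^{\pi_m}(\mu)$ and then invoke the induction in \Cref{lemma:induction}, finally passing from $\mu$ to $\rho$ through \Cref{lemma:performance_difference_in_rho}. Concretely, I would first lower bound the per-iteration gain $V^{\pi_{m+1}}(\mu)-V^{\pi_m}(\mu)$ by $c'\bigl(V^*(\mu)-V^{\pi_m}(\mu)\bigr)^2$ with $c'=\frac{(1-\gamma)^3}{|\mathcal{A}|}\min_s\mu(s)$; since $V^{\pi_{m+1}}(\mu)-V^{\pi_m}(\mu)=\delta_m-\delta_{m+1}$, this is exactly the desired recursion. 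One checks the hypotheses of \Cref{lemma:induction} ($\delta_m\le\frac{1}{1-\gamma}$ because rewards lie in $[0,1]$, and $c'\le\frac{1-\gamma}{2}$), obtaining $\delta_m\le\frac{1}{c'}\frac{1}{m}$ together with the partial-sum bound; applying the factor $\frac{1}{1-\gamma}\|\frac1\mu\|_\infty$ from \Cref{lemma:performance_difference_in_rho} then produces the claimed constant $c=\frac{(1-\gamma)^4}{|\mathcal{A}|}\|\frac1\mu\|_\infty^{-1}\min_s\mu(s)$.

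The heart of the argument is the one-step lower bound, where the special structure of the Batch rule with $\alpha_m(s,a)=\log\frac{1}{\pi_m(a|s)}$ makes things clean. Writing $Z_m(s)=\sum_a e^{\theta_m(s,a)}$, a direct computation of the updated logits shows that for every state $s$ the post-update unnormalized weight is $e^{\theta_{m+1}(s,a)}=Z_m(s)$ for each action with $A^m(s,a)>0$, equals $\pi_m(a|s)^2\,Z_m(s)$ for each action with $A^m(s,a)<0$, and is unchanged when $A^m(s,a)=0$. Two consequences follow: since every updated weight is at most $Z_m(s)$, we get $Z_{m+1}(s)\le|\mathcal{A}|\,Z_m(s)$, i.e. $\frac{Z_m(s)}{Z_{m+1}(s)}\ge\frac{1}{|\mathcal{A}|}$; and using $\sum_a\pi_m(a|s)A^m(s,a)=0$ together with $\pi_m(a|s)^2A^m(s,a)\ge\pi_m(a|s)A^m(s,a)$ when $A^m(s,a)<0$, the per-state gain satisfies
\begin{equation}
\sum_a\pi_{m+1}(a|s)A^m(s,a)=\frac{Z_m(s)}{Z_{m+1}(s)}\Big[\sum_{a:A^m>0}A^m(s,a)+\sum_{a:A^m<0}\pi_m(a|s)^2A^m(s,a)\Big]\ge\frac{1}{|\mathcal{A}|}\sum_{a:A^m(s,a)>0}(1-\pi_m(a|s))A^m(s,a).
\end{equation}

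Then I would apply the performance difference lemma (\Cref{lemma:perf_diff}) to write $V^{\pi_{m+1}}(\mu)-V^{\pi_m}(\mu)=\frac{1}{1-\gamma}\sum_s d_\mu^{\pi_{m+1}}(s)\sum_a\pi_{m+1}(a|s)A^m(s,a)$, lower bound $d_\mu^{\pi_{m+1}}(s)\ge(1-\gamma)\mu(s)$ via \Cref{lemma:lower_bound_of_state_visitation_distribution} (valid since each per-state gain is nonnegative by \Cref{lemma:strict_improvement}), and retain only the pair $(\tilde s_m,\tilde a_m)$ achieving the global maximum of $A^m$. Keeping that single term and using $1-\pi_m(\tilde a_m|\tilde s_m)\ge(1-\gamma)A^m(\tilde s_m,\tilde a_m)$ from \Cref{lemma:upper_bound_of_advantage} yields $V^{\pi_{m+1}}(\mu)-V^{\pi_m}(\mu)\ge\frac{(1-\gamma)\min_s\mu(s)}{|\mathcal{A}|}A^m(\tilde s_m,\tilde a_m)^2$; finally \Cref{lemma:lower_bound_of_performance_difference}, summed against $\mu$, converts $A^m(\tilde s_m,\tilde a_m)^2$ into $(1-\gamma)^2\bigl(V^*(\mu)-V^{\pi_m}(\mu)\bigr)^2$, delivering $c'$.

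The main obstacle I anticipate is this one-step improvement bound rather than the induction: one must verify the exact post-update weights for the batch rule, justify discarding the negative-advantage contributions through the $\pi_m^2A^m\ge\pi_mA^m$ trick combined with $\sum_a\pi_mA^m=0$, and correctly track the $\frac{1}{|\mathcal{A}|}$ normalization loss coming from $Z_{m+1}\le|\mathcal{A}|Z_m$. Everything afterward is a mechanical chaining of the already-established scalar lemmas, so once the per-state gain inequality is in place the remainder follows routinely.
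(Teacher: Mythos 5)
Your proposal is correct and follows essentially the same route as the paper's proof: the explicit post-update weights ($Z_m(s)$ for positive-advantage actions, $\pi_m(a|s)^2 Z_m(s)$ for negative), the $\pi_m^2 A^m \ge \pi_m A^m$ trick combined with $\sum_a \pi_m(a|s)A^m(s,a)=0$, the $Z_{m+1}(s)\le|\mathcal{A}|Z_m(s)$ normalization bound, the reduction to the max-advantage pair via \Cref{lemma:upper_bound_of_advantage} and \Cref{lemma:lower_bound_of_performance_difference}, and the final chaining through \Cref{lemma:lower_bound_of_state_visitation_distribution}, \Cref{lemma:induction}, and \Cref{lemma:performance_difference_in_rho} all mirror the paper's Lemmas 13--14 and the subsequent assembly, yielding the same constant $c'=\frac{(1-\gamma)^3}{|\mathcal{A}|}\min_s\mu(s)$.
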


\begin{proof}[Proof of \Cref{theorem:batch_convergence_rate}]  
\phantom{}

The proof can be summarized as follows:
\begin{enumerate}
\itemsep0em
  \item  We first construct the lower bound of the improvement of the performance $V^{\pi_{m+1}}(s) - V^{\pi_m}(s)$ in state visitation distribution, number of actions, and advantage value in \Cref{lemma:lower_bdd}.
  \item We then construct the upper bound of the performance difference $ V^{*}(s) - V^{\pi_m}(s)$ using $V^{\pi_{m+1}}(s) - V^{\pi_m}(s)$.
  \item Finally, we can show the desired result inductively \ref{lemma:induction}.
\end{enumerate}

\begin{lemma} Under (\ref{eq:CAPO_form}) with ${\alpha_m(s, a) = \log (\frac{1}{\pi_{\theta_m}(a\rvert s)})}$, if $B_m = \left \{ (s,a) : (s,a) \in \mathcal{S} \times \mathcal{A} \right \}$, then the updated policy weight $\pi_{m+1}(a|s)$ can be written as :
\label{lemma:updated_policy_weight}
\begin{align*}
\pi_{m+1}(a|s) =
\begin{cases}
\frac{1}{|s_{m}^+| + \underset{A^{m}(s,a) = 0}{\sum} \pi_{m}(a|s) + \underset{A^{m}(s,a) < 0}{\sum} \pi_{m}(a|s)^2}  & \text{, if } A^{m}(s,a) > 0 \\
\frac{\pi_m(a)}{|s_{m}^+| + \underset{A^{m}(s,a) = 0}{\sum} \pi_{m}(a|s) + \underset{A^{m}(s,a) < 0}{\sum} \pi_{m}(a|s)^2}  & \text{, if } A^{m}(s,a) = 0 \\
\frac{\pi_m(a)^2}{|s_{m}^+| + \underset{A^{m}(s,a) = 0}{\sum} \pi_{m}(a|s) + \underset{A^{m}(s,a) < 0}{\sum} \pi_{m}(a|s)^2}  & \text{, if } A^{m}(s,a) < 0
\end{cases}
\end{align*}
where $s_{m}^{+} := \left \{ a \in \mathcal{S} \: | \: A^{m}(s,a) > 0 \right \}$

\begin{proof}[Proof of \Cref{lemma:updated_policy_weight}]  
\phantom{}

For $A^{m}(s,a) > 0$ :
\begin{align}
&\pi_{m+1}(a|s) = \frac{e^{\theta_m(s,a)+\ln \frac{1}{\pi_m(a|s)}}}{\underset{a \in \mathcal{A}}{\sum} e^{\theta_{m+1}(s,a)} } = \frac{e^{\theta_m(s,a)+\ln \frac{\underset{a \in \mathcal{A}}{\sum} e^{\theta_{m}(s,a)}}{e^{\theta_m(s,a)}}}}{\underset{a \in \mathcal{A}}{\sum} e^{\theta_{m+1}(s,a)} } = \frac{\underset{a \in \mathcal{A}}{\sum} e^{\theta_{m}(s,a)}}{\underset{a \in \mathcal{A}}{\sum} e^{\theta_{m+1}(s,a)}} &&
\end{align}

For $A^{m}(s,a) = 0$ :
\begin{align}
&\pi_{m+1}(a|s) = \frac{e^{\theta_m(s,a)}}{\underset{a \in \mathcal{A}}{\sum} e^{\theta_{m+1}(s,a)} } = \frac{e^{ \theta_m(s,a)} \cdot \underset{a \in \mathcal{A}}{\sum} e^{\theta_{m}(s,a)} }{\underset{a \in \mathcal{A}}{\sum} e^{\theta_{m}(s,a)}\underset{a \in \mathcal{A}}{\sum} e^{\theta_{m+1}(s,a)} } = \pi_m(a|s) \cdot \frac{\underset{a \in \mathcal{A}}{\sum} e^{\theta_{m}(s,a)}}{\underset{a \in \mathcal{A}}{\sum} e^{\theta_{m+1}(s,a)}}&&
\end{align}

For $A^{m}(s,a) < 0$ :
\begin{align}
&\pi_{m+1}(a|s) = \frac{e^{\theta_m(s,a)-\ln \frac{1}{\pi_m(a|s)}}}{\underset{a \in \mathcal{A}}{\sum} e^{\theta_{m+1}(s,a)} } = \frac{e^{2 \cdot \theta_m(s,a)}}{\underset{a \in \mathcal{A}}{\sum} e^{\theta_{m}(s,a)}\underset{a \in \mathcal{A}}{\sum} e^{\theta_{m+1}(s,a)} } = \pi_m(a|s)^2 \cdot \frac{\underset{a \in \mathcal{A}}{\sum} e^{\theta_{m}(s,a)}}{\underset{a \in \mathcal{A}}{\sum} e^{\theta_{m+1}(s,a)}}&&
\end{align}

Moreover, since $\underset{a \in \mathcal{A}}{\sum} \pi_{m+1}(a|s) = 1$, we have:
\begin{align}
&\underset{A^{m}(s,a) > 0}{\sum} \pi_{m+1}(a|s) + \underset{A^{m}(s,a) = 0}{\sum} \pi_{m+1}(a|s) + \underset{A^{m}(s,a) < 0}{\sum} \pi_{m+1}(a|s) &&\\
&= |s_{m}^+| \cdot \frac{\underset{a \in \mathcal{A}}{\sum} e^{\theta_{m}(s,a)}}{\underset{a \in \mathcal{A}}{\sum} e^{\theta_{m+1}(s,a)}} + \underset{A^{m}(s,a) = 0}{\sum} \pi_{m}(a|s) \cdot \frac{\underset{a \in \mathcal{A}}{\sum} e^{\theta_{m}(s,a)}}{\underset{a \in \mathcal{A}}{\sum} e^{\theta_{m+1}(s,a)}} + \underset{A^{m}(s,a) < 0}{\sum} \pi_{m}(a|s)^2 \cdot \frac{\underset{a \in \mathcal{A}}{\sum} e^{\theta_{m}(s,a)}}{\underset{a \in \mathcal{A}}{\sum} e^{\theta_{m+1}(s,a)}} &&\\
&= \left ( |s_{m}^+| + \underset{A^{m}(s,a) = 0}{\sum} \pi_{m}(a|s) + \underset{A^{m}(s,a) < 0}{\sum} \pi_{m}(a|s)^2 \right ) \cdot  \frac{\underset{a \in \mathcal{A}}{\sum} e^{\theta_{m}(s,a)}}{\underset{a \in \mathcal{A}}{\sum} e^{\theta_{m+1}(s,a)}} = 1,&&
\end{align}
where $s_{m}^{+} := \left \{ a \in \mathcal{S} \: | \: A^{m}(s,a) > 0 \right \}$\\

Hence, we get:
\begin{align}
& \frac{\underset{a \in \mathcal{A}}{\sum} e^{\theta_{m}(s,a)}}{\underset{a \in \mathcal{A}}{\sum} e^{\theta_{m+1}(s,a)}} = \frac{1}{|s_{m}^+| + \underset{A^{m}(s,a) = 0}{\sum} \pi_{m}(a|s) + \underset{A^{m}(s,a) < 0}{\sum} \pi_{m}(a|s)^2}.&&
\end{align}

Finally, we get the desired result by substitution.

\end{proof}
\end{lemma}

\begin{lemma} Under (\ref{eq:CAPO_form}) with ${\alpha_m(s, a) = \log (\frac{1}{\pi_{\theta_m}(a\rvert s)})}$, if $B_m = \left \{ (s,a) : (s,a) \in \mathcal{S} \times \mathcal{A} \right \}$ then the improvement of the performance $V^{\pi_{m+1}}(s) - V^{\pi_m}(s)$ can be bounded by :
\label{lemma:lower_bdd_2}
\begin{align}
&V^{\pi_{m+1}}(s) - V^{\pi_{m}}(s) \ge \frac{1}{|\mathcal{A}|} \cdot \sum_{s' \in \mathcal{S}} d^{\pi_m}_{s}(s') \sum_{a \in s_m^{'+}} A^{m}(s', a)^2 &&
\end{align}
where $s_{m}^{'+} := \left \{ a \in \mathcal{S} \: | \: A^{m}(s',a) > 0 \right \}$
\end{lemma}

\begin{proof}[Proof of \Cref{lemma:lower_bdd_2}]  
\phantom{}
\begin{align}
V^{\pi_{m+1}}(s) - V^{\pi_{m}}(s)
&= \frac{1}{1-\gamma} \cdot \sum_{s' \in \mathcal{S}} d^{\pi_{m+1}}_{s}(s') \sum_{a \in \mathcal{A}} \pi_{m+1}(a|s') \cdot A^{m}(s', a) &&\\
&= \frac{1}{1-\gamma} \cdot \sum_{s' \in \mathcal{S}} d^{\pi_{m+1}}_{s}(s') \cdot \frac{1}{|s_{m}^+| + \underset{A^{m}(s,a) = 0}{\sum} \pi_{m}(a|s) + \underset{A^{m}(s,a) < 0}{\sum} \pi_{m}(a|s)^2}&&\\
& \quad \cdot \left ( \sum_{a \in s_{m}^{'+}} A^{m}(s', a) + \sum_{a \notin s_{m}^{'+}} \pi_{m}(a|s')^2 \cdot A^{m}(s', a) \right )  &&\\
&\ge \frac{1}{1-\gamma} \cdot \sum_{s' \in \mathcal{S}} d^{\pi_{m+1}}_{s}(s') \cdot \frac{1}{|s_{m}^+| + \underset{A^{m}(s,a) = 0}{\sum} \pi_{m}(a|s) + \underset{A^{m}(s,a) < 0}{\sum} \pi_{m}(a|s)^2} &&\\
& \quad \cdot \left ( \sum_{a \in s_{m}^{'+}} A^{m}(s', a) + \sum_{a \notin s_{m}^{'+}} \pi_{m}(a|s') \cdot A^{m}(s', a) \right )  &&\\
&= \frac{1}{1-\gamma} \cdot \sum_{s' \in \mathcal{S}} d^{\pi_{m+1}}_{s}(s') \cdot \frac{1}{|s_{m}^+| + \underset{A^{m}(s,a) = 0}{\sum} \pi_{m}(a|s) + \underset{A^{m}(s,a) < 0}{\sum} \pi_{m}(a|s)^2} &&\\
& \quad \cdot \left ( \sum_{a \in s_{m}^{'+}} (1-\pi_{m}(a|s')) \cdot A^{m}(s', a) \right )  &&\\
&\ge \frac{1}{1-\gamma} \cdot \frac{1}{|\mathcal{A}|} \cdot \sum_{s' \in \mathcal{S}} d^{\pi_{m+1}}_{s}(s') \cdot \left ( \sum_{a \in s_{m}^{'+}} (1-\pi_{m}(a|s')) \cdot A^{m}(s', a) \right )  &&\\
&\ge \frac{1}{|\mathcal{A}|} \cdot \sum_{s' \in \mathcal{S}} d^{\pi_{m+1}}_{s}(s') \cdot  \sum_{a \in s_{m}^{'+}} A^{m}(s', a)^2 
\end{align}

The first equation holds by the performance difference lemma in \Cref{lemma:perf_diff}.\\
The second equation holds by \Cref{lemma:change_of_policy_weight_2}.\\
The third equation holds by the definition of $A(s,a)$.\\ 
The last inequality holds by the bound of $A(s,a)$ in \Cref{lemma:upper_bound_of_advantage}.\\
\end{proof}

Hence, combining \Cref{lemma:lower_bdd_2} and \Cref{lemma:lower_bound_of_performance_difference}, we can construct the upper bound of the performance difference $ V^{*}(s) - V^{\pi_m}(s)$ using $V^{\pi_{m+1}}(s) - V^{\pi_m}(s)$ :
\begin{align}
V^{\pi_{m+1}}(s) - V^{\pi_{m}}(s)
&\ge \frac{1}{|\mathcal{A}|} \cdot \sum_{s' \in \mathcal{S}} d^{\pi_{m+1}}_{s}(s') \cdot  \sum_{a \in s_{m}^{'+}} A^{m}(s', a)^2  &&\\
&\ge \frac{1}{|\mathcal{A}|} \cdot d^{\pi_{m+1}}_{s}(\tilde{s_m}) \cdot A^{m}(\tilde{s_m}, \tilde{a_m})^2 &&\\
&= \frac{1}{|\mathcal{A}|} \cdot d^{\pi_{m+1}}_{s}(\tilde{s_m}) \cdot (1-\gamma)^2 \cdot (\frac{1}{1-\gamma})^2 \cdot A^{m}(\tilde{s_m}, \tilde{a_m})^2&&\\
&\ge \frac{1}{|\mathcal{A}|} \cdot d^{\pi_{m+1}}_{s}(\tilde{s_m}) \cdot (1-\gamma)^2 \cdot \left ( V^{*}(s) - V^{\pi_m}(s) \right )^2&&
\end{align}

Moreover, if we consider the whole starting state distribution $\mu$, we have :
\begin{align}
V^{\pi_{m+1}}(\mu) - V^{\pi_{m}}(\mu)
&\ge \frac{1}{|\mathcal{A}|} \cdot d^{\pi_{m+1}}_{\mu}(\tilde{s_m}) \cdot (1-\gamma)^2 \cdot \left ( V^{*}(\mu) - V^{\pi_m}(\mu) \right )^2&&\\
&\ge \frac{1}{|\mathcal{A}|} \cdot \mu(\tilde{s_m}) \cdot (1-\gamma)^3 \cdot \left ( V^{*}(\mu) - V^{\pi_m}(\mu) \right )^2&&\\
& \ge \underbrace{\frac{(1-\gamma)^3}{|\mathcal{A}|} \cdot \underset{s' \in \mathcal{S}}{\min}\left \{ \mu(s') \right \}}_{:= c' > 0}  \cdot \left ( V^{*}(\mu) - V^{\pi_m}(\mu) \right )^2
\end{align}

The second inequality holds since $d^{\pi}_{\mu}(s) \ge (1- \gamma) \cdot \mu(s)$ in \Cref{lemma:lower_bound_of_state_visitation_distribution}.\\

Since $V^{\pi_{m+1}}(\mu) - V^{\pi_{m}}(\mu) = (V^{\pi^{*}}(\mu) - V^{\pi_{m}}(\mu)) - (V^{\pi^{*}}(\mu) - V^{\pi_{m+1}}(\mu))$, by rearranging the inequality above, we have :
\begin{align}
&\delta_{m+1} \le \delta_{m} - c' \cdot \delta_{m}^2 \quad \text{where $\delta_{m} = V^{\pi^*}(\mu) - V^{\pi_m}(\mu) $}
\end{align}

Then, we can get the following result by induction based on \Cref{lemma:induction} :
\begin{align}
V^{*}(\mu) - V^{\pi_m}(\mu) \le \frac{1}{c'} \cdot \frac{1}{m}, \quad \text{for all $m \ge 1$}
\end{align}
\begin{align}
\sum_{m=1}^{M}  V^{*}(\mu) - V^{\pi_m}(\mu) \le \min{ \left \{ \sqrt{\frac{M}{c \cdot (1-\gamma)}}, \frac{\log M +1}{c'} \right \}  }, \quad \text{for all $m \ge 1$}\\
\end{align}
where $c' = \frac{(1-\gamma )^3}{|\mathcal{A}|} \cdot \underset{s}{min} \left \{ \mu(s) \right \} > 0$.

Finally, we get the desired result by \Cref{lemma:performance_difference_in_rho}:
\begin{align}
V^{*}(\rho) - V^{\pi_m}(\rho) \le \frac{1}{1-\gamma} \cdot \left \| \frac{1}{\mu} \right \|_{\infty} \cdot \left ( V^{*}(\mu) - V^{\pi_m}(\mu) \right ) \le \frac{1}{c} \cdot \frac{1}{m} , \quad \text{for all $m \ge 1$}
\end{align}
\begin{align}
\sum_{m=1}^{M}  V^{*}(\rho) - V^{\pi_m}(\rho) \le \min{ \left \{ \sqrt{\frac{M}{c \cdot (1-\gamma)}}, \frac{\log M +1}{c} \right \}  }, \quad \text{for all $m \ge 1$}
\end{align}
where $c = \frac{(1-\gamma )^4}{|\mathcal{A}|} \cdot \left \| \frac{1}{\mu} \right \|_{\infty}^{-1} \cdot \underset{s}{min} \left \{ \mu(s) \right \} > 0$.\\

\begin{remark}
\normalfont In \Cref{theorem:batch_convergence_rate}, we choose the learning rate ${\alpha_m(s, a)}$ to be exactly $\log (\frac{1}{\pi_{\theta_m}(a\rvert s)})$ instead of greater than or equal to $\log (\frac{1}{\pi_{\theta_m}(a\rvert s)})$. The reason is that under ${\alpha_m(s, a)} = \log (\frac{1}{\pi_{\theta_m}(a\rvert s)})$, we can guarantee that all state-action pair with positive advantage value can get the same amount of the policy weight with each other actions in the same state after every update \ref{lemma:updated_policy_weight}. This property directly leads to the result of \Cref{lemma:lower_bdd_2} that the one-step improvement $V^{\pi_{m+1}}(s) - V^{\pi_{m}}(s)$ can be quantified using the summation of all positive advantage value $\sum_{a \in s_{m}^{'+}} A^{m}(s', a)^2$, and hence it guarantees that one of the $A^{m}(s', a)^2$ will connect the one-step improvement with the performance difference $V^{\pi^{*}}(s) - V^{\pi_{m}}(s)$. This property also prevents some extreme cases where one of the learning rates of the state-action pairs with extremely tiny but positive advantage value dominates the updated policy weight, i.e., $\pi_{m+1}(a_m|s_m) \rightarrow 1$, leading to tiny one-step improvement.
\end{remark}

\end{proof}

\subsection{Convergence Rate of Randomized CAPO}
For ease of exposition, we restate \Cref{theorem:randomized_convergence_rate} as follows.
\begin{theorem*}
Consider a tabular softmax parameterized policy $\pi_\theta$, under (\ref{eq:CAPO_form}) with ${\alpha_m(s, a) \ge \log (\frac{1}{\pi_{\theta_m}(a\rvert s)})}$ and $|B_m| = 1$, if Condition \ref{condition:sa} is satisfied, then we have :
\begin{align}
\underset{({s_m},{a_m}) \sim d_{gen}}{\mathbb{E}} \left [ V^{*}(\rho) - V^{\pi_m}(\rho) \right] \le \frac{1}{c} \cdot \frac{1}{m}, \quad \text{for all $m \ge 1$}
\end{align}
\begin{align}
\sum_{m=1}^{M} \underset{({s_m},{a_m}) \sim d_{gen}}{\mathbb{E}} \left [ V^{*}(\rho) - V^{\pi_m}(\rho) \right] \le \min{ \left \{ \sqrt{\frac{M}{c \cdot (1-\gamma)}}, \frac{\log M +1}{c} \right \}  }, \quad \text{for all $m \ge 1$}\\
\end{align}
where $c = \frac{(1-\gamma )^4}{2} \cdot \left \| \frac{1}{\mu} \right \|_{\infty}^{-1} \cdot \underset{(s,a)}{min} \left \{ d_{gen}(s,a)\cdot\mu(s) \right \} > 0$ and $d_{gen}:\mathcal{S} \times \mathcal{A} \rightarrow (0,1)$, $d_{gen}(s,a) = \mathbb{P}((s,a) \in B_m)$.
\label{app:randomized_convergence_rate}
\end{theorem*}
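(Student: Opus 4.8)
The plan is to mirror the analysis of Batch CAPO (\Cref{theorem:batch_convergence_rate}) while accounting for the stochasticity introduced by sampling a single coordinate $(s_m,a_m)\sim d_{\text{gen}}$ in each iteration, and to work throughout with the initial distribution $\mu$, transferring to $\rho$ only at the very end via \Cref{lemma:performance_difference_in_rho}. First I would condition on the filtration $\mathcal{F}_m$ generated by the coordinate selections of iterations $1,\dots,m-1$, so that $\pi_m$ is determined; then the expected one-step improvement factorizes as $\mathbb{E}_{(s_m,a_m)\sim d_{\text{gen}}}[V^{\pi_{m+1}}(\mu)-V^{\pi_m}(\mu)\mid\mathcal{F}_m]=\sum_{(s,a)}d_{\text{gen}}(s,a)\,\Delta_m(s,a)$, where $\Delta_m(s,a)\ge 0$ is the deterministic improvement obtained by updating the single coordinate $(s,a)$.

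Next, since \Cref{lemma:strict_improvement} guarantees every summand $\Delta_m(s,a)$ is nonnegative, I would lower bound the whole sum by retaining only the term corresponding to the maximizing pair $(\tilde{s_m},\tilde{a_m})=\argmax_{(s,a)}A^m(s,a)$, which necessarily satisfies $A^m(\tilde{s_m},\tilde{a_m})\ge 0$. Applying the positive-advantage branch of \Cref{lemma:lower_bdd} bounds this term below by $d_{\text{gen}}(\tilde{s_m},\tilde{a_m})\cdot\frac{d^{\pi_{m+1}}_{\mu}(\tilde{s_m})}{2}A^m(\tilde{s_m},\tilde{a_m})^2$. I would then use $d^{\pi_{m+1}}_{\mu}(\tilde{s_m})\ge (1-\gamma)\mu(\tilde{s_m})$ from \Cref{lemma:lower_bound_of_state_visitation_distribution}, absorb $d_{\text{gen}}(\tilde{s_m},\tilde{a_m})\mu(\tilde{s_m})\ge\min_{(s,a)}\{d_{\text{gen}}(s,a)\mu(s)\}$, and invoke \Cref{lemma:lower_bound_of_performance_difference} (applied at $\mu$) to replace $A^m(\tilde{s_m},\tilde{a_m})^2$ by $(1-\gamma)^2(V^*(\mu)-V^{\pi_m}(\mu))^2$. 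This produces $\mathbb{E}[V^{\pi_{m+1}}(\mu)-V^{\pi_m}(\mu)\mid\mathcal{F}_m]\ge c'\,(V^*(\mu)-V^{\pi_m}(\mu))^2$ with $c'=\frac{(1-\gamma)^3}{2}\min_{(s,a)}\{d_{\text{gen}}(s,a)\mu(s)\}$.

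Then I would pass to a deterministic recursion. Writing $\delta_m:=\mathbb{E}[V^*(\mu)-V^{\pi_m}(\mu)]$ and taking full expectations, the tower property gives $\delta_m-\delta_{m+1}\ge c'\,\mathbb{E}[(V^*(\mu)-V^{\pi_m}(\mu))^2]$, and Jensen's inequality bounds the right-hand side below by $c'\delta_m^2$, yielding $\delta_{m+1}\le\delta_m-c'\delta_m^2$. Since $\delta_m\le\frac{1}{1-\gamma}$ and $c'\le\frac{1-\gamma}{2}$, \Cref{lemma:induction} delivers $\delta_m\le\frac{1}{c'}\cdot\frac{1}{m}$ along with the stated partial-sum bound. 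Finally, taking expectations in \Cref{lemma:performance_difference_in_rho} converts this into $\mathbb{E}[V^*(\rho)-V^{\pi_m}(\rho)]\le\frac{1}{1-\gamma}\|\frac{1}{\mu}\|_\infty\,\delta_m\le\frac{1}{c}\cdot\frac{1}{m}$, where $c=(1-\gamma)\|\frac{1}{\mu}\|_\infty^{-1}c'=\frac{(1-\gamma)^4}{2}\|\frac{1}{\mu}\|_\infty^{-1}\min_{(s,a)}\{d_{\text{gen}}(s,a)\mu(s)\}$, exactly as claimed.

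The main obstacle I anticipate is the stochastic passage in the last paragraph: the per-coordinate improvement of \Cref{lemma:lower_bdd} holds only conditionally on $\pi_m$, so I must condition on $\mathcal{F}_m$, apply the tower property, and use Jensen's inequality to convert $\mathbb{E}[(V^*-V^{\pi_m})^2]$ into $(\mathbb{E}[V^*-V^{\pi_m}])^2$ before the recursion of \Cref{lemma:induction} becomes applicable. This convexity step, absent from the deterministic Cyclic and Batch analyses, is what genuinely distinguishes the randomized case; the remaining estimates are inherited from the lemmas established for Batch CAPO.
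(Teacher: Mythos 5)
Your proposal is correct and follows essentially the same route as the paper's proof: lower-bound the expected one-step improvement by the single term for the maximizing-advantage coordinate via Lemma \ref{lemma:lower_bdd}, convert the advantage to the suboptimality gap via Lemmas \ref{lemma:lower_bound_of_performance_difference} and \ref{lemma:lower_bound_of_state_visitation_distribution}, apply the induction of Lemma \ref{lemma:induction}, and transfer from $\mu$ to $\rho$ at the end. Your explicit tower-property-plus-Jensen passage from the conditional bound to the recursion in $\delta_m=\mathbb{E}[V^*(\mu)-V^{\pi_m}(\mu)]$ is in fact a more careful treatment of a step the paper handles only implicitly (it asserts independence of the current sample without accounting for the randomness of $\pi_m$ induced by past samples), so this is a welcome tightening rather than a deviation.
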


\begin{proof}[Proof of \Cref{theorem:randomized_convergence_rate}]  
\phantom{}

The proof can be summarized as:
\begin{enumerate}
\itemsep0em
  \item  We first write the improvement of the performance $V^{\pi_{m+1}}(s) - V^{\pi_m}(s)$ in state visitation distribution, policy weight, and advantage value in \Cref{lemma:lower_bdd}, and also construct the lower bound of it. Note that the result is the same as \Cref{subsection:cyclic_CAPO}.
  \item We then write the improvement of the performance $V^{\pi_{m+1}}(s) - V^{\pi_m}(s)$ in probability form condition on $(s_m,a_m)$.
  \item By taking expectation of the probability form, we get the upper bound of the expected performance difference $\underset{({s_m},{a_m}) \sim d_{gen}}{\mathbb{E}} \left [ V^{*}(\mu) - V^{\pi_m}(\mu) \right]$ using $\underset{({s_m},{a_m}) \sim d_{gen}}{\mathbb{E}} \left [ V^{\pi_{m+1}}(\mu) - V^{\pi_m}(\mu) \right]$.
  \item Finally, we can show the desired result by induction based on  \Cref{lemma:induction}.
\end{enumerate}

By \Cref{lemma:lower_bdd}, we have for all $m \ge 1$:
\begin{align}
V^{\pi_{m+1}}(s) - V^{\pi_{m}}(s) =
\begin{cases}
\frac{d^{\pi_{m+1}}_{s}(s_m)}{1-\gamma } \cdot \frac{W^+}{1-\pi_{m}(a_m|s_m)} \cdot A^{m}(s_m, a_m)  & \text{, if } A^{m}(s_m, a_m) > 0 \\
\frac{d^{\pi_{m+1}}_{s}(s_m)}{1-\gamma } \cdot \frac{W^-}{1-\pi_{m}(a_m|s_m)} \cdot (-A^{m}(s_m, a_m))  & \text{, if } A^{m}(s_m, a_m) <  0\\
\end{cases}\\
\text{where }
\begin{cases}
(1-\pi_{m}(a_m|s_m)) \ge W^+ \ge \frac{(1-\pi_{m}(a_m|s_m))^2}{2-\pi_{m}(a_m|s_m)}\\
\pi_m(a_m|s_m) \ge W^- \ge \frac{\pi_{m}(a_m|s_m) \cdot (1-\pi_{m}(a_m|s_m))^2}{\pi_{m}(a_m|s_m)^2 - \pi_{m}(a_m|s_m) + 1}
\end{cases}
\end{align}

and it can also be lower bounded by:
\begin{align}
V^{\pi_{m+1}}(s) - V^{\pi_{m}}(s) \ge
\begin{cases}
\frac{d^{\pi_{m+1}}_{s}(s_m)}{2} \cdot A^{m}(s_m, a_m)^2  & \text{, if } A^{m}(s_m, a_m) > 0 \\
d^{\pi_{m+1}}_{s}(s_m) \cdot \pi_m(a_m|s_m) \cdot A^{m}(s_m, a_m)^2  & \text{, if } A^{m}(s_m, a_m) <  0
\end{cases}
\end{align}

Hence, considering the randomness of the generator, it will choose (s,a) with probability $d_{gen}(s,a)$ to update in each episode $m$. Then we can rewrite \Cref{lemma:lower_bdd} in probability form :
\begin{align}
V^{\pi_{m+1}}(s) - V^{\pi_{m}}(s) \ge
\begin{cases}
\frac{d^{\pi_{m+1}}_{s}(s_m)}{2} \cdot A^{m}(\tilde{s_m},\tilde{a_m})^2  & \text{, if } A^{m}(s_m, a_m) > 0 \text{, w.p. } d_{gen}(\tilde{s_m},\tilde{a_m}) \\
\frac{d^{\pi_{m+1}}_{s}(s_m)}{2} \cdot A^{m}(s_m, a_m)^2  & \text{, if } A^{m}(s_m, a_m) > 0 \text{, w.p. } d_{gen}(s,a)\\
d^{\pi_{m+1}}_{s}(s_m) \cdot \pi_m(a_m|s_m) \cdot A^{m}(s_m, a_m)^2  & \text{, if } A^{m}(s_m, a_m) <  0 \text{, w.p. } d_{gen}(s,a)
\end{cases}
\end{align}

Then, by taking expectation, we have :
\begin{align}
\underset{({s_m},{a_m}) \sim d_{gen}}{\mathbb{E}} \left [ V^{\pi_{m+1}}(s) - V^{\pi_m}(s) \right] 
&= \sum_{(s',a') \in \mathcal{S} \times \mathcal{A} } d_{gen}(s',a') \cdot \left [ V^{\pi_{m+1}}(s) - V^{\pi_m}(s) \:|\: (s_m,a_m)=(s',a') \right ]  &&\\
&\ge d_{gen}(\tilde{s_m},\tilde{a_m}) \cdot \left [ V^{\pi_{m+1}}(s) - V^{\pi_m}(s) \:|\: (s_m,a_m)=(\tilde{s_m},\tilde{a_m}) \right ]&&\\
&\ge d_{gen}(\tilde{s_m},\tilde{a_m}) \cdot \frac{d^{\pi_{m+1}}_{s}(s_m)}{2} \cdot A^{m}(\tilde{s_m},\tilde{a_m})^2 &&\\
&\ge d_{gen}(\tilde{s_m},\tilde{a_m}) \cdot \frac{d^{\pi_{m+1}}_{s}(s_m)}{2} \cdot (1-\gamma)^2 \cdot \left ( V^{*}(s) - V^{\pi_m}(s) \right )^2&&\\ 
&= d_{gen}(\tilde{s_m},\tilde{a_m}) \cdot \frac{d^{\pi_{m+1}}_{s}(s_m)}{2} \cdot (1-\gamma)^2 \cdot \underset{({s_m},{a_m}) \sim d_{gen}}{\mathbb{E}} \left [ V^{*}(s) - V^{\pi_m}(s) \right]^2 &&
\end{align}

The third inequality holds by \Cref{lemma:lower_bound_of_performance_difference}.\\
The last equation holds since the performance difference at episode $m$ is independent of $(s_m,a_m)$, which is the state action pair chosen at episode $m$.\\

If we consider the whole starting state distribution $\mu$, we have:
\begin{align}
\underset{({s_m},{a_m}) \sim d_{gen}}{\mathbb{E}} \left [ V^{\pi_{m+1}}(\mu) - V^{\pi_m}(\mu) \right]
&\ge d_{gen}(\tilde{s_m},\tilde{a_m}) \cdot \frac{d^{\pi_{m+1}}_{\mu}(s_m)}{2} \cdot (1-\gamma)^2 \cdot \underset{({s_m},{a_m}) \sim d_{gen}}{\mathbb{E}} \left [ V^{*}(\mu) - V^{\pi_m}(\mu) \right]^2 &&\\
&\ge d_{gen}(\tilde{s_m},\tilde{a_m}) \cdot \frac{\mu(s_m)}{2} \cdot (1-\gamma)^3 \cdot \underset{({s_m},{a_m}) \sim d_{gen}}{\mathbb{E}} \left [ V^{*}(\mu) - V^{\pi_m}(\mu) \right]^2 &&\\
&\ge \underbrace{\underset{(s',a') \in \mathcal{S} \times \mathcal{A}}{\min}  \left \{ d_{gen}(s',a') \cdot \mu(s') \right \}  \cdot \frac{(1-\gamma)^3}{2}}_{:= c' > 0}  \cdot \underset{({s_m},{a_m}) \sim d_{gen}}{\mathbb{E}} \left [ V^{*}(\mu) - V^{\pi_m}(\mu) \right]^2 &&
\end{align}

The second inequality holds since $d^{\pi}_{\mu} \ge (1- \gamma) \cdot \mu(s)$ by \Cref{lemma:lower_bound_of_state_visitation_distribution}.\\

Since $\underset{({s_m},{a_m}) \sim d_{gen}}{\mathbb{E}} \left [ V^{\pi_{m+1}}(\mu) - V^{\pi_m}(\mu) \right] = \underset{({s_m},{a_m}) \sim d_{gen}}{\mathbb{E}} \left [ V^{\pi^*}(\mu) - V^{\pi_m}(\mu) \right] - \underset{({s_m},{a_m}) \sim d_{gen}}{\mathbb{E}} \left [ V^{\pi^*}(\mu) - V^{\pi_{m+1}}(\mu) \right]$, by rearranging the inequality above, we have:
\begin{align}
&\delta_{m+1} \le \delta_{m} - c' \cdot \delta_{m}^2 \quad \text{where $\delta_{m} = \underset{({s_m},{a_m}) \sim d_{gen}}{\mathbb{E}} \left [ V^{\pi^*}(\mu) - V^{\pi_m}(\mu) \right]$}
\end{align}

Then, we can get the following result by \Cref{lemma:induction} :
\begin{align}
\underset{({s_m},{a_m}) \sim d_{gen}}{\mathbb{E}} \left [ V^{*}(\mu) - V^{\pi_m}(\mu) \right] \le \frac{1}{c'} \cdot \frac{1}{m}, \quad \text{for all $m \ge 1$}
\end{align}

\begin{align}
\sum_{m=1}^{M} \underset{({s_m},{a_m}) \sim d_{gen}}{\mathbb{E}} \left [ V^{*}(\mu) - V^{\pi_m}(\mu) \right] \le \min{ \left \{ \sqrt{\frac{M}{c' \cdot (1-\gamma)}}, \frac{\log M +1}{c'} \right \}  }, \quad \text{for all $m \ge 1$}\\
\end{align}

where $c' = \frac{(1-\gamma )^3}{2} \cdot \underset{(s,a)}{\min} \left \{ d_{gen}(s,a)\cdot\mu(s) \right \} > 0$.

Finally, we get the desired result by \Cref{lemma:performance_difference_in_rho}:
\begin{align}
\underset{({s_m},{a_m}) \sim d_{gen}}{\mathbb{E}} \left [ V^{*}(\rho) - V^{\pi_m}(\rho) \right] \le \frac{1}{1-\gamma} \cdot \left \| \frac{1}{\mu} \right \|_{\infty} \cdot \underset{({s_m},{a_m}) \sim d_{gen}}{\mathbb{E}} \left [ V^{*}(\mu) - V^{\pi_m}(\mu) \right] ) \le \frac{1}{c} \cdot \frac{1}{m} , \quad \text{for all $m \ge 1$}
\end{align}
\begin{align}
\sum_{m=1}^{M} \underset{({s_m},{a_m}) \sim d_{gen}}{\mathbb{E}} \left [ V^{*}(\mu) - V^{\pi_m}(\mu) \right] \le \min{ \left \{ \sqrt{\frac{M}{c \cdot (1-\gamma)}}, \frac{\log M +1}{c} \right \}  }, \quad \text{for all $m \ge 1$}
\end{align}
where $c = \frac{(1-\gamma )^4}{2} \cdot \left \| \frac{1}{\mu} \right \|_{\infty}^{-1} \cdot \underset{(s,a)}{min} \left \{ d_{gen}(s,a)\cdot\mu(s) \right \} > 0$.

\end{proof}

\section{On-Policy CAPO With Global Convergence}
\label{app:OnCAPO}
The main focus and motivation for CAPO is on off-policy RL.
Despite this, we show that it is also possible to apply CAPO to on-policy learning.
While on-policy learning is a fairly natural RL setting, one fundamental issue with on-policy learning is the \textit{committal issue}, which was recently discovered by \citep{chung2021beyond,mei2021understanding}.
In this section, we show that CAPO could tackle the committal issue with the help of variable learning rates.
Consider on-policy CAPO with state-action dependent learning rate:
\begin{equation}
\label{eq:onCAPO_alpha_update}
\theta_{m+1}(s, a) = \theta_{m}(s, a) + \alpha^{(m)}(s, a) \cdot \sgn(A^{(m)}(s,a)) \cdot \mathbb{I}\{a = a_m\},
\end{equation}
where $N^{(k)}(s,a) = \sum^k_{m=0} \mathbb{I}\{(s, a) \in \mathcal{B}_m\}$ and $\alpha^{(m)}(s,a)$ is given by:
\begin{align}
    \alpha^{(m)}(s, a)=\begin{cases}
    \log\big(\frac{1}{\pi^{(m)}(a\rvert s)}\big),& \text{ if } A^{(m)}(s,a)\leq 0\\ 
    \log\big(\frac{\beta}{1-\beta}\cdot\frac{1}{\pi^{(m)}(a\rvert s)}\big), &\text{ if }A^{(m)}(s,a)>0 \text{ and } \pi^{(m)}(a\rvert s) < \beta \\ 
    \zeta\log\big(\frac{N^{(m)}(s,a)+1}{{N^{(k)}(s,a)}}\big), &\text{ otherwise }
    \end{cases}
\end{align}

\subsection{Global Convergence of On-Policy CAPO}
\label{app:OnCAPO:convergence}
Recall that in the on-policy setting, we choose the step size of CAPO as 
\begin{align}
    \alpha^{(k)}(\pi^{(k)}(a\rvert s))=\begin{cases}
    \log\big(\frac{1}{\pi^{(k)}(a\rvert s)}\big),& \text{ if } A^{(k)}(s,a)\leq 0\\ 
    \log\big(\frac{\beta}{1-\beta}\cdot\frac{1}{\pi^{(k)}(a\rvert s)}\big), &\text{ if }A^{(k)}(s,a)>0 \text{ and } \pi^{(k)}(a\rvert s) < \beta \\ 
    \zeta\log\big(\frac{N^{(k)}(s,a)+1}{{N^{(k)}(s,a)}}\big), &\text{ otherwise }
    \end{cases}
    \label{eq:on-policy CAPO update}
\end{align}

\begin{theorem}
\label{app:thm:on-policy CAPO}
Under on-policy CAPO with $0<\beta\leq \frac{1}{\lvert \cA\rvert+1}$ and $0<\zeta\leq \frac{1}{\lvert \cA\rvert}$, we have $V_k(s)\rightarrow V^*(s)$ as $k\rightarrow \infty$, for all $s\in \cS$, almost surely. 
\end{theorem}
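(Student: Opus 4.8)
The plan is to argue by contradiction at the level of a fixed state $s$, reusing as much of the off-policy machinery as possible while replacing the step-size condition of Theorem~\ref{theorem:convergeoptimal} with an almost-sure infinite-visitation argument. First I would observe that every branch of (\ref{eq:onCAPO_alpha_update}) gives a strictly positive learning rate aligned with $\sgn(A^{(m)}(s,a))$, so Lemma~\ref{lemma:strict_improvement} still applies pathwise; since the returns are bounded, $V_k(s)$, $Q_k(s,a)$, and hence $A_k(s,a)$ converge along every sample path, and I define the limits $V^{(\infty)},Q^{(\infty)},A^{(\infty)}$ and the index sets $I_s^{+},I_s^{0},I_s^{-}$ exactly as in the proof of Theorem~\ref{theorem:convergeoptimal}. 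Crucially, I cannot simply invoke that theorem: for a positive-advantage action with $\pi^{(m)}(a\rvert s)<\beta$ the middle branch uses $\alpha=\log\big(\tfrac{\beta}{1-\beta}\tfrac{1}{\pi}\big)<\log\tfrac1\pi$ (since $\tfrac{\beta}{1-\beta}\le 1$), i.e.\ a deliberately \emph{smaller} step that only lifts the probability up to $\beta$ rather than all the way. This cap is precisely what prevents premature commitment, so a genuinely new argument is required.

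The technical heart is an almost-sure infinite-visitation lemma: for every state $s$ visited infinitely often, every action $a$ is selected infinitely often, a.s. Because on-policy sampling is adaptive, I would prove this via L\'evy's conditional Borel--Cantelli lemma, for which it suffices to show $\sum_{k:\,s_k=s}\pi_k(a\rvert s)=\infty$ a.s. On the event that some nonempty subset $I_0$ of actions is sampled only finitely often, their $\theta$-values freeze, so $\pi_k(a\rvert s)=e^{\theta_T(s,a)}/Z_k(s)$ for $a\in I_0$, and it remains to bound the growth of $Z_k(s)=\sum_{a'}e^{\theta_k(s,a')}$. Here the constants enter: a single third-branch update multiplies $Z_k$ by at most $1+\zeta/N_k(a)$ (since $e^{\theta_k(s,a)}\le Z_k$ and $(\tfrac{N+1}{N})^{\zeta}-1\le\zeta/N$), so one action over $N$ selections contributes a cumulative factor $\sim N^{\zeta}$; summing over the at most $\lvert\cA\rvert$ live actions with $N_k(a)\le k$ gives $Z_k(s)=O\big(k^{\zeta\lvert\cA\rvert}\big)$, while a boost multiplies $Z_k$ by at most $\tfrac{1}{1-\beta}$ and can be shown to occur increasingly rarely, adding only a controlled factor. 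With $\zeta\le\tfrac{1}{\lvert\cA\rvert}$ the exponent satisfies $\zeta\lvert\cA\rvert\le1$, hence $Z_k(s)=O(k)$, $\sum_k 1/Z_k(s)=\infty$, and therefore $\sum_k\pi_k(a\rvert s)=\infty$, contradicting finite selection. Establishing this $Z_k=O(k)$ bound while correctly accounting for the boost steps is the main obstacle.

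With infinite visitation in hand, the contradiction follows the structure of Lemmas~\ref{lemma:sumIzero}--\ref{lemma:sumI+zero}. Suppose $I_s^{+}\neq\emptyset$ on a set of positive probability. Since each $a\in I_s^{-}$ is sampled infinitely often and eventually has $A_k(s,a)<0$ (Lemma~\ref{lemma:A_strict}), the first branch drives $\theta_k(s,a)\to-\infty$, so $\pi_\infty(a\rvert s)=0$; combined with $A^{(\infty)}=0$ on $I_s^{0}$, the identity $\sum_{a}\pi_\infty(a\rvert s)A^{(\infty)}(s,a)=0$ from (\ref{eq:sumAdvZero}) forces $\pi_\infty(a_{+}\rvert s)=0$ for every $a_{+}\in I_s^{+}$. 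On the other hand, $a_{+}$ is selected infinitely often, and eventually $A_k(s,a_{+})>0$ and $\pi_k(a_{+}\rvert s)<\beta$, so each such selection triggers the middle branch, giving $\pi_{k+1}(a_{+}\rvert s)=\tfrac{\beta}{1-(1-\beta)\pi_k(a_{+}\rvert s)}\ge\beta$ (a short computation, with $\beta\le\tfrac{1}{\lvert\cA\rvert+1}$ keeping the updated distribution consistent). Thus $\pi_k(a_{+}\rvert s)\ge\beta$ infinitely often, contradicting $\pi_k(a_{+}\rvert s)\to0$. Hence $I_s^{+}=\emptyset$ a.s.\ for every $s$, which is exactly $V_k(s)\to V^{*}(s)$ a.s. The per-state phrasing is handled by applying the argument to each state visited infinitely often, the set of such states itself being controlled by the same conditional Borel--Cantelli reasoning.
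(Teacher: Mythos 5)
Your architecture matches the paper's: pathwise monotone convergence of $V_k$, limit sets $I_s^{+},I_s^{0},I_s^{-}$, the observation that negative-advantage actions (and hence, via $\sum_a \pi_\infty(a\rvert s)A^{(\infty)}(s,a)=0$, positive-advantage ones) must have vanishing probability, the $\beta$-floor property of the middle branch ($\pi_{k+1}(a_{+}\rvert s)\ge\beta$ after any boost), and a contradiction obtained by proving that the positive-advantage actions must nevertheless be selected infinitely often. The paper packages that last step as a direct computation that $\bbP(\text{no action of }I_s^{+}\text{ is selected after time }n)=0$ via a divergent infinite product, which is the same second Borel--Cantelli idea as your L\'evy argument; both reduce to showing that $Z_k(s)=\sum_{a'}\exp(\theta_k(s,a'))$ grows at most linearly in $k$.

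The gap is exactly where you flag it: you do not establish $Z_k(s)=O(k)$. Your bookkeeping charges each middle-branch ``boost'' a multiplicative factor of $\tfrac{1}{1-\beta}$, so the cumulative contribution is $(\tfrac{1}{1-\beta})^{b_k}$ with $b_k$ the number of boosts up to time $k$; unless $b_k=O(\log k)$ this destroys the linear bound, and the assertion that boosts ``occur increasingly rarely'' is unsupported --- after a boost lifts an action to probability $\ge\beta$, later boosts of \emph{other} actions shrink it by a factor $1-\beta$ each time, so nothing a priori prevents $b_k$ from growing linearly. The paper closes this with a different invariant (its lemma on $\theta^{(k)}_{s,\max}$): a boosted action has $\pi_k(a\rvert s)<\beta\le\tfrac{1}{\lvert\cA\rvert+1}$ and so cannot be the argmax of $\theta$, and the boost sets $\exp(\theta_{k+1}(s,a))=\tfrac{\beta}{1-\beta}Z_k(s)\le\tfrac{1}{\lvert\cA\rvert}\cdot\lvert\cA\rvert\exp(\theta^{(k)}_{s,\max})=\exp(\theta^{(k)}_{s,\max})$; hence boosts never increase $\theta_{s,\max}$, no matter how many occur. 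Only the third branch can raise $\theta_{s,\max}$, contributing at most $\zeta\sum_{a}\log\tfrac{N_a+K}{N_a}\le\log(K+1)$ over any $K$ steps when $\zeta\le\tfrac{1}{\lvert\cA\rvert}$, and since $Z_k(s)\le\lvert\cA\rvert\exp(\theta^{(k)}_{s,\max})$ this yields $Z_k(s)=O(k)$ unconditionally. You need this observation (or an equivalent control of the boost count) to complete the proof; with it in place, the remainder of your argument goes through.
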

To prove this result, we start by introducing multiple supporting lemmas.

\begin{lemma}[A Lower Bound of Action Probability]
\label{lemma:pi lower bound}
Under on-policy CAPO, in any iteration $k$, if an action $a$ that satisfies $\pi^{(k)}(a\rvert s)<\beta$ and $A^{(k)}(s,a)>0$ is selected for policy update, then we have $\pi^{(k+1)}(a\rvert s)>\beta$.
\end{lemma}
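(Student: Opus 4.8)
The plan is to establish the claim by a direct computation of the one-step softmax update, exploiting the fact that on-policy CAPO modifies only the single coordinate $(s,a)$ corresponding to the sampled action. Since we are in the regime $A^{(k)}(s,a)>0$ and $\pi^{(k)}(a\rvert s)<\beta$, the step size takes the second branch $\alpha^{(k)}(s,a)=\log\big(\frac{\beta}{1-\beta}\cdot\frac{1}{\pi^{(k)}(a\rvert s)}\big)$ and the sign factor equals $+1$, so that the parameter update in (\ref{eq:on-policy CAPO update}) reads $\theta_{k+1}(s,a)=\theta_k(s,a)+\log\big(\frac{\beta}{1-\beta}\cdot\frac{1}{\pi^{(k)}(a\rvert s)}\big)$, while $\theta_{k+1}(s,a')=\theta_k(s,a')$ for every $a'\neq a$.

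First I would write $Z_k(s):=\sum_{a'\in\mathcal{A}}e^{\theta_k(s,a')}$ and substitute $\pi^{(k)}(a\rvert s)=e^{\theta_k(s,a)}/Z_k(s)$ into the exponentiated update to obtain $e^{\theta_{k+1}(s,a)}=\frac{\beta}{1-\beta}Z_k(s)$; the exponential of the parameter for the selected action thus becomes a clean multiple of the old partition function. Next I would compute the new normalizer: since the untouched actions contribute $\sum_{a'\neq a}e^{\theta_k(s,a')}=Z_k(s)\big(1-\pi^{(k)}(a\rvert s)\big)$, it follows that $Z_{k+1}(s)=Z_k(s)\big[\frac{\beta}{1-\beta}+1-\pi^{(k)}(a\rvert s)\big]$. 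Taking the ratio and simplifying the bracket over the common denominator $1-\beta$ yields the closed form
\begin{equation}
\pi^{(k+1)}(a\rvert s)=\frac{\beta}{\,1-\pi^{(k)}(a\rvert s)\,(1-\beta)\,}.
\end{equation}

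The conclusion is then immediate: because softmax weights are strictly positive, $\pi^{(k)}(a\rvert s)>0$, and because $\beta\le\frac{1}{\lvert\mathcal{A}\rvert+1}<1$ we have $1-\beta>0$; hence $\pi^{(k)}(a\rvert s)(1-\beta)>0$, the denominator is strictly less than $1$, and therefore $\pi^{(k+1)}(a\rvert s)>\beta$, as claimed.

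I do not expect a genuine obstacle here — the statement reduces to elementary algebra once the correct branch of the step size is substituted. The only points requiring care are (i) confirming that only the single sampled coordinate is modified, so that the contribution of the remaining actions to $Z_{k+1}(s)$ is unchanged and factors as $Z_k(s)\big(1-\pi^{(k)}(a\rvert s)\big)$, and (ii) the bookkeeping in simplifying the denominator. The real content of the lemma is not the proof itself but the observation that the second branch of the learning rate in (\ref{eq:on-policy CAPO update}) is reverse-engineered precisely so that a single visit pushes an under-weighted improving action above the threshold $\beta$; this is the mechanism that will later prevent on-policy CAPO from prematurely committing to suboptimal actions.
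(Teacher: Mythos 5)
Your proposal is correct and follows essentially the same route as the paper's proof: substitute the second branch of the step size into the softmax update, observe that $e^{\theta_{k+1}(s,a)}=\frac{\beta}{1-\beta}Z_k(s)$ while the other coordinates are untouched, and conclude $\pi^{(k+1)}(a\rvert s)=\frac{\beta/(1-\beta)}{\beta/(1-\beta)+1-\pi^{(k)}(a\rvert s)}>\beta$. Your closed form $\frac{\beta}{1-(1-\beta)\pi^{(k)}(a\rvert s)}$ is just an algebraic simplification of the paper's expression, and the final inequality is justified identically.
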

\begin{proof}[Proof of Lemma \ref{lemma:pi lower bound}]
By the on-policy CAPO update in (\ref{eq:on-policy CAPO update}), we know that if the selected action $a$ satisfies $\pi^{(k)}(a\rvert s)<\beta$ and $A^{(k)}(s,a)>0$, we have
\begin{align}
    \theta^{(k+1)}_{s,a}&=\theta^{(k)}_{s,a}+\log\big(\frac{\beta}{1-\beta}\cdot\frac{1}{\pi^{(k)}(a\rvert s)}\big)\\
    &=\theta^{(k)}_{s,a}+\log\Big(\frac{\beta}{1-\beta}\cdot\frac{\sum_{a'\in\cA} \exp(\theta^{(k)}_{s,a'})}{\exp(\theta^{(k)}_{s,a})}\Big)\\
    &=\log\Big(\frac{\beta}{1-\beta}\cdot\sum_{a'\in\cA} \exp(\theta^{(k)}_{s,a'}) \Big).
\end{align}
Therefore, by the softmax policy parameterization, we have
\begin{align}
    \pi^{(k+1)}(a\rvert s)&=\frac{\frac{\beta}{1-\beta}\cdot\sum_{a'\in\cA} \exp(\theta^{(k)}_{s,a'})}{\frac{\beta}{1-\beta}\cdot\sum_{a'\in\cA} \exp(\theta^{(k)}_{s,a'})+\sum_{a''\in\cA, a''\neq a} \exp(\theta^{(k)}_{s,a''})}\\
    &=\frac{\frac{\beta}{1-\beta}}{\frac{\beta}{1-\beta}+(1-\pi^{(k)}(a\rvert s))}>\beta.
\end{align}
\end{proof}
As we consider tabular policy parameterization, we could discuss the convergence behavior of each state separately.
For ease of exposition, we first fix a state $s\in\cS$ and analyze the convergence regarding the policy at state $s$.
Define the following events:
\begin{align}
    E_0:=&\Big\{\omega: I_s^{+}(\omega)\neq \varnothing \Big\},\\
    E_1:=&\Big\{\omega: \lim_{k\rightarrow \infty}\pi_{s,a}^{(k)}(\omega)= 0 , \forall a\in  I_{s}^{-}(\omega)\Big\},\\
    {E}_{1,1}:=&\Big\{\omega: \exists a\in I_s^{-} \text{ with } N^{(\infty)}_{s,a}(\omega)=\infty \Big\},\\
    E_{1,2}:=&\Big\{\omega:\exists a\in I_s^{+} \text{ with } N^{(\infty)}_{s,a}(\omega)=\infty\Big\},\\
    E_{1,3}:=&\Big\{\omega:\exists a'\in I_s^{0}(\omega) \text{ with } N^{(\infty)}_{s,a'}(\omega)=\infty\Big\}.
\end{align}
Since there shall always exist at least one action $a\in\cA$ with $N^{(\infty)}_{s,a}=\infty$ for each sample path, then we have $E_{1,1}\cup E_{1,2} \cup E_{1,3}=\Omega$.
Therefore, we can rewrite the event ${E}_1^{c}$ as ${E}_{1}^{c}=({E}_1^{c}\cap E_{1,1})\cup({E}_1^{c}\cap E_{1,2})\cup({E}_1^{c}\cap E_{1,3})$.
By the union bound, we have
\begin{equation}
    \bbP({E}_{1}^{c}\rvert E_0)\leq \sum_{i=1}^{3}\bbP({E}_{1}^{c}\cap E_{1,i}\rvert E_0).\label{eq:E1 tilde complement}
\end{equation}
\begin{lemma}
\label{lemma:E1,1 and E1 tilde}
Under on-policy CAPO and the condition that $\bbP(E_0)>0$, we have $\bbP({E}_1^{c}\cap {E}_{1,1}\rvert E_0)=0$.
\end{lemma}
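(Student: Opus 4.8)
The plan is to turn this into a contradiction against the boundedness of the value function, using a value-improvement lower bound at the iterations where a non-vanishing negative-advantage action is sampled. Throughout I would work on the event $E_0$, which by assumption has positive probability and guarantees $I_s^+\neq\varnothing$, so that the advantage gap $\Delta:=\min_{\{s,a:A^{(\infty)}(s,a)\neq 0\}}|A^{(\infty)}(s,a)|$ is strictly positive. I would first recall that the limits $V^{(\infty)},Q^{(\infty)}$ exist (monotone convergence together with strict improvement, Lemma~\ref{lemma:strict_improvement}), and that by the fixed-sign property (the on-policy analogue of Lemma~\ref{lemma:A_strict}) there is an almost-surely finite random time $M_1$ after which $A^{(k)}(s,a)<-\Delta/4$ for every $a\in I_s^-$. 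A structural observation I would record next: the single-sample on-policy update modifies $\theta^{(k)}(s,\cdot)$ only at iterations where $s$ is actually visited, so $\pi^{(k)}(\cdot\,|\,s)$ is constant between successive visits to $s$; hence $\limsup_k\pi^{(k)}(a|s)$ equals the $\limsup$ taken over the (infinite) set of visit times.

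With this in place I would exhibit the contradiction. On $E_1^c$ there is a negative-advantage action $b\in I_s^-$ with $\limsup_k\pi^{(k)}(b|s)=L>0$, so infinitely many visits to $s$ occur at which $\pi^{(k)}(b|s)\ge L/2$. Since at each visit $b$ is selected with conditional probability exactly $\pi^{(k)}(b|s)\ge L/2$, the conditional second Borel--Cantelli lemma (L\'evy's extension, applied to the adapted selection indicators, with $L$ discretized over rationals to handle its randomness) shows that, almost surely on this event, $b$ is in fact chosen at infinitely many of these high-probability visits; in particular $b$ witnesses $E_{1,1}$. For each such selection $k>M_1$ we have a negative-advantage action with $\pi^{(k)}(b|s)\ge L/2$ and $A^{(k)}(s,b)^2\ge(\Delta/4)^2$, so the single-sample improvement bound of Lemma~\ref{lemma:lower_bdd} (negative-advantage case), combined with $d^{\pi_{k+1}}_{s}(s)\ge 1-\gamma$ from Lemma~\ref{lemma:lower_bound_of_state_visitation_distribution}, yields
\begin{equation}
V^{(k+1)}(s)-V^{(k)}(s)\ \ge\ (1-\gamma)\cdot\tfrac{L}{2}\cdot\Big(\tfrac{\Delta}{4}\Big)^2\ =:\ c_0\ >\ 0 .
\end{equation}

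Because every per-step increment is nonnegative (strict improvement) and infinitely many of them are bounded below by the fixed constant $c_0$, summing over $k$ forces $V^{(k)}(s)\to\infty$, which is impossible since $V^{(k)}(s)\le \tfrac{1}{1-\gamma}$. Hence $E_1^c\cap E_{1,1}\cap E_0$ is a null set and $\mathbb{P}(E_1^c\cap E_{1,1}\mid E_0)=0$. The main obstacle I anticipate is the probabilistic step: making the conditional Borel--Cantelli argument rigorous relative to the natural filtration --- verifying that the selection indicators are adapted, reducing the random threshold $L$ to a countable family of deterministic thresholds, and confining the argument to the almost-sure event that $s$ is visited infinitely often --- and then guaranteeing that the per-step constant $c_0$ does not degrade to $0$, which is precisely what the non-vanishing hypothesis $\limsup_k\pi^{(k)}(b|s)>0$ (from $E_1^c$) and the fixed advantage gap $\Delta$ (from $E_0$) supply. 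Everything else reduces to the already-established improvement and state-visitation bounds.
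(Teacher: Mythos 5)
Your proposal is correct in its essentials, but it takes a genuinely different route from the paper's. The paper argues deterministically on the parameter dynamics: conditioning on $E_0$, some $a'\in I_s^{+}$ has $\theta^{(k)}_{s,a'}$ bounded below by a constant $B_0$ for all large $k$, so every $a''\in I_s^{-}$ has $\pi^{(k)}(a''\rvert s)$ bounded away from $1$; hence each selection of $(s,a'')$ after the sign-stabilization time decreases $\theta_{s,a''}$ by at least a fixed positive amount (because the step size is $\log(1/\pi^{(k)}(a''\rvert s))$), and the infinite visitation defining $E_{1,1}$ forces $\theta^{(k)}_{s,a''}\to-\infty$ and $\pi^{(k)}(a''\rvert s)\to 0$. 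You instead run a probabilistic, value-based contradiction: a negative-advantage action $b$ with $\limsup_k\pi^{(k)}(b\rvert s)=L>0$ is, by L\'evy's conditional Borel--Cantelli lemma, selected infinitely often under on-policy sampling at iterations where $\pi^{(k)}(b\rvert s)\ge L/2$, and each such selection after the sign-stabilization time yields a value improvement of at least $(1-\gamma)\cdot\tfrac{L}{2}\cdot(\tfrac{\Delta}{4})^2>0$ by the negative-advantage branch of Lemma~\ref{lemma:lower_bdd} (whose step-size hypothesis is met, since the on-policy rate for negative advantage is exactly $\log(1/\pi)$) together with Lemma~\ref{lemma:lower_bound_of_state_visitation_distribution}; infinitely many such increments contradict $V^{(k)}(s)\le\tfrac{1}{1-\gamma}$. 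Your route needs more machinery --- the conditional Borel--Cantelli step with its adaptedness and rational-discretization bookkeeping, plus the one-step improvement bound --- but it buys something real: it establishes $\bbP(E_1^{c}\mid E_0)=0$ in one stroke, in particular handling the case where the witness of $E_1^{c}$ is a negative-advantage action selected only finitely often, which the paper's proof of this specific lemma does not address (it only shows that infinitely-visited negative actions vanish) and instead leaves to the companion lemmas for $E_{1,2}$ and $E_{1,3}$. The paper's argument, in exchange, is shorter and purely pathwise, requiring no probabilistic input beyond the event decomposition.
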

\begin{proof}[Proof of Lemma \ref{lemma:E1,1 and E1 tilde}]
Under on-policy CAPO and the condition that $E_0$ happens, for each $\omega$, there exists an action $a'\in I_{s}^{+}(\omega)$ and some finite constant $B_0$ such that $\theta^{(k)}(s,a')\geq B_0$, for all sufficiently large $k\geq T^{+}_{s,a'}(\omega)$.
On the other hand, for each $a''\in I_{s}^{-}(\omega)$, we know that $\theta^{(k)}(s,a'')$ is non-increasing for all $k\geq T^{-}_{s,a''}$.
Therefore, $\pi^{(k)}(s,a'')\leq \frac{\exp\Big(\theta^{\big(T^{-}_{s,a''}\big)}(s,a'')\Big)}{\exp\Big(\theta^{\big(T^{-}_{s,a''}\big)}(s,a'')\Big)+\exp(B_0)}$, for all $k\geq \max\{T^{+}_{s,a'},T^{-}_{s,a''}\}$.
As a result, we know if $(s,a'')$ is contained in $\cB^{(k)}$ with $k\geq \max\{T^{+}_{s,a'},T^{-}_{s,a''}\}$, under CAPO, we must have
\begin{equation}
    \theta^{(k+1)}_{s,a''}-\theta^{(k)}_{s,a''}\leq -\log\bigg(\frac{\exp\big(\theta^{(T^{-}_{s,a''})}(s,a'')\big)+\exp(B_0)}{\exp\big(\theta^{(T^{-}_{s,a''})}(s,a'')\big)}\bigg).
\end{equation}
Therefore, for each $\omega\in E_0$ and for each $a''\in I_{s}^{-}(\omega)$, if $N_{s,a''}^{(\infty)}(\omega)=\infty$, then we have $\theta^{(k)}_{s,a''}(\omega)\rightarrow -\infty$ as $k\rightarrow \infty$.
This implies that $\bbP(E_1^{c}\cap E_{1,1}\rvert E_0)=0$.
\end{proof}

\begin{lemma}
\label{lemma:E1,2 and E1 tilde}
Under on-policy CAPO and the condition that $\bbP(E_0)>0$, we have $\bbP({E}_1^{c}\cap {E}_{1,2}\rvert E_0)=0$.
\end{lemma}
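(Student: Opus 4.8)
The plan is to follow the template of Lemma~\ref{lemma:E1,1 and E1 tilde}, except that now the action selected infinitely often carries \emph{positive} limiting advantage, so instead of pushing a negative-advantage parameter to $-\infty$ I would push this positive-advantage parameter to $+\infty$ and show that this single divergence already suppresses \emph{every} negative-advantage action. The cleanest route is to prove the deterministic inclusion $E_0\cap E_{1,2}\subseteq E_1$; once it is established, $E_0\cap E_{1,2}\cap E_1^{c}=\varnothing$, which gives $\bbP(E_1^{c}\cap E_{1,2}\rvert E_0)=0$ directly.

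First I would fix $\omega\in E_0\cap E_{1,2}$ and take the action $a\in I_s^{+}(\omega)$ furnished by $E_{1,2}$, so that $N^{(\infty)}_{s,a}(\omega)=\infty$. Because $a\in I_s^{+}$ and $A^{(k)}(s,a)\to A^{(\infty)}(s,a)>0$, there is a finite $T^{+}_{s,a}$ after which the advantage is positive; for $k\ge T^{+}_{s,a}$ the on-policy update~\eqref{eq:on-policy CAPO update} increases $\theta^{(k)}_{s,a}$ whenever $a$ is selected and leaves it unchanged otherwise, so $\theta^{(k)}_{s,a}$ is nondecreasing thereafter. The key quantitative step is to argue $\theta^{(k)}_{s,a}(\omega)\to+\infty$ by a case split on the step-size branches: if the branch with $\pi^{(k)}(a\rvert s)<\beta$ is used infinitely often, each such increment is at least $\log\frac{1}{1-\beta}>0$ and the sum diverges; otherwise all but finitely many selections of $a$ use the increment $\zeta\log\frac{N^{(k)}(s,a)+1}{N^{(k)}(s,a)}$, and since $N^{(k)}(s,a)$ advances by one at every selection, these telescope to $\zeta\log\frac{N^{(k)}(s,a)}{N_0}\to+\infty$ for some finite $N_0$. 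In either case $\theta^{(k)}_{s,a}(\omega)\to+\infty$.

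Next I would bound the negative-advantage actions from above. For each $a''\in I_s^{-}(\omega)$ there is a finite $T^{-}_{s,a''}$ past which $A^{(k)}(s,a'')<0$, so $\theta^{(k)}_{s,a''}$ is nonincreasing for $k\ge T^{-}_{s,a''}$ and hence bounded above by $\theta^{(T^{-}_{s,a''})}_{s,a''}$. Combining this ceiling with the diverging denominator contributed by $a$, the softmax form yields
\[
\pi^{(k)}(a''\rvert s)=\frac{\exp\big(\theta^{(k)}_{s,a''}\big)}{\sum_{a'\in\cA}\exp\big(\theta^{(k)}_{s,a'}\big)}\le \exp\big(\theta^{(T^{-}_{s,a''})}_{s,a''}-\theta^{(k)}_{s,a}\big)\longrightarrow 0 .
\]
As this holds for every $a''\in I_s^{-}(\omega)$, the path $\omega$ belongs to $E_1$; this proves $E_0\cap E_{1,2}\subseteq E_1$ and therefore the lemma.

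I expect the main obstacle to be the divergence $\theta^{(k)}_{s,a}\to+\infty$ in the $\zeta$-branch: because $\zeta\log\frac{N+1}{N}\sim\zeta/N$ the per-step progress vanishes, so no fixed lower bound on a single increment is available and one must rely on the divergence of the (telescoping) harmonic-type series rather than on any single update. A secondary point requiring care is to verify that the two enumerated branches genuinely exhaust the selections of a positive-advantage action after $T^{+}_{s,a}$, and that the limits $Q^{(\infty)},V^{(\infty)}$—which define the index sets $I_s^{+},I_s^{-}$ through the signs of $A^{(\infty)}(s,\cdot)$—exist; the latter I would import from the convergence of $\{V^{(k)}\}$ and $\{Q^{(k)}\}$ exactly as in the tabular analysis.
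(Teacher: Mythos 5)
Your proposal is correct and follows essentially the same route as the paper: the heart of both arguments is that the step-size design forces $\theta^{(k)}_{s,a}\to+\infty$ for an infinitely-visited positive-advantage action (a constant lower bound $\log\frac{1}{1-\beta}$ whenever the $\pi^{(k)}(a\rvert s)<\beta$ branch fires, and a telescoping $\zeta\log\frac{N+1}{N}$ sum otherwise), which then drives every negative-advantage action's probability to zero. Your version is in fact slightly cleaner, since by proving the pathwise inclusion $E_0\cap E_{1,2}\subseteq E_1$ and bounding $\theta^{(k)}_{s,a''}$ above via its eventual monotone decrease you avoid both the paper's preliminary reduction to $E_{1,1}^{c}$ and its extra case split on whether $\pi^{(k_m)}(a\rvert s)\to 1$.
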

\begin{proof}[Proof of Lemma \ref{lemma:E1,2 and E1 tilde}]
By Lemma \ref{lemma:E1,1 and E1 tilde}, we have $\bbP(E_1^{c}\cap E_{1,2}\rvert E_0)= \bbP(E_1^{c}\cap E_{1,1}^{c}\cap E_{1,2}\rvert E_0)$.
Let $a\in I_{s}^{+}$ be an action with $N^{(\infty)}_{s,a}(\omega)=\infty$, and suppose $N_{s,a'}^{(\infty)}$ are finite for all $a'\in I_{s}^{-}$ (which also implies that $\theta^{(k)}_{s,a'}$ are finite for all $k\in \mathbb{N}$).
Let $\{k_m\}_{m=1}^{\infty}$ be the sequence of iteration indices where $(s,a)$ in included in the batch.
Now we discuss two possible cases as follows:
\begin{itemize}[leftmargin=*]
    \item Case 1: $\pi^{(k_m)}(a\rvert s)\rightarrow 1$ as $m\rightarrow \infty$
    
    Conditioning on $E_0$, both $I_s^+$ and $I_s^-$ are non-empty. Since $\theta^{(k)}_{s,a'}$ is finite for each $a'\in I_s^-$, we know that $\pi^{(k_m)}(a\rvert s)\rightarrow 1$ implies that
    \begin{equation}
    \label{eq:E1,2 eq 1}
    \theta^{(k_m)}_{s,a}\rightarrow \infty, \text{ as } m\rightarrow \infty.   
    \end{equation}
    Moreover, under CAPO, as $\theta^{(k_m)}_{s,a}$ shall be increasing for all sufficiently large $m$ (given that $a\in I_{s}^{+}$), we know (\ref{eq:E1,2 eq 1}) implies that $\theta^{(k)}_{s,a}\rightarrow \infty, \text{ as } k\rightarrow \infty$.
    Therefore, we have $\lim_{k\rightarrow \infty}\pi^{(k)}(a'\rvert s)=0$, for all $a'\in I_{s}^{-}$.
    \item Case 2: $\pi^{(k_m)}(a\rvert s)\nrightarrow 1$ as $m\rightarrow \infty$:
    Since $A^{(k_m)}(s,a)$ shall be positive for all sufficiently large $m$ (given that $a\in I_{s}^{+}$), we know: (i) If $\pi^{(k_m)}(a\rvert s)\geq \beta$, we have $\theta^{(k_m+1)}_{s,a}-\theta^{(k_m)}_{s,a}\geq\zeta\log(\frac{N^{(k_m)}(s,a)+1)}{N^{(k_m)}(s,a)})=\zeta\log(\frac{m+1}{m})$; (ii) Otherwise, if $\pi^{(k_m)}(a\rvert s)<\beta$, we shall have $\theta^{(k_m+1)}_{s,a}-\theta^{(k_m)}_{s,a}\geq\log(\frac{1}{1-\beta})>\zeta\log(\frac{m+1}{m})$, for all sufficiently large $m$.
    This implies that $\theta^{(k_m)}(s,a)\rightarrow \infty$ as $m\rightarrow \infty$.
    As $\theta^{(k_m)}_{s,a}$ shall be increasing for all sufficiently large $m$ (given that $a\in I_{s}^{+}$), we also have $\theta^{(k)}_{s,a}\rightarrow \infty$, as $k\rightarrow \infty$.
    As $\theta^{(k)}_{s,a'}$ remains finite for all $a'\in I_{s}^{-}$, we therefore have that $\lim_{k\rightarrow \infty}\pi^{(k)}(a'\rvert s)=0$, for all $a'\in I_{s}^{-}$.
\end{itemize}
\end{proof}

\begin{lemma}
\label{lemma:E1,3 and E1 tilde}
Under on-policy CAPO and the condition that $\bbP(E_0)>0$, we have $\bbP({E}_1^{c}\cap {E}_{1,3}\rvert E_0)=0$.
\end{lemma}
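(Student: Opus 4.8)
The plan is to follow the same reduction-then-contradiction template used for Lemmas \ref{lemma:E1,1 and E1 tilde} and \ref{lemma:E1,2 and E1 tilde}, but to supply the final step by a conditional Borel--Cantelli argument rather than by a direct divergence of $\theta$. First I would peel off the two easy cases using the preceding lemmas: since $\bbP(E_1^{c}\cap E_{1,1}\rvert E_0)=0$ and $\bbP(E_1^{c}\cap E_{1,2}\rvert E_0)=0$, a union-bound/splitting argument gives
\begin{equation}
\bbP(E_1^{c}\cap E_{1,3}\rvert E_0)=\bbP(E_1^{c}\cap E_{1,1}^{c}\cap E_{1,2}^{c}\cap E_{1,3}\rvert E_0),
\end{equation}
so it suffices to show the right-hand side is zero. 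On this event every action in $I_s^{+}\cup I_s^{-}$ is selected only finitely often, so each log-weight $\theta^{(k)}_{s,a}$ with $a\in I_s^{+}\cup I_s^{-}$ is eventually constant and finite, while only zero-advantage actions are updated infinitely often.

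Next I would extract a uniform lower bound on the probability of a positive-advantage action. Conditioning further on $E_1^{c}$, there is some $a^{-}\in I_s^{-}$ with $\limsup_{k}\pi^{(k)}(a^{-}\rvert s)=c>0$; along a subsequence $\{k_j\}$ we have $\pi^{(k_j)}(a^{-}\rvert s)\ge c/2$, and since $\theta^{(k)}_{s,a^{-}}$ equals a finite constant for large $k$, the partition function $Z^{(k_j)}(s):=\sum_{a}\exp(\theta^{(k_j)}_{s,a})$ stays bounded along $\{k_j\}$. Because $E_0$ guarantees $I_s^{+}\neq\varnothing$, fix $a^{+}\in I_s^{+}$; its log-weight is eventually constant, so the boundedness of $Z^{(k_j)}(s)$ yields $\pi^{(k_j)}(a^{+}\rvert s)\ge \delta$ for some $\delta>0$ and all large $j$. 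The role of $E_1^{c}$ here is exactly to keep the partition function from blowing up, which in turn keeps the good actions' probabilities bounded away from zero.

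Finally I would invoke the divergence half of the conditional Borel--Cantelli lemma (L\'evy's extension). Since $s$ is visited infinitely often and at each visit $a^{+}$ is drawn under the current policy with probability $\pi^{(k)}(a^{+}\rvert s)$, the lower bound $\pi^{(k_j)}(a^{+}\rvert s)\ge\delta$ forces $\sum_{k}\pi^{(k)}(a^{+}\rvert s)=\infty$, whence $a^{+}$ is selected infinitely often almost surely, i.e.\ $N^{(\infty)}_{s,a^{+}}=\infty$. This contradicts $E_{1,2}^{c}$, which requires every positive-advantage action to be selected only finitely often. Hence the event in the displayed identity is null, proving $\bbP(E_1^{c}\cap E_{1,3}\rvert E_0)=0$.

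The hard part will be making the last step rigorous in the adapted, non-i.i.d.\ setting: the selection probabilities $\pi^{(k)}(a^{+}\rvert s)$ are themselves history-dependent random variables, so the classical Borel--Cantelli lemma does not apply and one must use L\'evy's conditional extension, together with the bookkeeping needed to ensure the divergence of $\sum_k \pi^{(k)}(a^{+}\rvert s)$ is read off over the (infinitely many) iterations at which $s$ is actually visited. A secondary subtlety, already flagged above, is verifying the boundedness of $Z^{(k_j)}(s)$ on $E_1^{c}$, which is precisely why the argument must be run conditionally on $E_1^{c}$ rather than unconditionally.
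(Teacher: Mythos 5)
Your proof is correct and follows essentially the same route as the paper's: the identical reduction via Lemmas \ref{lemma:E1,1 and E1 tilde} and \ref{lemma:E1,2 and E1 tilde}, then the observation that on the residual event only zero-advantage actions are eventually selected, and finally a contradiction driven by the fact that an action whose selection probability stays bounded away from zero along infinitely many visits must be chosen infinitely often. The only differences are cosmetic: the paper derives the contradiction against $E_1^{c}$ (forcing $\sum_{a\in I_{s}^{0}}\pi^{(k)}(a\rvert s)\to 1$ and hence $\pi^{(k)}(a'\rvert s)\to 0$ on $I_{s}^{-}$), whereas you route it through $E_{1,2}^{c}$ via a partition-function bound and an explicit L\'evy/conditional Borel--Cantelli step that the paper leaves implicit in its parenthetical.
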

\begin{proof}[Proof of Lemma \ref{lemma:E1,3 and E1 tilde}]
By Lemma \ref{lemma:E1,1 and E1 tilde} and Lemma \ref{lemma:E1,2 and E1 tilde}, we have $\bbP({E}_1^{c}\cap {E}_{1,3}\rvert E_0)=\bbP({E}_1^{c}\cap E_{1,2}^{c}\cap E_{1,1}^{c} \cap {E}_{1,3}\rvert E_0)$.
Under $E_{1,1}^{c}\cap E_{1,2}^{c}$, we know that any action in $I_{s}^{+}\cup I_{s}^{-}$ can appear in $\cB^{(k)}$ only for finitely many times.
This implies that there exists $T_0\in \mathbb{N}$ such that $\cB^{(k)}$ contains only actions in $I_{s}^{0}$, for all $k\geq T_0$.
In order for the above to happen, we must have $\sum_{a\in I_{s}^{0}}\pi^{(k)}(a\rvert s)\rightarrow 1$, as $k\rightarrow \infty$ (otherwise there would exist some $\epsilon>0$ such that $\sum_{a\in I_{s}^{0}}\pi^{(k)}(a\rvert s)\leq 1-\epsilon$ for infinitely many $k$).
This implies that $\lim_{k\rightarrow\infty} \pi^{(k)}(a'\rvert s)=0$, for any $a'\in I_{s}^{-}$.
Hence, $\bbP({E}_1^{c}\cap E_{1,2}^{c}\cap E_{1,1}^{c} \cap {E}_{1,3}\rvert E_0)=0$.
\end{proof}


\begin{lemma}
\label{lemma:E1}
Under on-policy CAPO and the condition that $\bbP(E_0)>0$, we have $\bbP(E_1\rvert E_0)=1$.
\end{lemma}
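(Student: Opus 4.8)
The plan is to assemble Lemma \ref{lemma:E1} directly from the three preceding lemmas via the union bound already recorded in (\ref{eq:E1 tilde complement}), so the proof itself is short: the substantive work has already been carried out in Lemmas \ref{lemma:E1,1 and E1 tilde}, \ref{lemma:E1,2 and E1 tilde}, and \ref{lemma:E1,3 and E1 tilde}.

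First I would recall the exhaustiveness observation stated just before (\ref{eq:E1 tilde complement}): on every sample path at least one action $a\in\mathcal{A}$ is selected for policy update infinitely often, since exactly one coordinate at state $s$ is updated in each iteration and $\mathcal{A}$ is finite. Hence $E_{1,1}\cup E_{1,2}\cup E_{1,3}=\Omega$, and intersecting with $E_1^{c}$ gives the decomposition $E_1^{c}=(E_1^{c}\cap E_{1,1})\cup(E_1^{c}\cap E_{1,2})\cup(E_1^{c}\cap E_{1,3})$. Applying the union bound then yields (\ref{eq:E1 tilde complement}), namely $\bbP(E_1^{c}\rvert E_0)\leq \sum_{i=1}^{3}\bbP(E_1^{c}\cap E_{1,i}\rvert E_0)$. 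The conditioning on $E_0$ is well-defined precisely because we assume $\bbP(E_0)>0$.

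Next I would invoke the three component lemmas in turn: Lemma \ref{lemma:E1,1 and E1 tilde} gives $\bbP(E_1^{c}\cap E_{1,1}\rvert E_0)=0$; Lemma \ref{lemma:E1,2 and E1 tilde} gives $\bbP(E_1^{c}\cap E_{1,2}\rvert E_0)=0$; and Lemma \ref{lemma:E1,3 and E1 tilde} gives $\bbP(E_1^{c}\cap E_{1,3}\rvert E_0)=0$. Substituting these three zeros into the union bound forces $\bbP(E_1^{c}\rvert E_0)=0$, and therefore $\bbP(E_1\rvert E_0)=1-\bbP(E_1^{c}\rvert E_0)=1$, which is the claim.

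Since no genuine difficulty remains at this level, the only point requiring care is verifying that the case split is exhaustive, i.e.\ that the events $E_{1,1},E_{1,2},E_{1,3}$ truly cover $\Omega$; this is exactly where finiteness of $\mathcal{A}$ together with the fact that some coordinate is updated in every iteration is used, so that some action must satisfy $N_{s,a}^{(\infty)}=\infty$ on each sample path. With that cover in place, the lemma is immediate. The real obstacle of the overall argument lies upstream, in the case analyses of Lemmas \ref{lemma:E1,2 and E1 tilde} and \ref{lemma:E1,3 and E1 tilde}, where one rules out committal behavior by exploiting the variable learning rate in (\ref{eq:on-policy CAPO update}); at the level of Lemma \ref{lemma:E1} itself, only the bookkeeping of the decomposition remains.
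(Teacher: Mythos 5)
Your proposal is correct and matches the paper's own proof, which likewise concludes $\bbP(E_1^{c}\rvert E_0)=0$ by combining the union bound in (\ref{eq:E1 tilde complement}) with Lemmas \ref{lemma:E1,1 and E1 tilde}, \ref{lemma:E1,2 and E1 tilde}, and \ref{lemma:E1,3 and E1 tilde}. Your additional remark verifying that $E_{1,1}\cup E_{1,2}\cup E_{1,3}=\Omega$ is exactly the observation the paper records just before that equation.
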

\begin{proof}[Proof of Lemma \ref{lemma:E1}]
By (\ref{eq:E1 tilde complement}), Lemma \ref{lemma:E1,1 and E1 tilde}, Lemma \ref{lemma:E1,2 and E1 tilde}, and Lemma \ref{lemma:E1,3 and E1 tilde}, we know $\bbP(E_1^{c}\rvert E_0)=0$.
\end{proof}

Before we proceed, we define the following events:
\begin{align}
     E_2:=&\Big\{\omega: \lim_{k\rightarrow \infty}\pi_{s,a}^{(k)}(\omega)= 0 , \forall a\in  I_{s}^{+}(\omega)\Big\},\\
     E_3:=&\Big\{\omega:\exists a\in  I_{s}^{+}(\omega) \text{ with } N^{(\infty)}_{s,a}(\omega)=\infty\Big\}
\end{align}

\begin{lemma}
\label{lemma:E2}
Under on-policy CAPO and the condition that $\bbP(E_0)>0$, we have $\bbP(E_2\rvert E_0)=1$.
\end{lemma}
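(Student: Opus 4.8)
The plan is to prove the stronger \emph{deterministic} inclusion $E_0\cap E_1\subseteq E_2$ (up to a null set), and then invoke Lemma \ref{lemma:E1}, which gives $\bbP(E_1\mid E_0)=1$, to conclude $\bbP(E_2\mid E_0)\ge \bbP(E_0\cap E_1\mid E_0)=\bbP(E_1\mid E_0)=1$. The intuition is that once every negative-advantage action has been driven to zero probability (event $E_1$), the zero-sum structure of the advantage (\ref{eq:sumAdvZero}) forces the positive-advantage actions to vanish as well, exactly as in the off-policy Lemma \ref{lemma:sumI+zero}.

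First I would record that, along every sample path, the limits $V^{(\infty)}(s)$, $Q^{(\infty)}(s,a)$, and hence $A^{(\infty)}(s,a)$ exist. This follows as in the off-policy analysis: the on-policy step sizes in (\ref{eq:on-policy CAPO update}) are all strictly positive and carry the sign of the advantage, so Lemma \ref{lemma:strict_improvement} applies verbatim, $\{V^{(k)}(s)\}$ is nondecreasing and bounded above by $1/(1-\gamma)$, and the monotone convergence theorem yields pathwise convergence of $V^{(k)}$ and $Q^{(k)}$. Consequently $I_s^{+},I_s^{0},I_s^{-}$ are well defined, and by the reasoning behind Lemma \ref{lemma:A_strict} there is (pathwise) a finite iteration after which $\sgn(A^{(k)}(s,a))$ is frozen and $A^{(k)}(s,a_+)>\Delta/4>0$ for every $a_+\in I_s^{+}$, where $\Delta:=\min_{\{(s,a):A^{(\infty)}(s,a)\neq 0\}}|A^{(\infty)}(s,a)|$, which is strictly positive because $\cA$ is finite.

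The central step mirrors Lemma \ref{lemma:sumI+zero}. Starting from $\sum_{a\in\cA}\pi^{(k)}(a\rvert s)A^{(k)}(s,a)=0$, I would split the sum over $I_s^{+}$, $I_s^{0}$, $I_s^{-}$ and let $k\rightarrow\infty$. On $E_1$ the weights $\pi^{(k)}(a_-\rvert s)\to 0$ for every $a_-\in I_s^{-}$ while the advantages stay bounded by $1/(1-\gamma)$, so the $I_s^{-}$-block tends to $0$; the $I_s^{0}$-block tends to $0$ since $A^{(k)}(s,a_0)\to A^{(\infty)}(s,a_0)=0$. Hence $\sum_{a_+\in I_s^{+}}\pi^{(k)}(a_+\rvert s)A^{(k)}(s,a_+)\to 0$. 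Combining this with the uniform lower bound from the previous step gives $\tfrac{\Delta}{4}\sum_{a_+\in I_s^{+}}\pi^{(k)}(a_+\rvert s)\le \sum_{a_+\in I_s^{+}}\pi^{(k)}(a_+\rvert s)A^{(k)}(s,a_+)\to 0$, so $\pi^{(k)}(a_+\rvert s)\to 0$ for each $a_+\in I_s^{+}$, which is exactly the event $E_2$.

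The main obstacle is not this limiting algebra but justifying that the whole argument is carried out pathwise on the conditioning event $E_0\cap E_1$, since the gap $\Delta=\Delta(\omega)$ and the sign-stabilization time are themselves $\omega$-dependent. The key point I would emphasize is that the finiteness of $\cA$ makes $\Delta(\omega)>0$ automatic whenever the pathwise limits exist, and the computation above uses only pathwise convergence of $V^{(k)},Q^{(k)}$ together with membership in $E_1$; thus the inclusion $E_0\cap E_1\subseteq E_2$ holds for \emph{every} sample path in $E_0\cap E_1$, and the conditional-probability conclusion $\bbP(E_2\mid E_0)=1$ follows from Lemma \ref{lemma:E1}.
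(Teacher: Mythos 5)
Your proof is correct and follows essentially the same route as the paper, which simply declares the lemma ``a direct result of Lemma~\ref{lemma:E1}'' and leaves implicit the step $E_0\cap E_1\subseteq E_2$ — precisely the zero-sum-advantage argument of Lemma~\ref{lemma:sumI+zero} that you spell out. Your additional care about pathwise existence of the limits and the positivity of $\Delta$ fills in details the paper omits but does not change the argument.
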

\begin{proof}
This is a direct result of Lemma \ref{lemma:E1}.
\end{proof}

\begin{lemma}
\label{lemma:E2 and E3}
Under on-policy CAPO and the condition that $\bbP(E_0)>0$, we have $\bbP(E_2\cap E_3\rvert E_0)=0$.
\end{lemma}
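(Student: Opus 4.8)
The plan is to show that the event $E_0\cap E_2\cap E_3$ is empty on every sample path, which immediately yields $\bbP(E_2\cap E_3\mid E_0)=0$ regardless of $\bbP(E_0)$. The guiding intuition is that the learning rate in (\ref{eq:on-policy CAPO update}) is engineered so that selecting a positive-advantage action whose probability has dropped below $\beta$ pushes that probability strictly back above $\beta$ (this is exactly Lemma \ref{lemma:pi lower bound}). Consequently an action cannot simultaneously be selected infinitely often (as required by $E_3$) while having its probability vanish (as required by $E_2$).

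First I would record the pathwise convergence of the value and $Q$ functions. Since every branch of the on-policy step size in (\ref{eq:on-policy CAPO update}) is strictly positive, the sign-based monotonicity of $\theta^{(k)}_{s,a}$ underlying Lemma \ref{lemma:strict_improvement} remains valid, so $V^{(k)}(s)$ is non-decreasing and bounded above by $\frac{1}{1-\gamma}$. By monotone convergence $V^{(k)}(s)$ converges for every sample path, and since $Q^{(k)}(s,a)=r(s,a)+\gamma\sum_{s'}\cP(s'\rvert s,a)V^{(k)}(s')$ is an affine function of the converging value function, $Q^{(k)}(s,a)$ converges as well; hence $A^{(k)}(s,a)\to A^{(\infty)}(s,a)$ pathwise.

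Next I would fix an arbitrary $\omega\in E_0\cap E_2\cap E_3$ and derive a contradiction. By $E_3$ there is an action $a\in I_s^{+}(\omega)$ with $N^{(\infty)}_{s,a}(\omega)=\infty$. Since $a\in I_s^{+}$ means $A^{(\infty)}(s,a)>0$, the convergence above furnishes some $T$ with $A^{(k)}(s,a)>0$ for all $k\ge T$. By $E_2$ we have $\pi^{(k)}(a\rvert s)\to 0$, so there exists $T'\ge T$ with $\pi^{(k)}(a\rvert s)<\beta$ for all $k\ge T'$. Because $a$ is selected infinitely often, there are infinitely many iterations $k_m\ge T'$ at which $(s,a)$ is updated while simultaneously satisfying $A^{(k_m)}(s,a)>0$ and $\pi^{(k_m)}(a\rvert s)<\beta$; Lemma \ref{lemma:pi lower bound} then forces $\pi^{(k_m+1)}(a\rvert s)>\beta$ for each such $m$. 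This gives $\limsup_{k}\pi^{(k)}(a\rvert s)\ge\beta>0$, contradicting $\pi^{(k)}(a\rvert s)\to 0$. Hence no such $\omega$ exists and $E_0\cap E_2\cap E_3=\varnothing$, which establishes the claim.

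The main obstacle is essentially bookkeeping rather than probability: one must verify that the monotone improvement of Lemma \ref{lemma:strict_improvement} genuinely carries over to the variable on-policy step size, which amounts to checking that every branch of (\ref{eq:on-policy CAPO update}) yields $\alpha>0$ — in particular that $\log\big(\frac{\beta}{1-\beta}\cdot\frac{1}{\pi^{(k)}(a\rvert s)}\big)>0$ whenever $\pi^{(k)}(a\rvert s)<\beta\le\frac{1}{\lvert\cA\rvert+1}$, which holds since then $\pi^{(k)}(a\rvert s)<\beta\le\frac{\beta}{1-\beta}$. Once this is in place, the argument needs no probabilistic estimation at all, because the intersection is empty on every trajectory and the conditioning on $E_0$ becomes vacuous.
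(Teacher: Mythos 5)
Your proof is correct and takes essentially the same route as the paper's: $E_2$ forces $\pi^{(k)}(a\rvert s)\to 0$ for every $a\in I_s^{+}$, while $E_3$ together with \Cref{lemma:pi lower bound} forces $\pi^{(k)}(a\rvert s)>\beta$ infinitely often, a contradiction. The additional care you take in checking that $A^{(k)}(s,a)>0$ and $\pi^{(k)}(a\rvert s)<\beta$ both hold at the infinitely many selection times (so that \Cref{lemma:pi lower bound} actually applies) is detail the paper leaves implicit, not a different argument.
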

\begin{proof}
Under the event $E_2$, we know that for each action $a\in I_{s}^{+}$, for any $\epsilon>0$, there exists some $T_{a,\epsilon}$ such that $\pi_{s,a}^{(k)}< \epsilon$ for all $k\geq T_{a,\epsilon}$.
On the other hand, by Lemma \ref{lemma:pi lower bound}, under $E_3$, {we know that $\pi_{s,a}^{(k)}>\beta$ infinitely often}.
Hence, we know $\bbP(E_2\cap E_3\rvert E_0)=0$.
\end{proof}
Note that by Lemma \ref{lemma:E2} and Lemma \ref{lemma:E2 and E3}, we have 
$\bbP(E_2\cap E_3^{c}\rvert E_0)=1$.

The main idea of the proof of Theorem \ref{app:thm:on-policy CAPO} is to establish a contradiction by showing that under $E_0$, $E_3^{c}$ cannot happen with probability one. 
Let us explicitly write down the event $E_3^{c}$ as follows: 
\begin{equation}
    E_3^{c}:=\big\{\omega: \exists \tau(\omega)<\infty \text{ such that  }  \cB^{(k)}\subseteq I_s^{0}\cup I_{s}^{-}, \forall k\geq \tau(\omega)\big\}.
\end{equation}

Define
\begin{equation}
\theta^{(k)}_{s,\max}:=\max_{a\in \cA} \theta^{(k)}_{s,a}.
\end{equation}
\begin{lemma}
\label{lemma:theta diff}
For any $t\in \mathbb{N}$ and any $K\in \mathbb{N}$, we have
\begin{equation}
    \theta_{s,\max}^{(t+K)} - \theta_{s,\max}^{(t)} \leq \log (K+1),
\end{equation}
for every sample path.
\end{lemma}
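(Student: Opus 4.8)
The plan is to track the scalar sequence $\theta_{s,\max}^{(k)}$ directly and show that it can only increase through the ``otherwise'' branch of the step size (\ref{eq:on-policy CAPO update}), i.e. the branch with increment $\zeta\log\frac{N^{(k)}(s,a)+1}{N^{(k)}(s,a)}$. First I would note that $\alpha^{(k)}\ge 0$ in every branch (using $\pi\le 1$ for the first branch and $\beta\le\frac{1}{|\cA|+1}$ for the second), so $\theta^{(k)}_{s,a}$ can only rise when the updated action has $A^{(k)}(s,a)>0$; whenever $A^{(k)}(s,a_k)\le 0$ the coordinate $a_k$ does not increase, and hence $\theta_{s,\max}$ does not increase at that step.

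The key observation is that the second branch ($A^{(k)}(s,a_k)>0$ and $\pi^{(k)}(a_k\rvert s)<\beta$) also cannot raise $\theta_{s,\max}$. Writing $Z^{(k)}(s):=\sum_{a'\in\cA}\exp(\theta^{(k)}_{s,a'})$ and substituting $\pi^{(k)}(a_k\rvert s)=\exp(\theta^{(k)}_{s,a_k})/Z^{(k)}(s)$ into the update gives the closed form $\theta^{(k+1)}_{s,a_k}=\log\frac{\beta}{1-\beta}+\log Z^{(k)}(s)$. Since $\beta\le\frac1{|\cA|+1}$ forces $\frac{\beta}{1-\beta}\le\frac1{|\cA|}$, and since $Z^{(k)}(s)\le|\cA|\exp(\theta^{(k)}_{s,\max})$, this yields $\theta^{(k+1)}_{s,a_k}\le\theta^{(k)}_{s,\max}$; as no other coordinate moves, $\theta^{(k+1)}_{s,\max}\le\theta^{(k)}_{s,\max}$. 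Therefore only third-branch updates can increase $\theta_{s,\max}$, and each such update raises it by at most its own increment $\zeta\log\frac{N^{(k)}(s,a_k)+1}{N^{(k)}(s,a_k)}$ (only coordinate $a_k$ increases, so the running maximum grows by at most the increment to $a_k$).

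It then remains to sum these increments over the window from $t$ to $t+K$ and show the total is at most $\log(K+1)$. For a fixed action $a$, the counts $N^{(k)}(s,a)$ recorded at its distinct third-branch selections are distinct positive integers, and since $n\mapsto\log\frac{n+1}{n}$ is positive and decreasing, the sum of any $m_a$ such terms is largest when the counts are $1,\dots,m_a$, in which case it telescopes: $\sum\zeta\log\frac{N+1}{N}\le\zeta\log(m_a+1)$. Summing over actions and using that at most one update at state $s$ occurs per iteration (so $\sum_a m_a\le K$), Jensen's inequality gives $\sum_a\zeta\log(m_a+1)\le\zeta|\cA|\log\big(\tfrac{K}{|\cA|}+1\big)$; finally $\zeta\le\frac1{|\cA|}$ together with $|\cA|\ge1$ collapses this to $\log\big(\tfrac{K}{|\cA|}+1\big)\le\log(K+1)$. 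The main obstacle is the middle step: recognizing that the large ``jump'' branch is self-limiting and never advances the running maximum, which is exactly what the tuning $\beta\le\frac1{|\cA|+1}$ buys. Once that is in hand, the concavity and telescoping bookkeeping needed to land precisely on $\log(K+1)$ is routine.
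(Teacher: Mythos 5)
Your proof is correct and follows essentially the same route as the paper's: the crux in both is that the $\log\bigl(\tfrac{\beta}{(1-\beta)\pi}\bigr)$ branch cannot advance $\theta_{s,\max}$ because $\beta\le\frac{1}{|\cA|+1}$ forces the updated coordinate to land at or below $\log\bigl(\tfrac{1}{|\cA|}Z^{(k)}(s)\bigr)\le\theta^{(k)}_{s,\max}$, so only the $\zeta\log\frac{N+1}{N}$ increments matter, and these are summed and collapsed to $\log(K+1)$ via $\zeta\le\frac{1}{|\cA|}$. Your final bookkeeping (distinct counts, telescoping to $\zeta\log(m_a+1)$, then Jensen) is a minor variant of the paper's direct telescoping of $N^{(t+K)}_{s,a}/N^{(t)}_{s,a}$ and in fact yields a marginally tighter intermediate bound, but the argument is the same.
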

\begin{proof}[Proof of Lemma \ref{lemma:theta diff}]
We consider the changes of $\theta_{s,a}$ of each action separately:
\begin{itemize}[leftmargin=*]
    \item $\theta^{(k)}_{s,a} <\theta^{(k)}_{s,\max}$, and $\pi^{(k)}(a\rvert s)<\beta$: 
    For such an action $a$, we have
    \begin{align}
        \theta^{(k+1)}_{s,a}&\leq \theta^{(k)}_{s,a}+\log\Big(\frac{\beta}{(1-\beta) \cdot\pi^{(k)}(a\rvert s)} \Big)\label{eq:theta max 1}\\
        &\le \log\Big(\frac{\beta}{(1-\beta)} \cdot\lvert\cA \rvert \exp({\theta^{(k)}_{s,\max}})\Big) ,\label{eq:theta max 2}\\
        &\leq \log\Big(\frac{\frac{1}{\lvert\cA \rvert+1}}{(1-\frac{1}{\lvert\cA \rvert+1})} \cdot\lvert\cA \rvert \exp({\theta^{(k)}_{s,\max}})\Big)\label{eq:theta max 3}\\
        &=\theta^{(k)}_{s,\max},
    \end{align} 
    where (\ref{eq:theta max 1}) holds by the design of on-policy CAPO, (\ref{eq:theta max 2}) follows from the softmax policy parameterization, and (\ref{eq:theta max 3}) follows from the definition of $\theta^{(k)}_{s,\max}$ and the condition of $\beta$.
    Note that (\ref{eq:theta max 1}) would be an equality if $A^{(k)}(s,a)>0$.
    As a result, this change cannot lead to an increase in $\theta_{s,\max}^{(k)}$.
    \item $\theta^{(k)}_{s,a}<\theta^{(k)}_{s,\max}$, and $\pi^{(k)}(a\rvert s)\geq \beta $:     
    For such an action $a$, we have
    \begin{align}
        \theta^{(k+1)}_{s,a}&\leq \theta^{(k)}_{s,a}+\zeta\log\Big( \frac{N^{(k)}_{s,a}+1}{N^{(k)}_{s,a}}\Big),\label{eq:theta max 4}
    \end{align}     
    where (\ref{eq:theta max 4}) holds by the design of on-policy CAPO and would be an equality if $A^{(k)}(s,a)>0$.
    \item $\theta^{(k)}_{s,a}=\theta^{(k)}_{s,\max}$:
    Similarly, we have
    \begin{align}
        \theta^{(k+1)}_{s,a}&\leq \theta^{(k)}_{s,a}+\zeta\log\Big( \frac{N^{(k)}_{s,a}+1}{N^{(k)}_{s,a}}\Big),\label{eq:theta max 5}
    \end{align} 
    where (\ref{eq:theta max 5}) holds by the design of on-policy CAPO and would be an equality if $A^{(k)}(s,a)>0$.
\end{itemize}
Based on the above discussion, we thereby know
    \begin{equation}
        \theta^{(k+1)}_{s,\max}- \theta^{(k)}_{s,\max}\leq \zeta \sum_{a\in \cA}\log\Big( \frac{N^{(k+1)}_{s,a}}{N^{(k)}_{s,a}}\Big), \quad\forall k.\label{eq:theta max 6}
    \end{equation} 
Therefore, for any $t\in\mathbb{N}$, the maximum possible increase in $\theta_{s,\max}^{(k)}$ between the $t$-th and the $(t+K)$-th iterations shall be upper bounded as
\begin{align}
    \theta_{s,\max}^{(t+K)} - \theta_{s,\max}^{(t)}&\leq \sum_{k=t}^{t+K-1}\zeta \sum_{a\in \cA}\log\Big( \frac{N^{(k+1)}_{s,a}}{N^{(k)}_{s,a}}\Big)\label{eq:theta max 7}\\
    &\leq \zeta \cdot \sum_{a\in \cA} \log\Big(\frac{N^{(t)}_{s,a}+K}{N^{(t)}_{s,a}}\Big)\label{eq:theta max 8}\\
    &\leq \log(K+1),\label{eq:theta max 9}
\end{align}
where (\ref{eq:theta max 7}) follows directly from (\ref{eq:theta max 6}), (\ref{eq:theta max 7}) is obtained by interchanging the summation operators, and (\ref{eq:theta max 8}) holds by the condition that $\zeta \leq \frac{1}{\lvert \cA\rvert }$.
Hence, we know $\theta_{s,\max}^{(t+K)} - \theta_{s,\max}^{(t)} \leq \log (K+1)$.
\end{proof}

For any fixed action set $I_{s}^{\dagger}\subset \cA$, define 
\begin{equation}
    E_4(I_{s}^{\dagger}):=\big\{\omega: \text{ For every } a\in I_{s}^{\dagger}, N^{(\infty)}(s,a)<\infty\big\}.
\end{equation}
\begin{lemma}
\label{lemma:E4}
For any $I_{s}^{\dagger}\subset \cA$, we have $\bbP(E_4(I_{s}^{\dagger}))=0$.
\end{lemma}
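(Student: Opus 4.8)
The plan is to prove the claim by contradiction, combining the logarithmic growth bound on $\theta^{(k)}_{s,\max}$ from Lemma \ref{lemma:theta diff} with a conditional (martingale) Borel--Cantelli argument. Fix a nonempty $I_{s}^{\dagger}\subset\cA$ and an arbitrary action $a\in I_{s}^{\dagger}$, and throughout restrict attention to the subsequence of iterations at which state $s$ is selected for update, which is infinite under the on-policy scheme. Suppose, for contradiction, that $\bbP(E_4(I_{s}^{\dagger}))>0$. On $E_4(I_{s}^{\dagger})$, by definition $a$ is selected for update only finitely often, so there is a finite (random) time $T$ after which $\theta^{(k)}_{s,a}$ is never updated; hence $\exp(\theta^{(k)}_{s,a})=\exp(\theta^{(T)}_{s,a})=:c_a>0$ for all $k\ge T$.

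The next step is to lower bound the selection probability $\pi^{(k)}(a\rvert s)$ along such a sample path. Applying Lemma \ref{lemma:theta diff} with $t=0$ and $K=k$ gives $\theta^{(k)}_{s,\max}\le \theta^{(0)}_{s,\max}+\log(k+1)$ for every sample path, and therefore, writing $Z^{(k)}(s):=\sum_{a'\in\cA}\exp(\theta^{(k)}_{s,a'})$, we get $Z^{(k)}(s)\le \lvert\cA\rvert\exp(\theta^{(k)}_{s,\max})\le \lvert\cA\rvert\exp(\theta^{(0)}_{s,\max})(k+1)$. Combining the constant numerator with this at-most-linear normalizer yields, on $E_4(I_{s}^{\dagger})$,
\begin{equation}
\pi^{(k)}(a\rvert s)=\frac{\exp(\theta^{(k)}_{s,a})}{Z^{(k)}(s)}\ge \frac{c_a}{\lvert\cA\rvert\exp(\theta^{(0)}_{s,\max})}\cdot\frac{1}{k+1},\qquad \forall k\ge T.
\end{equation}
Since the harmonic series diverges, this forces $\sum_{k}\pi^{(k)}(a\rvert s)=\infty$ on $E_4(I_{s}^{\dagger})$.

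Finally, I would conclude via Lévy's extension of the Borel--Cantelli lemma. At each update iteration at state $s$, the event $\{a_m=a\}$ has conditional probability $\pi^{(m)}(a\rvert s)$ given the history $\mathcal{F}_{m-1}$, so $\{a_m=a\text{ infinitely often}\}$ and $\{\sum_m \pi^{(m)}(a\rvert s)=\infty\}$ coincide up to a null set. But $E_4(I_{s}^{\dagger})$ is contained in the latter event (by the displayed bound) while being contained, by definition, in $\{a_m=a\text{ infinitely often}\}^{c}$; hence $E_4(I_{s}^{\dagger})$ differs from a null set only by a null set, i.e. $\bbP(E_4(I_{s}^{\dagger}))=0$. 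The main obstacle is precisely this concluding probabilistic step: because the selection probabilities $\pi^{(m)}(a\rvert s)$ are themselves random and adapted to the filtration, the classical Borel--Cantelli lemma does not apply and one must invoke its conditional martingale version; additional care is needed to guarantee that the $\Theta(1/k)$ lower bound holds on the conditioning event (not merely in expectation) and that the subsequence of updates at $s$ is genuinely infinite.
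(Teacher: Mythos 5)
Your proof is correct and follows essentially the same route as the paper: both arguments rest on Lemma \ref{lemma:theta diff} to show that $\theta^{(k)}_{s,\max}$ grows at most logarithmically, hence the selection probability of a never-again-updated action is bounded below by $\Theta(1/k)$, and both conclude from the divergence of the harmonic series via a second Borel--Cantelli-type argument. The only difference is cosmetic: the paper works with the nested events $E_{4,n}(I_s^{\dagger})$ and bounds $\bbP(E_{4,n})$ by an explicit infinite product of non-selection probabilities, whereas you invoke L\'evy's conditional Borel--Cantelli lemma, which makes explicit the conditioning on the history that the paper's product bound leaves implicit.
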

\begin{proof}[Proof of \Cref{lemma:E4}]
For a given action set $I_{s}^{\dagger}\subset \cA$, define a sequence of events as follows: For each $n\in \mathbb{N}$,
\begin{equation}
    E_{4,n}(I_{s}^{\dagger}):=\big\{\omega: \text{ For every } a\in I_{s}^{\dagger}, (s,a)\notin\cB^{(k)},\forall k\geq n\big\}.\label{eq:theta max 10}
\end{equation}
$\{E_{4,n}(I_{s}^{\dagger})\}_{n=1}^{\infty}$ form an increasing sequence of events, i.e., $E_{4,1}(I_{s}^{\dagger})\subseteq E_{4,2}(I_{s}^{\dagger})\cdots \subseteq  E_{4,n}(I_{s}^{\dagger}) \subseteq E_{4,n+1}(I_{s}^{\dagger})\cdots$. 

Moreover, we have $E_4(I_{s}^{\dagger})=\bigcup_{n=1}^{\infty} E_{4,n}(I_{s}^{\dagger})$.
By the continuity of probability, we have
\begin{equation}
    \bbP(E_{4}(I_{s}^{\dagger}))=\bbP(\lim_{n\rightarrow\infty} E_{4,n}(I_{s}^{\dagger})) = \lim_{n\rightarrow \infty} \bbP(E_{4,n}(I_{s}^{\dagger})).\label{eq:theta max 11}
\end{equation}

Next, we proceed to evaluate $\bbP(E_{4,n}(I_{s}^{\dagger}))$.
\begin{align}
    \log\big(\bbP(E_{4,n}(I_{s}^{\dagger}))\big)&\leq\log\bigg(\prod_{k\geq n} \frac{\sum_{a'\in I_{s}^{0}\cup I_{s}^{-}} \exp(\theta_{s,a’}^{(k)})}{\sum_{a'\in I_{s}^{0}\cup I_{s}^{-}} \exp(\theta_{s,a’}^{(k)})+\sum_{a \in I_{s}^{+}} \exp(\theta_{s,a}^{(k)})}\bigg)\label{eq:theta max 12}\\
    &\leq \log\bigg(\prod_{k\geq n} \frac{\lvert \cA\rvert\exp(\theta_{s,\max}^{(k)})}{\lvert \cA\rvert\exp(\theta_{s,\max}^{(k)})+\sum_{a \in I_{s}^{+}} \exp(\theta_{s,a}^{(n)})} \bigg)\label{eq:theta max 13}\\
    &\leq \log\bigg(\prod_{m\geq 1} \frac{\lvert \cA\rvert\exp\big(\theta_{s,\max}^{(n)}+\log(m+1)\big)}{\lvert \cA\rvert\exp\big(\theta_{s,\max}^{(n)}+\log(m+1)\big)+\sum_{a \in I_{s}^{+}} \exp(\theta_{s,a}^{(n)})} \bigg)\label{eq:theta max 14}\\
    &\leq \sum_{m\geq 1} \log\bigg(1-\frac{\sum_{a \in I_{s}^{+}} \exp(\theta_{s,a}^{(n)})}{\lvert \cA\rvert(m+1)\exp(\theta_{s,\max}^{(n)})} \bigg)=-\infty,\label{eq:theta max 15}
\end{align}
where (\ref{eq:theta max 12}) holds by the softmax policy parameterization, (\ref{eq:theta max 13}) holds by the definition of $\theta^{(k)}_{s,\max}$ and $E_{4,n}(I_{s}^{\dagger})$, and (\ref{eq:theta max 13}) follows directly from Lemma \ref{lemma:theta diff}.
Equivalently, we have $\bbP(E_{4,n}(I_{s}^{\dagger}))=0$, for all $n\in\mathbb{N}$.
By (\ref{eq:theta max 11}), we conclude that $\bbP(E_4(I_s^{\dagger}))=0$.
\end{proof}
Now we are ready to prove Theorem \ref{app:thm:on-policy CAPO}.
\begin{proof}[Proof of Theorem \ref{app:thm:on-policy CAPO}]
Recall that the main idea is to establish a contradiction by showing that conditioning on $E_0$, $E_3^{c}$ cannot happen with probability one. 
Note that by Lemma \ref{lemma:E2} and Lemma \ref{lemma:E2 and E3}, we have 
$\bbP(E_2\cap E_3^{c}\rvert E_0)=1$.
However, by Lemma \ref{lemma:E4}, we know that for any fixed action set $I_{s}^{\dagger}\subset \cA$, the event that the actions in $I_{s}^{\dagger}$ are selected for policy updates for only finitely many times must happen with probability zero. This contradicts the result in Lemma \ref{lemma:E2 and E3}.
Therefore, we shall have $\bbP(E_0)=0$.
\end{proof}

\subsection{On-Policy CAPO with Fixed Learning Rate}
\label{app:OnCAPO:onCAPO_fixed}
One interesting question is whether on-policy CAPO can be applied with a fixed learning rate. Through a simple single state bandit example, we show that without the help of variable learning rate, on-policy CAPO with fixed learning rate will stuck in local optimum with positive probability.
Therefore, this fact further motivates the use of variable learning rate in CAPO.
We provide the detailed discussion in \Cref{app:example}.
\section{Sub-Optimality of On-Policy CAPO Due to Improper Step Sizes}
\label{app:example}
In this section, we construct a toy example to further showcase how the proposed CAPO benefits from the properly-designed step sizes in Algorithm \ref{algo:CAPO}.
We consider a deterministic $K$-armed bandit with a single state and an action set $\left[K\right]$ and a softmax policy $\pi_{\theta}: [K] \rightarrow [0, 1]$, the reward vector $r \in \mathbb{R}^{K}$ is the reward corresponding to each action. This setting is the same as the one in Section 2 of \citep{mei2021understanding}, except that we do not have the assumption of positive rewards such that $r(a) \in [0, 1), \forall a \in [K]$, the reward can be any real number such that $r \in \mathbb{R}^K$.
Our goal here is to find the optimal policy $\pi^{*}$ that maximize the expected total reward. Since there is only one single state, the objective function can by written as:
\begin{equation}
J(\theta) = \mathbb{E}_{{a \sim \pi_{\theta}(\cdot)}}[r(a)].
\end{equation}

The on-policy CAPO with fixed learning rate updates the policy parameters by:
\begin{equation}
\label{eq:onCAPO_fixed_update}
\theta_{m+1}(s, a) = \theta_{m}(s, a) + \eta \cdot \sign(A(s,a)) \cdot \mathbb{I}\{a = a_m\}
\end{equation}
where $\eta$ is a constant representing the fixed learning rate.

To demonstrate that on-policy CAPO with fixed learning rate can get stuck in a sub-optimal policy, we consider a simple three-armed bandit where $K=3$ (i.e. a single state with 3 actions). We set $r = [1, 0.99, -1]$. Then we have:
\begin{theorem} Given a uniform initial policy $\pi_1$ such that $\pi_{1}(a) = \frac{1}{K}, \forall a \in [K]$, under the policy update of (\ref{eq:onCAPO_fixed_update}), we have $\bbP(\pi_{\infty}(a_2) = 1) > 0$.
\label{theorem:onCAPO_fixed_stuck}
\end{theorem}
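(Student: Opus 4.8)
The plan is to exhibit a single positive-probability event on which the policy commits to the sub-optimal action $a_2$ forever, so that $\pi_\infty(a_2)=1$ holds on that event. Concretely, I would study the event $E:=\{a_m=a_2 \text{ for every } m\ge 1\}$, i.e.\ the sample path on which the on-policy sampler always draws $a_2$, and show both that $E$ forces $\pi_\infty(a_2)=1$ and that $\bbP(E)>0$. Since $\bbP(\pi_\infty(a_2)=1)\ge \bbP(E)$, this suffices.

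First I would reduce to the single-state bandit dynamics: with one state, $V^{\pi_m}=\E_{a\sim\pi_m}[r(a)]$ and $A_m(a)=r(a)-V^{\pi_m}$, and the update (\ref{eq:onCAPO_fixed_update}) modifies only the coordinate of the sampled action. On $E$ the trajectory is deterministic, so I would track it by induction: starting from $\theta_1\equiv 0$ (uniform $\pi_1$), I claim $\theta_m(a_2)=(m-1)\eta$ and $\theta_m(a_1)=\theta_m(a_3)=0$. The inductive step only requires verifying $\sgn(A_m(a_2))=+1$; with the claimed parameters a direct computation gives
\begin{align}
V^{\pi_m}=\frac{0.99\,e^{(m-1)\eta}}{e^{(m-1)\eta}+2}, \qquad A_m(a_2)=0.99-V^{\pi_m}=\frac{1.98}{e^{(m-1)\eta}+2}>0,
\end{align}
so $a_2$ always carries strictly positive advantage along this path and $\theta(a_2)$ indeed increases by $\eta$ at each step. (The same computation shows $A_m(a_3)<0$, while $a_1$, though also advantageous, is simply never sampled on $E$.) Consequently $\pi_m(a_2)=\big(1+2e^{-(m-1)\eta}\big)^{-1}\uparrow 1$, giving $\pi_\infty(a_2)=1$ on $E$.

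The decisive step is to show $\bbP(E)>0$. Using the chain rule for the sampling process and the fact that, conditioned on the first $m-1$ draws all being $a_2$, the $m$-th draw equals $a_2$ with probability exactly $\pi_m(a_2)$, I would write $\bbP(E)=\prod_{m\ge 1}\pi_m(a_2)=\prod_{m\ge 1}\big(1+2e^{-(m-1)\eta}\big)^{-1}$ and bound its logarithm via $\log(1+x)\le x$:
\begin{align}
-\log \bbP(E)=\sum_{m\ge 1}\log\!\big(1+2e^{-(m-1)\eta}\big)\le \sum_{m\ge 1}2e^{-(m-1)\eta}=\frac{2}{1-e^{-\eta}}<\infty .
\end{align}
Hence the infinite product is strictly positive and $\bbP(\pi_\infty(a_2)=1)\ge \bbP(E)>0$, which is the claim.

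The main obstacle, and the place where the fixed learning rate is essential, is precisely the positivity of $\prod_m \pi_m(a_2)$. Because $\eta$ is constant, $\theta_m(a_2)$ grows only linearly, so $1-\pi_m(a_2)$ decays geometrically and the product converges to a positive limit. Under the adaptive rate $\log(1/\pi_m(a\mid s))$ of (\ref{eq:onCAPO_alpha_update}) this tail would instead be non-summable (the per-step boost to $\theta(a_2)$ shrinks as $\pi_m(a_2)\to 1$), forcing $\bbP(E)=0$, consistent with the global convergence established in Theorem \ref{app:thm:on-policy CAPO}. I would also take minor care to justify passing from the finite-horizon identities $\bbP(\cap_{m\le M}\{a_m=a_2\})=\prod_{m\le M}\pi_m(a_2)$ to the infinite intersection by continuity from above, but this is routine.
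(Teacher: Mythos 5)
Your proposal is correct and follows essentially the same route as the paper: condition on the event that $a_2$ is drawn at every step, use the symmetry $\pi_m(a_1)=\pi_m(a_3)$ together with $r(a_1)=-r(a_3)$ to keep $A_m(a_2)>0$ so that $\theta_m(a_2)$ grows linearly, and then show $\prod_m \pi_m(a_2)>0$ because $1-\pi_m(a_2)$ decays geometrically. The only cosmetic difference is that you bound the log of the product via $\log(1+x)\le x$ where the paper uses the inequality $1-x\ge e^{-x/(1-x)}$; both yield the same summable geometric tail.
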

The idea is that with $\pi_1(a_1) = \pi_1(a_3)$ and $r(a_1) = -r(a_3)$, when we only sample $a_2$ in the first $t$ steps, $A_m(a_2) > 0, \forall m \le t$.
Thus, $\pi_{m}(a_2)$ shall be strictly improving, and the probability of sampling $a_2$ will increase accordingly, thus causing a vicious cycle. 

\Cref{theorem:onCAPO_fixed_stuck} shows that the naive fixed learning rate is insufficient. In the next section, we will show that with a properly chosen variable learning rate, on-policy CAPO can guarantee global convergence. Empirical results can be found in \Cref{app:extra_exp:bandit}.

\begin{proof}[Proof of Theorem \ref{theorem:onCAPO_fixed_stuck}]
\label{app:proof:onCAPO_fixed}

Inspired by the proof in \citep{mei2021understanding} (Theorem 3, second part), we also consider the event $\mathcal{E}_{t}$ such that $a_2$ is chosen in the first $t$ time steps. We will show that there exists some sequence $b_s$ such that $\bbP(\mathcal{E}_{t}) \ge \prod_{s=1}^{t} b_{s} > 0$.

The first part argument is the same as \citep{mei2021understanding}, we restate the argument for completeness:
Let $\mathcal{B}_m=\left\{a_m = a_2 \right\}$ be the event that $a_2$ is sampled at time $m$.
Define the event $\mathcal{E}_{t} = \mathcal{B}_{1} \cap \cdots \cap \mathcal{B}_{t}$ be the event that $a_2$ is chosen in the first $t$ time steps. Since $\left\{\mathcal{E}_{t}\right\}_{t \geq 1}$ is a nested sequence, we have $\lim _{t \rightarrow \infty} \bbP\left(\mathcal{E}_{t}\right)=\bbP(\mathcal{E})$ by monotone convergence theorem.
Following equation (197) and equation (198) in \citep{mei2021understanding}, we will show that a suitable choice of $b_t$ under the On-policy CAPO with fixed learning rate is:
\begin{equation}
b_t = \exp\left\{- \frac{\sum_{a \neq a_2}\exp\left\{\theta_1(a)\right\}}{\exp\left\{\theta_1(a_2)\right\}} \cdot \frac{\exp\left\{\eta\right\}}{\eta} \right\}.
\end{equation}

\begin{lemma} $\pi_m(a_1) = \pi_m(a_3), \forall 1 \le m \le t$.
\label{prop:OnCAPO_fixed:theta_13}
\end{lemma}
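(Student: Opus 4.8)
The plan is to establish this lemma by a short induction on $m$, using the defining feature of the conditioning event $\mathcal{E}_t$: under $\mathcal{E}_t$ the sampled action is $a_2$ at every one of the first $t$ steps, i.e.\ $a_m = a_2$ for all $1 \le m \le t$. The decisive observation is then that the update (\ref{eq:onCAPO_fixed_update}) touches only the parameter of the sampled action, because of the indicator $\mathbb{I}\{a = a_m\}$. Consequently the parameters of $a_1$ and $a_3$ are frozen throughout the first $t$ steps, and since they start out equal (uniform initialization), they remain equal, which transfers to the corresponding action probabilities through the shared softmax normalizer.

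Concretely, for the base case $m = 1$ the uniform initial policy gives $\pi_1(a_1) = \pi_1(a_3) = 1/3$ directly. For the inductive step, I would suppose $\pi_m(a_1) = \pi_m(a_3)$ for some $m < t$; equivalently, since $\pi_m(a) \propto \exp(\theta_m(a))$ with a common normalizer $\sum_{a'} \exp(\theta_m(a'))$, this is the same as $\theta_m(a_1) = \theta_m(a_3)$. Under $\mathcal{E}_t$ we have $a_m = a_2$, so $\mathbb{I}\{a = a_m\}$ vanishes for $a \in \{a_1, a_3\}$ and the update (\ref{eq:onCAPO_fixed_update}) leaves both parameters unchanged: $\theta_{m+1}(a_1) = \theta_m(a_1)$ and $\theta_{m+1}(a_3) = \theta_m(a_3)$. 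Hence $\theta_{m+1}(a_1) = \theta_{m+1}(a_3)$, and dividing the two softmax expressions (whose normalizers are identical) yields $\pi_{m+1}(a_1)/\pi_{m+1}(a_3) = \exp(\theta_{m+1}(a_1) - \theta_{m+1}(a_3)) = 1$, i.e.\ $\pi_{m+1}(a_1) = \pi_{m+1}(a_3)$. This closes the induction for all $1 \le m \le t$.

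There is no genuine obstacle here; the only points requiring a little care are organizational. First, one should carry the argument at the level of the logits $\theta$ rather than the probabilities, since it is the logits of $a_1$ and $a_3$ that are literally invariant, whereas their probabilities do drift (in lockstep) as $\theta_m(a_2)$ grows. Second, I would emphasize that this symmetry lemma does \emph{not} itself invoke the reward symmetry $r(a_1) = -r(a_3)$; that symmetry is instead what the subsequent steps exploit, since $\pi_m(a_1) = \pi_m(a_3)$ together with $r(a_1) = -r(a_3)$ forces their contributions to $V_m$ to cancel, giving $A_m(a_2) = r(a_2)\,(1 - \pi_m(a_2)) > 0$ and hence the self-reinforcing update on $a_2$ that drives the $b_t$ lower bound. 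Flagging this dependency keeps the present lemma cleanly decoupled from the later probabilistic estimate.
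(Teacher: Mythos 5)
Your proof is correct and follows essentially the same route as the paper's: both arguments rest on the observation that under $\mathcal{E}_t$ only the coordinate of $a_2$ is ever updated, so $\theta_m(a_1)=\theta_1(a_1)=\theta_1(a_3)=\theta_m(a_3)$ for all $1\le m\le t$, and equality of the probabilities follows from the shared softmax normalizer. The paper simply states this directly rather than packaging it as an induction, and your side remarks (that the probabilities drift in lockstep while the logits are frozen, and that the reward antisymmetry is only needed in the subsequent lemma) are accurate.
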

\begin{proof}[Proof of \Cref{prop:OnCAPO_fixed:theta_13}]
Under uniform initialization $\theta_1(a_1) = \theta_1(a_3)$, since only $a_2$ is sampled in the first $t$ steps, we have $\forall 1 \le m \le t$:
\begin{align}
&\pi_m(a_1)  = \frac{\exp(\theta_m(a_1))}{\sum_{a}\exp(\theta_m(a))} \\
&=  \frac{\exp(\theta_1(a_1))}{\sum_{a}\exp(\theta_m(a))}  =  \frac{\exp(\theta_1(a_3))}{\sum_{a}\exp(\theta_m(a))} \\
&= \pi_m(a_3).
\end{align}
\end{proof}

\begin{lemma} For all $1 \le m \le t$, we have $A_m(a_2) \ge 0$.
\label{lemma:OnCAPO_fixed:pos_A}
\end{lemma}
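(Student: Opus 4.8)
The plan is to evaluate $A_m(a_2)$ directly from its definition, exploiting the fact that this is a single-state bandit. First I would note that in a one-step (bandit) problem the $Q$-function collapses to the immediate reward, i.e.\ $Q^{\pi_m}(a) = r(a)$ for every action $a$, so that the value is simply $V^{\pi_m} = \sum_{a \in [K]} \pi_m(a)\, r(a)$ and the advantage reads $A_m(a) = r(a) - V^{\pi_m}$. Thus the whole claim reduces to showing $V^{\pi_m} \le r(a_2) = 0.99$ for all $1 \le m \le t$.

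The key step is to exploit the antisymmetry of the reward vector together with the invariant established in \Cref{prop:OnCAPO_fixed:theta_13}. Since only $a_2$ is sampled during the first $t$ steps, that lemma gives $\pi_m(a_1) = \pi_m(a_3)$ for all $1 \le m \le t$. Because the example sets $r(a_1) = 1$ and $r(a_3) = -1 = -r(a_1)$, the contributions of $a_1$ and $a_3$ to the value cancel exactly:
\begin{equation}
\pi_m(a_1)\, r(a_1) + \pi_m(a_3)\, r(a_3) = \pi_m(a_1)\big(r(a_1) + r(a_3)\big) = 0.
\end{equation}
Consequently $V^{\pi_m} = \pi_m(a_2)\, r(a_2) = 0.99\,\pi_m(a_2)$.

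Finally I would substitute this back to obtain $A_m(a_2) = r(a_2) - V^{\pi_m} = 0.99\,(1 - \pi_m(a_2))$, which is nonnegative because $\pi_m(a_2) \in [0,1]$, yielding the desired conclusion. There is no substantial obstacle in this lemma: the only points requiring care are the bandit reduction $Q^{\pi_m}(a) = r(a)$ and the exact cancellation of the $a_1,a_3$ terms, which relies precisely on the antisymmetry $r(a_1) = -r(a_3)$ coupled with the equal-probability invariant from \Cref{prop:OnCAPO_fixed:theta_13}; everything else is a one-line computation. This nonnegativity of $A_m(a_2)$ is exactly what drives the vicious cycle in \Cref{theorem:onCAPO_fixed_stuck}, since it guarantees that repeatedly sampling $a_2$ only reinforces its probability.
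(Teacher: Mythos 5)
Your proposal is correct and follows essentially the same route as the paper: both reduce $A_m(a_2)$ to $r(a_2)-V^{\pi_m}$, cancel the $a_1,a_3$ contributions using $\pi_m(a_1)=\pi_m(a_3)$ from \Cref{prop:OnCAPO_fixed:theta_13} together with $r(a_1)=-r(a_3)$, and conclude $A_m(a_2)=(1-\pi_m(a_2))\,r(a_2)\ge 0$. No gaps.
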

\begin{proof}[Proof of \Cref{lemma:OnCAPO_fixed:pos_A}]
Note that under the CAPO update (\ref{eq:onCAPO_fixed_update}), we have
\begin{align}
&A_m(a_2) = r(a_2) - \sum_{a}\pi_m(a) \cdot r(a) \\
&\quad= (1-\pi_m(a_2))r(a_2) - \sum_{a \neq a_2}\pi_m(a) \cdot r(a) \\
&\quad= (1-\pi_m(a_2))r(a_2) - \sum_{a \neq a_2}\pi_m(a) \cdot r(a) \\
&\quad= (1-\pi_m(a_2))r(a_2) \ge 0,
\end{align}
where the last equation comes from \Cref{prop:OnCAPO_fixed:theta_13} and $r(a_1) = -1 \cdot r(a_3)$.

\end{proof}
\begin{lemma} $\theta_t(a_2) = \theta_1(a_2) + \eta \cdot (t-1)$.
\label{lemma:OnCAPO_fixed:theta t-step diff}
\end{lemma}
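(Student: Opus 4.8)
The final lemma to prove is Lemma (\ref{lemma:OnCAPO_fixed:theta t-step diff}): under on-policy CAPO with fixed learning rate (eq. \ref{eq:onCAPO_fixed_update}), conditioned on the event that $a_2$ is the sampled action in each of the first $t$ steps, we have $\theta_t(a_2) = \theta_1(a_2) + \eta\cdot(t-1)$.

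**Checking what's available.** The update rule (\ref{eq:onCAPO_fixed_update}) adds $\eta\cdot\sign(A(s,a))\cdot\mathbb{I}\{a=a_m\}$ to $\theta_m(s,a)$. Since we condition on $a_2$ being sampled in every one of the first $t$ steps, only $\theta(a_2)$ changes in each step. Lemma \ref{lemma:OnCAPO_fixed:pos_A} already establishes $A_m(a_2)\ge 0$ for all $1\le m\le t$, which controls the sign factor.

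=== BEGIN PROOF PLAN ===

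\textbf{Proof plan.}

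The plan is to establish the claim by a direct telescoping argument over the first $t-1$ update steps, leaning on the two preceding lemmas. First I would fix the conditioning event $\mathcal{E}_t$, under which $a_m = a_2$ for every $m$ in $\{1,\dots,t-1\}$, so that in each such step the indicator $\mathbb{I}\{a = a_m\}$ in the update (\ref{eq:onCAPO_fixed_update}) is $1$ precisely for the coordinate $a_2$ and $0$ for $a_1, a_3$. Consequently, the only parameter that moves during these steps is $\theta(a_2)$, and for each $1 \le m \le t-1$ the update reads
\begin{equation}
\theta_{m+1}(a_2) = \theta_{m}(a_2) + \eta \cdot \sign\!\left(A_m(a_2)\right).
\end{equation}

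Next I would pin down the sign factor. By Lemma \ref{lemma:OnCAPO_fixed:pos_A}, we have $A_m(a_2) \ge 0$ for all $1 \le m \le t$. Strictly speaking I would argue that $A_m(a_2) > 0$ (hence $\sign(A_m(a_2)) = 1$) throughout: since $r = [1, 0.99, -1]$ and $\pi_m(a_2) < 1$ for every finite $m$ under the softmax parameterization, the expression $A_m(a_2) = (1 - \pi_m(a_2)) r(a_2)$ derived in the proof of Lemma \ref{lemma:OnCAPO_fixed:pos_A} is strictly positive because $r(a_2) = 0.99 > 0$ and $1 - \pi_m(a_2) > 0$. Thus each of the $t-1$ updates contributes exactly $+\eta$ to $\theta(a_2)$.

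Finally I would telescope: summing the per-step increment over $m = 1, \dots, t-1$ gives
\begin{equation}
\theta_t(a_2) = \theta_1(a_2) + \sum_{m=1}^{t-1} \eta \cdot \sign\!\left(A_m(a_2)\right) = \theta_1(a_2) + \eta \cdot (t-1),
\end{equation}
which is the desired identity. A clean way to present this is as a one-line induction on $t$: the base case $t=1$ is trivial ($\theta_1(a_2) = \theta_1(a_2)$), and the inductive step applies the single-coordinate update together with $\sign(A_m(a_2)) = 1$ to pass from $t$ to $t+1$.

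\textbf{Main obstacle.} There is no genuine analytic difficulty here; the result is essentially bookkeeping. The only point requiring care is the boundary of the sign argument: Lemma \ref{lemma:OnCAPO_fixed:pos_A} only guarantees $A_m(a_2) \ge 0$, and the update uses $\sign(\cdot)$ with the convention $\sign(0) = 0$. I would therefore make explicit the strict positivity of $A_m(a_2)$ under the given reward vector and softmax parameterization, so that no step accidentally contributes $0$ instead of $\eta$. I expect this to be the subtlest point to state precisely, though it follows immediately once the formula $A_m(a_2) = (1-\pi_m(a_2)) r(a_2)$ and $r(a_2) > 0$ are invoked.

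=== END PROOF PLAN ===
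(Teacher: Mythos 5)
Your proposal is correct and follows essentially the same route as the paper: telescope the update over the first $t-1$ steps, using Lemma \ref{lemma:OnCAPO_fixed:pos_A} to fix the sign factor to $+1$. Your explicit observation that strict positivity $A_m(a_2)>0$ (rather than just $A_m(a_2)\ge 0$) is needed because $\sgn(0)=0$ is a small refinement the paper glosses over, and it is easily supplied from $A_m(a_2)=(1-\pi_m(a_2))r(a_2)$ with $r(a_2)>0$ and $\pi_m(a_2)<1$ under softmax.
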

\begin{proof}[Proof of \Cref{lemma:OnCAPO_fixed:theta t-step diff}]
By \Cref{lemma:OnCAPO_fixed:pos_A} and (\ref{eq:onCAPO_fixed_update}), we have:
\begin{align}
&\theta_t(a_2) = \theta_1(a_2) + \eta \cdot \sum_{s=1}^{t-1} \sign(A_s(a_2)) \cdot \mathbb{I}\{a_2 = a_s\} \\
&= \theta_1(a_2) + \eta \cdot \sum_{s=1}^{t-1} 1 \\
&= \theta_1(a_2) + \eta \cdot (t-1).\\
\end{align}
\end{proof}

\begin{lemma} For all $x \in(0,1)$, we have:
\label{lemma:exp_lower_bound}
\end{lemma}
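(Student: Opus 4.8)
The statement to be established is the elementary bound $1-x\ge\exp\!\big(-\tfrac{x}{1-x}\big)$ for every $x\in(0,1)$, which is exactly the estimate needed to convert the one-step sampling probability $\pi_m(a_2)$ into a factor whose infinite product stays bounded away from zero. My plan is to avoid any ad hoc differentiation and instead reduce the claim to the single standard inequality $\log(1+y)\le y$, valid for all $y>-1$.

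First I would take logarithms: since $1-x>0$ on $(0,1)$, the claim is equivalent to $\log(1-x)\ge -\tfrac{x}{1-x}$. I would then substitute $y=\tfrac{x}{1-x}$, which is strictly positive on $(0,1)$ and satisfies $1+y=\tfrac{1}{1-x}$, hence $\log(1+y)=-\log(1-x)$. Feeding this into $\log(1+y)\le y$ gives $-\log(1-x)\le\tfrac{x}{1-x}$, and rearranging and exponentiating yields the stated bound. Should a self-contained argument be preferred, I would instead set $g(x):=\log(1-x)+\tfrac{x}{1-x}$, observe $g(0)=0$, and compute $g'(x)=\tfrac{x}{(1-x)^2}\ge0$ on $[0,1)$ to conclude $g\ge0$; either route is a couple of lines.

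The substantive point is not the inequality but its deployment, and that is where I would concentrate the care. Writing $x_m:=1-\pi_m(a_2)$, \Cref{lemma:OnCAPO_fixed:theta t-step diff} gives $\tfrac{x_m}{1-x_m}=\kappa\,e^{-\eta(m-1)}$, where $\kappa:=\big(\sum_{a\ne a_2}\exp(\theta_1(a))\big)/\exp(\theta_1(a_2))$, so the lemma supplies the per-step bound $\pi_m(a_2)=1-x_m\ge \exp\!\big(-\kappa\,e^{-\eta(m-1)}\big)$. Multiplying over $m=1,\dots,t$ and summing the geometric series $\sum_{m\ge1}e^{-\eta(m-1)}=\tfrac{1}{1-e^{-\eta}}\le\tfrac{e^{\eta}}{\eta}$ (the last step using $e^{\eta}-1\ge\eta$) then gives $\bbP(\mathcal{E}_t)\ge\exp\!\big(-\kappa\,e^{\eta}/\eta\big)>0$ uniformly in $t$, which is exactly the positive lower bound claimed in the proof of \Cref{theorem:onCAPO_fixed_stuck}. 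Thus the only genuine obstacle is the bookkeeping that links this elementary lemma to the geometric-sum bound; the lemma itself is routine.
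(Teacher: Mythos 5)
Your proof is correct and essentially the same as the paper's: the paper applies the bound $y \ge 1 - e^{-y}$ at $y=\log(1-x)$, which is just the inequality $e^{t}\ge 1+t$ in a different guise, and your substitution $y=\tfrac{x}{1-x}$ with $\log(1+y)\le y$ rests on the identical fact. Your subsequent accounting of how the lemma feeds the geometric-sum bound in Theorem \ref{theorem:onCAPO_fixed_stuck} also matches the paper's argument (the paper bounds the sum by an integral rather than by $e^{\eta}-1\ge\eta$, but the resulting constant is the same).
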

\begin{equation}
\label{eq:exp_lower_bound}
1-x \geq \exp \left\{\frac{-x}{1 - x}\right\}
\end{equation}
\begin{proof}[Proof of \Cref{lemma:exp_lower_bound}]
This is a direct result of Lemma 14 in \citep{mei2021understanding}. Here we also include the proof for completeness.
\begin{align}
1-x &=\exp \{\log (1-x)\} \\
& \geq \exp \left\{1-e^{-\log (1-x)}\right\} \quad\left(y \geq 1-e^{-y}\right) \\
&=\exp \left\{\frac{-1}{1 / x-1}\right\}
\\
&=\exp \left\{\frac{-x}{1 - x}\right\}.
\end{align}
Then, we can plug in $x$ as $\frac{a}{b}$ for some $a < b$ to obtain a more useful form of this lemma as follows:
\begin{equation}
\label{eq:exp_lower_bound2}
1 -\frac{a}{b} \ge \exp \left\{\frac{-a}{b-a} \right\}.
\end{equation}
\end{proof}

\begin{lemma} $\pi_t(a_2) \ge \exp\left\{\frac{-\sum_{a \neq a_2}\exp\left\{\theta_t(a)\right\} }{\exp\left\{\theta_t(a_2) \right\}}\right\}$.
\label{lemma:bandit positive prob}
\end{lemma}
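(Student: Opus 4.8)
The plan is to read off $\pi_t(a_2)$ directly from the softmax parameterization and then match the resulting expression to the elementary inequality recorded in \Cref{lemma:exp_lower_bound}. First I would write, using the definition of the softmax policy,
\begin{equation}
\pi_t(a_2) = \frac{\exp\{\theta_t(a_2)\}}{\exp\{\theta_t(a_2)\} + \sum_{a\neq a_2}\exp\{\theta_t(a)\}}.
\end{equation}
Abbreviating $E := \exp\{\theta_t(a_2)\}$ and $S := \sum_{a\neq a_2}\exp\{\theta_t(a)\}$, this is $\pi_t(a_2) = E/(E+S) = 1 - S/(E+S)$, so the claim is now a statement about the single ratio $S/(E+S)$.

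Next I would apply the rearranged form (\ref{eq:exp_lower_bound2}) of \Cref{lemma:exp_lower_bound}, namely $1 - \frac{a}{b} \ge \exp\{\frac{-a}{b-a}\}$ for $0 \le a < b$, with the choice $a = S$ and $b = E+S$. Since $E>0$ we have $b-a = E$ and $\frac{a}{b} = \frac{S}{E+S} \in [0,1)$, so the hypothesis of the lemma is satisfied. This yields
\begin{equation}
\pi_t(a_2) = 1 - \frac{S}{E+S} \ge \exp\left\{\frac{-S}{E}\right\} = \exp\left\{\frac{-\sum_{a\neq a_2}\exp\{\theta_t(a)\}}{\exp\{\theta_t(a_2)\}}\right\},
\end{equation}
which is exactly the asserted bound, completing the argument.

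There is essentially no substantive obstacle; the only point requiring care is verifying that the argument $\frac{a}{b}$ lies in $(0,1)$ so that \Cref{lemma:exp_lower_bound} is applicable, which holds because $E=\exp\{\theta_t(a_2)\}>0$ and $S>0$ (the latter since $K\ge 2$). The role of this lemma in the broader argument is to convert the one-step sampling probability $\pi_t(a_2)$ into an exponential lower bound whose exponent can then be controlled using the explicit growth of $\theta_t(a_2)$ established in \Cref{lemma:OnCAPO_fixed:theta t-step diff}; chaining these per-step bounds is what eventually produces a product lower bound $\prod_{s} b_s > 0$ for $\bbP(\mathcal{E}_t)$ and hence the positivity of $\bbP(\mathcal{E})$ claimed in \Cref{theorem:onCAPO_fixed_stuck}.
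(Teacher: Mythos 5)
Your proof is correct and follows essentially the same route as the paper: both write $\pi_t(a_2) = 1 - \frac{\sum_{a\neq a_2}\exp\{\theta_t(a)\}}{\exp\{\theta_t(a_2)\}+\sum_{a\neq a_2}\exp\{\theta_t(a)\}}$ and then apply the rearranged inequality (\ref{eq:exp_lower_bound2}) with the same substitution to obtain the exponential lower bound. Your explicit check that the ratio lies in $(0,1)$ is a small added care the paper leaves implicit, but the argument is otherwise identical.
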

\begin{proof}[Proof of \Cref{lemma:bandit positive prob}]
\begin{align}
&\pi_t(a_2) = 1 - \sum_{a \neq a_2}\pi_t(a) \\
&= 1 - \frac{\sum_{a \neq a_2}\exp\left\{\theta_t(a)\right\} }{\exp\left\{\theta_t(a_2) \right\} + \sum_{a \neq a_2}\exp\left\{\theta_t(a)\right\}} \\
&\ge \exp\left\{\frac{-\sum_{a \neq a_2}\exp\left\{\theta_t(a)\right\} }{\exp\left\{\theta_t(a_2) \right\}}\right\},
\end{align}
where the last inequality uses (\ref{eq:exp_lower_bound2}).
\end{proof}

Finally, we have
\begin{align}
&\prod_{t=1}^{\infty} \pi_t(a_2) \ge \prod_{t=1}^{\infty} \exp\left\{\frac{-\sum_{a \neq a_2}\exp\left\{\theta_t(a)\right\} }{\exp\left\{\theta_t(a_2) \right\}}\right\} \\
&= \prod_{t=1}^{\infty} \exp\left\{\frac{-\sum_{a \neq a_2}\exp\left\{\theta_1(a)\right\} }{\exp\left\{\theta_1(a_2) + \eta \cdot (t-1) \right\}}\right\} \\
&= \exp\left\{ \sum_{t=1}^{\infty} \frac{-\sum_{a \neq a_2}\exp\left\{\theta_1(a)\right\} }{\exp\left\{\theta_1(a_2) + \eta \cdot (t-1) \right\}}  \right\} \\
&= \exp\left\{- \frac{\sum_{a \neq a_2}\exp\left\{\theta_1(a)\right\}}{\exp\left\{\theta_1(a_2)\right\}} \cdot \exp\left\{\eta\right\} \cdot \sum_{t=1}^{\infty}\frac{1} {\exp\left\{\eta \cdot t \right\}}  \right\} \\
&\ge \exp\left\{- \frac{\sum_{a \neq a_2}\exp\left\{\theta_1(a)\right\}}{\exp\left\{\theta_1(a_2)\right\}} \cdot \exp\left\{\eta\right\} \cdot \int_{t=0}^{\infty}\frac{1} {\exp\left\{\eta \cdot t \right\}}  \right\} \\
&= \exp\left\{- \frac{\sum_{a \neq a_2}\exp\left\{\theta_1(a)\right\}}{\exp\left\{\theta_1(a_2)\right\}} \cdot \frac{\exp\left\{\eta\right\}}{\eta} \right\} \\
&= \Omega(1),
\end{align}
where the last line comes from the fact that $\sum_{a \neq a_2}\exp\left\{\theta_1(a)\right\} \in \Theta(1)$, $\exp\left\{\theta_1(a_2)\right\} \in \Theta(1)$ and $\frac{\exp\left\{\eta\right\}}{\eta} \in \Theta(1)$.
\end{proof}

\section{A Closer Look at the Learning Rate}
\label{app:extra_exp:bandit}
Unlike most RL algorithms, CAPO leverages variable learning rate that is state action dependent, instead of a fixed learning rate. 
In this section, we provide some insights into why this design is preferred under CAPO from both theoretical and empirical perspectives.
\subsection{Variable Learning Rate v.s. Fixed Learning Rate}
In \Cref{lemma:lower_bdd}, we quantify the one-step improvement $V^{\pi_{m+1}}(s) - V^{\pi_{m}}(s)$ in terms of state visitation distribution, policy weight, and advantage value under learning rate ${\alpha_m(s, a) \ge \log (\frac{1}{\pi_{\theta_m}(a\rvert s)})}$. Now, we provide the one-step improvement \textit{under fixed learning rate}, $\alpha \in \mathbb{R}$, $\alpha > 0$:
\begin{align}
V^{\pi_{m+1}}(s) - V^{\pi_{m}}(s) =
\begin{cases}
\frac{d^{\pi_{m+1}}_{s}(s_m)}{1-\gamma } \cdot \frac{(e^{\alpha}-1) \cdot \pi_m(a_m|s_m)}{(e^{\alpha}-1) \cdot \pi_m(a_m|s_m) + 1} \cdot A^{m}(s_m, a_m)  & \text{, if } A^{m}(s_m, a_m) > 0 \\
\frac{d^{\pi_{m+1}}_{s}(s_m)}{1-\gamma } \cdot \frac{(1-e^{-\alpha}) \cdot \pi_m(a_m|s_m)}{(e^{-\alpha}-1) \cdot \pi_m(a_m|s_m) + 1} \cdot (-A^{m}(s_m, a_m))  & \text{, if } A^{m}(s_m, a_m) <  0\\
\end{cases}\\
\text{where $\alpha \in \mathbb{R}$, $\alpha > 0$}
\end{align}
Note that the result above can be obtained by using the same technique in \Cref{lemma:change_of_policy_weight_1}, \Cref{lemma:change_of_policy_weight_2} and \Cref{lemma:lower_bdd} by substituting the learning rate.

Compared to the one-step improvement under the variable learning rate, the one-step improvement under the fixed learning rate would be tiny as the updated action's policy weight $\pi_m(a_m|s_m) \rightarrow 0$. This property makes it difficult for an action that has positive advantage value but small policy weight to contribute enough to overall improvement, i.e., for those actions, the improvement of the policy weight $\pi_{m+1}(a_m|s_m) - \pi_{m}(a_m|s_m) \rightarrow 0$ under some improper fixed learning rate, leading to small one-step improvement.\\

Now, to provide some further insights into the possible disadvantage of a fixed learning rate, we revisit the proof of the convergence rate of Cyclic CAPO in \Cref{subsection:cyclic_CAPO}. 
By combining the one-step improvement above, the result from \hyperref[item:case1]{Case 1} and \hyperref[item:case2]{Case 2} under the fixed learning rate, $\alpha \in \mathbb{R}$, $\alpha > 0$ can be rewritten as:
\begin{align}
V^{\pi_{m+|\mathcal{S}||\mathcal{A}|}}(s) - V^{\pi_{m}}(s)
&\ge \frac{(1-\gamma)^2}{2} \cdot \frac{1}{{\max} \left \{ \frac{ (1-\pi_{m+T}(a_{m+T}|s_{m+T})) \cdot (e^{\alpha}-1 ) \cdot \pi_{m+T}(a_{m+T}|s_{m+T}) + 1 }{ (1-\gamma) \cdot (e^{\alpha}-1 ) \cdot \pi_{m+T}(a_{m+T}|s_{m+T}) } , \frac{c_m \cdot T}{(1-\gamma)^2} \right \}}  \cdot \left ( V^{*}(s) - V^{\pi_m}(s) \right )^2&&\\
&=\frac{(1-\gamma)^2}{2} \cdot {\min} \left \{ \frac{ (1-\gamma) \cdot (e^{\alpha}-1 ) \cdot \pi_{m+T}(a_{m+T}|s_{m+T}) }{ (1-\pi_{m+T}(a_{m+T}|s_{m+T})) \cdot (e^{\alpha}-1 ) \cdot \pi_{m+T}(a_{m+T}|s_{m+T}) + 1 } , \frac{(1-\gamma)^2}{c_m \cdot T} \right \}&&\\
& \quad \cdot \left ( V^{*}(s) - V^{\pi_m}(s) \right )^2 &&
\end{align}
where $c_m = \underset{k \in [m,m+T-1]}{\max} \left \{ c_{k1}, c_{k2} \right \} \in [0,1]$ \\
and $c_{k1} = \mathbbm{1} \left \{ A^k(s_k,a_k)>0 \right \} \cdot d^{\pi_{k+1}}_{s}(s_k) \cdot \frac{(e^{\alpha}-1 ) \cdot \pi_{k}(a_k|s_k) \cdot (1-\pi_{k}(a_k|s_k)) }{ (e^{\alpha}-1 ) \cdot \pi_{k}(a_k|s_k) + 1 }$, $c_{k2} = \mathbbm{1} \left \{ A^k(s_k,a_k) < 0 \right \} \cdot d^{\pi_{k+1}}_{s}(s_k) \cdot \frac{(1-e^{-\alpha} ) \cdot \pi_{k}(a_k|s_k) \cdot (1-\pi_{k}(a_k|s_k)) }{ (e^{-\alpha}-1 ) \cdot \pi_{m+T}(a_m|s_m) + 1 }$.\\

Note that the first term $\frac{ (1-\gamma) \cdot (e^{\alpha}-1 ) \cdot \pi_{m+T}(a_{m+T}|s_{m+T}) }{ (1-\pi_{m+T}(a_{m+T}|s_{m+T})) \cdot (e^{\alpha}-1 ) \cdot \pi_{m+T}(a_{m+T}|s_{m+T}) + 1 }$ in the ``min'' operator is derived from \hyperref[item:case2]{Case 2} and the second term $\frac{(1-\gamma)^2}{c_m \cdot T}$ is derived from \hyperref[item:case1]{Case 1}. 
Once we cannot guarantee that \hyperref[item:case1]{Case 1} provide enough amount of improvement, we must show that we can get the rest of the required improvement in \hyperref[item:case2]{Case 2}. However, we can find that there is a term $\pi_{m+T}(a_{m+T}|s_{m+T})$ in the numerator of the first term in the ``min'' operator, which is provided by \hyperref[item:case2]{Case 2}, implying that the multi-step improvement $V^{\pi_{m+|\mathcal{S}||\mathcal{A}|}}(s) - V^{\pi_{m}}(s)$ might also be tiny when the improvement provided by \hyperref[item:case1]{Case 1} is insufficient and the policy weight $\pi_{m+T}(a_{m+T}|s_{m+T}) \rightarrow 0$ in \hyperref[item:case2]{Case 2}.\\

Accordingly, we highlight the importance of the choice of the learning rate, especially when the visitation frequency of the coordinate generator is extremely unbalanced (e.g. sampling the optimal action every $(|\mathcal{S}||\mathcal{A}|)^{1000}$ epoch) or the approximated advantage value  is oscillating between positive and negative during the update. The design of the variable learning rate ${\alpha_m(s, a) \ge \log (\frac{1}{\pi_{\theta_m}(a\rvert s)})}$ somehow tackles the difficulty of the insufficient one-step improvement by providing larger step size to the action with tiny policy weight, solving the problem of small improvement of the policy weight. Therefore, we can conclude that under this design of the learning rate, the one-step improvement is more steady with the policy weight of the action chosen for policy update.

\subsection{Demonstrating the Effect of Learning Rate in a Simple Bandit Environment}
In this section, we present the comparison in terms of the empirical convergence behavior of On-policy CAPO and Off-policy CAPO. Specifically, we evaluate the following four algorithms: (i) On-Policy CAPO with state-action-dependent learning rate (cf. (\ref{eq:on-policy CAPO update})), (ii) On-Policy CAPO with fixed learning rate (\ref{eq:onCAPO_fixed_update}),  (iii) Off-Policy CAPO with state-action-dependent learning rate (cf. (\ref{eq:CAPO_form})),  (iv) Off-Policy CAPO with fixed learning rate.


We consider the multi-armed bandit as in \Cref{app:example} with $K=4$, and $r = [10, 9.9, 9.9, 0]$. To further demonstrate the ability of CAPO in escaping from the sub-optimal policies, instead of considering the uniform initial policy where $\pi_1(a) = \frac{1}{K}, \forall a \in [K]$, we initialize the policy to a policy that already prefers the sub-optimal actions ($a_2, a_3$) such that $\theta_1 = [0, 3, 3, 0]$ and $\pi_1 \approx [0.0237, 0.4762, 0.4762, 0.0237]$ under the softmax parameterization. For each algorithm, we run the experiments under 100 random seeds. For all the variants of CAPO, we set $\lvert B_m\rvert=1$.

In \Cref{fig:bandit}, On-policy CAPO with fixed learning rate can get stuck in a sub-optimal policy due to the skewed policy initialization that leads to insufficient visitation to each action, and this serves an example for demonstrating the effect described in \Cref{theorem:onCAPO_fixed_stuck}. On the other hand, on-policy CAPO with state-action dependent learning rate always converges to the global optimum despite the extremely skewed policy initialization. This corroborates the importance of variable learning rate for on-policy CAPO. Without such design, the policies failed to escape from a sub-optimal policy under all the random seeds.

Next, we look at the result of off-policy CAPO: We noticed that off-policy CAPO with fixed learning rate is able to identify the optimal action. However, Off-policy CAPO with fixed learning rate learns much more slowly than its variable learning rate counterpart (notice that the x-axis (Iteration) in each graph is scaled differently for better visualization). 
Also, we notice that the different choices of fixed learning rate have direct impact on the learning speed, and this introduces a hyperparameter that is dependent on the MDP. On the other hand, $\alpha_m(s, a)$ can be used as a general learning rate for different cases (For example, in \Cref{app:generator} where a different environment Chain is introduced, learning rate for off-policy Actor Critic has to be tuned while $\alpha_m(s,a)$ can be used as the go-to learning rate.)
\begin{figure}[ht]
\centering
  \includegraphics[width=0.4\textwidth]{./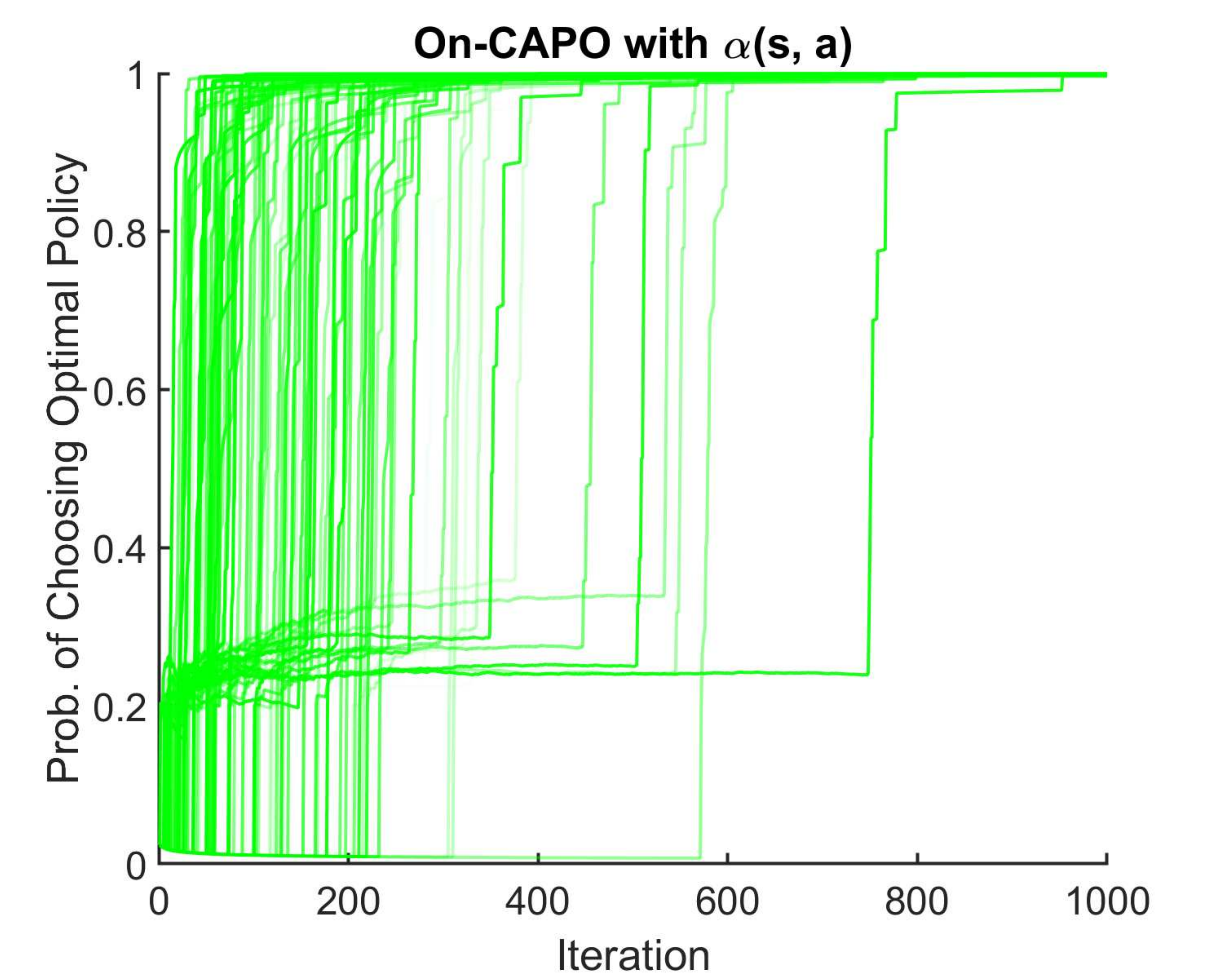}
  \includegraphics[width=0.4\textwidth]{./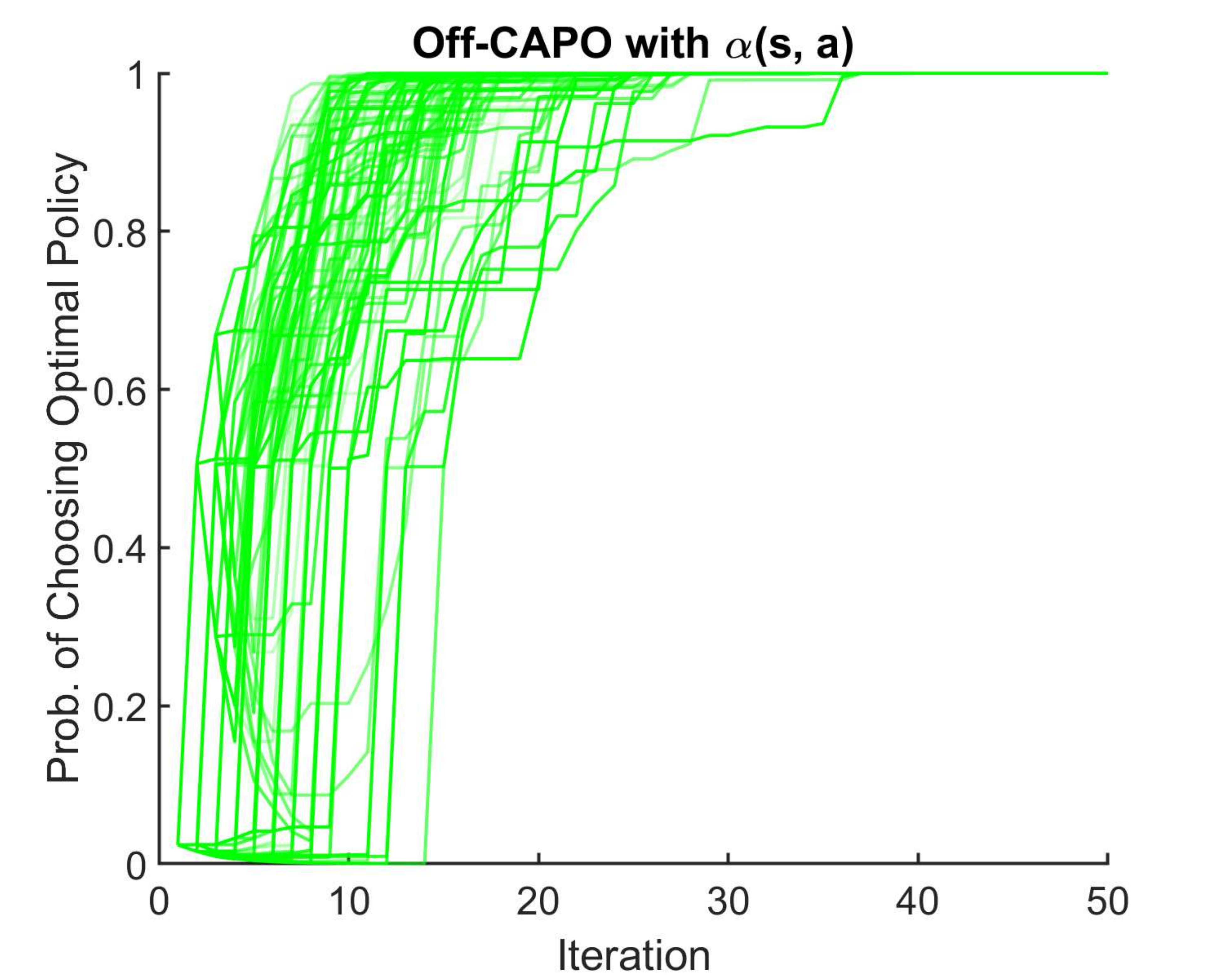}
  \includegraphics[width=0.4\textwidth]{./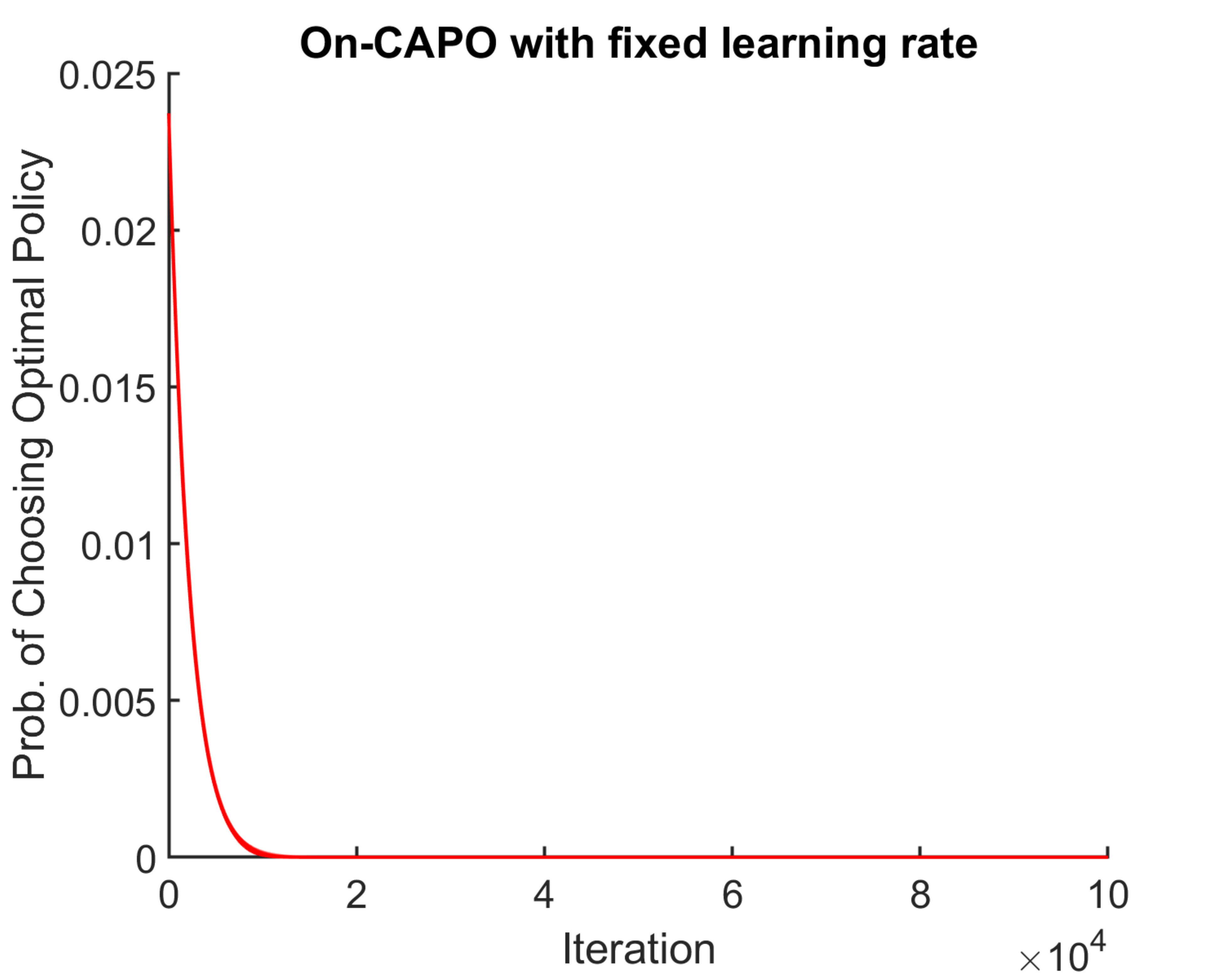}
  \includegraphics[width=0.4\textwidth]{./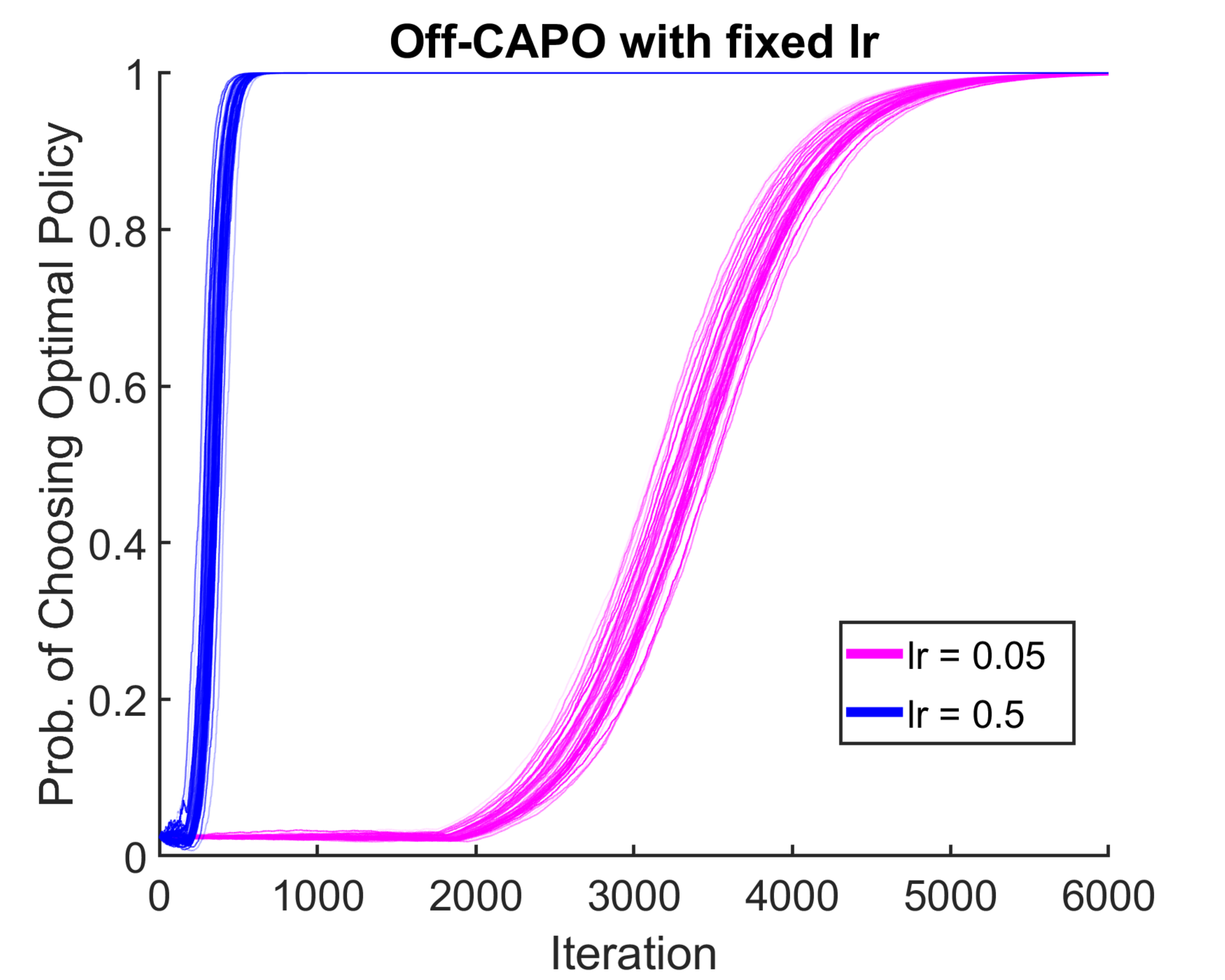}
  \caption{The probability weight of the trained policies on the optimal action at different iterations.}
  \label{fig:bandit}
\end{figure}


\section{Exploration Capability Provided by a Coordinate Generator in CAPO}
In this section, we demonstrate empirically the exploration capability provided by the coordinate generator in CAPO.
\label{app:generator}
\subsection{Configuration}
The Chain environment is visualized in \Cref{fig:chain_env}.
This environment is meant to evaluate the agent's ability to resist the temptation of immediate reward and look for the better long-term return.
We compare the performance of Batch CAPO, Cyclic CAPO and Off-policy Actor Critic on Chain with $N=10$, and the result can be found in \Cref{fig:generator}. 
To eliminate the factor of critic estimation, true value of the value function is used during training. All the agents are trained for 1000 iterations with learning rate $=0.001$. The policies are represented by a neural network with a single hidden layer (with hidden layer size 256). 
Both Cyclic CAPO and Off-policy Actor Critic is trained with a batch size of 16 and a replay buffer size of 100. As Batch CAPO shall take all the $\mathcal{S}\mathcal{A}$-pairs into account by design, the effective batch size of Batch CAPO is equal to $\mathcal{S} \times \mathcal{A}$. Unlike the CAPO methods, Off-policy Actor Critic presumes the use of a fixed behavior policy. As a result, similar to the experimental setup of various prior works (e.g., \citep{liu2020off}), we use a uniform behavior policy for Off-policy Actor Critic. 
The use of a fixed behavior policy makes it difficult to identify an optimal policy, and this highlights the benefit of a coordinate generator in terms of exploration.

\subsection{Discussion}
From \Cref{fig:generator} we can see that it is difficult for Off-policy Actor Critic to escape from a sub-optimal policy, despite that the true value of the value function is provided. Since both Cyclic CAPO and Batch CAPO satisfy \Cref{condition:sa}, using such coordinate selection rules provides sufficient exploration for CAPO to identify the optimal policy. This feature can be particularly useful when the reward is sparse and the trajectory is long. 

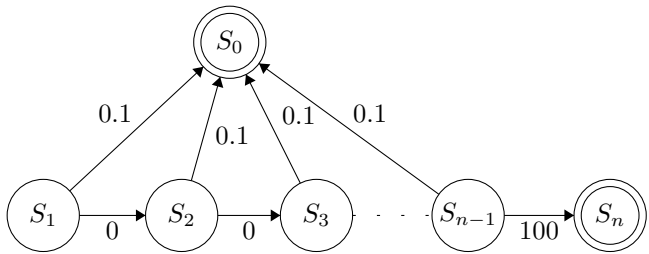
\begin{figure}[h]
\centering
\begin{tikzpicture}[scale=0.16]
\tikzstyle{every node}+=[inner sep=0pt]
\draw [black] (4.9,-35.5) circle (3);
\draw (4.9,-35.5) node {$S_1$};
\draw [black] (16.4,-35.5) circle (3);
\draw (16.4,-35.5) node {$S_2$};
\draw [black] (20.4,-21.1) circle (3);
\draw (20.4,-21.1) node {$S_0$};
\draw [black] (20.4,-21.1)circle (2.4);
\draw [black] (52,-35.5) circle (3);
\draw (52,-35.5) node {$S_n$};
\draw [black] (52,-35.5) circle (2.4);
\draw [black] (40.2,-35.5) circle (3);
\draw (40.2,-35.5) node {$S_{n-1}$};
\draw [black] (27.6,-35.5) circle (3);
\draw (27.6,-35.5) node {$S_3$};
\draw [black] (7.1,-33.46) -- (18.2,-23.14);
\fill [black] (18.2,-23.14) -- (17.28,-23.32) -- (17.96,-24.05);
\draw (10.88,-27.81) node [above] {$0.1$};
\draw [black] (17.2,-32.61) -- (19.6,-23.99);
\fill [black] (19.6,-23.99) -- (18.9,-24.63) -- (19.86,-24.9);
\draw (19.17,-28.84) node [right] {$0.1$};
\draw [black] (7.9,-35.5) -- (13.4,-35.5);
\fill [black] (13.4,-35.5) -- (12.6,-35) -- (12.6,-36);
\draw (10.65,-36) node [below] {$0$};
\draw [black] (43.2,-35.5) -- (49,-35.5);
\fill [black] (49,-35.5) -- (48.2,-35) -- (48.2,-36);
\draw (46.1,-36) node [below] {$100$};
\draw [black] (37.77,-33.74) -- (22.83,-22.86);
\fill [black] (22.83,-22.86) -- (23.18,-23.74) -- (23.77,-22.93);
\draw (32.05,-27.8) node [above] {$0.1$};
\draw [black] (19.4,-35.5) -- (24.6,-35.5);
\fill [black] (24.6,-35.5) -- (23.8,-35) -- (23.8,-36);
\draw (22,-36) node [below] {$0$};
\draw [black] (26.26,-32.82) -- (21.74,-23.78);
\fill [black] (21.74,-23.78) -- (21.65,-24.72) -- (22.55,-24.28);
\draw (24.7,-27.19) node [right] {$0.1$};
\draw [dash pattern=on 3*\pgflinewidth off 8pt] (30.6,-35.5) -- (37.2,-35.5);
\end{tikzpicture}
\caption{The Chain environment has a total of $n+1$ states, and the agent always starts at state $1$. The agent has two actions to choose from at every state, either receive a reward of $0.1$ and terminate immediately, or move one state to the right. While moving right will receive no reward in most states, the transition from $S_{n-1}$ to $S_{n}$ would induce a huge reward of $100$. A well-performing policy should prefer the delayed reward of 100 over the immediate reward of 0.1.}
\label{fig:chain_env}
\end{figure}

\begin{figure}[ht]
\centering
  \includegraphics[width=0.4\textwidth]{./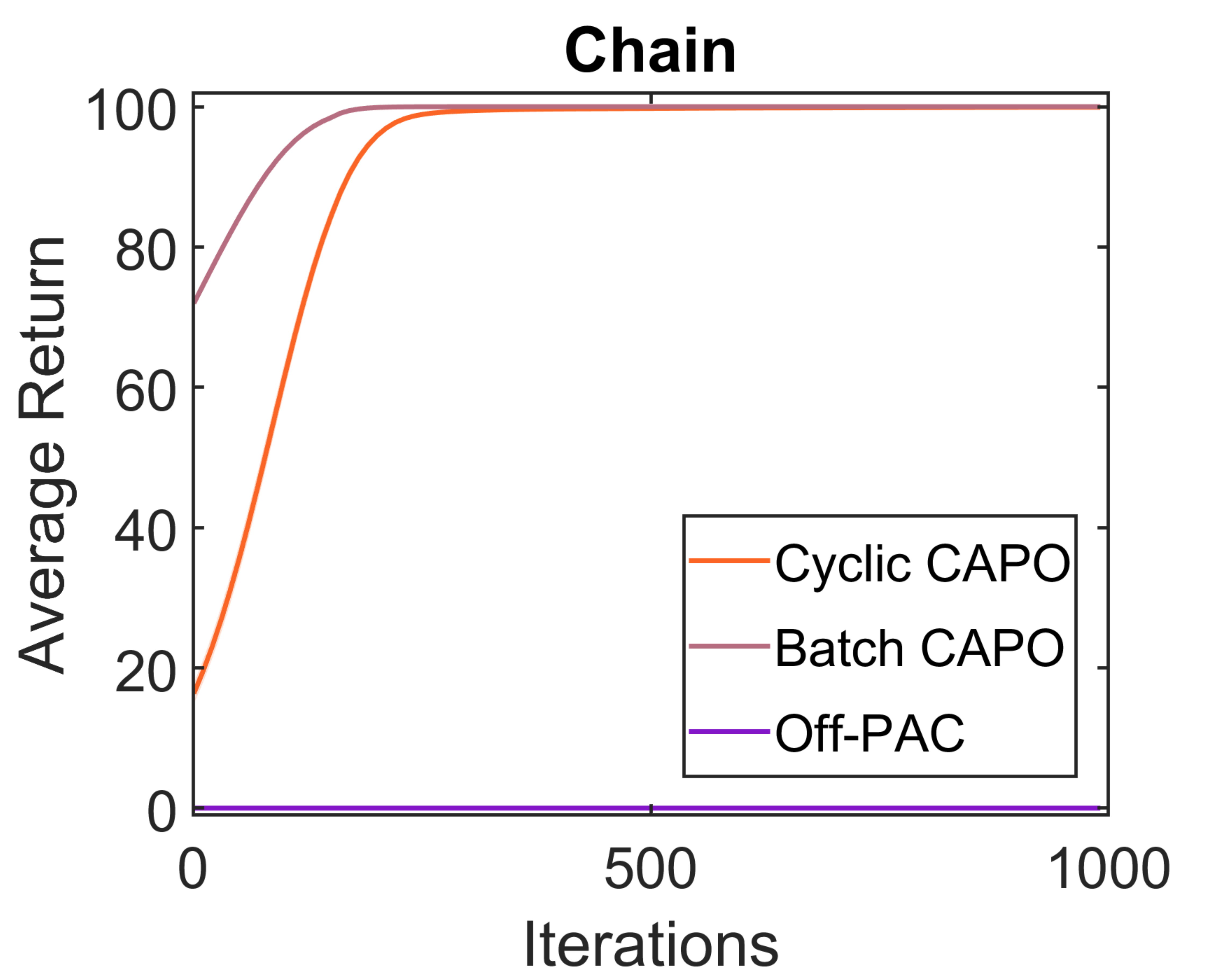}
  \caption{Comparison between Cyclic CAPO, Batch CAPO and off-policy Actor Critic, where the result is the average over 30 runs. We can see that despite the true value function is given and the optimal reward is much larger than the immediate reward (100 v.s. 0.1), Off-policy Actor Critic still suffers from a sub-optimal policy.}
  \label{fig:generator}
\end{figure}

\section{Detailed Configuration of Experiments}
\label{app:exp}

\subsection{Implementation Detail}
\label{exp:imple}
Algorithm \ref{algo:NCAPO_imple} shows the pseudo code of NCAPO. 
In order to demonstrate the off-policy capability of NCAPO, we use simple $\epsilon$-greedy with initial exploration $\epsilon_{start}$ and decayed exploration for off-policy exploration and estimates $A(s,a)$ with Retrace \cite{munos2016safe}. 
NCAPO uses four simple 2-layer feed-forward neural networks, a behavior network ($\theta^b$), a target network ($\theta$), a critic network ($\theta^Q$) and a target critic network ($\theta^{Q^{\prime}}$).
In each episode, $N_{\text{rollouts}}$ rollouts are collected, 
and each rollout $r = [(s_t, a_t), ..., (s_{t+l}, a_{t+l})]$ has length of $l$. Note that instead of storing a single $(s,a)$-pair in the replay buffer $R$, we store the entire rollout of length $l$ into $R$ to better compute $Q_{\text{retrace}}$.
Due to the limited representation capability of floating-point numbers, during the CAPO update, the term $\log \frac{1}{\pi}$ can grow unbounded as $\pi \rightarrow 0$. To address this, we clip the term so that $\alpha(s,a) = \min(\log \frac{1}{\pi(a \mid s)}, \text{clip})$.
As the target networks have demonstrated the ability to stabilize training, the target networks are used and updated by polyak average update with coefficient $\tau_\theta$ and $\tau_{Q}$.
The experiment is conducted on a computational node equipped with Xeon Platinum 8160M CPU with a total of 40 cores. 
Off-PAC shares a similar code base as NCAPO, the major difference is that the use of a fixed behavior policy. We choose such behavior policy to be a uniform policy.
\subsection{Hyperparameters}
We use the hyperparameters for Atari games of stable-baseline3 for  \citep{stable-baselines3} for PPO and A2C, and the exact same hyperparameter from \citep{obando2020revisiting} for Rainbow.
The hyperparameters are listed in Table \ref{table:hyper}.

\begin{table}[!ht]
\caption{Hyperparameters for CAPO and OffPAC}
\label{table:hyper}
\vskip 0.15in
\begin{center}
\begin{small}
\begin{sc}
\begin{tabular}{lcccr}
\toprule
Hyperparameters    & CAPO  & PPO & A2C & OffPAC\\
\midrule
batch size         & 32 & 16 & - & 32\\
learning rate & 5e-4 & 2.5e-4 & 7e-4 & 5e-4 \\
exploration fraction & 10\% & 0 & 0 & 10\%\\
Initial exploration rate*  & 0.3 & 0 & 0 & 0\\ 
Final exploration rate  & 0.05 & 0 & 0 & 0\\
Critic loss coefficient* & 1 & 0.38 & 0.25 & 1\\
max gradient norm & 0.8 & 0.5 & 0.5 & 0.8\\
gradient steps & 30 & 1 & 1 & 30\\ 
train frequency & (64, steps) & (256, steps) & - & (64, steps)\\
$\tau_Q$       & 0.05  & - & - & 0.05\\
$\tau_\theta$ & 1  & - & - & 1\\
gamma         & 0.99 & 0.98 & 0.99 & 0.99\\
replay buffer & 6400 & -  & - & 6400\\
clip value & 50 & - & - & -\\
entropy coef & 0 & - & 4.04e-6 & 0 \\
\bottomrule
\end{tabular}
\end{sc}
\item[*] For \textit{Asterix}, the critic loss coefficient is $0.25$ and the initial exploration rate is $0.8$.
\end{small}
\end{center}
\vskip -0.1in
\end{table}

\section{Pseudo Code of the Proposed Algorithms}
\begin{algorithm}[ht]
\caption{Coordinate Ascent Policy Optimization}
\label{algo:CAPO}
\begin{algorithmic}[1]
 \STATE Initialize policy $\pi_{\theta}$, $\theta \in \mathcal{S} \times \mathcal{A}$
 \FOR{$m = 1, \cdots, M$}
    \STATE Generate $|\mathcal{B}|$ state-action pairs $((s_0, a_0), ..., (s_{|\mathcal{B}|}, a_{|\mathcal{B}|}))$ from some coordinate selection rule satisfying \Cref{condition:sa}.
     \FOR{$i = 1, \cdots, |\mathcal{B}|$}
        \STATE $\theta_{m+1}(s_i, a_i) \leftarrow \theta_{m}(s_i, a_i) + \alpha_{m}(s_i, a_i) \sign\left(A^{m}(s_i, a_i)\right)$
    \ENDFOR
  \ENDFOR
\end{algorithmic}
\end{algorithm}

\begin{algorithm}[h]
\caption{Neural Coordinate Ascent Policy Optimization}
\label{algo:NCDPO}
\begin{algorithmic}[1]
 \STATE Initialize actor network $f_{\theta}$, where policy is parameterized as $\pi_{\theta}(a | s) = \frac{f_\theta(s, a)}{\sum_{a' \in \mathcal{A}} f_\theta(s, a')}$
 \FOR{$m = 1, \cdots, M$}
    \STATE Generate state-action pairs $\left((s_0, a_0), ..., (s_{|\mathcal{B}|}, a_{|\mathcal{B}|}\right))$ from some coordinate selection rule satisfying \Cref{condition:sa}.
    \STATE Evaluate Advantage $A^{\pi_{m}}$ with arbitrary policy evaluation algorithm.
        \STATE Compute target $\hat{\theta}$ by
        (\ref{eq:NCAPO_update}).
        
        \STATE Compute target policy $\hat{\pi}$ by taking softmax over $\hat{\theta}$.
        
        \STATE Update the policy network with NCAPO loss:
            \STATE $\nabla_{\theta}L = \nabla_{\theta} D_{KL}\left(\pi_{f_{\theta_{m}}} \| \hat{\pi}\right)$
  \ENDFOR
\end{algorithmic}
\end{algorithm}

\begin{algorithm}[ht]
\caption{Neural Coordinate Ascent Policy Optimization with Replay Buffer}
\label{algo:NCAPO_imple}
\begin{algorithmic}[1]
 \STATE Initialize behavior network $f(s,a \mid \theta^b)$, critic $Q(s,a \mid \theta^Q)$
 \STATE Initialize Replay Buffer $R$, 
 \STATE Initialize target networks $f(s, a \mid \theta) \leftarrow f(s, a \mid \theta^b)$, $Q(s,a | \theta^{Q^{\prime}}) \leftarrow Q(s,a | \theta^Q)$
 \FOR{episode $m = 1, \cdots, M$}
    \STATE Generate behavior policy and target policy by computing softmax $\pi_{\theta}(a \mid s) = \frac{ e^{f(s, a \mid \theta)} }{\sum_{a^{\prime} \in \mathcal{A}} e^{f\left(s, a^{\prime} \mid \theta \right)}}$.
    \STATE Collect $N_{rollouts}$ rollouts with length $l$ by following $\pi_{\theta_b}$ with decayed $\eps$-greedy, store rollouts to $R$. 
    \STATE Replace old rollouts if $len(R)> R_{max}$ .
    \FOR{gradient steps $= 1,...,\mathcal{G}$}
        \STATE Sample rollout $r$ from $R$.
        \STATE Compute $Q_{retrace}(s, a)$ for $(s,a) \in r$ 
        \STATE $\theta^Q_{Loss} \leftarrow \sum_{(s,a) \in r}(Q_{retrace}(s,a) - Q_{\theta^Q}(s,a))^2$
        \STATE $\theta_{Loss} \leftarrow  D_{KL}\left(\pi_{m}(\cdot \mid s) \mid \pi_{\hat{\theta}}(\cdot \mid s\right)$
        \STATE Update $Q(s,a \mid \theta^Q)$ with gradient $\nabla_{\theta^{Q}}\theta^{Q}_{Loss}$
        \STATE Update $f(s,a \mid \theta^b)$ with gradient $\nabla_{\theta}\theta_{Loss}$
    \ENDFOR
    \STATE Update target networks: \\
        $\theta^{Q^{\prime}}  \leftarrow \tau_{Q} \theta^{Q}+(1-\tau_{Q}) \theta^{Q^{\prime}}$ \\
        $\theta  \leftarrow \tau_{\theta} \theta^{b}+(1-\tau_{\theta}) \theta$

  \ENDFOR
\end{algorithmic}
\end{algorithm}

\end{document}